\documentclass[3p]{elsarticle}
\usepackage{lineno}

\usepackage{hyperref}
\usepackage{caption}
\usepackage{float}
\usepackage{subfigure}
\usepackage{mathptmx}
\usepackage{arydshln}
\usepackage{pifont}
\usepackage{stmaryrd}
\usepackage{mathrsfs}
\usepackage{mathtools}
\usepackage{amssymb}
\usepackage{amsmath}
\usepackage{amsthm}
\usepackage{bbm}
\usepackage{bm}
\usepackage{comment}
\usepackage[ruled,linesnumbered]{algorithm2e}
\usepackage{multirow}
\usepackage{nicefrac}
\usepackage{tikz}
\usepackage{pgfplots}
\usetikzlibrary{cd}
\pgfplotsset{compat=newest}
\tikzset{
	treenode/.style = {shape=rectangle, rounded corners,
		draw, align=center,
		top color=white, bottom color=blue!20},
	root/.style     = {treenode, font=\Large, bottom color=red!30},
	env/.style      = {treenode, font=\ttfamily\normalsize},
	dummy/.style    = {circle,draw}
}
\hypersetup{
    colorlinks=true,
    linkcolor=blue,
    filecolor=magenta,      
    urlcolor=cyan,
    citecolor=blue
}
\usepackage[super]{nth}

\def\num#1{\numx#1}\def\numx#1e#2{{#1}\mathrm{e}{#2}}

\DeclareMathAlphabet{\mathpzc}{OT1}{pzc}{m}{it}

\newcommand{\spacelabel}{\mathscr{Y}}
\newcommand{\vect}[1]{{\bm{#1}}}

\newcommand{\hamone}[1]{\mathbbm{1}_{y_{#1} \neq y^1_{#1}}}
\newcommand{\hamtwo}[1]{\mathbbm{1}_{y_{#1} \neq y^2_{#1}}}
\newcommand{\hams}[1]{\sum_{i=1}^{m}\mathbbm{1}_{y_i \neq #1_i}}

\newcommand{\newinstance}{\vect{x}}
\newcommand{\setn}[1]{\llbracket #1 \rrbracket}

\makeatletter
\newcommand\HUGE{\@setfontsize\Huge{30}{60}}
\makeatother

\newtheorem{theorem}{Theorem}
\newtheorem{proposition}[theorem]{Proposition}
\newtheorem{corollary}[theorem]{Corollary}
\newtheorem{lemma}{Lemma}

\newtheorem{example}{Example}
\newtheorem{remark}{\bf Remark}
\newtheorem{definition}{Definition}
\theoremstyle{plain}
\usepackage{titlesec}
\titleformat{\paragraph}[runin]
	{\normalfont\bfseries}{\theparagraph}{1em}{}[.]
\newcommand{\xmark}{\ding{55}}%

\newcommand{\condset}[2]{{ \left\{ #1 \middle| #2 \right\} }}

\newcommand{\indicator}[1]{{\mathbbm{1}_{(#1)}\;}}

\newcommand{\credal}{{\mathcal{P}}}

\newcommand{\expe}{{\mathbb{E}}}
\newcommand{\lexpe}{{\underline{\expe}}}











\newcommand{\reals}{\mathbb{R}}


\newcommand{\complexity}[1]{\mathscr{O}(#1)}
\newcommand{\losszero}{\ell_{0/1}}


\begin{document}
\title{Skeptical binary inferences in multi-label problems with sets of probabilities\tnoteref{mytitlenote}}
\tnotetext[mytitlenote]{This paper significantly extends the published paper in 12th International Symposium on Imprecise Probabilities: Theories and Applications (ISIPTA).}

\author[]{Yonatan Carlos Carranza Alarc\'on}
\ead{ycarranza.alarcon@gmail.com}

\author[utcaddress]{S\'ebastien Destercke} 
\ead{sebastien.destercke@hds.utc.fr}

\address[utcaddress]{UMR CNRS 7253 Heudiasyc, Sorbonne universit\'es, Universit\'{e} de technologie de Compi\`egne CS 60319 - 60203 Compi\`egne cedex, France }

\begin{abstract}
In this paper, we consider the problem of making distributionally robust, skeptical inferences for the multi-label problem, or more generally for Boolean vectors. By distributionally robust, we mean that we consider a set of possible probability distributions, and by skeptical we understand that we consider as valid only those inferences that are true for every distribution within this set. Such inferences will provide partial predictions whenever the considered set is sufficiently big. We study in particular the Hamming loss case, a common loss function in multi-label problems, showing how skeptical inferences can be made in this setting. Our experimental results are organised in three sections; (1) the first one indicates the gain computational obtained from our theoretical results by using synthetical data sets, (2) the second one indicates that our approaches produce relevant cautiousness on those hard-to-predict instances where its precise counterpart fails, and (3) the last one demonstrates experimentally how our approach copes with imperfect information (generated by a downsampling procedure) better than the partial abstention~\cite{nguyen2019} and the rejection rules.

\end{abstract}
\begin{keyword}
    multilabel \sep cautious predictions \sep binary vectors \sep imprecise probabilities \sep downsampling 
\end{keyword}

\maketitle


\section{Introduction}\label{sec:introduction}
In contrast to multi-class problems where each instance is associated to one label, multi-label classification consists in associating an instance to a subset of relevant labels from a set of possible labels. Such problems can arise in different research fields, such as the classification of proteins in bioinformatics~\cite{tsoumakas2007multi}, text classification in information retrieval~\cite{furnkranz2008multilabel}, object
recognition in computer vision~\cite{boutell2004learning}, and so on.

Considering all possible subsets of relevant labels as possible predictions make the estimation and decision steps of a learning problem significantly more difficult: partial observations are more likely to occur, especially when the number of labels increase, and the output space over which the probability needs to be estimated grows exponentially with the number of labels.  This means that in some applications where guaranteeing the robustness and reliability of predictions is of particular importance, one may consider being cautious about such predictions, by predicting a set of possible answers rather than a single one when uncertainties are too high. In the literature, such strategies can be called partial rejection rules~\cite{pillai2013multi}, partial abstention~\cite{nguyen2019} or indeterminate classification~\cite{destercke2014multilabel,antonucci2017multilabel}. 

The main goal of this paper \footnote{This paper extends a shorter conference version~\cite{alarcon2021ipsita} by providing additional examples, details and experiments. It also includes significant original content; notably, the study of the imprecise binary relevance with other decision criteria in \ref{app:other_dec_crit}, an investigation of how our skeptical approach can address the problem with noisy labels in Section~\ref{sec:expnoisedatasets}, and finally, a first comparative investigation of how the partial abstention, rejection threshold and our skeptical approach  behave to downsampling in Section~\ref{sec:downsampling}.} is to exclusively study the problem of making such set-valued predictions by performing skeptic inferences when our uncertainty is described by a set of probabilities (the more uncertainties, the bigger the set). By skeptic inference, we understand the logical procedure that consists, in the presence of multiple models, to accept only those inferences that are true for every possible model. Such approaches are different from thresholding approaches~\cite{nguyen2019,pillai2013multi}, and are closer in spirit to distributionally robust approaches, even if these later typically consider precise, minimax inferences, that are cautious yet not skeptic~\cite{hu2018does,chen2018robust}. We also make no assumptions about the considered set of probabilities, thus departing from usual distributionally robust approaches, that typically consider precise predictions, or from existing works dealing with sets of probabilities and multi-label problems~\cite{antonucci2017multilabel}, that considered specific probability sets and zero/one loss function (which is seldom used in multi-label problems). More precisely, and before detailing the content of our paper, our main contributions are as follows:
\begin{enumerate}
	\item proposing theoretical procedures to reduce the complexity time of the skeptical inference step when we consider any  set of joint probability distributions and the Hamming loss;
	\item proposing to generalize the classical binary relevance by using imprecise marginal distributions;
	\item proposing additional implications on different type of decision criteria when we consider imprecise marginal distributions and the Hamming loss;
	\item providing experiments to compare our exact procedure against the existing outer-approximation;
	\item providing experiments to how our skeptical approach can address the problem with missing and noisy labels;
	\item providing a first comparative investigation of how the partial abstention, rejection and our skeptical approach can react under downsampling.
\end{enumerate}
 
Section \ref{sec:prel} introduces the notations we will use for the multi-label setting, and give the necessary reminders about skeptic inferences made with sets of probabilities. Once this is done, we provide in Section~\ref{sec:theoretical_res} novel theoretical results concerning the Hamming loss and the maximality decision criterion, those results ending in an inference procedure that has an almost linear time complexity with respect to the size of the output space. We also investigate under which conditions the previous results using marginal probability bound (or  under the assumption of strong independence on output space) become exact. Besides, under the latter conditions,  the set-valued predictions induced by the E-admissible decision criterion match perfectly with that of the maximality criterion. 

In the \ref{app:other_dec_crit}, we provide some additional implications (and a graph with its relations) regarding different types of decision criteria which can be considered as less and more conservatives in terms of skeptical decisions. In other words, the ones which can infer the largest or smallest set-valued of predictions than the other well-known ones, such as the E-admissible and the maximality.  

In Section~\ref{sec:expe}, we perform a set experiments on simulated data sets to study; (1) the exactness of the existing outer-approximation~\cite{destercke2014multilabel} against our exact optimal procedure, when we consider sets of joint probability distributions, and (2) the real computation time trend of the naive against the exact optimal procedure.

Finally, in Section~\ref{sec:expebinarybr}, we perform an additional set of experiments on real data sets, which is divided in two sections; the former aims to study the behaviour of the skeptical inference against its precise counterpart; (1) under an assumption of strong independence and (2) on missing and noisy labels, and the latter aims to compare the partial abstention, the partial rejection rules and our skeptical approach under a downsampling setup. 

Proofs of technical results as well as complementary experimental results, from this paper, are provided in \ref{app:supproof}, \ref{app:supresults} and \ref{app:suppresampling}, respectively.

\section{Preliminaries}
\label{sec:prel}

In this section, we introduce the necessary background to deal with our problem. 

\subsection{Multi-label problem}

In multi-label problem, given a subset $\Omega=\{\omega_1,\ldots,\omega_m\}$, one assumes that to each instance $\vect{x}$ of an input space $\mathcal{X}=\mathbb{R}^d$ is associated a subset $\Lambda \subseteq \Omega$ of relevant labels. In practice, we will identify such subsets with the space of Boolean vectors $\mathcal{Y}=\{0,1\}^m$, denoting a vector $\vect{y}=(y_1,\ldots,y_m)$ and having $y_i=1$ if $\omega_i \in \Lambda$, $0$ else. 

We assume that observations are i.i.d. samples of a distribution $p:\mathcal{X} \times \mathcal{Y} \to [0,1]$, and denote $p_\vect{x}(\bm{y}):= p(\vect{y}|\vect{x})$ the conditional probability of $\vect{y}$ given $\vect{x}$. We denote by $\mathbf{Y}= (Y_1, \dots, Y_m)$ the random binary vector over $\mathcal{Y}$. Given a subset $\mathcal{I} \subseteq \{1,\ldots,m\}$ of indices, we  denote by $\mathcal{Y}_\mathcal{I}$ the space of binary vectors over those indices, by $Y_{\mathcal{I}}$ and $Y_{-\mathcal{I}}$ will the marginals of $\mathbf{Y}$ over these indices and over the complementary indices $\{1,\ldots,m\} \setminus \mathcal{I}$, respectively. In particular, $Y_{\{i\}}$ will denote the marginal random variable over the $i$th label. Similarly, we will denote by $\vect{y}_{\mathcal{I}}$ the values of a vector restricted to elements indexed in $\mathcal{I}$, and by $\vect{b}_{\mathcal{I}}$ a particular assignment over these elements. The associated marginal probability will be
$$P_x(\vect{b}_{\mathcal{I}})=\sum_{\substack{\vect{y} \in \mathcal{Y}, \vect{y}_{\mathcal{I}}=\vect{b}_{\mathcal{I}} }} p_\vect{x}(\vect{y}).$$
We will also consider the complement of a given vector or assignment over a subset of indices. These will be denoted by $\bm{\overline{\vect{y}}}_{\mathcal{I}}$ and $\overline{\vect{b}}_{\mathcal{I}}$, respectively. 

Given two vectors $\vect{y}^1$ and $\vect{y}^2$, we will denote by $\mathcal{I}_{\vect{y}^1\neq \vect{y}^2}:=\{i \in \{1,\ldots,m\}: y^1_i\neq y^2_i \}$ the set of indices over which two vectors are different, and similarly by $\mathcal{I}_{\vect{y}^1=\vect{y}^2}:=\{i \in \{1,\ldots,m\}: y^1_i= y^2_i \}$ the sets of indices for which they will be equal.


\begin{example}
Consider the probabilistic tree developed in Figure~\ref{fig:egprecisetr} defined over $\mathcal{Y}=\{0,1\}^2$ describing a full joint distribution over two labels. In such trees, the probability of any vector $\vect{y}$ is simply the product of the probabilities along its path. We also have that the partial vector $(\cdot,1)$ has probability
$$P((\cdot,1))=P((0,1))+P((1,1))=0.5 \cdot 0.2 + 0.5 \cdot 0.7= 0.45.$$
\begin{figure}[H]
	\centering
	\newcommand{\exptwoeg}[3]{\underline{\mathbb{E}}_{Y_{\{#1\}}|\newinstance}\left[ \ell_H(\cdot, \overline{a}_\mathcal{I}) \middle| Y_{\{#2\}}\!=\!#3\right]}
\newcommand{\expeg}[1]{\underline{\mathbb{E}}_{Y_{\{#1\}}|\newinstance}\left[ \cdot \right]}
\tikzset{
  treenode/.style = {shape=rectangle, rounded corners,
                     draw, align=center,
                     top color=white, bottom color=blue!20},
  root/.style     = {treenode, font=\Large, bottom color=red!30},
  env/.style      = {treenode, font=\ttfamily\normalsize},
  dummy/.style    = {circle,draw}
}
\tikzstyle{level 1}=[level distance=4cm, sibling distance=1.7cm]
\tikzstyle{level 2}=[level distance=3cm, sibling distance=0.8cm]
\tikzstyle{bag} = [text width=4em, text centered]
\tikzstyle{end} = [inner sep=0pt]
\begin{tikzpicture}[
	grow=right, auto,
	sloped,
	edge from parent/.style = {draw, -latex, font=\tiny\sffamily, text=blue},
	every node/.style  = {font=\footnotesize}]
\node[root] {}
    child {
        node[env] {$y_1=1$}        
            child {
                node[end, label=right:{$(y_1\!=\!1, y_2\!=\!1)$}]  {}
                edge from parent
                node[below] (11) {$0.7$}
            }
            child {
                node[end, label=right:{$(y_1\!=\!1, y_2\!=\!0)$}]  {}
                edge from parent
                node[above] (10) {$0.3$}
            }
            edge from parent 
            node[below] (1) {$0.5$} 
    }
    child {
        node[env] {$y_1=0$}       
        child {
                node[end, label=right:{$(y_1\!=\!0, y_2\!=\!1)$}] {}
                edge from parent
                node[below] (01) {$0.2$}
            }
        child {
                node[end, label=right:{$(y_1\!=\!0, y_2\!=\!0)$}] {}
                edge from parent
                node[above] (00)  {$0.8$}
            }
        edge from parent         
        node[above] (0) {$0.5$}
    };
\end{tikzpicture}
	\caption{Probabilistic binary tree of two labels}
	\label{fig:egprecisetr}
\end{figure}	
In the sequel of this paper, we will use such trees to illustrate our results, replacing the precise probabilities on the branches by intervals. An example will be provided later. The resulting set of probabilities over $\mathcal{Y}$ will then simply be the set of all joint probabilities obtained by taking precise values within those intervals. 
\end{example}

As in this paper we are interested in making set-valued predictions for the multi-label problems, we will use the notation $\mathbb{Y} \subseteq \mathcal{Y}$ for generic subsets of $\mathcal{Y}$. We will use the notation $\mathfrak{Y}=\{0,1,*\}^m$ for the specific subsets induced by partially specified binary vectors $\bm{\mathfrak{y}} \in \mathfrak{Y}$, where a symbol $*$ stands for a label on which we abstain. Denoting by $\mathcal{I}^*$ the indices of such labels, we will also slightly abuse the notation $\bm{\mathfrak{y}}$ and $\mathfrak{Y}$ to also denote the corresponding family of subsets over $\mathcal{Y}$, i.e.,
$$\bm{\mathfrak{y}}:=\{\bm{y} \in \mathcal{Y} : \forall i \not\in \mathcal{I}^*, y_i=\mathfrak{y}_i\}. $$
Such subsets are indeed often used to make partial multi-label predictions, and we will refer to them on multiple occasions, calling them partial vectors. However, using only subsets within $\mathfrak{Y}$ may be insufficient if one wants to express complex partial predictions. For instance, in the case where $m=2$, the partial prediction $\mathbb{Y}=\{(0,1),(1,0)\}$ cannot be expressed as an element of $\mathfrak{Y}$, as approximating $\mathbb{Y}$ with an element of $\mathfrak{Y}$ would result in $\mathbb{Y}$, and not the initial subset.

\subsection{Skeptic inferences with distribution sets}
\paragraph{Basic representation} In this paper, We assume that our uncertainty is described by a convex set of probabilities $\credal$, a.k.a. a \textit{credal set}~\cite{levi1980a}, defined over $\mathcal{Y}$. Such sets can arise in various ways: as a native result of the learning method~\cite{antonucci2017multilabel,corani2015credal}; as the result of an agnostic\footnote{With respect to the missingness process.} estimation in presence of imprecise data~\cite{plass2019reliable}; or as a neighbourhood taken over an initial estimated distribution $\hat{p}$, such as in distributionally robust approaches~\cite{chen2018robust}.


\paragraph{Skeptic inference and decision} Once our uncertainty is described by a credal set $\credal$, the next step in the learning process is to deliver an optimal prediction, given a loss function $\ell:\mathcal{Y} \times \mathcal{Y} \to \reals$ where $\ell(\hat{\vect{y}},\vect{y})$ is the loss incurred by predicting $\hat{\vect{y}}$ when $\vect{y}$ is the ground-truth.

When the estimate $\hat{p}$ is precise, this is classically done by picking the prediction minimizing the expected loss, i.e.
\begin{equation}\label{eq:minexpe}\hat{\bm{y}}^{\hat{p}}_\ell=\arg\min_{\bm{y}' \in \mathcal{Y}} \expe_{\hat{p}} \left(\ell(\bm{y}',\cdot) \right)= \arg\min_{\bm{y}' \in \mathcal{Y}} \sum_{\bm{y} \in \mathcal{Y}} \hat{p}(\bm{y}) \ell(\bm{y}',\bm{y}) \end{equation}
or, equivalently, by picking the maximal elements of the linear ordering $\succeq_{\ell}^{\hat{p}}$ where $\bm{y}'' \succeq_{\ell}^{\hat{p}} \bm{y}'$ if  
\begin{align}\label{eq:compaprec} 
\expe_{\hat{P}} \left(\ell(\bm{y}',\cdot) - \ell(\bm{y}'',\cdot) \right) &\!\!=\!\!  \sum_{\bm{y} \in \mathcal{Y}} \hat{p}(\bm{y}) \left( \ell(\bm{y}',\bm{y}) - \ell(\bm{y}'',\bm{y}) \right) & \\
& = \expe_{\hat{P}} \left(\ell(\bm{y}',\cdot)\right) - \expe_{\hat{P}} \left(\ell(\bm{y}'',\cdot) \right) \!\! \geq 0, &  \nonumber  
\end{align}
Since $\succeq_{\ell}^{\hat{p}}$ is a complete pre-order, picking any of the possibly indifferent maximal elements will be equivalent with respect to expected loss minimization.


When considering a set $\credal$ as our uncertainty representation, there are many ways~\cite{Troffaes07} to extend Equation~\eqref{eq:compaprec}. In this paper, we will consider the two main decision rules that may return more than one decision in case of insufficient information: E-admissibility and maximality. These rules follow a skeptical strategy, in the sense that the returned set of predictions is guaranteed to contain the optimal prediction, whatever the true distribution within $\credal$. 

\begin{definition}\label{def:E-adm}E-admissibility returns the set of predictions that are optimal for at least one probability within the set $\credal$. In other words, the E-admissibility rule returns the prediction set
\begin{equation}\label{eq:E-adm-set}
\hat{\mathbb{Y}}^E_{\ell,\credal}=\condset{\vect{y} \in \mathcal{Y}}{\exists P \in \credal \textrm{ s.t. } \vect{y}=\hat{\vect{y}}^{P}_\ell}.
\end{equation} 
\end{definition}

\begin{definition}\label{def:maxi} Maximality consists in returning the maximal, non-dominated elements of the partial order $\succ_{\ell}^{\credal}$ such that $\vect{y} \succ_{\ell}^{\credal} \vect{y}'$ if 
\begin{equation}\label{eq:compaIP} \lexpe\left(\ell(\vect{y}',\cdot) - \ell(\vect{y},\cdot) \right):= \inf_{P \in \credal} \expe_{P}\left(\ell(\vect{y}',\cdot) - \ell(\vect{y},\cdot) \right)  > 0,\end{equation}
that is if exchanging  $y'$ for $y$ is guaranteed to give a positive expected loss. The maximality rule returns the prediction set
\begin{equation}\label{eq:maximalset}
\hat{\mathbb{Y}}^{M}_{\ell,\credal}=\condset{\vect{y} \in \mathcal{Y}}{\not \exists \vect{y}' \in \mathcal{Y} \textrm{ s.t. } \vect{y}' \succ_{\ell}^{\credal} \vect{y} }.
\end{equation} 
\end{definition} 
Since $\succ_{\ell,\credal}$ is in general a partial order, $\hat{\mathbb{Y}}^{M}_{\ell,\credal}$ may result in a set of multiple, incomparable elements. Clearly, the more imprecise is $\credal$, the larger are the sets $\hat{\mathbb{Y}}^{E}_{\ell,\credal}$ and $\hat{\mathbb{Y}}^{M}_{\ell,\credal}$. It can also be proven~\cite{Troffaes07} that  $\hat{\mathbb{Y}}^{E}_{\ell,\credal} \subseteq \hat{\mathbb{Y}}^{M}_{\ell,\credal}$, making E-admissibility a less conservative skeptical decision rule. Yet, the set $\hat{\mathbb{Y}}^{E}_{\ell,\credal}$ is typically harder to compute than $\hat{\mathbb{Y}}^{M}_{\ell,\credal}$, which can be used as a reasonable approximation. The following example illustrates what we just described.

\begin{example}
Let us consider a multi-label problem composed of two labels $Y_1$ and $Y_2$, in other words, composed of a Boolean space $\spacelabel=\{00, 01, 10, 11\}$, and a set of four estimated probability distributions $\hat{P}_Y^*$ of the credal set $\hat{\credal}_Y$ in Table~\ref{tbl:exaadmissiblemax}(a), and a classical zero-one loss function $\losszero$.
\begin{table}[!ht]
\centering
\subfigure[Four estimated distributions $\hat{P}\in\hat{\credal}_Y$.]{
\setlength{\tabcolsep}{8pt}
\renewcommand{\arraystretch}{1.08} 
\begin{tabular}{cc||c:c:c:c}
\hline $y_1$ & $y_2$ & $\hat{P}^1$ & $\hat{P}^2$  & $\hat{P}^3$ & $\hat{P}^4$ \\ \hline
	$0$  & $0$ & $\bf 0.4$ & $\bf 0.4$ & $0.3$ & $0.1$\\
	$0$  & $1$ & $0.3$ & $0.3$ & $0.1$ & $0.2$\\
	$1$  & $0$ & $0.1$ & $0.0$ & $0.1$ & $\bf 0.4$\\
	$1$  & $1$ & $0.2$ & $0.3$ & $\bf 0.5$ & $0.3$\\\hline
\end{tabular}
}\quad
\subfigure[Infimum expectation of the maximality criterion]{ 
\begin{tabular}{c|c:c:c:c}
\hline $\vect{y}$ & $00$ & $01$ & $10$ & $11$ \\\hline
$\lexpe_{\hat{\credal}_{Y}}\left[\losszero(\vect{y}, \cdot)-\losszero(00, \cdot)\right]$ & $\cdot$ & $-0.1$ & $-0.3$ & $-0.2$ \\
$\lexpe_{\hat{\credal}_{Y}}\left[\losszero(\vect{y}, \cdot)-\losszero(01, \cdot)\right]$ & $-0.2$ & $\cdot$ & $-0.2$ & $-0.4$  \\
$\lexpe_{\hat{\credal}_{Y}}\left[\losszero(\vect{y}, \cdot)-\losszero(10, \cdot)\right]$ & $-0.4$ & $-0.3$ & $\cdot$ & $-0.4$ \\
$\lexpe_{\hat{\credal}_{Y}}\left[\losszero(\vect{y}, \cdot)-\losszero(11, \cdot)\right]$ & $-0.2$ & $-0.1$ & $-0.1$ & $\cdot$\\\hline
\end{tabular}
}
\caption{Comparaison between Maximality and E-admissibility criterions.}
\label{tbl:exaadmissiblemax}
\end{table}

Under the zero-one loss function $\losszero$, we can apply Equation~\eqref{eq:compaIP} to obtain the infimum expectations of each pairwise comparison of values of the Boolean space $\spacelabel$ (so 12 comparisons) in Table ~\ref{tbl:exaadmissiblemax}(b), and likewise applying Equation~\eqref{eq:E-adm-set} we obtain the maximum probability per probability distribution $\hat{P}^*$ (in bold in Table~\ref{tbl:exaadmissiblemax}(a)). Therefore, the set-valued of predictions of the E-admissibility and the Maximality criterions are:
\begin{align*}
	\hat{\mathbb{Y}}^E_{\losszero,\credal}=\{00, 10, 11\} \quad\text{and}\quad
	\hat{\mathbb{Y}}^{M}_{\losszero,\credal} =\{00, 01, 10, 11\}.
\end{align*}
Note that $\hat{\mathbb{Y}}^{M}_{\losszero,\credal}$ is more conservative than $\hat{\mathbb{Y}}^E_{\losszero,\credal}$ since the last one does not consider the solution $\{01\}$.
\end{example}

Computing $\hat{\mathbb{Y}}^{E}_{\ell,\credal}$ and $\hat{\mathbb{Y}}^{M}_{\ell,\credal}$ can be a computationally demanding task, making the prediction step critical when considering combinatorial spaces, such as the one considered in this paper. For instance, obtaining $\hat{\mathbb{Y}}^{M}_{\ell,\credal}$ may require at worst to perform $(|\mathcal{Y}|)(|\mathcal{Y}|-1)/2$ comparisons, where $|\mathcal{Y}|=2^m$, ending up with a complexity of $\mathcal{O}(2^{2m})$ that quickly becomes untractable even for small values of $m$. 

\begin{example}\label{exm:expe_inf_illu}
Figure~\ref{fig:egprecisetrloss} illustrates the computation of an expected loss in the case of a probabilistic tree and the zero/one loss function ($\ell(\vect{y}',\vect{y})=1$ if $\vect{y} \neq \vect{y}'$, 0 else), when comparing the two items $\vect{y}'=(0,1)$ and $\vect{y}''=(1,0)$. Global expectation is reached by computing local expectations recursively at each node, starting from the leaves of the tree to get at the root. In this case, we have that $(1,0) \succ^p_{\ell_{0/1}} (0,1)$, since the expectation of the difference $\ell_{0/1} \left((0,1),\cdot\right) - \ell_{0/1} \left((1,0),\cdot\right)$  is positive.  
\begin{figure}[!ht]
	\centering
	\newcommand{\exptwoeg}[3]{\underline{\mathbb{E}}_{Y_{\{#1\}}|\newinstance}\left[ \ell_H(\cdot, \overline{a}_\mathcal{I}) \middle| Y_{\{#2\}}\!=\!#3\right]}
\newcommand{\expeg}[1]{\underline{\mathbb{E}}_{Y_{\{#1\}}|\newinstance}\left[ \cdot \right]}
\tikzset{
  treenode/.style = {shape=rectangle, rounded corners,
                     draw, align=center,
                     top color=white, bottom color=blue!20},
  root/.style     = {treenode, font=\Large, bottom color=red!30},
  env/.style      = {treenode, font=\ttfamily\normalsize},
  dummy/.style    = {circle,draw}
}
\tikzstyle{level 1}=[level distance=4cm, sibling distance=2.7cm]
\tikzstyle{level 2}=[level distance=3cm, sibling distance=1.2cm]
\tikzstyle{bag} = [text width=4em, text centered]
\tikzstyle{end} = [inner sep=0pt]
\begin{tikzpicture}[
	grow=right, auto,
	sloped,
	edge from parent/.style = {draw, -latex, font=\tiny\sffamily, text=blue},
	every node/.style  = {font=\footnotesize}]
\node[root] {}
    child {
        node[env] {$y_1=1$}        
            child {
                node[end, label=right:{$(y_1\!=\!1, y_2\!=\!1)$}]  {}
                edge from parent
                node[below] (11) {$0.7$}
            }
            child {
                node[end, label=right:{$(y_1\!=\!1, y_2\!=\!0)$}]  {}
                edge from parent
                node[above] (10) {$0.3$}
            }
            edge from parent 
            node[below] (1) {$0.5$} 
    }
    child {
        node[env] {$y_1=0$}       
        child {
                node[end, label=right:{$(y_1\!=\!0, y_2\!=\!1)$}] {}
                edge from parent
                node[below] (01) {$0.2$}
            }
        child {
                node[end, label=right:{$(y_1\!=\!0, y_2\!=\!0)$}] {}
                edge from parent
                node[above] (00)  {$0.8$}
            }
        edge from parent         
        node[above] (0) {$0.5$}
    };
\draw[red, transform canvas={xshift=-15pt}, bend right,-]  (11) to node [auto, yshift=-60pt, xshift=0pt] {\rotatebox{270}{$\expe=0.3 \cdot 1=0.3$}} (10);

\draw[red, transform canvas={xshift=-15pt}, bend right,-]  (01) to node [auto,  yshift=-72pt, xshift=0pt] {\rotatebox{270}{$\expe=0.2 \cdot -1=-0.2$}} (00);
 
\draw[red, transform canvas={xshift=-15pt}, bend right,-]  (1) to node [auto, yshift=-110pt, xshift=0pt] {\rotatebox{270}{$\expe=0.5 \cdot -0.2 + 0.5 \cdot 0.3=0.05$}} (0);

\node at (11,2.5) {$\ell_{0/1} \left((0,1),\cdot\right) - \ell_{0/1} \left((1,0),\cdot\right)= $};

\node at (11,1.95) {$0$};
\node at (11,0.75) {$-1$};
\node at (11,-0.75) {$1$};
\node at (11,-1.95) {$0$};
\end{tikzpicture}
	\caption{Probabilistic tree and expected loss}
	\label{fig:egprecisetrloss}
\end{figure}

Figure~\ref{fig:egIprecisetrloss} pictures an imprecise probabilistic tree for the same situation, where probabilities in each branch are replaced by intervals (that in the binary case are sufficient to represent any convex set). The computation of the corresponding lower expectation is done in the same way as in the precise case, starting from the leaves and picking the right interval bounds to obtain lower values of the local expectations. In the example, we still have that $(1,0) \succ^\credal_{\ell_{0/1}} (0,1)$, as the final lower expectation is positive.
\begin{figure}[!ht]
	\centering
	\newcommand{\exptwoeg}[3]{\underline{\mathbb{E}}_{Y_{\{#1\}}|\newinstance}\left[ \ell_H(\cdot, \overline{a}_\mathcal{I}) \middle| Y_{\{#2\}}\!=\!#3\right]}
\newcommand{\expeg}[1]{\underline{\mathbb{E}}_{Y_{\{#1\}}|\newinstance}\left[ \cdot \right]}
\tikzset{
  treenode/.style = {shape=rectangle, rounded corners,
                     draw, align=center,
                     top color=white, bottom color=blue!20},
  root/.style     = {treenode, font=\Large, bottom color=red!30},
  env/.style      = {treenode, font=\ttfamily\normalsize},
  dummy/.style    = {circle,draw}
}
\tikzstyle{level 1}=[level distance=4cm, sibling distance=2.7cm]
\tikzstyle{level 2}=[level distance=3cm, sibling distance=1.2cm]
\tikzstyle{bag} = [text width=4em, text centered]
\tikzstyle{end} = [inner sep=0pt]
\begin{tikzpicture}[
	grow=right, auto,
	sloped,
	edge from parent/.style = {draw, -latex, font=\tiny\sffamily, text=blue},
	every node/.style  = {font=\footnotesize}]
\node[root] {}
    child {
        node[env] {$y_1=1$}        
            child {
                node[end, label=right:{$(y_1\!=\!1, y_2\!=\!1)$}]  {}
                edge from parent
                node[below] (11) {$[0.613, \mathbf{0.713}]$}
            }
            child {
                node[end, label=right:{$(y_1\!=\!1, y_2\!=\!0)$}]  {}
                edge from parent
                node[above] (10) {$[\mathbf{0.287}, 0.387]$}
            }
            edge from parent 
            node[below] (1) {$[\mathbf{0.456}, 0.556]$} 
    }
    child {
        node[env] {$y_1=0$}       
        child {
                node[end, label=right:{$(y_1\!=\!0, y_2\!=\!1)$}] {}
                edge from parent
                node[below] (01) {$[0.138, \mathbf{0.238}]$}
            }
        child {
                node[end, label=right:{$(y_1\!=\!0, y_2\!=\!0)$}] {}
                edge from parent
                node[above] (00)  {$[\mathbf{0.762}, 0.862]$}
            }
        edge from parent         
        node[above] (0) {$[0.444, \mathbf{0.544}]$}
    };
\draw[red, transform canvas={xshift=-15pt}, bend right,-]  (11) to node [auto, yshift=-47pt, xshift=0pt] {\rotatebox{270}{$\lexpe=0.287 \cdot 1$}} (10);

\draw[red, transform canvas={xshift=-15pt}, bend right,-]  (01) to node [auto,  yshift=-53pt, xshift=0pt] {\rotatebox{270}{$\lexpe=0.238 \cdot -1$}} (00);

\draw[red, transform canvas={xshift=-15pt}, bend right,-]  (1) to node [auto, yshift=-130pt, xshift=0pt] {\rotatebox{270}{$\lexpe=0.544 \cdot -0.238 + 0.456 \cdot 0.287 > 0$}} (0);

\node at (11,2.5) {$\ell_{0/1} \left((0,1),\cdot\right) - \ell_{0/1} \left((1,0),\cdot\right)= $};

\node at (11,1.95) {$0$};
\node at (11,0.75) {$-1$};
\node at (11,-0.75) {$1$};
\node at (11,-1.95) {$0$};
\end{tikzpicture}
	\caption{Imprecise probabilistic tree and lower expected loss}
	\label{fig:egIprecisetrloss}
\end{figure}
\end{example}

All of this means that simply enumerating elements of $\mathcal{Y}$ is not practically possible, and other strategies need to be adopted. In what follows, we show that in the case of Hamming loss, one of the most common loss used in multi-label and binary problems, we can use an efficient algorithmic procedure to perform skeptic inferences. This is done both for general sets $\credal$ and for specific sets induced from binary relevance models. We also show that some previous results giving rough outer-approximations of skeptic inferences in the general case turn out to be exact for such binary relevance models. 

\section{Skeptic inference for the Hamming loss}
\label{sec:theoretical_res}

The hamming loss, that we will denote $\ell_H$, is a commonly used loss in multi-label problems. It simply amounts to compute the Hamming distance between the ground truth $\vect{y}$ and a prediction $\hat{\vect{y}}$, that is
\begin{equation}\label{eq:Hloss}
\ell_H(\hat{\vect{y}},\vect{y})=\sum_{i=1}^m \indicator{\hat{y}_i \neq y_i }
=|\mathcal{I}_{\hat{\vect{y}}\neq \vect{y}}|\end{equation}
where $\indicator{A}$ denotes the indicator function of event $A$. Note that in contrast with the subset loss $\ell_{0/1}(\hat{\vect{y}},\vect{y})=\indicator{\hat{\vect{y}}\neq \vect{y}}$, the Hamming loss differentiate the situations where only some mistakes are made from the ones where a lot of them are made (being maximum when $\hat{\vect{y}}=\overline{\vect{y}}$ is the complement of $\vect{y}$). 

In the case of precise probabilities, it is also useful to recall that the optimal prediction for the Hamming loss~\cite{dembczynski2012label}, i.e. the vector $\hat{\vect{y}}_{\ell_H,p}$ satisfying Equation~\eqref{eq:minexpe} is
\begin{equation}\label{eq:optiHamPrec}\hat{y}_{i,\ell_H,p}=\begin{cases} 1 & \text{ if } p(Y_{\{i\}}=1) \geq \frac{1}{2} \\ 0 & \text{ else. }\end{cases}\end{equation}

When considering a set $\credal$ of distribution, one is immediately tempted to adopt as partial prediction the partial vector $\hat{\bm{\mathfrak{y}}}_{\ell_H,\credal} \in \mathfrak{Y} $ such that
\begin{equation}\label{eq:approxHamIP}
	\hat{\mathfrak{y}}_{i,\ell_H,\credal}=
	\begin{cases} 
		1 & \text{ if } \underline{P}(Y_{\{i\}}=1) > \frac{1}{2} \\ 
		0 & \text{ if } \underline{P}(Y_{\{i\}}=0) > \frac{1}{2} \\ 
		* & \text{ if } \frac{1}{2} \in [\underline{P}(Y_{\{i\}}=1),\overline{P}(Y_{\{i\}}=1)].
	\end{cases}
\end{equation}
It has however been proven that $\hat{\vect{\mathfrak{y}}}_{\ell_H,\credal}$ is in general an outer-approximation of $\hat{\mathbb{Y}}^{E}_{\ell,\credal}$ and $\hat{\mathbb{Y}}^{M}_{\ell,\credal}$ (i.e., $\hat{\mathbb{Y}}^{E}_{\ell,\credal}, \hat{\mathbb{Y}}^{M}_{\ell,\credal} \subseteq \hat{\bm{\mathfrak{y}}}_{\ell_H,\credal} $), thus providing a quick heuristic to get an approximate answer~\cite{destercke2014multilabel}. 

The following sub-sections study the problem of providing exact skeptic inferences; firstly for any possible probability set $\mathcal{P}$, and then for the specific case where $\mathcal{P}$ is built from marginal models on each label, that corresponds to binary relevance approaches in multi-label learning.

\subsection{General case}\label{sec:generalcase}

In this section, we demonstrate that for the Hamming loss, we can use inference procedures that are much more efficient than an exhaustive, naive enumeration. Let us first simplify the expression of the expected value.
\begin{lemma}\label{prop:expcond}
    In the case of Hamming loss and given $\vect{y}^1,\vect{y}^2$, we have
    	\begin{equation}\label{eq:expcond}
			\mathbb{E}\left[ \ell_H(\vect{y}^2, \cdot) - \ell_{H}(\vect{y}^1, \cdot) \right]  =
		 \sum_{i=1}^m  P(Y_i=y^1_i) - P(Y_i=y^2_i)  
\end{equation}
\end{lemma}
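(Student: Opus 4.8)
The plan is to exploit the additive structure of the Hamming loss together with linearity of expectation, which reduces the joint expectation to a sum of marginal contributions. First I would write the difference of losses pointwise: for any ground-truth realisation $\vect{y}$, expanding the definition in Equation~\eqref{eq:Hloss} gives
\begin{equation*}
\ell_H(\vect{y}^2,\vect{y}) - \ell_H(\vect{y}^1,\vect{y}) = \sum_{i=1}^m \left( \indicator{y^2_i \neq y_i} - \indicator{y^1_i \neq y_i} \right),
\end{equation*}
so that the difference decomposes coordinate by coordinate, each term depending only on the $i$th component $y_i$ of the ground truth.

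Next I would take the expectation over the random vector $\mathbf{Y}$ (distributed according to $P$) and push it inside the finite sum by linearity. The key observation is that each summand involves only the single coordinate $Y_i$, so its expectation is governed entirely by the marginal law of $Y_i$, and no knowledge of the joint distribution is required. Concretely, $\expe\!\left[\indicator{y^2_i \neq Y_i}\right] = P(Y_i \neq y^2_i) = 1 - P(Y_i = y^2_i)$, and symmetrically $\expe\!\left[\indicator{y^1_i \neq Y_i}\right] = 1 - P(Y_i = y^1_i)$.

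Finally I would subtract the two expected indicators; the constant $1$ terms cancel, leaving $P(Y_i = y^1_i) - P(Y_i = y^2_i)$ for each coordinate, and summing over $i=1,\ldots,m$ yields Equation~\eqref{eq:expcond}. I do not expect any genuine obstacle here: the argument is a routine linearity-of-expectation computation. The only point worth emphasising is structural rather than technical, namely that the additive form of $\ell_H$ is precisely what collapses the joint expectation into a sum over marginals of the individual labels $Y_i$; this is the fact that will later make the near-linear-time skeptic inference procedures possible.
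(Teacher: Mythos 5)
Your proof is correct, and it rests on the same two pillars as the paper's: the additive decomposition of the Hamming loss and the reduction of each term to a marginal probability. The difference is in execution. You invoke linearity of expectation directly and observe that each summand $\mathbbm{1}_{y^j_i \neq Y_i}$ is a function of the single coordinate $Y_i$, so that $\mathbb{E}\bigl[\mathbbm{1}_{y^j_i \neq Y_i}\bigr] = 1 - P(Y_i = y^j_i)$ follows immediately, after which the constants cancel in the difference. The paper instead carries out this marginalization by hand: it writes the expectation as an explicit sum over the joint space $\mathcal{Y}$, fixes one coordinate $k$, splits the inner sum to isolate the terms involving $y_k$, shows that summing over $y_k$ collapses the remaining terms onto the marginal $P_x(Y_{-k})$, extracts the contribution $P_x(Y_k = y^1_k) - P_x(Y_k = y^2_k)$, and then repeats the argument recursively on the lower-dimensional sum until all $m$ coordinates have been peeled off. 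Mathematically the content is identical — your one-line appeal to ``the expectation of a function of $Y_i$ depends only on the marginal of $Y_i$'' is exactly what the paper's recursion establishes from first principles — so your version buys brevity and makes the structural point (additivity of $\ell_H$ collapses joint expectations to marginals) transparent, while the paper's version makes the collapsing mechanism explicit at the level of sums over the joint distribution, at the cost of considerable bookkeeping. No step in your argument fails, and nothing needs to be added.
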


If we consider a set of indices $\mathcal{I}_{\vect{y}^1 = \vect{y}^2}$ on which the Equation~\eqref{eq:expcond} is cancelled, it can be rewritten 
\begin{equation}\label{eq:hamsimpl}
\sum_{i \in \mathcal{I}_{\vect{y}^1\neq \vect{y}^2}}  P(Y_i=y^1_i) - P(Y_i=y^2_i).
\end{equation}
The next proposition shows that this expression can be leveraged to perform the maximality check of Equation~\eqref{eq:compaIP} on a limited number of vectors. 

\begin{proposition}\label{prop:newdecision}
    For a given set $\mathcal{I}$ of indices, let us consider an assignment $\vect{a}_{\mathcal{I}}$ and its complement $\overline{\vect{a}}_\mathcal{I}$. Then, for any two vectors $\vect{y}^1,\vect{y}^2$ such that $\vect{y}^1_\mathcal{I}=\vect{a}_{\mathcal{I}}$, $\vect{y}^2_\mathcal{I}=\overline{\vect{a}}_{\mathcal{I}}$ and $\vect{y}^1_{-\mathcal{I}}=\vect{y}^2_{-\mathcal{I}}$, we have
\begin{equation}\label{eq:maximalitycriterion}
 \vect{y}^1 \succ_M \vect{y}^2 \iff \inf_{P \in \credal}  
 \sum_{i \in \mathcal{I}}  P(Y_i=a_i) > \frac{|\mathcal{I}|}{2} 	
\end{equation}
\end{proposition}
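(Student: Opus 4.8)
The plan is to reduce the maximality comparison in Equation~\eqref{eq:compaIP} to the simplified expression already established in Lemma~\ref{prop:expcond}, and then exploit the special structure of the two vectors $\vect{y}^1,\vect{y}^2$ chosen in the statement. Since $\vect{y}^1_{-\mathcal{I}}=\vect{y}^2_{-\mathcal{I}}$, these two vectors agree exactly outside $\mathcal{I}$, so $\mathcal{I}_{\vect{y}^1 \neq \vect{y}^2}=\mathcal{I}$ (using $\vect{y}^1_\mathcal{I}=\vect{a}_\mathcal{I}$ and $\vect{y}^2_\mathcal{I}=\overline{\vect{a}}_\mathcal{I}$, every index in $\mathcal{I}$ is a disagreement, and no index outside $\mathcal{I}$ is). This lets me invoke the rewritten form in Equation~\eqref{eq:hamsimpl} rather than summing over all $m$ coordinates.

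First I would start from the definition $\vect{y}^1 \succ_M \vect{y}^2$, which by Equation~\eqref{eq:compaIP} means $\lexpe\left(\ell_H(\vect{y}^2,\cdot) - \ell_H(\vect{y}^1,\cdot)\right) > 0$, i.e. $\inf_{P \in \credal}\expe_P\left(\ell_H(\vect{y}^2,\cdot) - \ell_H(\vect{y}^1,\cdot)\right) > 0$. Applying Lemma~\ref{prop:expcond} with the roles matched so that the difference $\ell_H(\vect{y}^2,\cdot)-\ell_H(\vect{y}^1,\cdot)$ yields $\sum_i P(Y_i=y^1_i)-P(Y_i=y^2_i)$, and then restricting to $\mathcal{I}$ via Equation~\eqref{eq:hamsimpl}, I reduce the comparison to $\inf_{P \in \credal}\sum_{i \in \mathcal{I}} \left(P(Y_i=a_i)-P(Y_i=\overline{a}_i)\right) > 0$. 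Here $y^1_i=a_i$ and $y^2_i=\overline{a}_i$ for $i \in \mathcal{I}$.

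The key algebraic step is then to replace $P(Y_i=\overline{a}_i)$ by $1-P(Y_i=a_i)$, since each $Y_i$ is binary. Substituting gives the summand $P(Y_i=a_i)-(1-P(Y_i=a_i)) = 2P(Y_i=a_i)-1$, so the inequality becomes $\inf_{P \in \credal}\sum_{i \in \mathcal{I}}\left(2P(Y_i=a_i)-1\right) > 0$. Pulling the constant $-1$ out of the sum over $\mathcal{I}$ contributes $-|\mathcal{I}|$, which is deterministic and therefore passes through the infimum unchanged; factoring out the $2$ then rearranges the condition into $\inf_{P \in \credal}\sum_{i \in \mathcal{I}} P(Y_i=a_i) > \frac{|\mathcal{I}|}{2}$, which is exactly Equation~\eqref{eq:maximalitycriterion}. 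Every step is an equivalence, so the biconditional $\iff$ holds.

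The reasoning is essentially a chain of equivalences, so there is no deep obstacle; the one point demanding care is the orientation of the difference when applying Lemma~\ref{prop:expcond}, ensuring that the surviving marginal terms are $P(Y_i=a_i)$ (the assignment of $\vect{y}^1$) rather than $P(Y_i=\overline{a}_i)$, since swapping $\vect{y}^1$ and $\vect{y}^2$ flips the sign and would change whether the threshold is $\frac{|\mathcal{I}|}{2}$ or something complementary. I would also explicitly note that moving the deterministic constant $|\mathcal{I}|$ across the infimum is legitimate precisely because it does not depend on $P$, and that binarity of each $Y_i$ is what justifies $P(Y_i=\overline{a}_i)=1-P(Y_i=a_i)$.
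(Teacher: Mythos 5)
Your proof is correct and takes essentially the same route as the paper's own proof: both reduce the maximality comparison to the sum over $\mathcal{I}_{\vect{y}^1\neq \vect{y}^2}=\mathcal{I}$ via Lemma~\ref{prop:expcond} and Equation~\eqref{eq:hamsimpl}, use binarity to write $P(Y_i=\overline{a}_i)=1-P(Y_i=a_i)$, and rearrange the resulting constant $|\mathcal{I}|$ across the infimum to obtain the threshold $\nicefrac{|\mathcal{I}|}{2}$. The two points you single out as needing care (the orientation of the loss difference when applying the lemma, and the legitimacy of moving the deterministic constant through the infimum) are precisely the steps the paper's chain of equivalences performs implicitly, so your write-up is simply a more explicit version of the same argument.
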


In the remaining of the paper, given a partial assignment $\vect{b}_{\mathcal{I}}$ over a subset of indices $\mathcal{I}$, we will define the partial Hamming loss between $\vect{b}_{\mathcal{I}}$ and an observation $\vect{y}$ as
\begin{equation}\label{eq:partial_ham_loss}
	\ell_H^*(\vect{b}_\mathcal{I}, \vect{y})=\sum_{i \in \mathcal{I}} \indicator{b_i \neq y_i}.
\end{equation}
When $\mathcal{I}=\{1,\ldots,m\}$, we simply retrieve the usual Hamming loss. The next proposition shows that the condition of Proposition~\ref{prop:newdecision} actually comes down to minimize the expected partial Hamming loss. 

\begin{proposition}\label{prop:Ham_partial_loss}
    For a given set $\mathcal{I}$ of indices, let us consider an assignment $\vect{a}_{\mathcal{I}}$ and its complement $\overline{\vect{a}}_\mathcal{I}$. We have
\begin{equation}
	\inf_{P \in \credal}  \sum_{i \in \mathcal{I}}  P(Y_i=a_i) = 
	\underline{\mathbb{E}}[\ell_H^*(\overline{\vect{a}}_\mathcal{I},\cdot)] 
\end{equation}
\end{proposition}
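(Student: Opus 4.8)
The plan is to unfold the definition of the partial Hamming loss and exploit the fact that, over a binary alphabet, taking the complement of an assignment turns a ``disagreement'' indicator into an ``agreement'' indicator. Concretely, starting from Equation~\eqref{eq:partial_ham_loss} I would write, for a fixed $\vect{y}$,
\begin{equation*}
\ell_H^*(\overline{\vect{a}}_\mathcal{I},\vect{y}) = \sum_{i \in \mathcal{I}} \indicator{\overline{a}_i \neq y_i}.
\end{equation*}
The crux of the argument is the elementary binary identity $\indicator{\overline{a}_i \neq y_i} = \indicator{y_i = a_i}$, valid because $y_i$ and $a_i$ live in $\{0,1\}$, so that $y_i$ differs from the complement $\overline{a}_i$ exactly when $y_i$ coincides with $a_i$. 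This is the only step that requires the Boolean structure, and it is really the whole content of the proposition; everything else is bookkeeping.

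Next I would fix an arbitrary $P \in \credal$ and apply linearity of expectation to the (finite) sum of indicators. Since $\expe_P[\indicator{Y_i = a_i}] = P(Y_i = a_i)$, the identity above gives
\begin{equation*}
\expe_P[\ell_H^*(\overline{\vect{a}}_\mathcal{I},\cdot)] = \sum_{i \in \mathcal{I}} \expe_P[\indicator{Y_i = a_i}] = \sum_{i \in \mathcal{I}} P(Y_i = a_i),
\end{equation*}
which holds for every $P$ in the credal set. I would stress that the reduction to marginal probabilities is legitimate here precisely because the integrand is a sum of single-coordinate indicators, so no joint information beyond the marginals $P(Y_i = a_i)$ survives.

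Finally I would take the infimum over $P \in \credal$ on both sides. By the definition of the lower expectation $\lexpe[\cdot] = \inf_{P \in \credal} \expe_P[\cdot]$, the left-hand side becomes $\lexpe[\ell_H^*(\overline{\vect{a}}_\mathcal{I},\cdot)]$, while the right-hand side becomes $\inf_{P \in \credal} \sum_{i \in \mathcal{I}} P(Y_i = a_i)$, yielding the claimed equality. I expect no genuine obstacle: the infimum is kept on the outside of the sum throughout, so there is no question of exchanging $\inf$ with the summation, and the result follows immediately once the complement identity is in place. Combined with Proposition~\ref{prop:newdecision}, this recasts the maximality test as the minimisation of an expected partial Hamming loss, which is the form we will exploit algorithmically.
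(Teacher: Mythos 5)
Your proposal is correct and follows essentially the same route as the paper's proof: both rest on the binary identity $\indicator{y_i \neq \overline{a}_i} = \indicator{y_i = a_i}$, then use linearity (the paper phrases it as exchanging two finite sums over $\mathcal{Y}$ and $\mathcal{I}$, you phrase it as linearity of expectation applied to single-coordinate indicators), and finally take the infimum over $P \in \credal$ on both sides of an identity valid for every $P$. There is no substantive difference between the two arguments.
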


\begin{algorithm}[!ht]
 \caption{Maximal solutions under Hamming loss and general set}
 \label{alg:HamMaxim}
 \KwData{$\mathscr{P}$ (convex set of distributions)}
 \KwResult{$\hat{\mathbb{Y}}^{M}_{\ell_H,\credal}$ (set of undominated  solutions)}
	    $S=\mathcal{Y}$\;
	    \For{i in 1:m}
	    {$\mathcal{Z}_i = \{ \mathcal{I} : \mathcal{I} \subseteq \{1,\ldots,m\}, |\mathcal{I}|=i \}$ \tcp*{\small Index sets of size $i$}
	     	\ForAll{$z \in \mathcal{Z}_i$}
	     	{
	     		\ForAll{$\bm a_z \in \mathcal{Y}_z$ \tcp*{\small Binary vectors over indices in $z$}} 
	     			{\lIf{$\inf_{P \in \credal}  \sum_{j \in z}   P(Y_j=a_j) > \frac{i}{2}$}
	     			{$S = S \setminus \{\vect{y} \in \mathcal{Y} : \vect{y}_z=\overline{\vect{a}}_z\}$}
	     		}
	        }	
	     }
	\end{algorithm}

This allows us to use Algorithm~\ref{alg:HamMaxim} to find  $\hat{\mathbb{Y}}^{M}_{\ell_H,\credal}$. The following result provides the time complexity of the algorithm. 

\begin{proposition}\label{prop:algcomplexity}
Algorithm~\ref{alg:HamMaxim} has to perform $3^m - 1$ computations, and its complexity is in $\mathcal{O}(3^m)$
\end{proposition}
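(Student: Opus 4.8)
The plan is to reduce the statement to a direct counting argument over the three nested loops of Algorithm~\ref{alg:HamMaxim}, where the unit of ``computation'' is one evaluation of the innermost conditional test $\inf_{P \in \credal} \sum_{j \in z} P(Y_j=a_j) > \frac{i}{2}$ (each such test is the dominant operation, being the lower-expectation optimization justified by Propositions~\ref{prop:newdecision} and \ref{prop:Ham_partial_loss}). First I would read off the loop bounds: the outer loop runs over $i \in \{1,\ldots,m\}$; for fixed $i$, the middle loop ranges over $z \in \mathcal{Z}_i$, i.e.\ over all index subsets of $\{1,\ldots,m\}$ of size $i$, of which there are exactly $\binom{m}{i}$; and for fixed $z$, the inner loop ranges over $\bm{a}_z \in \mathcal{Y}_z$, the binary vectors over the $i$ indices in $z$, of which there are $2^{|z|}=2^{i}$.

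Multiplying the three ranges and summing over $i$, the total number of evaluations of the test is
\begin{equation*}
\sum_{i=1}^m \binom{m}{i}\, 2^{i}.
\end{equation*}
The key step is then to recognize this as a (truncated) binomial expansion: by the binomial theorem applied to $(2+1)^m$ we have $\sum_{i=0}^m \binom{m}{i} 2^{i} 1^{m-i} = 3^m$, and subtracting the $i=0$ term, which equals $\binom{m}{0}2^{0}=1$, yields $\sum_{i=1}^m \binom{m}{i} 2^{i} = 3^m - 1$. Since $3^m - 1 \in \mathcal{O}(3^m)$, the claimed complexity follows immediately.

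There is no genuine mathematical obstacle here; the argument is purely combinatorial. The only points requiring a little care are identifying what is being counted---namely the infimum checks, not elementary arithmetic operations---and confirming that $|\mathcal{Y}_z| = 2^{|z|}$ so that the inner loop contributes the factor $2^i$ rather than a constant. I would also add a brief remark that the figure $3^m$ should be interpreted as the number of lower-expectation tests, each of which is itself a (potentially nontrivial) optimization over the credal set $\credal$; the practical cost of one such test depends on the representation of $\credal$, but the stated result isolates the exponential reduction from $\mathcal{O}(2^{2m})$ (naive pairwise comparison of all elements of $\mathcal{Y}$) down to $\mathcal{O}(3^m)$ tests.
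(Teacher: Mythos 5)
Your proof is correct and follows essentially the same route as the paper: count the iterations of the three nested loops as $\sum_{i=1}^m \binom{m}{i}\, 2^i$ (with $\binom{m}{i}$ index subsets of size $i$ and $2^i$ assignments each, one lower-expectation test per assignment) and evaluate the sum by the binomial theorem to obtain $3^m-1$. Incidentally, your summand is written correctly, whereas the paper's displayed sum reads $\sum_{k=1}^{m} 2^{m-k}\binom{m}{k}$, which as written evaluates to $3^m-2^m$; your version silently fixes that index typo while reaching the same intended conclusion.
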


Proposition~\ref{prop:algcomplexity} tells us that, in the case of Hamming loss, finding $\hat{\mathbb{Y}}^{M}_{\ell,\credal}$ can be done almost linearly with respect to the size of $\mathcal{Y}$. This is to be compared to a naive enumeration, that requires $(2^m)(2^m-1)$ computations. Figure~\ref{fig:compaalgonaive} plots the two curves as a function of the number $m$ of labels, demonstrating that our result allows a significant gain in computations. Furthermore, using the same experimental setup of Section~\ref{sec:expe}, in which we shall study the differences between $\hat{\mathbb{Y}}^{M}_{\ell,\credal}$ and the crude approximation of Equation~\eqref{eq:approxHamIP}, we show in Figure~\ref{fig:compaalgonaivepractical} the empirical evolution of the average computation time on $10^4$ comparaisons in milliseconds (and in $log_{10}$ scale) of the naive version against the exact optimal procedure proposed in Algorithm~\ref{alg:HamMaxim}, where small imprecision and high imprecision refer to $\epsilon=0.05$ and $\epsilon=0.45$, respectively (see Section~\ref{sec:expe} for more details).

\begin{figure}[!ht]
    \centering
    \subfigure[\sc Theorical]{
       \label{fig:compaalgonaive}
	   \begin{tikzpicture}[scale=0.6]
		\begin{axis}[
			xlabel=$m$,ylabel=\# of Evaluation needed ($log_{10}$),
			                xmin=0, xmax=35, 
			                legend style={at={(0.5,-0.15)},
	      anchor=north,legend columns=-1},]
		\addplot[mark=*,black,domain=0:35] {ln(2^(2*x)-2^x)*(1/ln(10))};
		\addplot[mark=square*,red,domain=0:35] {ln(3^x-1)*(1/ln(10))};
		\legend{Naive,Algorithm~1};
		\end{axis}%
	   \end{tikzpicture}%
	}\qquad%
	\subfigure[\sc Empirical Experimentation]{
		\includegraphics[scale=0.425]{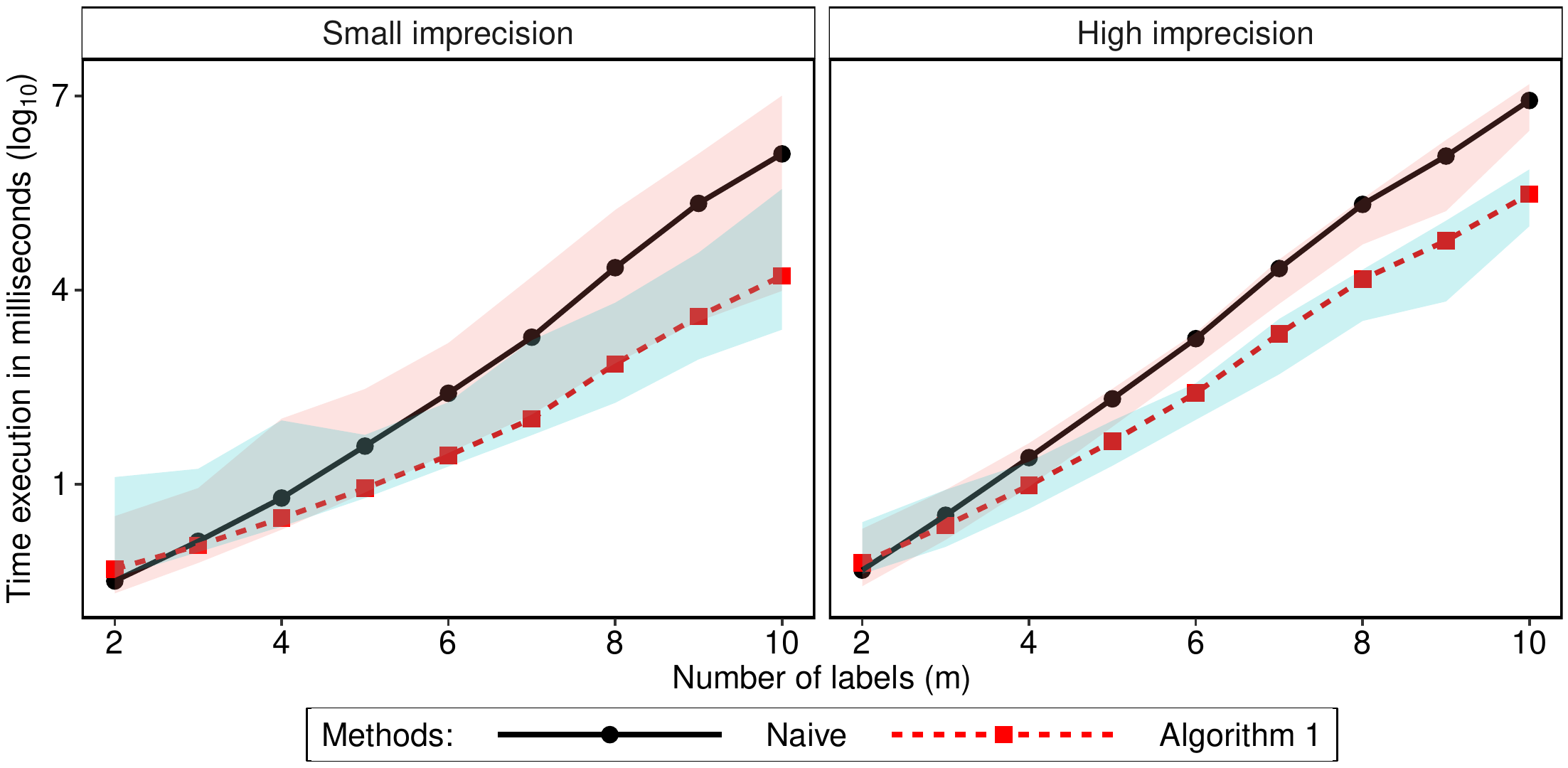}
		\label{fig:compaalgonaivepractical}
	}	
    \caption{Comparison of Algorithm~\ref{alg:HamMaxim} with naive enumeration (log-ordinate scale).}
    
\end{figure}

It should strongly be noted that the time complexity obtained in the Proposition~\ref{alg:HamMaxim} can still be reduced using two different strategies: (1) removing dominated elements already verified (cf. \cite[algo.~16.4]{augustin2014introduction}), and (2) using the precise prediction as a solution of the set of solutions E-admissible (c.f \cite[appx. A]{alarcon2021imprecise}). However, in the worst-case scenario, in which all elements are non-dominated, the time complexity remains the same.

As said before, the set $\hat{\mathbb{Y}}^{M}_{\ell_H,\credal}$ will in general not be exactly described by a partial vector within $\mathfrak{Y}$, as shows the next example. 

\begin{example}\label{exa:notpartialY}
	Consider again the tree provided in Figure~\ref{fig:egIprecisetrloss}. The result of applying Algorithm~\ref{alg:HamMaxim} provides the following results:
\begin{align*}
		\underline{\mathbb{E}}\left[ \ell_H((1, *),\cdot) \right]=0.444 >0.5 
			\implies (0, *) \not\succ_M (1, *),\\
		\underline{\mathbb{E}}\left[ \ell_H((0, *),\cdot) \right]=0.456 >0.5 
			\implies (1, *) \not\succ_M (0, *),\\
		\underline{\mathbb{E}}\left[ \ell_H((*, 1),\cdot) \right]=0.498 > 0.5 
			\implies (*, 0) \not\succ_M (*, 1),\\
		\underline{\mathbb{E}}\left[ \ell_H((*, 0),\cdot) \right]=0.354 > 0.5 
			\implies (*, 1) \not\succ_M (*, 0),\\
		\underline{\mathbb{E}}\left[ \ell_H((1, 1),\cdot) \right]=0.942 > 1.0
			\implies (0, 0) \not\succ_M (1, 1),\\
		\underline{\mathbb{E}}\left[ \ell_H((1, 0),\cdot) \right]=0.846 > 1.0
			\implies (0, 1) \not\succ_M (1, 0),\\
		\underline{\mathbb{E}}\left[ \ell_H((0, 1),\cdot) \right]=1.001 > 1.0
			\implies \bf (1, 0) \succ_{\emph{M}} (0, 1),\\
		\underline{\mathbb{E}}\left[ \ell_H((0, 0),\cdot) \right]=0.810 > 1.0
			\implies (1, 1) \not\succ_M (0, 0),
\end{align*}
	where for two partial vectors $\vect{y}^1,\vect{y}^2$ such that $\mathcal{I}_{\vect{y}^1}^*=\mathcal{I}_{\vect{y}^2}^*$, we use the short-hand notation $\vect{y}^1 \succ_M \vect{y}^2$ to say that the dominance relation given by Definition~\ref{def:maxi} holds for any fixed replacement of the abstained labels. 
	
	About this example, we can first note that only $3^2-1=8$ comparisons are performed (in accord with Proposition~\ref{prop:algcomplexity}). Secondly, also note that the final solution which is the set
	\begin{align*}
		\hat{\mathbb{Y}}^{M}_{\ell_H,\credal} = \{(1, 0), (0, 0), (1, 1)\}
	\end{align*} does not belong to $\mathfrak{Y}$.
\end{example}

\begin{remark}
Note that if for some partial vector $\vect{y}$ Proposition~\ref{prop:newdecision} holds, the preference also holds for any completion of such a vector. More precisely, if we denote by $\overline{\mathcal{I}}^*$ the indices of non-abstained labels, and $\mathbf{a}_{\mathcal{I}}$ an assignment over indices $\mathcal{I} \subseteq \mathcal{I}^*$ (where $\mathcal{I}^*=\{1, \dots,m\} \setminus \overline{\mathcal{I}}^*$), one can deduce from $\vect{y} \succ_M \overline{\vect{y}}$ that $(\vect{y}_{\overline{\mathcal{I}}^*},\vect{a}_{\mathcal{I}}) \succ_M (\overline{\vect{y}}_{\overline{\mathcal{I}}^*},\vect{a}_{\mathcal{I}})$. For instance if $(0, *, *) \succ_M (1, *, *)$, then we can additionally deduce $(0, *, 0) \succ_M (1, *, 0)$. 
\end{remark}

\begin{remark}\label{rem:imp_prec_loss}
A key finding of the results of this section, illustrated by Example~\ref{exa:notpartialY}, is that when considering sets of distributions and skeptic inferences, it is not sufficient to consider marginal probabilities in order to get optimal, exact predictions. This contrasts heavily with the case of precise distributions, in which having only the marginal information allows to get optimal predictions for a number of loss functions, including the Hamming loss, but also precision\verb+@+k, micro- and macro-F measure, as well as others~\cite{kotlowski2016surrogate,koyejo2015consistent}.
\end{remark}

\begin{remark}
Despite our best efforts and except for some few tweaks, we were not really able to significantly lower the complexity of finding $\hat{\mathbb{Y}}^{M}_{\ell_H,\credal}$, i.e., to go from the still exponential complexity of Proposition~\ref{prop:algcomplexity} to a polynomial one in the number of labels. This contrasts with the precise case, where one can use the marginal information to obtain the result in a polynomial time in the number of labels. The fact that we cannot rely on the marginal bounds of $P(Y_i)$ suggest us that reaching such a polynomial complexity for exact inferences over generic credal sets $\credal$ may not be doable. 
\end{remark}

\subsection{Binary relevance and partial vectors}
\label{sec:Binary_rev}
The previous section looked at the very general case where the set $\credal$ is completely arbitrary and proposed some efficient inference methods for this case. In this section, we are interested in conditions imposed upon $\credal$ that guarantee the sets $\hat{\mathbb{Y}}^{M}_{\ell_H,\credal}$ and $\hat{\mathbb{Y}}^{E}_{\ell_H,\credal}$ to be partial vectors, that is to belong to $\mathfrak{Y}$. In particular, we show that this is the case when considering models that generalize binary relevance notions by using imprecise marginals with an assumption of independence. The interest of studying such models is that they constitute the basic models when it comes to multi-label problems. 

In this section, we therefore consider that the joint probability $p$ over $\mathcal{Y}$ and its imprecise extension are built in the following way: we have some information on the marginal probability $p_{i} \in [0,1]$ of $y_i$ being positive, and define the probability of a vector $\vect{y}$ as
\begin{equation}\label{eq:BRvectprob}
p(\vect{y})=\prod_{\{i| y_i=1\}} p_{i} \prod_{\{i| y_i=0\}} (1-p_{i}).
\end{equation}
Without loss of generality, the imprecise version then amounts to consider that the information we have is an interval $[\underline{p}_i,\overline{p}_i]$, as every convex set of probabilities on a binary space (here, $\{0,1\}$) is an interval. We then consider that a probability set $\credal_{BR}$ over $\mathcal{Y}$ amounts to consider the robust version of Equation~\eqref{eq:BRvectprob}, that is
\begin{equation}\label{eq:BRvectIPprob}
p(\vect{y}) \in \left\{\prod_{\{i| y_i=1\}} p_{i} \prod_{\{i| y_i=0\}} (1-p_{i})| p_i \in  [\underline{p}_i,\overline{p}_i]\right\}.
\end{equation}
In this specific case, we can show that $\hat{\mathbb{Y}}^{E}_{\ell_H,\credal}$ can be exactly described by a partial vector. 

\begin{proposition}
    \label{prop:HammingPartialBR} Given a probability set $\credal_{BR}$ and the Hamming loss, the set $\hat{\mathbb{Y}}^{E}_{\ell_H,\credal_{BR}} \in \mathfrak{Y}$ 
\end{proposition}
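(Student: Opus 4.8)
The plan is to exploit the product structure of $\credal_{BR}$ to show that the E-admissible set factorizes coordinate by coordinate, and that each factor is one of $\{0\}$, $\{1\}$ or $\{0,1\}$, which is exactly what defines a partial vector in $\mathfrak{Y}$.

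First I would parameterize $\credal_{BR}$. By Equation~\eqref{eq:BRvectIPprob}, every $P \in \credal_{BR}$ is the product measure associated with a unique tuple $(p_1,\ldots,p_m)$ with $p_i \in [\underline{p}_i,\overline{p}_i]$, and conversely every such tuple yields a member of $\credal_{BR}$. A routine marginalization (summing the product form over all coordinates other than $i$, which telescopes to $1$) gives $P(Y_{\{i\}}=1)=p_i$, so that $\underline{P}(Y_{\{i\}}=1)=\underline{p}_i$ and $\overline{P}(Y_{\{i\}}=1)=\overline{p}_i$. The essential feature I will use is that the $p_i$ may be selected independently of one another: fixing the marginal on label $i$ imposes no constraint on the marginal of any other label.

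Next I would recall the precise optimal rule. By Equation~\eqref{eq:optiHamPrec}, for a fixed $P=(p_1,\ldots,p_m)$ the Hamming-optimal prediction is computed coordinatewise, with $\hat{y}_i=1$ whenever $p_i \geq \tfrac12$; at the tie $p_i=\tfrac12$ both coordinate values minimise the expected loss, so both are optimal. Consequently a vector $\vect{y}$ equals an optimal prediction $\hat{\vect{y}}^P_{\ell_H}$ (in the sense of Equation~\eqref{eq:minexpe} and Definition~\ref{def:E-adm}) if and only if, for each $i$, the value $y_i$ is compatible with the coordinate rule under $p_i$: namely $y_i=1$ requires $p_i \geq \tfrac12$ and $y_i=0$ requires $p_i \leq \tfrac12$. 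The crux is then the decoupling. Because the $p_i$ range over the intervals $[\underline{p}_i,\overline{p}_i]$ independently, a vector $\vect{y}$ lies in $\hat{\mathbb{Y}}^{E}_{\ell_H,\credal_{BR}}$ if and only if each coordinate value $y_i$ is individually realisable by some $p_i \in [\underline{p}_i,\overline{p}_i]$; one then assembles a single witnessing $P$ by picking the per-coordinate witnesses simultaneously. Hence $\hat{\mathbb{Y}}^{E}_{\ell_H,\credal_{BR}}=\prod_{i=1}^m A_i$, where $y_i=1$ is realisable iff $\overline{p}_i \geq \tfrac12$ and $y_i=0$ is realisable iff $\underline{p}_i \leq \tfrac12$. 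This makes each factor $A_i$ equal to $\{1\}$ (when $\underline{p}_i > \tfrac12$), to $\{0\}$ (when $\overline{p}_i < \tfrac12$), or to $\{0,1\}$ (when $\tfrac12 \in [\underline{p}_i,\overline{p}_i]$), which are precisely the three cases of Equation~\eqref{eq:approxHamIP}. A Cartesian product of singletons and full sets $\{0,1\}$ is, by the very definition of $\mathfrak{Y}$, a partial vector (the singletons fix the corresponding labels and the $\{0,1\}$ factors become $*$), so $\hat{\mathbb{Y}}^{E}_{\ell_H,\credal_{BR}} \in \mathfrak{Y}$, and it moreover coincides with the outer-approximation $\hat{\bm{\mathfrak{y}}}_{\ell_H,\credal_{BR}}$.

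I expect the main obstacle to be justifying the decoupling step cleanly: one must argue that per-coordinate realisability can be achieved \emph{simultaneously} by a single distribution, which is exactly where the independence built into $\credal_{BR}$ is indispensable — for a general credal set the coordinate choices may be linked, which is why Example~\ref{exa:notpartialY} produces a set outside $\mathfrak{Y}$. A secondary care point is the tie at $p_i=\tfrac12$: the convention that both labels are optimal there is what aligns the boundary behaviour of the factors with the abstention interval of Equation~\eqref{eq:approxHamIP}.
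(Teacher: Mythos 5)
Your proof is correct, and it shares the paper's two essential ingredients: the coordinatewise form of Hamming-optimal predictions (Equation~\eqref{eq:optiHamPrec}, with the tie $p_i=\tfrac12$ making both coordinate values optimal) and the fact that in $\credal_{BR}$ the marginals can be fixed independently of one another, so that per-coordinate witnesses assemble into a single witnessing product distribution; your ``decoupling'' statement is precisely the paper's characterization~\eqref{eq:eadmissibility} of E-admissibility. Where you genuinely differ is in how membership in $\mathfrak{Y}$ is then concluded. The paper first proves a standalone combinatorial result (Lemma~\ref{lem:partialbincond}): a subset of $\mathcal{Y}$ is a partial vector if and only if it is closed under recombining any two of its elements over their disagreement indices; it then verifies this closure for $\hat{\mathbb{Y}}^{E}_{\ell_H,\credal_{BR}}$ by splicing the two witnessing assignments $p^1_i$ and $p^2_i$ into a mixed assignment $p''_i$. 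You bypass that lemma entirely: having identified $\hat{\mathbb{Y}}^{E}_{\ell_H,\credal_{BR}}=\prod_{i=1}^m A_i$ with each $A_i\in\{\{0\},\{1\},\{0,1\}\}$, membership in $\mathfrak{Y}$ follows immediately from the definition of a partial vector. Your route is the more elementary one --- it avoids the inductive argument needed for the ``if'' direction of Lemma~\ref{lem:partialbincond} --- and it also yields directly the exactness of the outer approximation~\eqref{eq:approxHamIP}, i.e.\ $\hat{\mathbb{Y}}^{E}_{\ell_H,\credal_{BR}}=\hat{\bm{\mathfrak{y}}}_{\ell_H,\credal_{BR}}$, a fact the paper only records in the text following the proposition. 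What the paper's detour buys is a reusable characterization of $\mathfrak{Y}$ of independent interest; mathematically, though, both arguments hinge on the same independence property of $\credal_{BR}$, and your identification of where that independence is indispensable (and why Example~\ref{exa:notpartialY} fails without it) is exactly right.
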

  
Proposition~\ref{prop:HammingPartialBR} shows that in the specific yet important case of binary relevance models, $\hat{\mathbb{Y}}^{E}_{\ell_H,\credal}$ can be computed efficiently and easily presented to users. In particular, Equation~\eqref{eq:approxHamIP} is in this case exact, and can be used to compute $\hat{\mathbb{Y}}^{E}_{\ell_H,\credal_{BR}}$.  We will denote by $\hat{\vect{\mathfrak{y}}}_{\ell_H,\credal_{BR}}$ the partial vector corresponding to $\hat{\mathbb{Y}}^{E}_{\ell_H,\credal_{BR}}$, as the next proposition show that  it is also an exact estimation of $\hat{\mathbb{Y}}^{M}_{\ell_H,\credal_{BR}}$.

\begin{proposition}\label{prop:EadmEqualMax}Given a probability set $\credal_{BR}$ and the Hamming loss, we have $$\hat{\mathbb{Y}}^{E}_{\ell_H,\credal_{BR}}=\hat{\mathbb{Y}}^{M}_{\ell_H,\credal_{BR}}.$$
\end{proposition}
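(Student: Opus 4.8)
The plan is to obtain the identity by a sandwich (squeeze) argument, exploiting the fact that the crude marginal-based partial vector $\hat{\bm{\mathfrak{y}}}_{\ell_H,\credal_{BR}}$ of Equation~\eqref{eq:approxHamIP}, which in general only \emph{outer-approximates} both skeptical prediction sets, becomes an \emph{exact} description of the E-admissible set precisely in the binary relevance case, thereby collapsing the two bounds onto each other.

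First I would recall two inclusions that hold for \emph{any} credal set. On the one hand, E-admissibility is never more conservative than maximality, so $\hat{\mathbb{Y}}^{E}_{\ell_H,\credal_{BR}} \subseteq \hat{\mathbb{Y}}^{M}_{\ell_H,\credal_{BR}}$ (cf.~\cite{Troffaes07}); this is one half of the desired equality and comes for free. On the other hand, the partial vector $\hat{\bm{\mathfrak{y}}}_{\ell_H,\credal_{BR}}$ outer-approximates the maximal set, i.e.\ $\hat{\mathbb{Y}}^{M}_{\ell_H,\credal_{BR}} \subseteq \hat{\bm{\mathfrak{y}}}_{\ell_H,\credal_{BR}}$~\cite{destercke2014multilabel}. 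The whole proof then reduces to closing the loop with the converse inclusion $\hat{\bm{\mathfrak{y}}}_{\ell_H,\credal_{BR}} \subseteq \hat{\mathbb{Y}}^{E}_{\ell_H,\credal_{BR}}$, which is exactly the content of Proposition~\ref{prop:HammingPartialBR} together with the accompanying exactness of Equation~\eqref{eq:approxHamIP}: under $\credal_{BR}$ one has $\hat{\mathbb{Y}}^{E}_{\ell_H,\credal_{BR}} = \hat{\bm{\mathfrak{y}}}_{\ell_H,\credal_{BR}}$. Intuitively this holds because the Bayes-optimal Hamming prediction of Equation~\eqref{eq:optiHamPrec} is separable across labels, and because in $\credal_{BR}$ the marginals $p_i$ range \emph{independently} over the intervals $[\underline{p}_i,\overline{p}_i]$: for every label whose interval straddles $1/2$ one can pick a distribution assigning $p_i \geq 1/2$ and another assigning $p_i < 1/2$, so both values $0$ and $1$ are realised by some Bayes predictor, filling the abstention coordinates of $\hat{\bm{\mathfrak{y}}}_{\ell_H,\credal_{BR}}$ entirely.

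Chaining the three relations gives $\hat{\bm{\mathfrak{y}}}_{\ell_H,\credal_{BR}} = \hat{\mathbb{Y}}^{E}_{\ell_H,\credal_{BR}} \subseteq \hat{\mathbb{Y}}^{M}_{\ell_H,\credal_{BR}} \subseteq \hat{\bm{\mathfrak{y}}}_{\ell_H,\credal_{BR}}$, which forces all these sets to coincide and in particular yields the claimed $\hat{\mathbb{Y}}^{E}_{\ell_H,\credal_{BR}}=\hat{\mathbb{Y}}^{M}_{\ell_H,\credal_{BR}}$.

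Should one wish to make the middle inclusion self-contained rather than cite~\cite{destercke2014multilabel}, it can be re-derived directly from Proposition~\ref{prop:newdecision}. Indeed, if $\vect{y} \notin \hat{\bm{\mathfrak{y}}}_{\ell_H,\credal_{BR}}$ then some coordinate $i$ is determinate in $\hat{\bm{\mathfrak{y}}}_{\ell_H,\credal_{BR}}$ (say value $1$, so $\underline{P}(Y_{\{i\}}=1)>1/2$) while $y_i=0$; taking $\mathcal{I}=\{i\}$ and flipping that single coordinate, the independence of the marginals gives $\inf_{P\in\credal_{BR}} P(Y_i=1)=\underline{P}(Y_{\{i\}}=1)>|\mathcal{I}|/2$, so by Equation~\eqref{eq:maximalitycriterion} the flipped vector strictly dominates $\vect{y}$ and hence $\vect{y}\notin \hat{\mathbb{Y}}^{M}_{\ell_H,\credal_{BR}}$. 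The main conceptual obstacle is not any single computation but recognising that the loose marginal outer-approximation becomes tight exactly under binary relevance; once the decomposition $\inf_{P\in\credal_{BR}}\sum_{i\in\mathcal{I}}P(Y_i=a_i)=\sum_{i\in\mathcal{I}}\underline{P}(Y_{\{i\}}=a_i)$ afforded by the independent marginals is in hand, both the exactness of $\hat{\bm{\mathfrak{y}}}_{\ell_H,\credal_{BR}}$ for E-admissibility and the sandwich for maximality follow, and the equality drops out.
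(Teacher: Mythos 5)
Your proposal is correct and follows essentially the same route as the paper's own proof: a sandwich built from Proposition~\ref{prop:HammingPartialBR} (E-admissibility equals the partial vector $\hat{\bm{\mathfrak{y}}}_{\ell_H,\credal_{BR}}$), the general inclusion $\hat{\mathbb{Y}}^{E}\subseteq\hat{\mathbb{Y}}^{M}$, and a domination argument via Proposition~\ref{prop:newdecision} showing every vector outside $\hat{\bm{\mathfrak{y}}}_{\ell_H,\credal_{BR}}$ is non-maximal. The only cosmetic difference is that the paper establishes this last inclusion by flipping \emph{all} coordinates on which the vector disagrees with the determinate entries of $\hat{\bm{\mathfrak{y}}}_{\ell_H,\credal_{BR}}$ (each contributing lower probability above $1/2$, so the sum exceeds $|\mathcal{I}|/2$), whereas your self-contained variant flips a single such coordinate, which is equally valid and marginally simpler.
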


\begin{remark}
As the optimal prediction for the 0/1 or subset loss $\ell_{0/1}$ in the precise case is the same as  Equation~\eqref{eq:optiHamPrec}, Proposition~\ref{prop:HammingPartialBR} is also true for this loss, as well as Proposition~\ref{prop:EadmEqualMax}.
\end{remark}

In~\ref{app:other_dec_crit}, we provide a couple of complementary results with respect to the other decision criteria, that we do not consider in the main text as they are usually either very conservative (i.e., interval dominance) or not skeptic (i.e., minimax and minimin), in the sense that their inferences are always precisely valued, not matter how big $\credal$ is. 

In what follows, we will describe two independent experiments that empirically highlight the interest of using sets of probabilities rather than a single distribution. The first one is dedicated to compare our exact solution against the existing outer-approximation using simulated data sets, whereas the second one is dedicated; (1) to show our approach's effectiveness when it is confronted with missing and noisy data sets, and (2) to compare our approach with other existing skeptical approaches.

\section{Experiments of Skeptic inferences with General Case} \label{sec:expe}

In this section, we aim to perform an empirical experiment\footnote{Implemented in Python, see~\url{https://github.com/sdestercke/classifip}.} to evaluate the quality of our exact procedure of making skeptical inferences described in Section~\ref{sec:generalcase} against that of its counterpart, i.e outer-approximation described in~\cite{destercke2014multilabel}. First, in Section~\ref{sec:inferencemultilabel}, we formalize the procedure we used in Example~\ref{exm:expe_inf_illu} to compute the lower expectation in binary tree structures used for instance to verify the equation of Proposition~\ref{prop:newdecision}. We then investigate, through simulation, the difference between exact inferences and the outer-approximation on different number of labels.
\subsection{Inference with imprecise multi-label classifier}\label{sec:inferencemultilabel}
As we saw in Proposition \ref{prop:newdecision} and Algorithm~\ref{alg:HamMaxim}, estimating $\hat{\mathbb{Y}}^{M}_{\ell_H,\credal}$, given an observed instance $\newinstance$, implies the calculation of the infimum expectation $\underline{\mathbb{E}}_{\mathbf{Y}|\mathbf{X}=\newinstance}\left[\ell_H(\cdot, \overline{\vect{a}}_\mathcal{I}) \right]$ given an assignment $\vect{a}_\mathcal{I}$. One possibility to compute it is to write it as an iterated conditional expectation over the chain of labels, i.e.,
\begin{align}\label{eq:glob_lower_ex}
	\underline{\mathbb{E}}_{\mathbf{Y}|\mathbf{X}=\newinstance}\left[ \ell_H(\cdot, \overline{\vect{a}}_\mathcal{I}) \right] 
	= \inf_{P\in\mathcal{P}}
	\mathbb{E}_{Y_1}\left[
		\mathbb{E}_{Y_2}\left[
		\dots
		\mathbb{E}_{Y_m}\left[ \ell_H(\cdot, \overline{\vect{a}}_\mathcal{I}) \middle| \mathbf{X}=\newinstance, Y_{\mathcal{I}_{\setn{m-1}}}=\vect{y}_{\mathcal{I}_{\setn{m-1}}}\right] 
		\dots\right]
	 	\middle| \mathbf{X}=\newinstance\right],
\end{align}
where $\setn{j}\!=\!\{1, 2, \dots, j\!-\!1, j\}$ is a set of previous indices and $Y_{\mathcal{I}_{\setn{m-1}}}=\{Y_1, \dots, Y_{m-1}\}$ is a random binary vector. 
While such an expectation has to be computed globablly, it has been shown by Hermans and De Cooman~\cite{hermans2009imprecise} that in the specific case of tree structures, it can be computed recursively \footnote{A backward recursive efficient algorithm was implemented by Gen et al in \cite{yang2014nested}.} using the law of iterated lower expectations\footnote{In general, there is only an inequality between Equations~\eqref{eq:glob_lower_ex} and~\eqref{eq:recinfexp}}
\begin{align}\label{eq:recinfexp}
	 \underline{\mathbb{E}}_{\mathbf{Y}|\mathbf{X}=\newinstance}\left[ \ell_H(\cdot, \overline{\vect{a}}_\mathcal{I}) \right] =\underline{\mathbb{E}}_{Y_1}\left[
	 	\underline{\mathbb{E}}_{Y_2}\left[
	 	\dots
	 	\underline{\mathbb{E}}_{Y_m}\left[ \ell_H(\cdot, \overline{\vect{a}}_\mathcal{I}) \middle| \mathbf{X}=\newinstance, Y_{\mathcal{I}_{\setn{m-1}}}=\vect{y}_{\mathcal{I}_{\setn{m-1}}}\right] 
	 	\dots\right]
	 	\middle| \mathbf{X}=\newinstance \right].
\end{align}
Equation~\eqref{eq:recinfexp} allows one to compute global infimum expectation using local models and backward recursion, i.e., we first compute the local lower expectation starting from the leaves of the tree and proceed iteratively (for further details see \cite{yang2014nested}). Example~\ref{exa:recinfexp} provides us an illustration of this procedure. 
 
\begin{example}\label{exa:recinfexp}
	Let us consider a multi-label problem with two labels $\{Y_1, Y_2\}$  with the credal set $\credal$ over $\mathcal{Y}$ defined by the tree pictured in Figure~\ref{fig:eginfexpectation}. Consider and $\vect{y}^1=(\cdot, 1)$ and $\vect{y}^2=(\cdot, 0)$ two binary vectors which have the same value of the label $Y_1$. According to the Proposition~\ref{prop:newdecision}, the assignment of these vectors is $a_{\{2\}} = (1)$ (and its complement $\overline{a}_{\{2\}} = (0)$). In order to verify whether $\vect{y}^1$ dominates $\vect{y}^2$ (in the sense of the maximality criterion), we have to check whether
	\begin{align}
		 \underline{\mathbb{E}}\left[ \ell^*_H(\overline{\vect{a}}_\mathcal{I},\cdot) \right] > 0.5,
	\end{align}
	where the cost vector of the partial Hamming loss is $ =(0, 1, 1, 0)$ as can be verified in the Figure~\ref{fig:eginfexpectation}.
	\begin{figure}[H]
		\centering
		\newcommand{\exptwoeg}[3]{\underline{\mathbb{E}}_{Y_{\{#1\}}|\newinstance}\left[ \ell_H(\cdot, \overline{a}_\mathcal{I}) \middle| Y_{\{#2\}}\!=\!#3\right]}
\newcommand{\expeg}[1]{\underline{\mathbb{E}}_{Y_{\{#1\}}|\newinstance}\left[ \cdot \right]}
\tikzset{
  treenode/.style = {shape=rectangle, rounded corners,
                     draw, align=center,
                     top color=white, bottom color=blue!20},
  root/.style     = {treenode, font=\Large, bottom color=red!30},
  env/.style      = {treenode, font=\ttfamily\normalsize},
  dummy/.style    = {circle,draw}
}
\tikzstyle{level 1}=[level distance=4cm, sibling distance=2.7cm]
\tikzstyle{level 2}=[level distance=5cm, sibling distance=1.2cm]
\tikzstyle{bag} = [text width=4em, text centered]
\tikzstyle{end} = [inner sep=0pt]
\begin{tikzpicture}[
	grow=right, auto,
	sloped,
	edge from parent/.style = {draw, -latex, font=\tiny\sffamily, text=blue},
	every node/.style  = {font=\footnotesize}]
\node[root] {}
    child {
        node[env] {$y_1=1$}        
            child {
                node[end, label=right:{$(y_1\!=\!1, y_2\!=\!1),~ \ell_H(\cdot,\overline{a}_\mathcal{I})=0$}]  {}
                edge from parent
                node[below] (11) {$P_{\newinstance}(Y_{\{2\}}\!=\!1|Y_{\{1\}}\!=\!1)\!\in\![0.35, 0.90]$}
            }
            child {
                node[end, label=right:{$(y_1\!=\!1, y_2\!=\!0),~ \ell_H(\cdot,\overline{a}_\mathcal{I})=1$}]  {}
                edge from parent
                node[above] (10) {$P_{\newinstance}(Y_{\{2\}}\!=\!0|Y_{\{1\}}\!=\!1)\!\in\![0.10, 0.65]$}
            }
            edge from parent 
            node[below] (1) {$P_{\newinstance}(Y_{\{1\}}\!\!=\!\!1)\!\in\![0.45, 0.70]$} 
    }
    child {
        node[env] {$y_1=0$}       
        child {
                node[end, label=right:{$(y_1\!=\!0, y_2\!=\!1),~ \ell_H(\cdot,\overline{a}_\mathcal{I})=1$}] {}
                edge from parent
                node[below] (01) {$P_{\newinstance}(Y_{\{2\}}\!=\!1|Y_{\{1\}}\!=\!0)\!\in\![0.85, 0.97]$}
            }
        child {
                node[end, label=right:{$(y_1\!=\!0, y_2\!=\!0),~ \ell_H(\cdot,\overline{a}_\mathcal{I})=0$}] {}
                edge from parent
                node[above] (00)  {$P_{\newinstance}(Y_{\{2\}}\!=\!0|Y_{\{1\}}\!=\!0)\!\in\![0.03, 0.15]$}
            }
        edge from parent         
        node[above] (0) {$P_{\newinstance}(Y_{\{1\}}\!\!=\!\!0)\!\in\![0.30, 0.55]$}
    };

\draw[red, transform canvas={xshift=-15pt}, bend right,-]  (11) to node [auto, yshift=-115pt, xshift=0pt] {\rotatebox{270}{$\exptwoeg{2}{1}{1}\!=\!0.10$}} (10);

\draw[red, transform canvas={xshift=-15pt}, bend right,-]  (01) to node [auto,  yshift=-115pt, xshift=0pt] {\rotatebox{270}{$\exptwoeg{2}{1}{0}\!=\!0.85$}} (00);

\draw[red, transform canvas={xshift=-15pt}, bend right,-]  (1) to node [auto, yshift=-132pt, xshift=0pt] {\rotatebox{270}{$\expeg{1}\!=\!0.85*0.3 + 0.1*0.7=0.33$}} (0);

\end{tikzpicture}
		\caption{Example of computing the infimum expectation.}
		\label{fig:eginfexpectation}
	\end{figure}	
	Thus, applying recursively Equation~\eqref{eq:recinfexp}, we obtain an infimum expectation $\underline{\mathbb{E}}_{\mathbf{Y}|\mathbf{X}=\newinstance}\left[ \ell_H(\cdot, \overline{\vect{a}}_\mathcal{I}) \right]=0.33$. As it is lower than $0.5$, we cannot conclude that $\vect{y}^1 \succ_M \vect{y}^2 $.
\end{example}

Finally, let us note that computing marginals $\underline{P}(Y_{\{i\}}=0)$ and $\underline{P}(Y_{\{i\}}=1)$ used in Equation~\eqref{eq:approxHamIP} is equally easy, as it amounts to compute the lower expectation of the indicator functions $\indicator{y_{i}=0}$ and $\indicator{y_{i}=1}$, respectively. 

\subsection{Exact vs approximate skeptic inference}\label{sec:expesimulation}
In this section, we want to assess how good is the outer-approximation proposed in~\cite{destercke2014multilabel} (and given by Equation~\eqref{eq:approxHamIP}), by comparing it to an exact estimation of the set $\hat{\mathbb{Y}}^{M}_{\ell_H,\credal_{BR}}$. Such an estimate is essential to know in which situation Equation~\eqref{eq:approxHamIP} is likely to give a too conservative outer-approximation, and in which cases it can safely be used.

To perform this study, we simulate credal sets $\credal$ over $\mathcal{Y}$ by generating binary trees in the following way: we choose an $\epsilon\in[0, 0.5]$, and for a label $Y_{i}$ and a path $y_1,\ldots,y_{i-1}$, we generate a random $\theta \sim \mathcal{U}([0, 1])$ to obtain the interval
\begin{align*}
	&\underline{P}_\newinstance(Y_{\{i\}} = 1|y_1,\ldots,y_{i-1}) = \max(0, \theta - \epsilon)=1-\overline{P}_\newinstance(Y_{\{i\}} = 0|y_1,\ldots,y_{i-1}), \\
	&\overline{P}_\newinstance(Y_{\{i\}} = 1|y_1,\ldots,y_{i-1}) = \min (\theta + \epsilon, 1)=1-\underline{P}_\newinstance(Y_{\{i\}} = 0|y_1,\ldots,y_{i-1}),
\end{align*}
where $\mathcal{U}([0, 1])$ is a uniform distribution and $\epsilon$ is a parameter representing the imprecision level of our interval. The value of parameter $\epsilon$ impact directly on the width of the probability interval, and therefore, the precision of the obtained prediction. The tree in Figure~\ref{fig:eginfexpectation} is of this kind. 


In order to ensure the truthfulness and completeness of the comparison of two skeptic inferences, we evaluate them on five different samples of $2000$ binary trees, each sample having a fixed $\epsilon$ (i.e., $10^3$ instances). For each instance, we evaluate the quality of the outer-approximation by computing the number of added elements in the corresponding set of binary vectors, i.e.,
\begin{align}\label{eq:distcard}
d_{(\hat{\vect{\mathfrak{y}}}, \hat{\mathbb{Y}})}^{\epsilon}= |\hat{\vect{\mathfrak{y}}}_{\ell_H, \mathscr{P}}| - |\hat{\mathbb{Y}}_{\ell_H, \mathscr{P}}^M|.
\end{align}

As we have that $\hat{\vect{\mathfrak{y}}}_{\ell_H, \mathscr{P}}\supseteq\hat{\mathbb{Y}}_{\ell_H, \mathscr{P}}^M$, Equation \ref{eq:distcard} will never be negative. Also, since different number of labels will induce different upper bounds for Equation~\eqref{eq:distcard}, we uniformize the results across different numbers by partitioning the results in four bins:
\begin{align*}
	q_0 &= \#\left\{(\hat{\vect{\mathfrak{y}}}, \hat{\mathbb{Y}})_i^{(2000)} ~\middle|~ d_{(\hat{\vect{\mathfrak{y}}}, \hat{\mathbb{Y}})_i}^{\epsilon} = 0\right\},\\
	q_{\leq 0.25} &= \#\left\{(\hat{\vect{\mathfrak{y}}}, \hat{\mathbb{Y}})_i^{(2000)} ~\middle|~  0 < d_{(\hat{\vect{\mathfrak{y}}}, \hat{\mathbb{Y}})_i}^{\epsilon}  \leq  2^{|\Omega|}/4 \right\},\\
	q_{\leq 0.5} &= \#\left\{(\hat{\vect{\mathfrak{y}}}, \hat{\mathbb{Y}})_i^{(2000)} ~\middle|~ 2^{|\Omega|}/4 < d_{(\hat{\vect{\mathfrak{y}}}, \hat{\mathbb{Y}})_i}^{\epsilon} \leq  2^{|\Omega|}/2\right\},\\
	q_{\leq 1} &= \#\left\{(\hat{\vect{\mathfrak{y}}}, \hat{\mathbb{Y}})_i^{(2000)} ~\middle|~ 2^{|\Omega|}/2 < d_{(\hat{\vect{\mathfrak{y}}}, \hat{\mathbb{Y}})_i}^{\epsilon} \leq 2^{\Omega}\right\}.
\end{align*}

Finally, we perform the computer simulations on a discretization of the parameter $\epsilon\in\{0.05, 0.15, \dots, 0.45\}$. Thus, the results obtained, in percentage and with confidence interval (of the five repetitions), for each $\epsilon$ value and partitions $q_*$ are shown in the Table~\ref{tab:ressimul}, besides we omitted the results of $\epsilon=0.45$ since it always yields $q_0=100\%$ for all labels.

\setlength{\tabcolsep}{8pt} 
\renewcommand{\arraystretch}{1.3} 
\newcommand{\labelstwoTofive}{\small
\begin{tabular}{c|c|cccc}
	\multirow{2}{*}{\#label} & \multirow{2}{*}{$\epsilon$} & \multicolumn{4}{c}{$d_{\hat{\vect{\mathfrak{y}}}, \hat{\mathbb{Y}}}^{\epsilon}$}\\[.5em]\cline{3-6}
	& &	 $q_0$ & $q_{\leq 0.25}$ & $q_{\leq 0.5}$ & $q_{\leq 1}$ \\
	\hline
		\multirow{4}{*}{2} 
	& $0.05$ & $\bf 100.0 \pm 0.00\%$ & $0.00 \pm 0.00\%$ & $0.00 \pm 0.00\%$ & $0.00 \pm 0.00\%$\\
	& $0.15$ & $\bf 98.93 \pm 0.11\%$ & $0.00 \pm 0.00\%$ & $\bf 1.07 \pm 0.11\%$ & $0.00 \pm 0.00\%$\\
	& $0.25$ & $\bf 98.98 \pm 0.18\%$ & $0.00 \pm 0.00\%$ & $\bf 1.02 \pm 0.18\%$ & $0.00 \pm 0.00\%$\\
	& $0.35$ & $\bf 100.0 \pm 0.00\%$ & $0.00 \pm 0.00\%$ & $0.00 \pm 0.00\%$ & $0.00 \pm 0.00\%$\\\hline
		\multirow{4}{*}{3} 
	& $0.05$ & $\bf 99.04 \pm 0.06\%$ & $\bf 0.66 \pm 0.07\%$ & $\bf 0.30 \pm 0.09\%$ & $0.00 \pm 0.00\%$\\
	& $0.15$ & $\bf 98.17 \pm 0.27\%$ & $\bf 1.45 \pm 0.29\%$ & $\bf 0.38 \pm 0.10\%$ & $0.00 \pm 0.00\%$\\
	& $0.25$ & $\bf 98.11 \pm 0.17\%$ & $\bf 0.46 \pm 0.08\%$ & $\bf 1.43 \pm 0.17\%$ & $0.00 \pm 0.00\%$\\
	& $0.35$ & $\bf 99.82 \pm 0.04\%$ & $0.00 \pm 0.00\%$ & $\bf 0.18 \pm 0.04\%$ & $0.00 \pm 0.00\%$\\\hline
		\multirow{4}{*}{4} 
	& $0.05$ & $\bf 97.05 \pm 0.25\%$ & $\bf 2.95 \pm 0.25\%$ & $0.00 \pm 0.00\%$ & $0.00 \pm 0.00\%$\\
	& $0.15$ & $\bf 95.85 \pm 0.38\%$ & $\bf 2.97 \pm 0.24\%$ & $\bf 1.17 \pm 0.17\%$ & $\bf 0.01\pm 0.02\%$ \\
	& $0.25$ & $\bf 99.02 \pm 0.17\%$ & $\bf 0.08 \pm 0.05\%$ & $\bf 0.90 \pm 0.18\%$ & $0.00 \pm 0.00\%$\\
	& $0.35$ & $\bf 100.0 \pm 0.00\%$ & $0.00 \pm 0.00\%$ & $0.00 \pm 0.00\%$ & $0.00 \pm 0.00\%$\\\hline
		\multirow{4}{*}{5} 
	& $0.05$ & $\bf 90.94 \pm 0.65\%$ & $\bf 8.02 \pm 0.51\%$ & $\bf 1.04 \pm 0.23\%$ & $0.00 \pm 0.00\%$\\
	& $0.15$ & $\bf 92.79 \pm 0.18\%$ & $\bf 4.53 \pm 0.42\%$ & $\bf 2.01 \pm 0.37\%$ & $\bf 0.67 \pm 0.21\%$\\
	& $0.25$ & $\bf 97.92 \pm 0.05\%$ & $\bf 1.05 \pm 0.20\%$ & $\bf 0.73 \pm 0.15\%$ & $\bf 0.30 \pm 0.09\%$\\
	& $0.35$ & $\bf 100.0 \pm 0.00\%$ & $0.00 \pm 0.00\%$ & $0.00 \pm 0.00\%$ & $0.00 \pm 0.00\%$\\\hline
		\multirow{4}{*}{6} 
	& $0.05$ & $\bf 90.26 \pm 0.44\%$ & $\bf 9.74 \pm 0.44\%$ & $0.00 \pm 0.00\%$ & $0.00 \pm 0.00\%$\\
	& $0.15$ & $\bf 91.44 \pm 0.63\%$ & $\bf 4.75 \pm 0.35\%$ & $\bf 2.79 \pm 0.19\%$ & $\bf 1.02 \pm 0.23\%$\\
	& $0.25$ & $\bf 97.98 \pm 0.18\%$ & $\bf 1.28 \pm 0.06\%$ & $\bf 0.71 \pm 0.12\%$ & $\bf 0.03 \pm 0.02\%$\\
	& $0.35$ & $\bf 100.0 \pm 0.00\%$ & $0.00 \pm 0.00\%$ & $0.00 \pm 0.00\%$  & $0.00 \pm 0.00\%$\\\hline
\end{tabular}%
}
\newcommand{\labelssixTonine}{\small
\begin{tabular}{c|c|cccc}
	\multirow{2}{*}{\#label} & \multirow{2}{*}{$\epsilon$} & \multicolumn{4}{c}{$d_{\hat{\vect{\mathfrak{y}}}, \hat{\mathbb{Y}}}^{\epsilon}$}\\[.5em]\cline{3-6}
	& &	 $q_0$ & $q_{\leq 0.25}$ & $q_{\leq 0.5}$ & $q_{\leq 1}$ \\
	\hline
		\multirow{4}{*}{7} 
	& $0.05$ & $\bf 85.39 \pm 0.53\%$ & $\bf 13.91 \pm 0.47\%$ & $\bf 0.70 \pm 0.08\%$ & $0.00 \pm 0.00\%$\\
	& $0.15$ & $\bf 92.99 \pm 0.61\%$ & $\bf 6.62 \pm 0.58\%$ & $\bf 0.36 \pm 0.08\%$ & $\bf 0.03 \pm 0.02\%$\\
	& $0.25$ & $\bf 98.60 \pm 0.15\%$ & $\bf 0.37 \pm 0.07\%$ & $\bf 1.03 \pm 0.13\%$ & $0.00 \pm 0.00\%$\\
	& $0.35$ & $\bf 100.0 \pm 0.00\%$ & $0.00 \pm 0.00\%$ & $0.00 \pm 0.00\%$ & $0.00 \pm 0.00\%$\\\hline
		\multirow{4}{*}{8} 
	& $0.05$ & $\bf 78.61 \pm 1.35\%$ & $\bf 19.9 \pm 1.31\%$ & $\bf 1.49 \pm 0.25\%$ & $0.00 \pm 0.00\%$\\
	& $0.15$ & $\bf 91.66 \pm 0.33\%$ & $\bf 5.97 \pm 0.31\%$ & $\bf 1.78 \pm 0.15\%$ & $\bf 0.59 \pm 0.14\%$ \\
	& $0.25$ & $\bf 97.70 \pm 0.21\%$ & $\bf 1.66 \pm 0.21\%$ & $\bf 0.64 \pm 0.20\%$ & $0.00 \pm 0.00\%$\\
	& $0.35$ & $\bf 99.67 \pm 0.04\%$ & $0.00 \pm 0.00\%$ & $0.33 \pm 0.04\%$ & $0.00 \pm 0.00\%$\\\hline
		\multirow{4}{*}{9} 
	& $0.05$ & $\bf 76.25 \pm 0.60\%$ & $\bf 22.49 \pm 0.57\%$ & $\bf 1.26 \pm 0.16\%$ & $0.00 \pm 0.00\%$\\
	& $0.15$ & $\bf 91.11 \pm 0.76\%$ & $\bf 4.94 \pm 0.58\%$ & $\bf 3.30 \pm 0.33\%$ & $\bf 0.65 \pm 0.19\%$\\
	& $0.25$ & $\bf 99.46 \pm 0.08\%$ & $0.00 \pm 0.00\%$ & $\bf 0.54 \pm 0.08\%$ & $0.00 \pm 0.00\%$\\
	& $0.35$ & $\bf 99.85 \pm 0.09\%$ & $0.00 \pm 0.00\%$ & $\bf 0.15 \pm 0.09\%$ & $0.00 \pm 0.00\%$\\\hline
		\multirow{4}{*}{10} 
	& $0.05$ & $\bf 74.28 \pm 0.92\%$ & $\bf 25.03 \pm 0.96\%$ & $\bf 0.69 \pm 0.07\%$ & $0.00 \pm 0.00\%$\\
	& $0.15$ & $\bf 93.43 \pm 0.32\%$ & $\bf 4.44 \pm 0.34\%$ & $\bf 1.38 \pm 0.33\%$ & $\bf 0.75 \pm 0.25\%$\\
	& $0.25$ & $\bf 98.50 \pm 0.15\%$ & $0.00 \pm 0.00\%$ & $\bf 1.50 \pm 0.15\%$ & $0.00 \pm 0.00\%$\\
	& $0.35$ & $\bf 100.0 \pm 0.00\%$ & $0.00 \pm 0.00\%$ & $0.00 \pm 0.00\%$ & $0.00 \pm 0.00\%$\\\hline
	\multirow{4}{*}{11} 
	& $0.05$ & $\bf 73.63 \pm 0.60\%$ & $\bf 24.99 \pm 0.66\%$ & $\bf 1.38 \pm 0.13\%$ & $0.00 \pm 0.00\%$\\
	& $0.15$ & $\bf 93.72 \pm 0.64\%$ & $\bf 4.20 \pm 0.55\%$ & $\bf 2.08 \pm 0.56\%$ & $0.00 \pm 0.00\%$\\
	& $0.25$ & $\bf 97.20 \pm 0.20\%$ & $\bf 2.80 \pm 0.20\%$ & $0.00 \pm 0.00\%$ & $0.00 \pm 0.00\%$\\
	& $0.35$ & $\bf 100.0 \pm 0.00\%$ & $0.00 \pm 0.00\%$ & $0.00 \pm 0.00\%$ & $0.00 \pm 0.00\%$\\\hline
\end{tabular}%
}
\begin{table}[!ht]
	\centering
	\resizebox{\linewidth}{!}{%
	  \subfigure{\labelstwoTofive}%
	  \quad
  	  \subfigure{\labelssixTonine}%
	}%
	\caption{Average partitions amounts $q_*$ (\%) with confidence interval.}\label{tab:ressimul}
\end{table}
\setlength{\tabcolsep}{6pt} 
\renewcommand{\arraystretch}{1} 

We can summarise the main findings of those simulations as follows:
\begin{itemize}
    \item globally, $\hat{\vect{\mathfrak{y}}}_{\ell_H, \mathscr{P}}$ provides a quite accurate approximation of the true set, as it is exact (i.e., in $q_0$) most of the time;
    \item the quality of $\hat{\vect{\mathfrak{y}}}_{\ell_H, \mathscr{P}}$ decreases as the number of labels increases, making it unfit for applications involving a very high number of labels such as extreme multi-label~\cite{jain2016extreme};
	\item the quality of $\hat{\vect{\mathfrak{y}}}_{\ell_H, \mathscr{P}}$ seems to be the worst for moderate imprecision, probably because a high imprecision will tend to provide more vacuous (i.e., empty vectors) predictions;
	\item there are a few cases where $\hat{\vect{\mathfrak{y}}}_{\ell_H, \mathscr{P}}$ provides bad (i.e., are in $q_{\leq 0.5}$) to really bad approximation (i.e., are in $q_{\leq 1}$). This indicates that having exact inference methods may be helpful to identify those cases. 
\end{itemize}

All these last findings are confirmed in Figure~\ref{fig:evolutiontwoimprecision} that displays the evolutions of the partitions $q_0$ (left) and $q_{\leq 0.25}$(right). 
\begin{figure}[!th]
	\centering
	\resizebox{\linewidth}{!}{%
		\subfigure{
		   \includegraphics{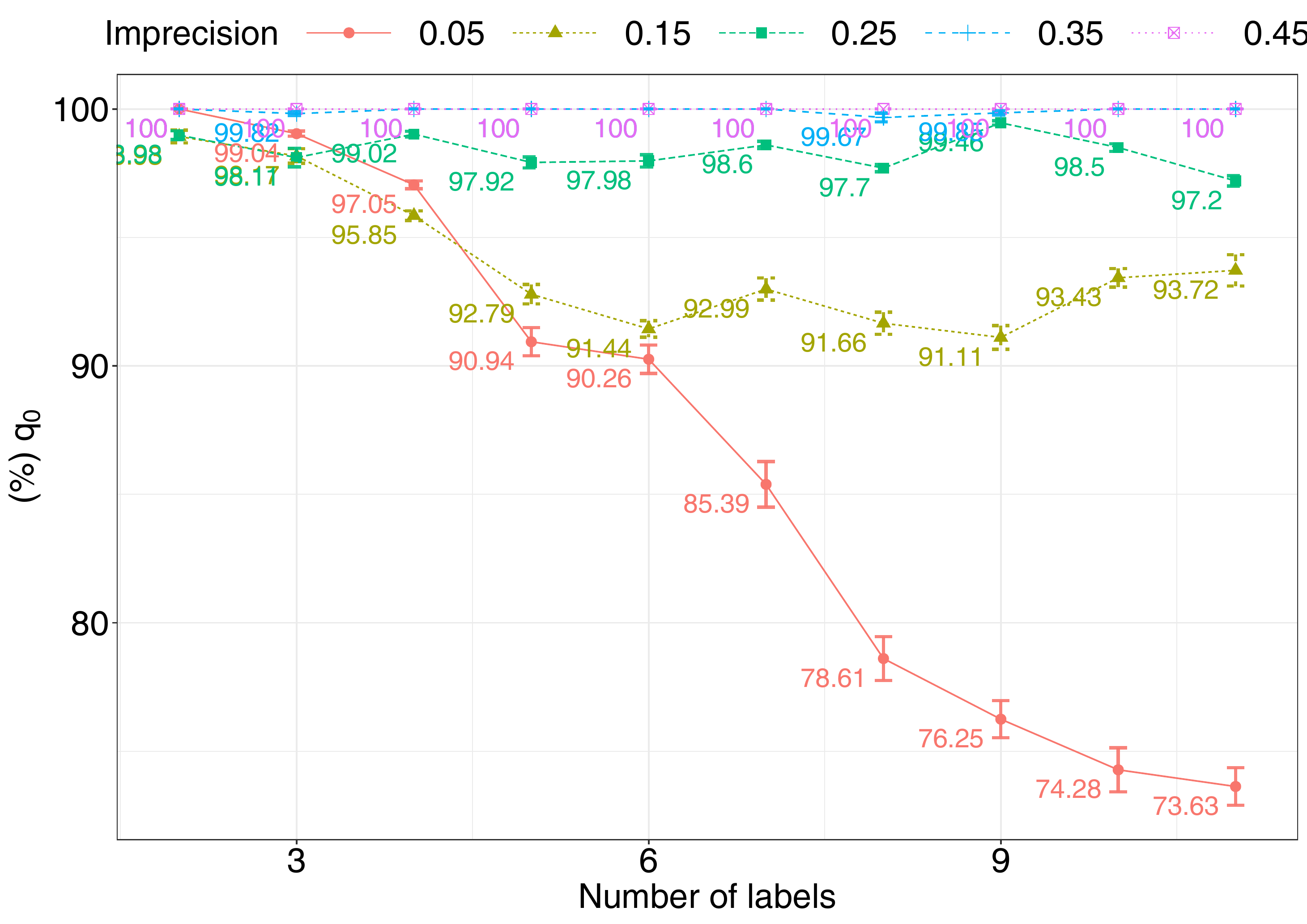}
		}
		\subfigure{
		   \includegraphics{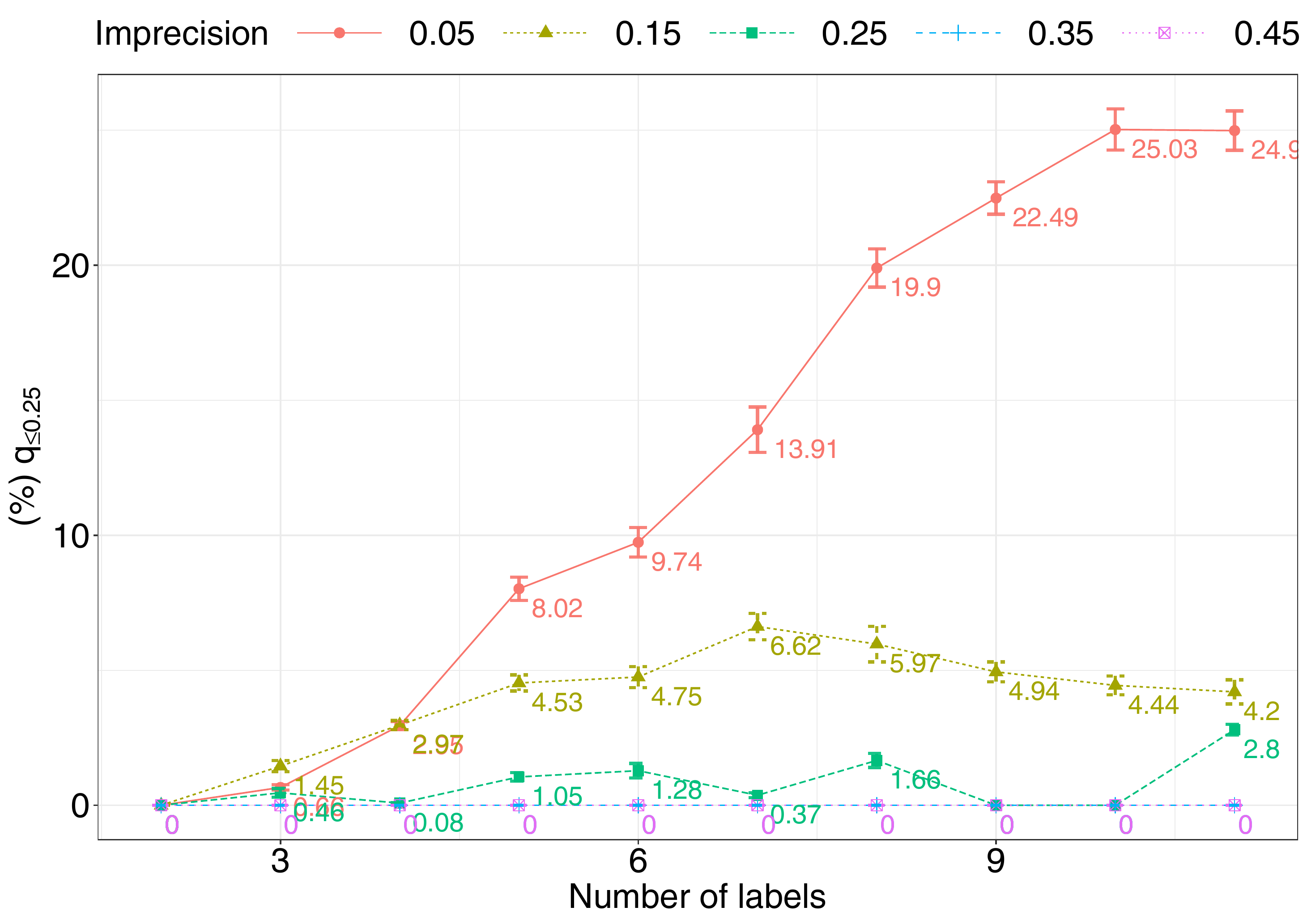}
		}
	}
	\caption{Evolution of average partitions amounts $q_*$ (\%) (with confidence interval) of the partition $q_0$ (left) and $q_{\leq 0.25}$(right)}%
	\label{fig:evolutiontwoimprecision}%
\end{figure}


In what follows, we perform additional experimental studies on real data sets to check how skeptic inferences for multi-label problems behave: (1) in presence of noisy or missing labels and (2) in comparison to the other existing skeptic strategies, namely the partial abstention and the rejection rules.

\section{Experiments of Skeptic inferences with Binary relevance}
\label{sec:expebinarybr}
In Section~\ref{sec:expnoisedatasets} and \ref{sec:downsampling}, we perform a set of additional experiments on real-world data sets\footnote{Implemented in Python, see~\url{https://github.com/sdestercke/classifip}.} to investigate the usefulness of using skeptical inferences in multi-label problems, rather than precisely-valued inferences, or partial abstentions~\cite{nguyen2019}, or rejection rules, when uncertainties are higher due to imperfect information. The former aims, under the assumption of {\bf strong independence} on labels, to verify how skeptical and precisely-valued inferences cope with two different settings: missing and noisy labels. While the latter aims to compare how the partial abstention, the reject option, and our approach behave in terms of skeptic predictability when the amount of training data considerably decreases.

\subsection{Experiments with noisy and missing data sets}
\label{sec:expnoisedatasets}

In this subsection, we perform a set of experiments to investigate the usefulness of using skeptic inferences in multi-label problems. In particular, we investigate what happens when some labels are noisy or missing. 
To that end, we use a set of standard real-word data sets from the MULAN repository\footnote{\url{http://mulan.sourceforge.net/datasets.html}} (c.f. Table~\ref{tab:datasets}), following a $10\!\times\!10$ cross-validation procedure to fit the model.

\begin{table}[!ht]
	\centering
	\resizebox{!}{!}{%
	\begin{tabular}{ccccccc}
		 Data set & Domain &\#Features & \#Labels & \#Instances & \#Cardinality & \#Density \\
		\hline
		emotions & music & 72 & 6 & 593 & 1.90 & 0.31 \\
	 	scene & image & 294 & 6 & 2407 & 1.07 & 0.18 \\
		yeast & biology & 103 & 14 & 2417 & 4.23 & 0.30
	\end{tabular}}
	\caption{Multi-label data sets summary}\label{tab:datasets}
\end{table}

\paragraph{\bf Evaluation} As we perform set-valued predictions, usual measures used in multi-label problems cannot be adopted here. We thus consider it appropriate to use an incorrectness measure (IC), coupled with a completeness (CP) measure~\cite[\S 4.1]{destercke2014multilabel}, defined as follows
\begin{align}
	IC(\hat{\mathbb{Y}}, \bm y) &= \frac{1}{|Q|} \sum_{\hat{y}_i \in Q} \mathbbm{1}_{(\hat{y}_i\neq y_i)}, \label{eq:incorrectness} \\
	CP(\hat{\mathbb{Y}}, \bm y) &= \frac{|Q|}{m},
		\label{eq:completeness}
\end{align}
where $Q$ denotes the set of predicted label such that $\hat{y}_i=1$ or $\hat{y}_i=0$ (in other words any abstained predicted label $\hat{y}_i=*$ is not in $Q$). When predicting complete vectors, then $CP=1$ and $IC$ equals the Hamming loss~\eqref{eq:Hloss}, and when predicting the empty vector, i.e. all labels equals to $\hat{y}_i=*$, then $CP=0$ and by convention $IC=0$. Since those measures are adapted to partial vectors, we will use a simple binary relevance strategy in the experiments.

\paragraph{Naive Credal classifier} To obtain marginal probability intervals over each label, i.e.$[\overline{p}_i, \underline{p}_i], \forall i\in \setn{m}$, we use an imprecise classifier called the naïve credal classifier (NCC)\footnote{Bearing in mind that it can be replaced by any other (credal) imprecise classifier, see~\cite[\S 10]{augustin2014introduction} or \cite{alarcon2021imprecise}}~\cite{zaffalon2002naive}, which extends the classical naive Bayes classifier (NBC). We refer to \ref{app:nccmethod} for further technical details on how to obtain these probability intervals, and will only recall here that the imprecision of this classifier (i.e. the width  of probability intervals) is regulated by  the value of the hyper-parameter $s\in\mathbb{R}$, with the imprecision being higher as $s$ increases (for $s=0.0$, we retrieve basic empirical frequencies estimate, in other words, the NBC model). 

In this paper, we restrict the values of the hyper-parameter of the imprecision to $s\in\{0.0, 0.5, 1.5, 2.5, 3.5, 4.5, 5.5\}$. Our purpose here is not to find the ``optimal'' value of $s$, but to show the effectiveness of injecting imprecision (i.e., to provide robust and skeptical inferences). As the NCC requires discrete features, when those were continuous we simply discretized in $z$ equal-width intervals, with two levels of discretization $z\!=\!5$ and $z\!=\!6$.

\paragraph{Missing labels} To simulate missingness, we proceed uniformly to pick at random a percentage of missing labels, with five different levels of missingness: $\{0, 20,  40, 60, 80\}$. Missing values are removed from the training data. Table~\ref{tab:noisemissing} illustrates a data set data with missing values (or partially labelled instances).
\begin{table}[ht]
	\centering
	\begin{tabular}{ccccc|ccc|ccc|ccc}
		\multicolumn{5}{c|}{{\bf Features}} & \multicolumn{3}{|c|}{{\bf Missing}} & \multicolumn{3}{|c}{{\bf Noise-Reversing}} & \multicolumn{3}{|c}{{\bf Noise-Flipping}} \\
	\hline 
	$X_1$ & $X_2$ & $X_3$ & $X_4$ & $X_5$& $Y_1$ &$ Y_2$ & $Y_3$ & $Y_1$ &$ Y_2$ & $Y_3$ & $Y_1$ &$ Y_2$ & $Y_3$ \\
	\hline
	107.1 & 25  & Blue  & 60 & 1 & 1  & 0  & * & 1  & 0$\rightarrow$1  & 0 & 1  & 1$\wedge_\beta$0  & 0 \\
	-50   & 10  & Red   & 40 & 0 & 1  & *  & 1 & 1  & 0  & 1$\rightarrow$0 & 1  & 0  & 1$\wedge_\beta$0\\
	200.6 & 30  & Blue  & 58 & 1 & *  & 1  & 0 & 0$\rightarrow$1  & 0  & 0 &  0 & 0  & 0\\
	107.1 & 5   & Green & 33 & 0 & *  & 1  & 0 & 1  & 1$\rightarrow$0  & 0 & 1$\wedge_\beta$0 & 1$\wedge_\beta$0  & 0\\
	\dots & \dots & \dots & \dots & \dots & \dots  & \dots  & \dots & \dots  & \dots  & \dots & \dots  & \dots  & \dots\\
	\end{tabular}
	\caption{Missing and Noise representation of labels}
	\label{tab:noisemissing}
\end{table}

In Figure~\ref{fig:expmissing}, we provide the results of the incorrectness\footnote{The results of the completeness measure had been placed in the~\ref{app:missinglabels} in order to simplify the narrative.} measure obtained by fitting the NCC model on different percentages of missing labels and data sets of Table~\ref{tab:datasets}. 

The results show that as the percentage of missing labels increases the incorrectness and the completeness both decrease, especially on Emotions and Scene data sets. This means that the more imprecise we get, the more accurate are those predictions we retain. The effect is less significant on the Yeast data set, where one needs a high amount of imprecision to witness a gain in correctness. One quite noticeable result is that for the Emotions data set, even with $80\%$ of missing label, a light imprecision ($s=0.5$ ) allows us to reach a reasonable completeness of about $80\%$ with a gain of $5\%$ in terms of correct predictions. Besides, as the confidence intervals shown in Figure~\ref{fig:expmissing} are very small, and will remain so for the other settings, we thus prefer not to display them in order not to overcharge future figures.

\begin{figure}[!th]
	\centering
	\resizebox{0.48\textwidth}{!}{%
	  \includegraphics{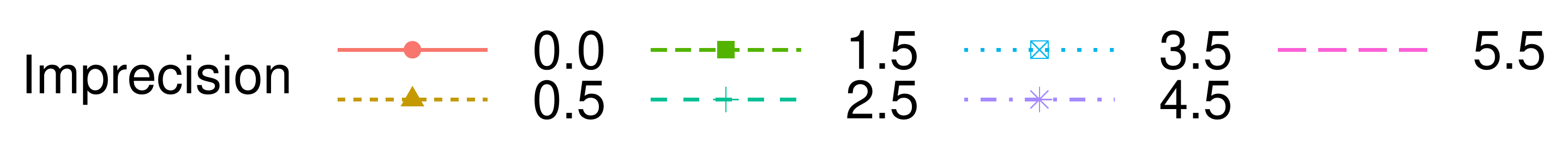}
	}\qquad%
	\subfigure[\sc Emotions]{
	   \includegraphics[width=0.32\linewidth]{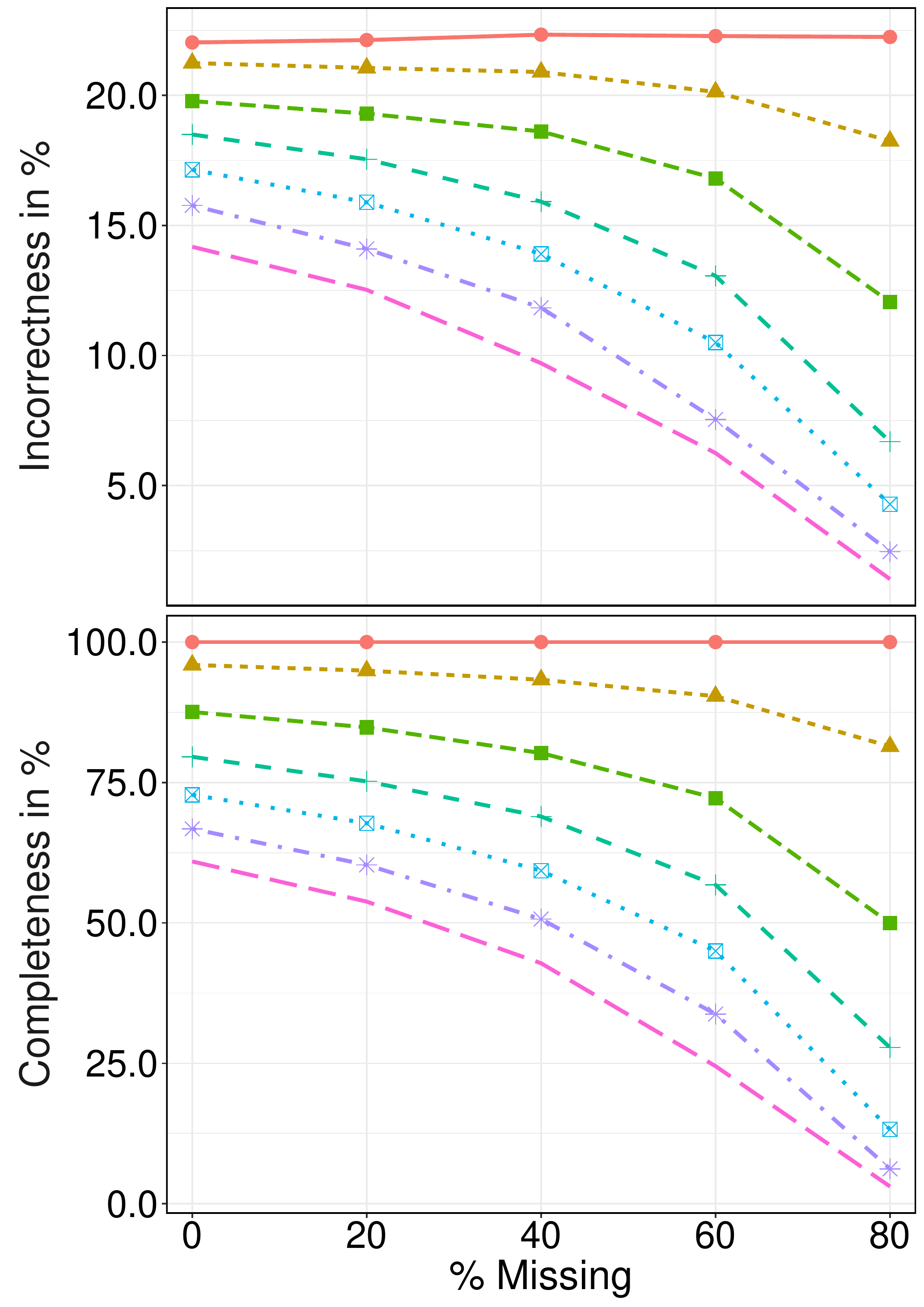}
	}%
	\subfigure[\sc Scene]{
		\includegraphics[width=0.32\linewidth]{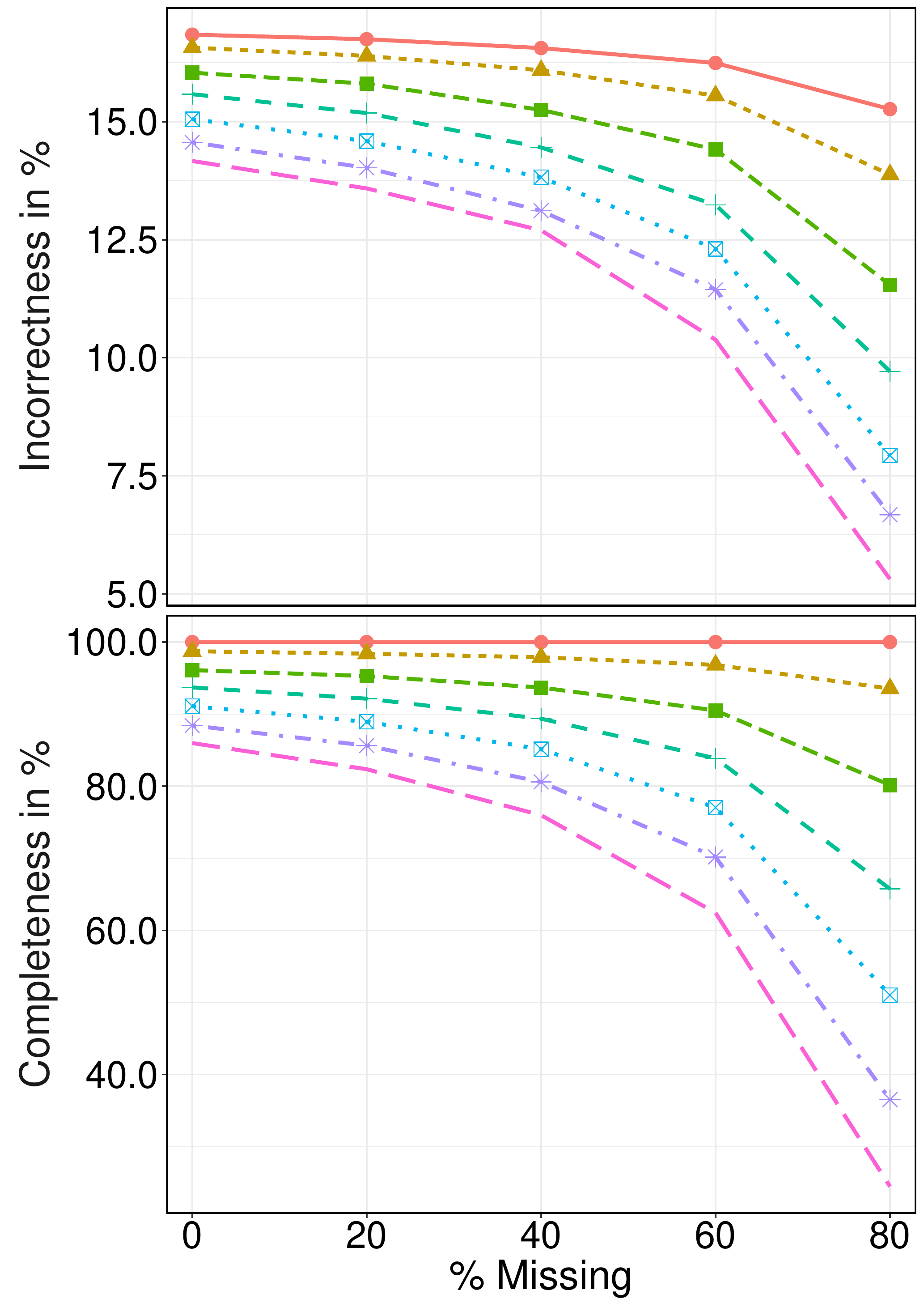}
	}%
	\subfigure[\sc Yeast]{
		\includegraphics[width=0.32\linewidth]{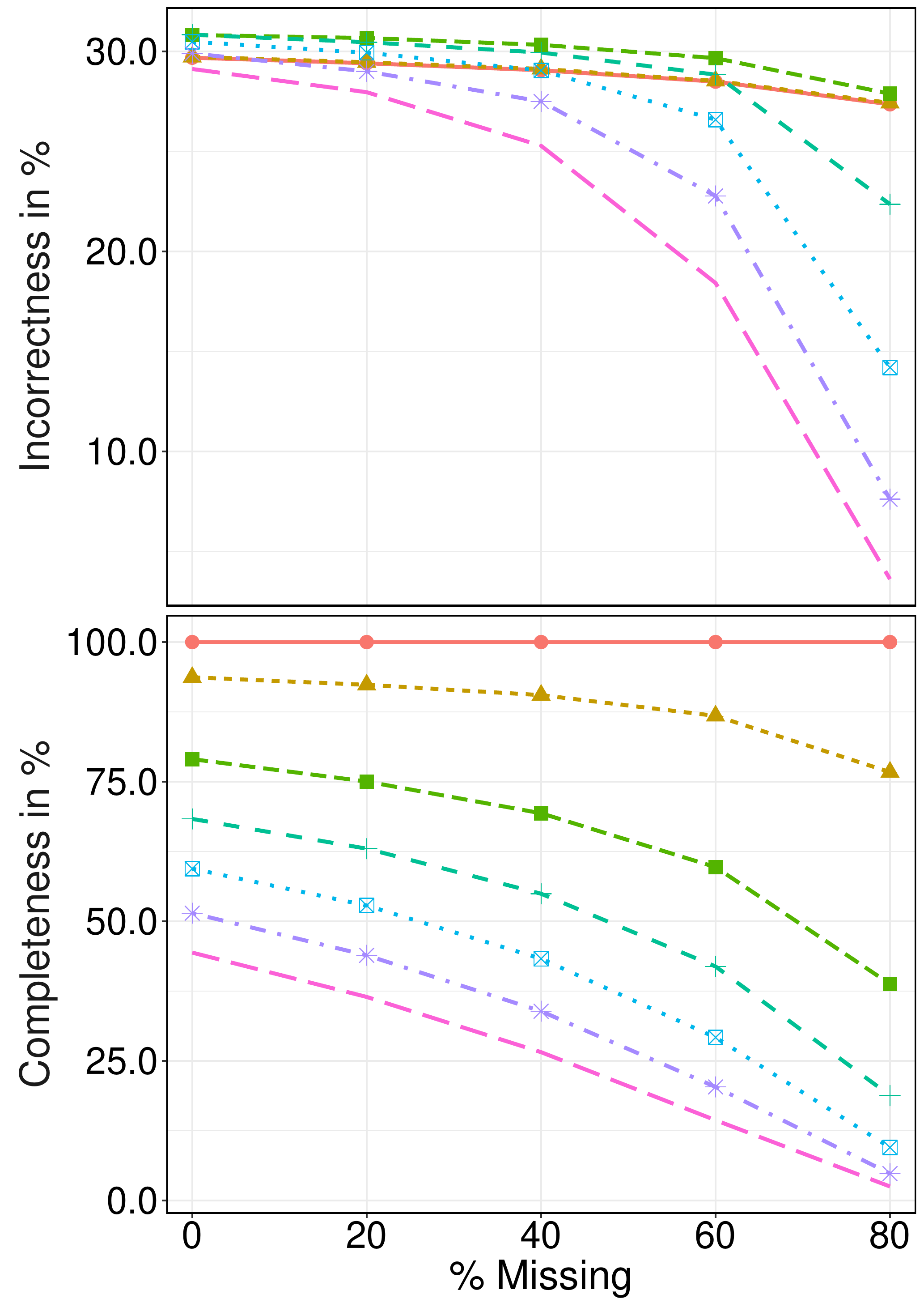}
	}%
	\caption{\textbf{Missing labels}. Evolution of the average incorrectness (top) and completeness (bottom) for each level of imprecision (a curve for each one) and a discretization $z\!=\!5$, and with respect to different percentages of missing labels (x-axis).}\label{fig:expmissing}%
\end{figure}

Results obtained are sufficient to show that skeptic inferences with probability sets may provide additional benefits when dealing with missing labels. Those results could, of course, be improved by picking other classifiers, such as the NCC2~\cite{corani2008learning}, an extension of the NCC tailored for missing values, or imprecise classifiers able to cope with continuous attributes~\cite{alarcon2019imprecise}. 

\paragraph{Noisy labels} In this setting, we proceed in the same way as with the missing setting (with different percentages of noise depending on how it will be set up), except that the value of selected labels are not assigned to $*$ (i.e., as a missing value), but are modified according to some noise scheme. We consider two different ways to modify the assignments:
\begin{enumerate}
	\item {\bf Reversing:} in this case, we reverse the current value of the selected label. In other words, if $Y_{j,i}=1$, the label $j$ of the instance $i$ becomes $Y_{j,i}=0$ (similar for the case of $Y_{j,i}=0 \rightarrow Y_{j,i}=1$), with six different noise levels $\{10, 20, 30, 40, 50, 60\}$,
	\item {\bf Flipping:} in contrast to the previous case, for each selected label $Y_{j,i}$, we replace it by the result of a Bernoulli trial with probability $\beta:=P(Y_{j,i}=1)$, i.e. $Y_{j,i} \sim \mathcal{B}er(\beta)$, with $\beta\in\{0.2, 0.5, 0.8\}$, and with three noise levels $\{40, 60, 80\}$.
\end{enumerate}
Table~\ref{tab:noisemissing} provides an illustration of these two noise settings, in the columns ``Noise-Reversing'' ($\rightarrow$) and ``Noise-Flipping'' ($\wedge_\beta$), respectively. 

\begin{figure}[!th]
	\centering
	\resizebox{0.48\textwidth}{!}{%
	  \includegraphics{images/missing/legends}
	}\qquad%
	\subfigure[\sc Emotions]{
	   \includegraphics[width=0.32\linewidth]{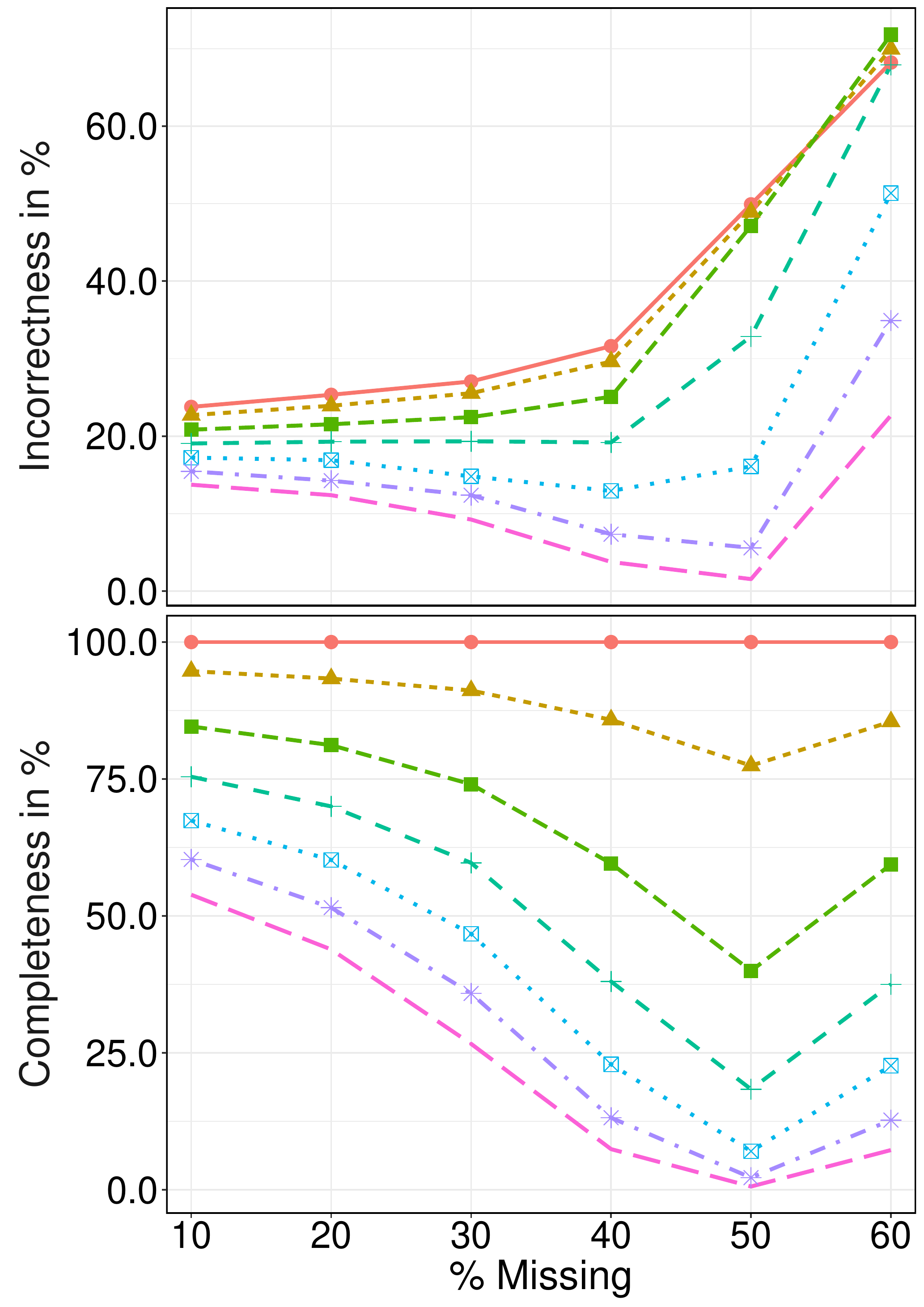}
	}%
	\subfigure[\sc Scene]{
		\includegraphics[width=0.32\linewidth]{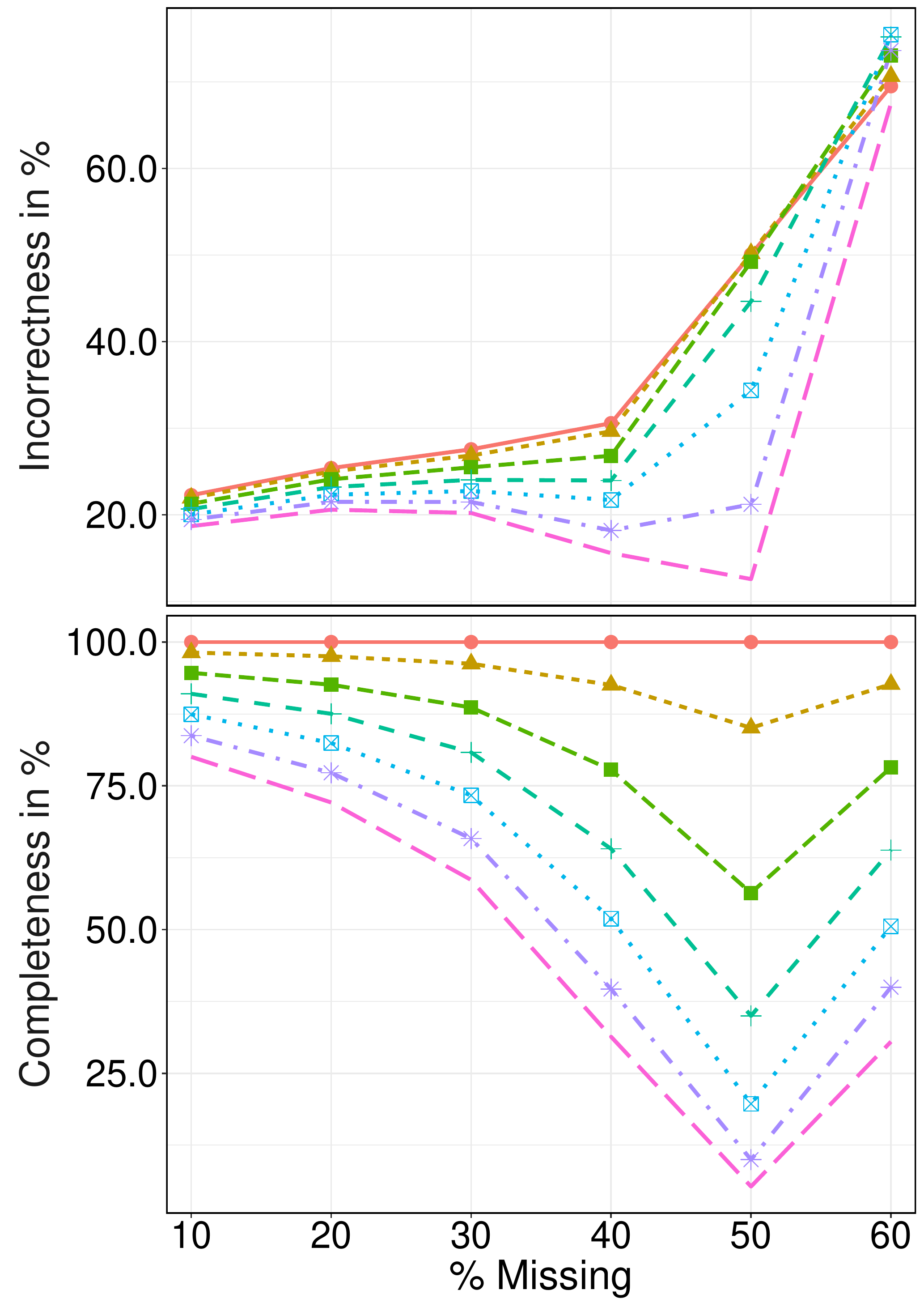}
	}%
	\subfigure[\sc Yeast]{
		\includegraphics[width=0.32\linewidth]{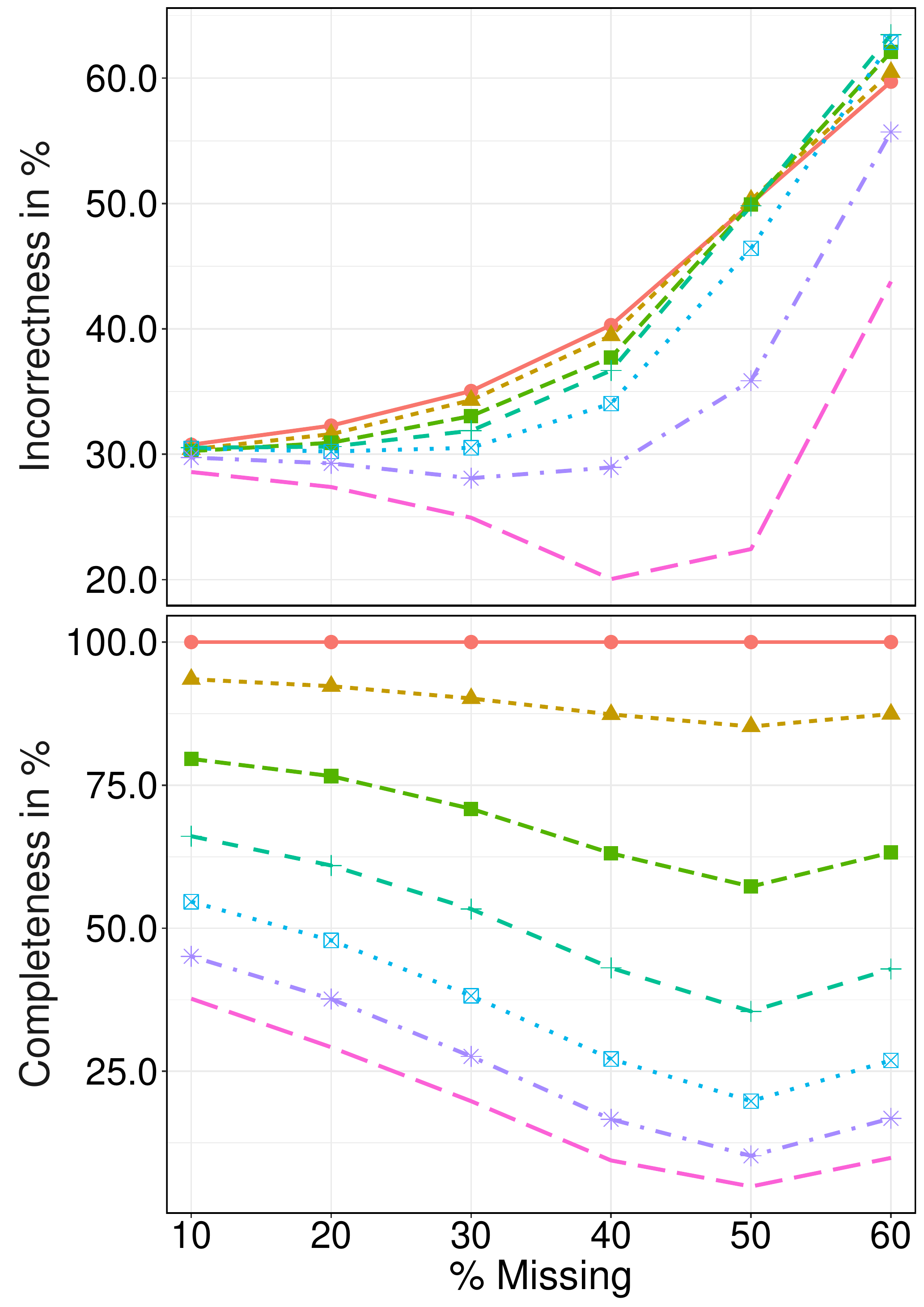}
	}%
	\caption{\textbf{Noise-Reversing}. Evolution of the average incorrectness (top) and completeness (bottom) for each level of imprecision (a curve for each one) and a discretization $z\!=\!5$, and with respect to different percentages of noisy labels (x-axis).}\label{fig:expnoisereverse}%
\end{figure}

In Figures~\ref{fig:expnoisereverse} and \ref{fig:expnoiseflipping} respectively, we provide the results of the incorrectness\footnote{The results of the completeness measure had been placed in the~\ref{app:noisereversing} and~\ref{app:noiseflipping}, for Reversing and Flipping settings, respectively.} obtained by fitting the NCC model on different percentages of \textbf{Reversing} and \textbf{Flipping} settings applied to the data sets of Table~\ref{tab:datasets}.

Concerning the {\bf Reversing}, or adversarial setting, it is clear from the graphs that allowing for imprecision and skeptical inferences provides some level of protection, which can be witnessed by the fact that at a given level of noise, including some imprecision limits the increase in incorrectness, and sometimes even improves the quality of the made predictions by abstaining on those instances where adversarial noise was introduced. Of course, this goes hand-in-hand with a corresponding decrease of completeness, but this seems a fair price to pay to protect against adversarial noise. 

\begin{figure}[!th]
	\centering
	\resizebox{0.48\textwidth}{!}{%
	  \includegraphics{images/missing/legends}
	}\qquad%
	\subfigure[\sc Emotions]{
	   \hspace{-3mm}
	   \includegraphics[width=0.33\linewidth]{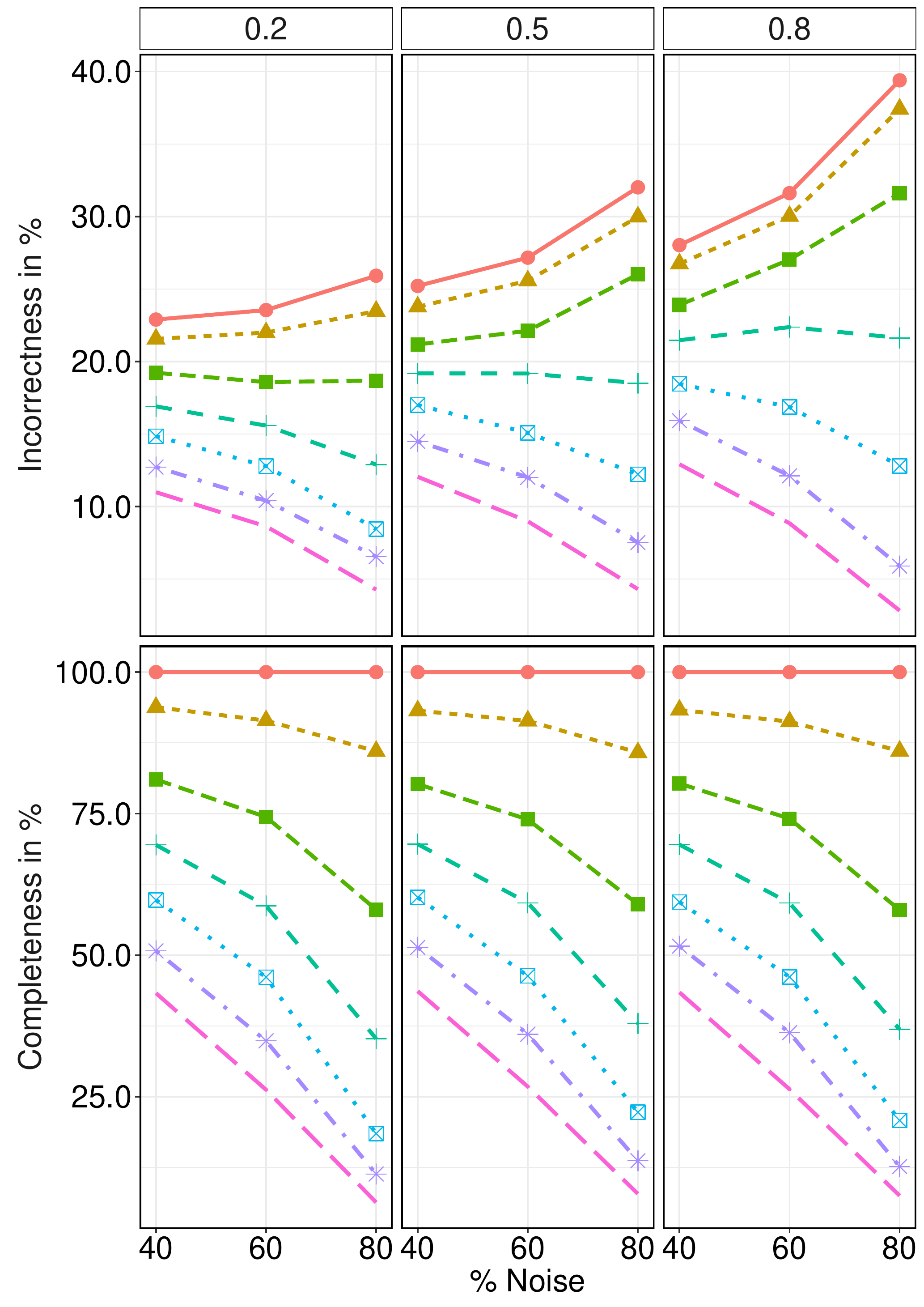}
	}%
	\subfigure[\sc Scene]{
		\includegraphics[width=0.33\linewidth]{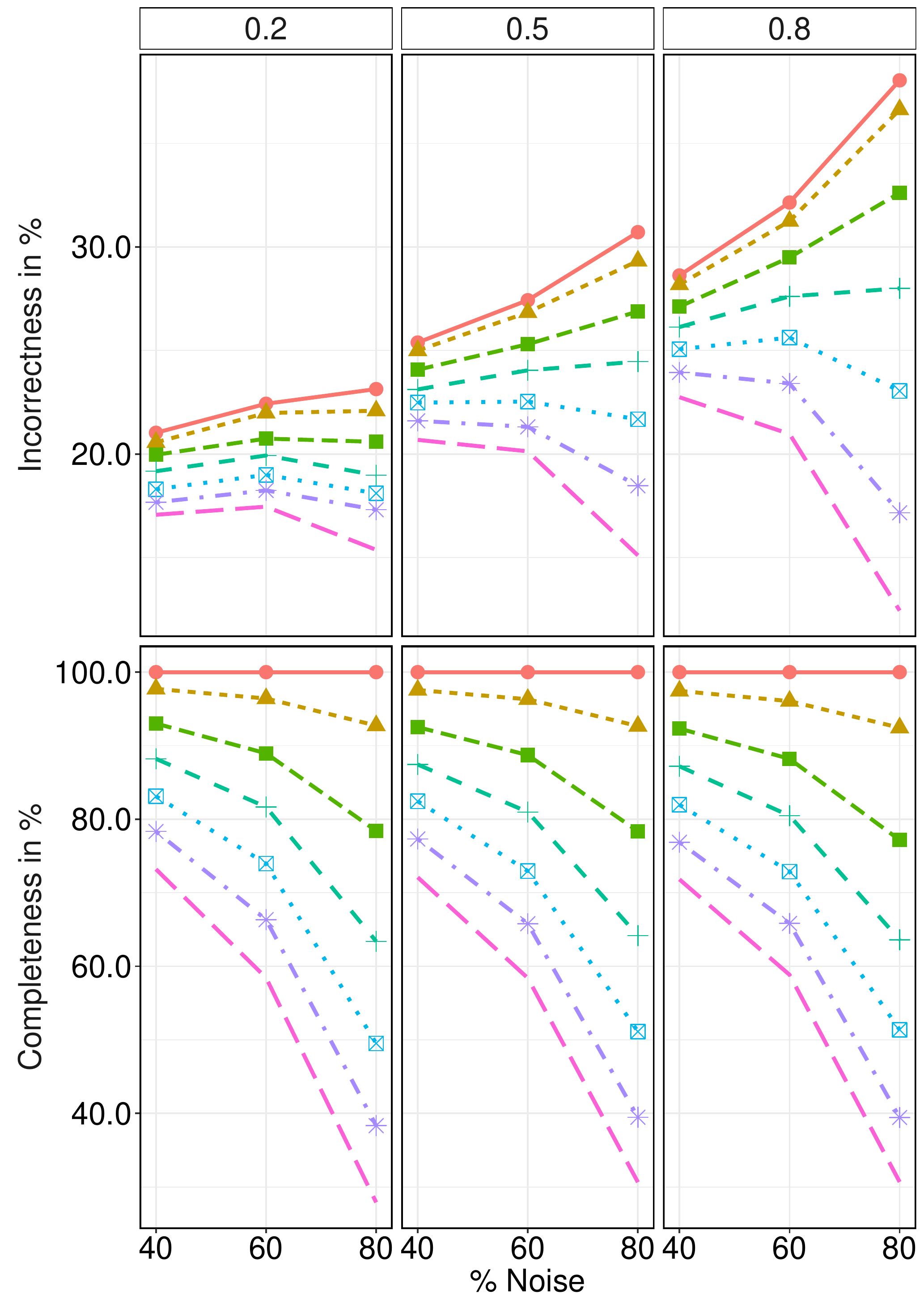}
	}%
	\subfigure[\sc Yeast]{
		\includegraphics[width=0.33\linewidth]{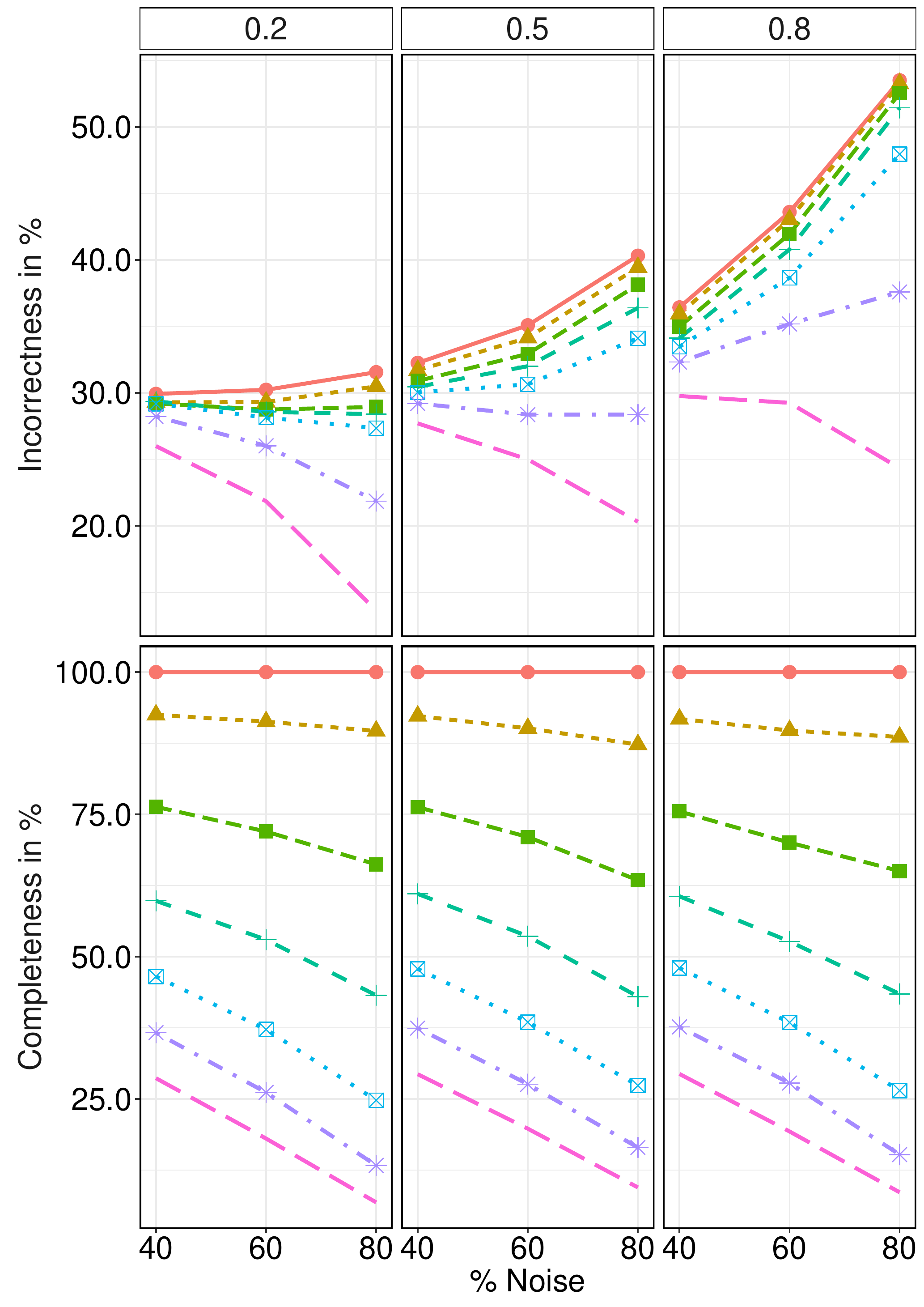}
	}%
	\caption{\textbf{Noise-Flipping}. Evolution of the average incorrectness (top) and completeness (bottom) for each level of imprecision (a curve for each one), a level of discretization $z\!=\!5$, and three different probabilities $\beta\!=\!0.2$ (left), $\beta\!=\!0.5$ (middle) and $\beta\!=\!0.8$ (right) of replacing the selected label with a $1$, and with respect to different percentages of noisy labels (x-axis).}\label{fig:expnoiseflipping}%
\end{figure}

Results obtained of the {\bf Flipping} setting are overall similar to those found in the missing label and the \textbf{Reversing} label. Notable small differences are that (1) skeptical inferences are uniformly more robust (provide more accurate predictions) than their precise counter-part, whatever the level and nature of noise, and (2) the evenness of the noise ($\beta$ value) obviously has an impact on performances, but has little impact on the overall trends.  

Similarly to the case of the missing label, it would be interesting to experiment with other imprecise classifier, as well as with other different noise settings (e.g. using other probability distributions as $\beta \sim \mathcal{U}([0,1])$ then $Y_{i,j} = 1$ if $\beta > \rho$ otherwise $Y_{i,j}=0$, where $\rho$ is a threshold parameter).

\paragraph{Rejection threshold}  With the aim of performing a preliminary experimental comparative study between the reject rule and our skeptical approach, we make use of the above-mentioned missing and noisy setups (with $\beta=\{0.2, 0.8\}$ instead).  

For the sake of simplicity, we relax Equation~\eqref{eq:optiHamPrec} by adding a threshold parameter $\gamma \in \{0.00, 0.15, 0.25, 0.35, \allowbreak 0.45\}$ (with $\gamma=0.00$ bringing back to Equation~\eqref{eq:optiHamPrec}) so that it can reproduce a rejection region\footnote{The rejection region will be delimited by two parallel lines, corresponding to the iso-density points $x$ for which $P_\newinstance(Y_{\{i\}}=1)=\frac{1}{2} - \gamma$ and $P_\newinstance(Y_{\{i\}}=1)=\frac{1}{2} + \gamma$.}, as follows 
\begin{equation}\label{eq:optiHamPrecThreshold}
\hat{y}_{i,\ell_H,\gamma}=
\begin{cases} 
	1 & \text{ if } P_\newinstance (Y_{\{i\}}=1) > \frac{1}{2} + \gamma, \\ 
	0 & \text{ if } P_\newinstance(Y_{\{i\}}=1) \leq \frac{1}{2} - \gamma, \\ 
	* & \text{ if } P_\newinstance(Y_{\{i\}}=1) \in \left]\frac{1}{2} - \gamma, \frac{1}{2} + \gamma\right[
\end{cases}.
\end{equation}
In order to employ Equation \eqref{eq:optiHamPrecThreshold}, it is necessary first to compute the estimated conditional probabilities $P_\newinstance(\cdot)$ generated from NCB model (i.e., when the hyper-parameter value of NCC model is $s=0.0$), and also keep in mind that there is no connection between the hyper-parameter $s$ of NCC model and the threshold parameter $\gamma$.

The results of the incorrectness, for the missing and noisy setups, obtained by fitting the NCC model on data sets of Table~\ref{tab:datasets}, with discretization $z=6$ and a single level of imprecise $s$ per setup, are presented in Figure~\ref{fig:rejectionmissnoise} \footnote{The rest of experiments, i.e. other levels of imprecision and the completeness measure, have been placed in~\ref{app:rejection}}.
\begin{figure}[!ht]
	\centering%
	\resizebox{0.8\textwidth}{!}{%
	  \begin{tikzpicture}[x=1pt,y=1pt]
\definecolor{fillColor}{RGB}{255,255,255}
\path[use as bounding box,fill=fillColor,fill opacity=0.00] (0,0) rectangle (830.97, 59.50);
\begin{scope}
\path[clip] (  0.00,  0.00) rectangle (884.97, 59.50);
\definecolor{drawColor}{RGB}{0,0,0}
\node[text=drawColor,anchor=base west,inner sep=0pt, outer sep=0pt, scale= 1.98] at ( 63.06, 35.35) { \sc\bf Models:};
\end{scope}
\begin{scope}
\path[clip] (  0.00,  0.00) rectangle (884.97, 59.50);
\definecolor{drawColor}{RGB}{248,118,109}
\path[draw=drawColor,line width= 1.4pt,line join=round, dash pattern=on 2pt off 2pt on 6pt off 2pt] (206.19, 40.16) -- (274.48, 40.16);
\end{scope}
\begin{scope}
\path[clip] (  0.00,  0.00) rectangle (884.97, 59.50);
\definecolor{fillColor}{RGB}{248,118,109}

\path[fill=fillColor] (240.34, 40.16) circle (4.64);
\end{scope}
\begin{scope}
\path[clip] (  0.00,  0.00) rectangle (884.97, 59.50);
\definecolor{drawColor}{RGB}{0,186,56}

\path[draw=drawColor,line width= 1.4pt,line join=round,dotted] (398.10, 40.16) -- (466.39, 40.16);
\end{scope}
\begin{scope}
\path[clip] (  0.00,  0.00) rectangle (884.97, 59.50);
\definecolor{fillColor}{RGB}{0,186,56}

\path[fill=fillColor] (432.24, 47.38) --
	(438.49, 36.56) --
	(425.99, 36.56) --
	cycle;
\end{scope}
\begin{scope}
\path[clip] (  0.00,  0.00) rectangle (884.97, 59.50);
\definecolor{drawColor}{RGB}{97,156,255}

\path[draw=drawColor,line width= 1.4pt,line join=round] (590.26, 40.16) -- (658.55, 40.16);
\end{scope}
\begin{scope}
\path[clip] (  0.00,  0.00) rectangle (884.97, 59.50);
\definecolor{fillColor}{RGB}{97,156,255}

\path[fill=fillColor] (619.77, 35.52) --
	(629.05, 35.52) --
	(629.05, 44.80) --
	(619.77, 44.80) --
	cycle;
\end{scope}
\begin{scope}
\path[clip] (  0.00,  0.00) rectangle (884.97, 59.50);
\definecolor{drawColor}{RGB}{0,0,0}

\node[text=drawColor,anchor=base west,inner sep=0pt, outer sep=0pt, scale=  1.98] at (292.92, 35.35) {Rejection};
\end{scope}
\begin{scope}
\path[clip] (0.00,  0.00) rectangle (884.97, 59.50);
\definecolor{drawColor}{RGB}{0,0,0}

\node[text=drawColor,anchor=base west,inner sep=0pt, outer sep=0pt, scale=  1.98] at (484.82, 35.35) {Imprecise};
\end{scope}
\begin{scope}
\path[clip] (0.00,  0.00) rectangle (884.97, 59.50);
\definecolor{drawColor}{RGB}{0,0,0}

\node[text=drawColor,anchor=base west,inner sep=0pt, outer sep=0pt, scale=  1.98] at (676.98, 35.35) {Precise};
\end{scope}
\begin{scope}
\path[clip] (  0.00,  0.00) rectangle (884.97, 59.50);
\definecolor{drawColor}{RGB}{0,0,0}

\node[text=drawColor,anchor=base west,inner sep=0pt, outer sep=0pt, scale=  1.98] at (8.50, 8.52) {\bf Threshold $\gamma$:~~};
\end{scope}
\begin{scope}
\path[clip] (0.00,  0.00) rectangle (884.97, 59.50);
\definecolor{drawColor}{RGB}{0,0,0}

\path[draw=drawColor,line width= 1.4pt,dash pattern=on 2pt off 2pt on 6pt off 2pt ,line join=round] (140.79, 12.33) -- (209.08, 12.33);
\end{scope}

\begin{scope}
\path[clip] (0.00,  0.00) rectangle (884.97, 59.50);
\definecolor{drawColor}{RGB}{0,0,0}

\path[draw=drawColor,line width= 1.4pt,dash pattern=on 7pt off 3pt ,line join=round] (280.01, 12.33) -- (348.30, 12.33);
\end{scope}

\begin{scope}
\path[clip] (0.00,  0.00) rectangle (884.97, 59.50);
\definecolor{drawColor}{RGB}{0,0,0}

\path[draw=drawColor,line width= 1.4pt,dash pattern=on 1pt off 3pt on 4pt off 3pt ,line join=round] (424.24, 12.33) -- (492.52, 12.33);
\end{scope}
\begin{scope}
\path[clip] (  0.00,  0.00) rectangle (884.97, 59.50);
\definecolor{drawColor}{RGB}{0,0,0}

\path[draw=drawColor,line width= 1.4pt,dash pattern=on 4pt off 4pt ,line join=round] (568.46, 12.33) -- (636.75, 12.33);
\end{scope}

\begin{scope}
\path[clip] (  0.00,  0.00) rectangle (884.97, 59.50);
\definecolor{drawColor}{RGB}{0,0,0}

\path[draw=drawColor,line width= 1.4pt,dash pattern=on 1pt off 15pt ,line join=round] (712.69, 12.33) -- (780.97, 12.33);
\end{scope}

\begin{scope}
\path[clip] (  0.00,  0.00) rectangle (884.97, 59.50);
\definecolor{drawColor}{RGB}{0,0,0}
\node[text=drawColor,anchor=base west,inner sep=0pt, outer sep=0pt, scale=  1.98] at (222.51, 8.52) {0.05};
\end{scope}

\begin{scope}
\path[clip] (  0.00,  0.00) rectangle (884.97, 59.50);
\definecolor{drawColor}{RGB}{0,0,0}
\node[text=drawColor,anchor=base west,inner sep=0pt, outer sep=0pt, scale=  1.98] at (366.74, 8.52) {0.15};
\end{scope}

\begin{scope}
\path[clip] (  0.00,  0.00) rectangle (884.97, 59.50);
\definecolor{drawColor}{RGB}{0,0,0}

\node[text=drawColor,anchor=base west,inner sep=0pt, outer sep=0pt, scale=  1.98] at (510.96, 8.52) {0.25};
\end{scope}
\begin{scope}
\path[clip] (  0.00,  0.00) rectangle (884.97, 59.50);
\definecolor{drawColor}{RGB}{0,0,0}

\node[text=drawColor,anchor=base west,inner sep=0pt, outer sep=0pt, scale=  1.98] at (655.18, 8.52) {0.35};
\end{scope}
\begin{scope}
\path[clip] (  0.00,  0.00) rectangle (884.97, 59.50);
\definecolor{drawColor}{RGB}{0,0,0}

\node[text=drawColor,anchor=base west,inner sep=0pt, outer sep=0pt, scale=  1.98] at (789.41, 8.52) {0.45};
\end{scope}
\end{tikzpicture}
	}\vspace{-2mm}%
	\renewcommand{\thesubfigure}{(a)}
	\subfigure[{\bf Missing} setting with a hyper-parameter value of NCC $s=1.5$]{
		\hspace{-2mm}
		\subfigure[{\scshape Emotions}]{
			\includegraphics[width=0.324\linewidth]
				{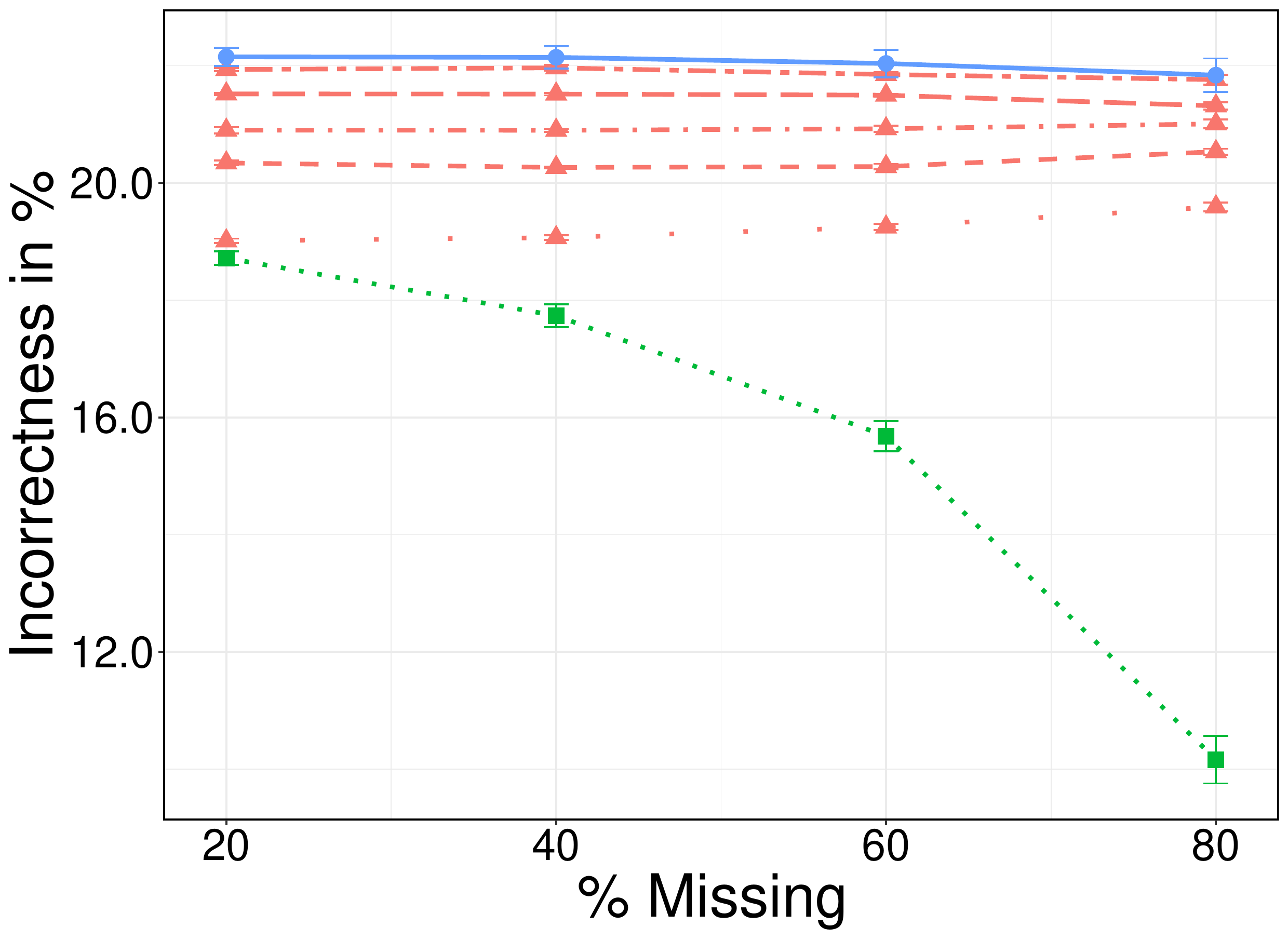}} 
		\subfigure[\scshape Scene]{
			\includegraphics[width=0.324\linewidth]
				{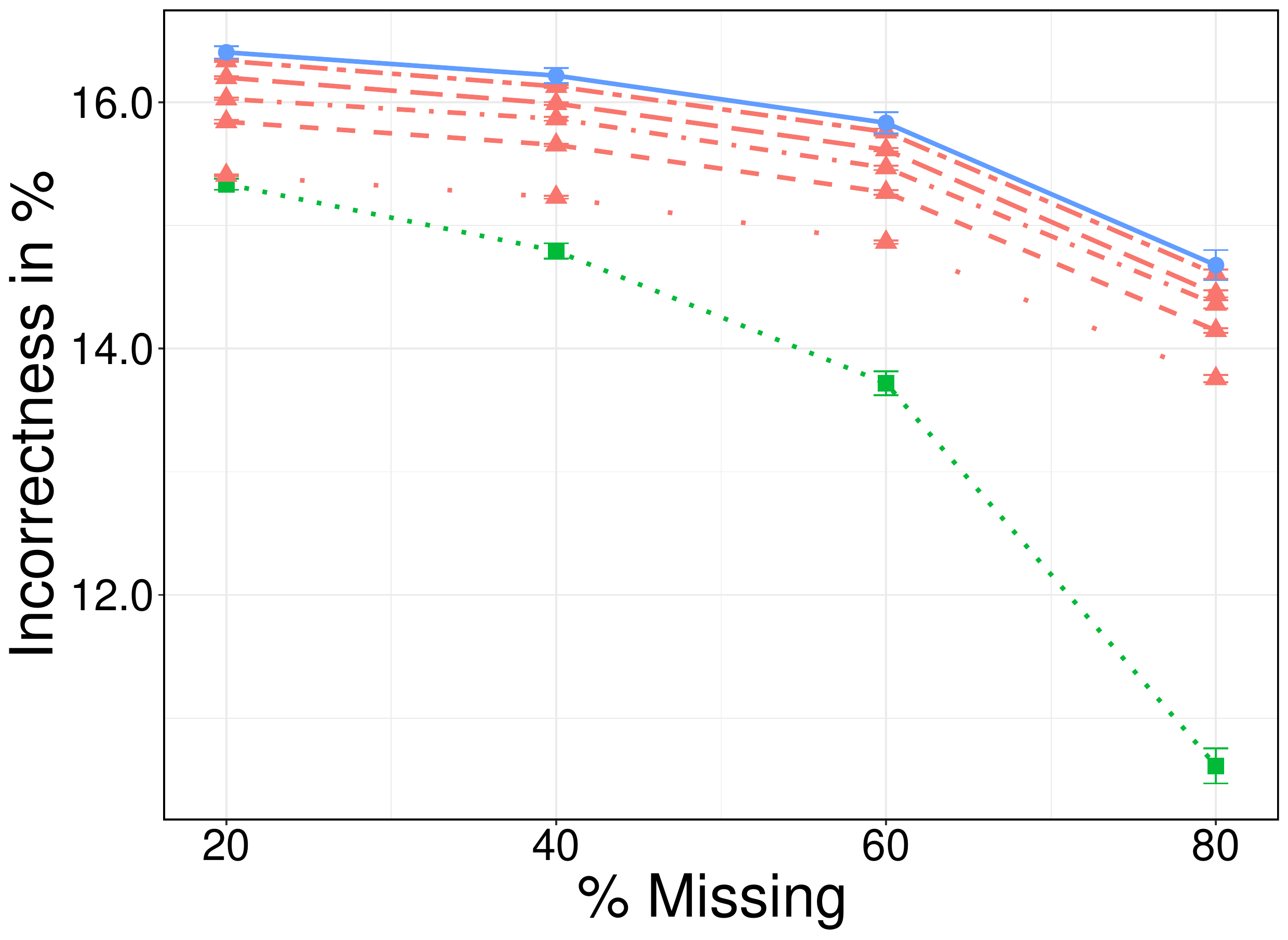}} 
		\subfigure[\scshape Yeast]{
			\includegraphics[width=0.324\linewidth]
				{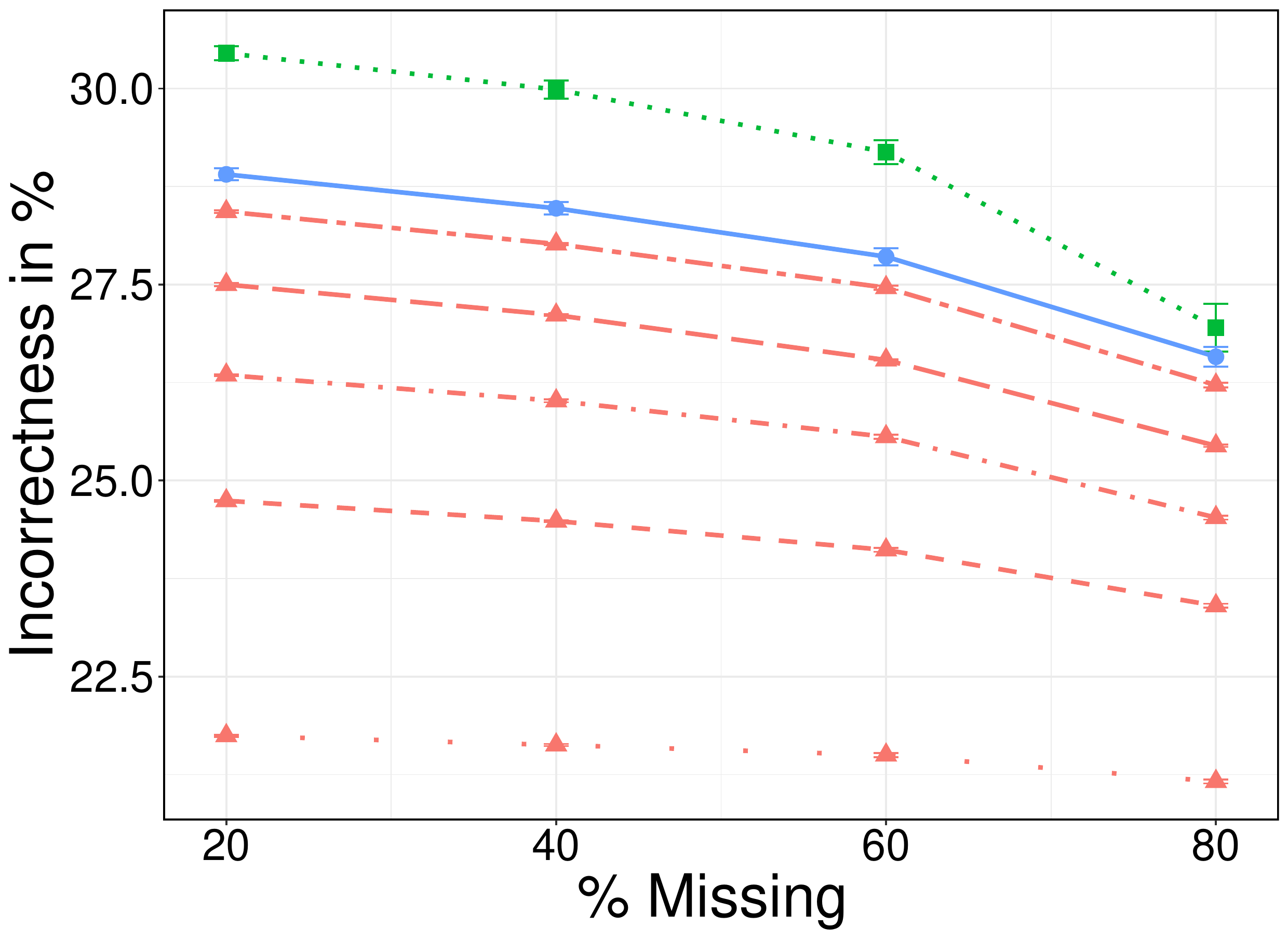}} 
	}\vspace{-3mm}
    \renewcommand{\thesubfigure}{(c)}
	\subfigure[{\bf Noise-Reversing} with a hyper-parameter value of NCC $s=2.5$]{
		\hspace{-2mm}
	  	\subfigure[\scshape Emotions]{
			\includegraphics[width=0.324\linewidth]
				{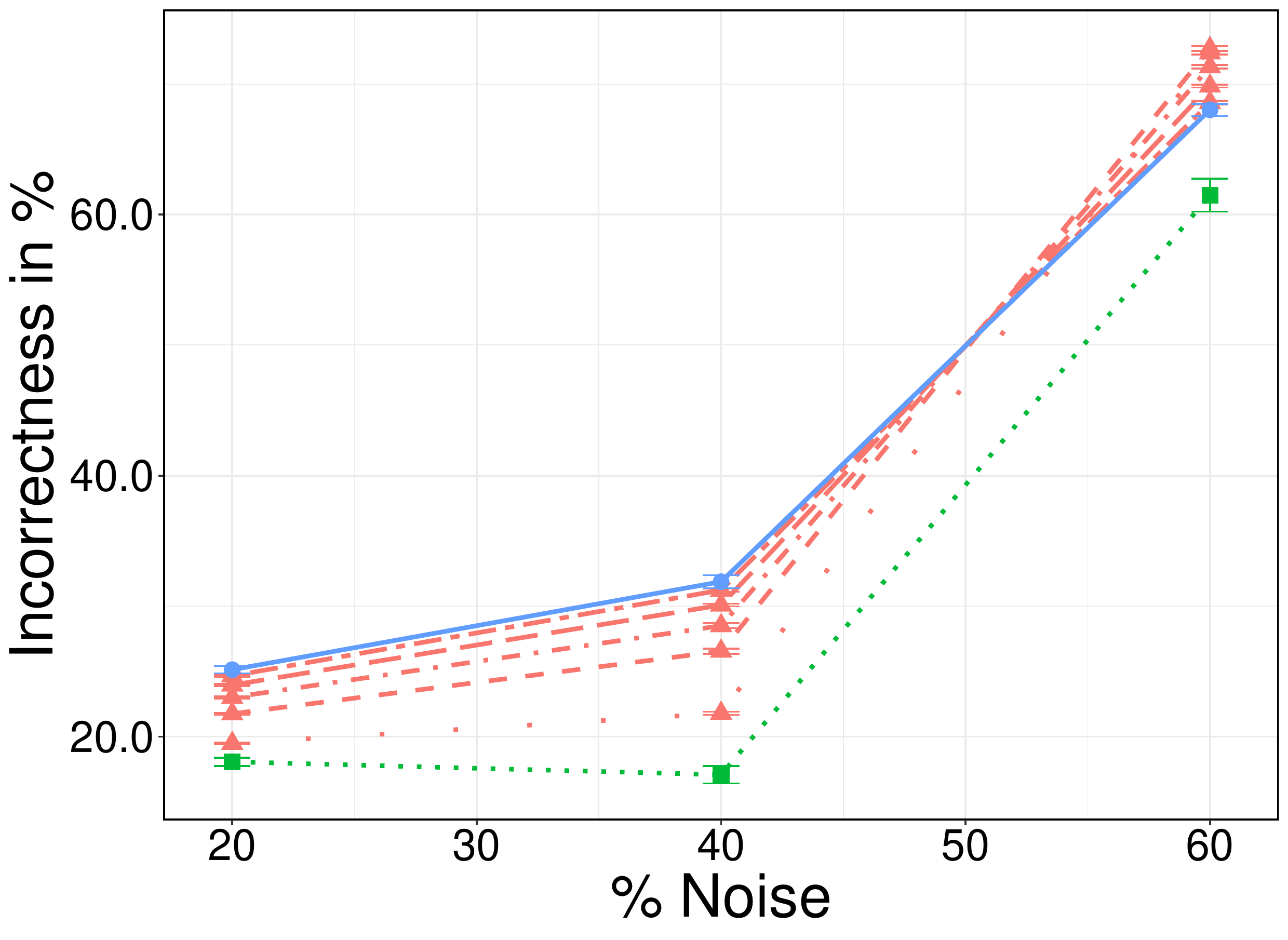}} 
		\subfigure[\scshape Scene]{
			\includegraphics[width=0.324\linewidth]
				{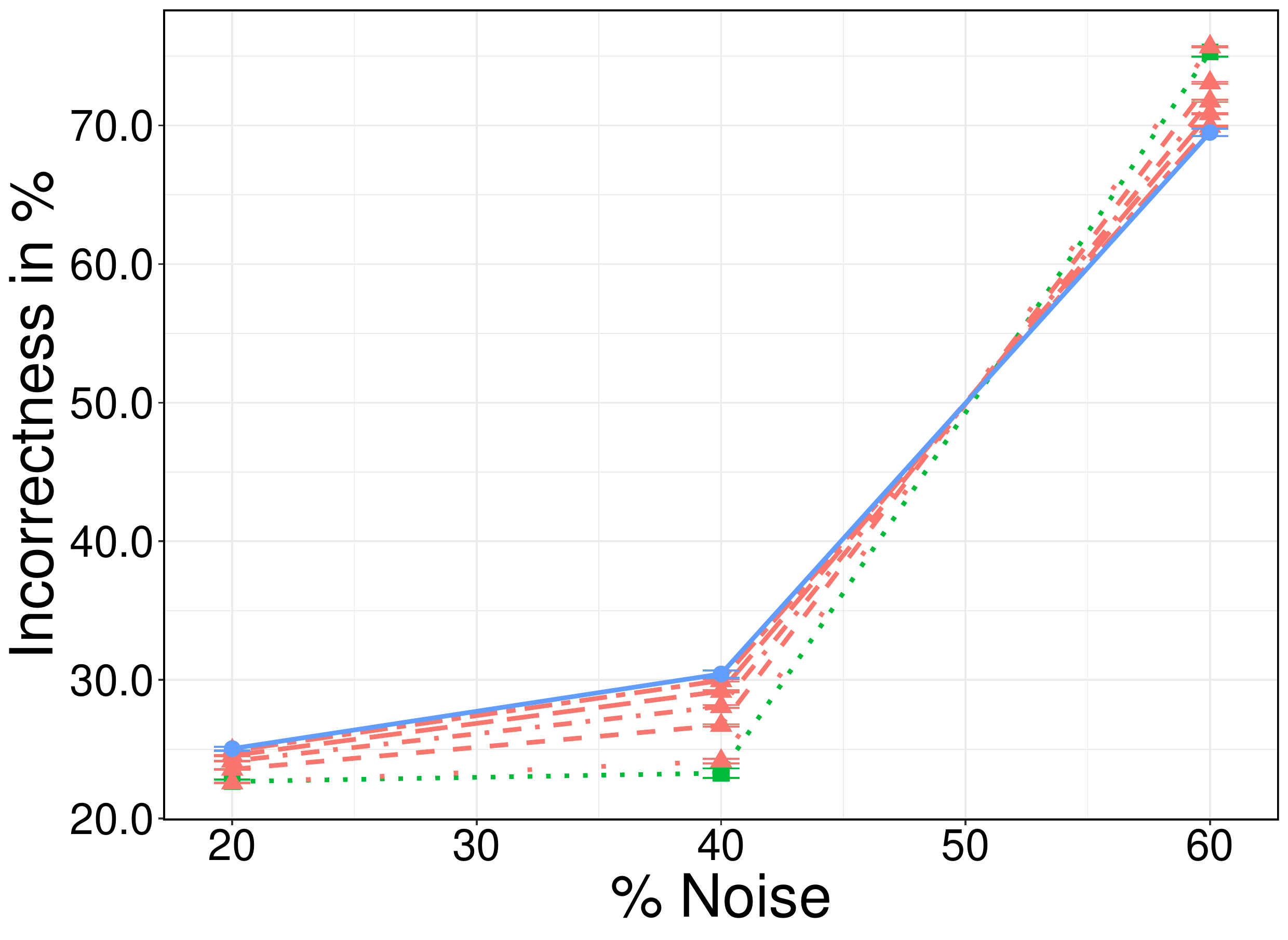}} 
		\subfigure[\scshape Yeast]{
			\includegraphics[width=0.324\linewidth]
				{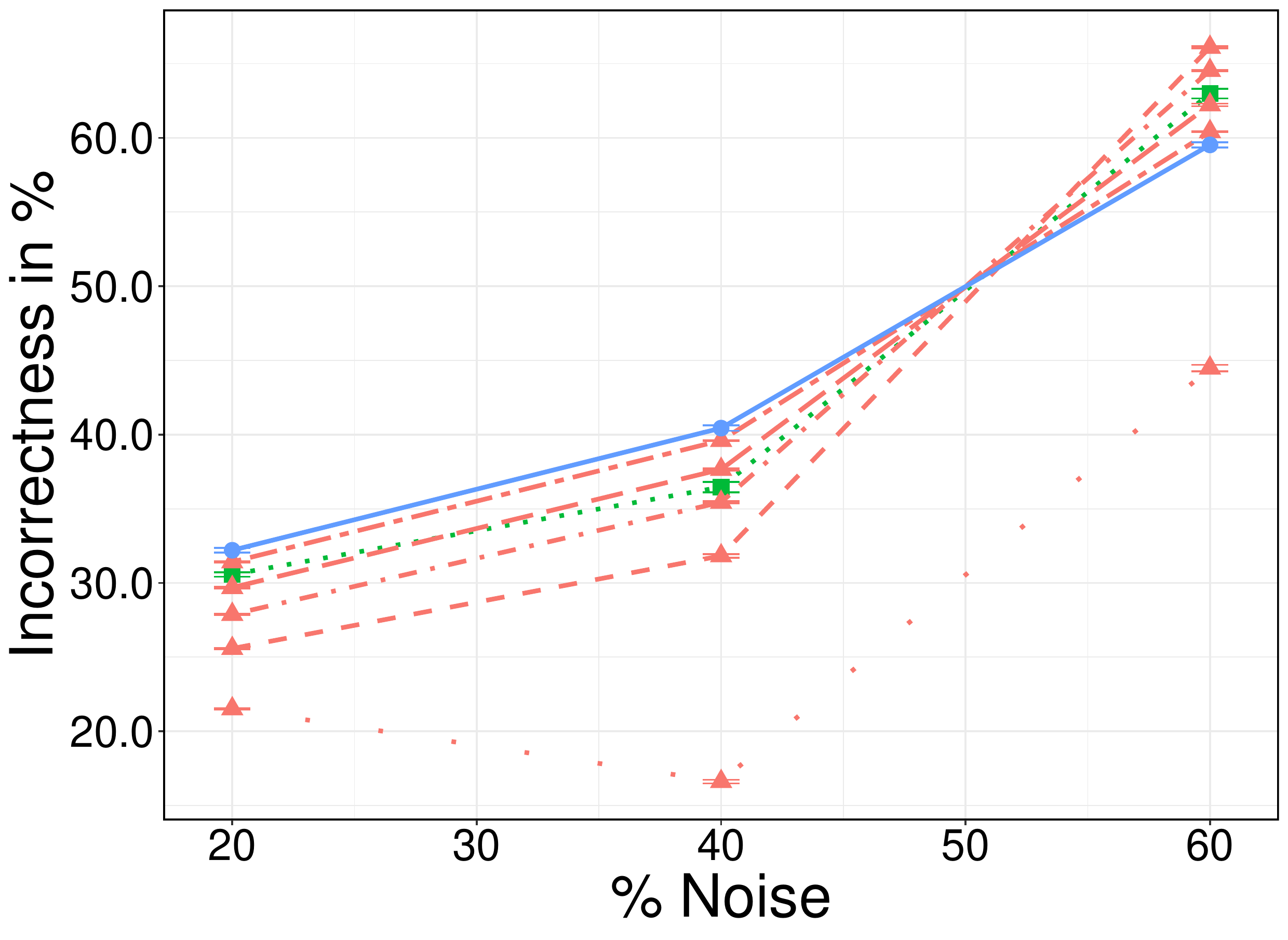}} 
	}\vspace{-3mm}
	\renewcommand{\thesubfigure}{(b)}
	\subfigure[{\bf Noise-Flipping} with a hyper-parameter value of NCC $s=3.5$]{
		\hspace{-2mm}
		\subfigure[\scshape Emotions]{
			\includegraphics[width=0.324\linewidth]
				{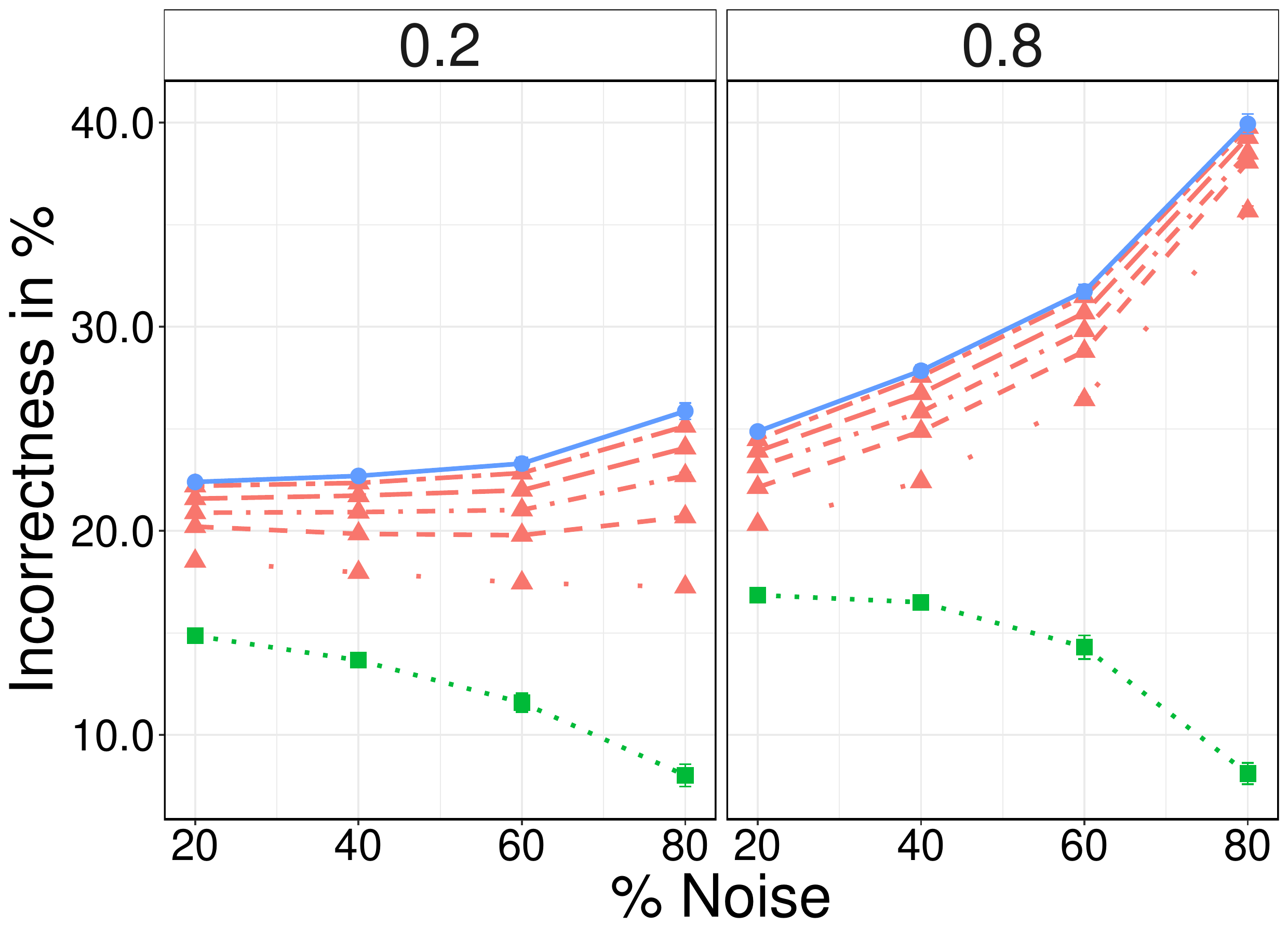}} 
		\subfigure[\scshape Scene]{
			\includegraphics[width=0.324\linewidth]
				{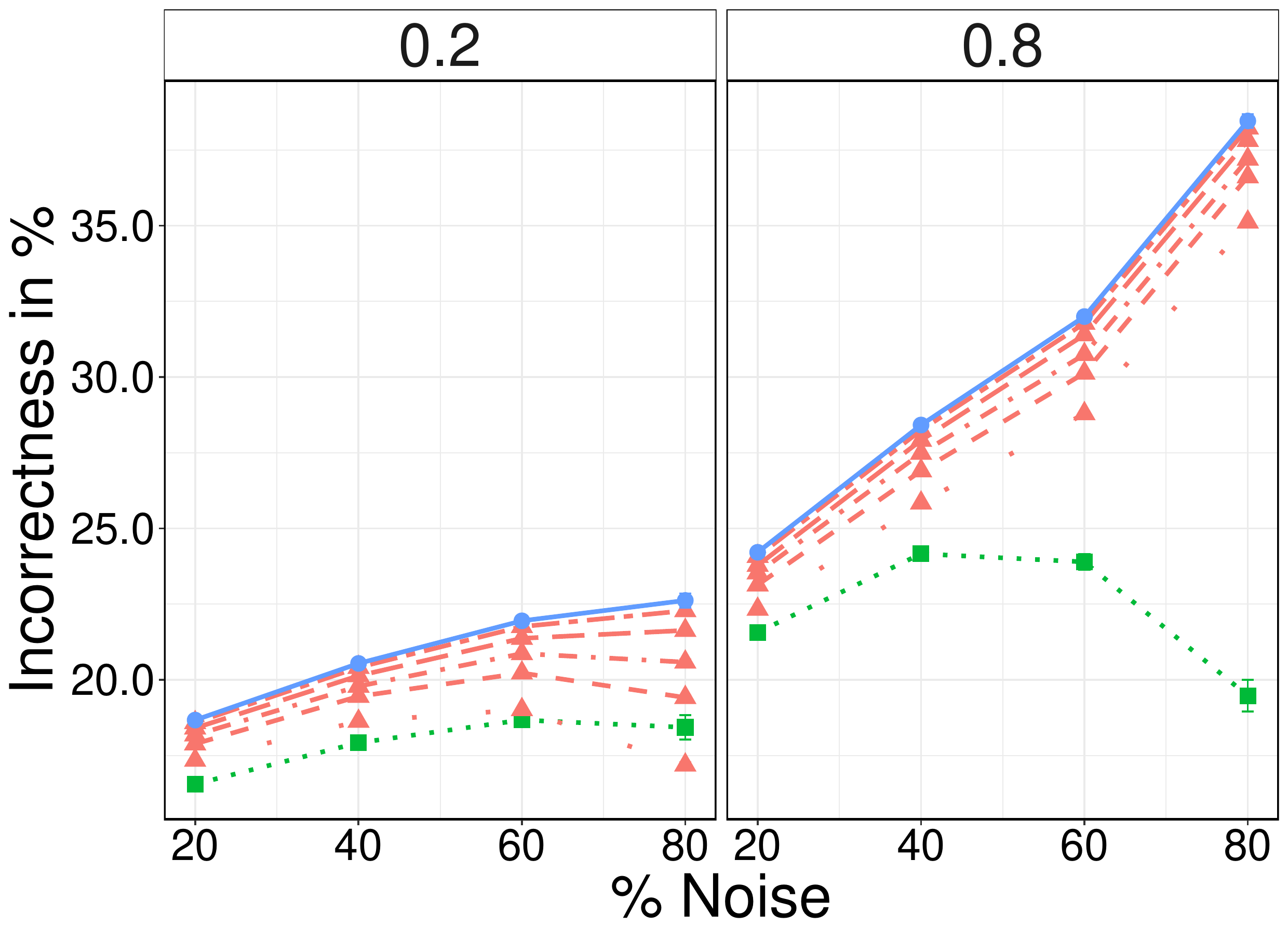}} 
		\subfigure[\scshape Yeast]{
			\includegraphics[width=0.324\linewidth]
				{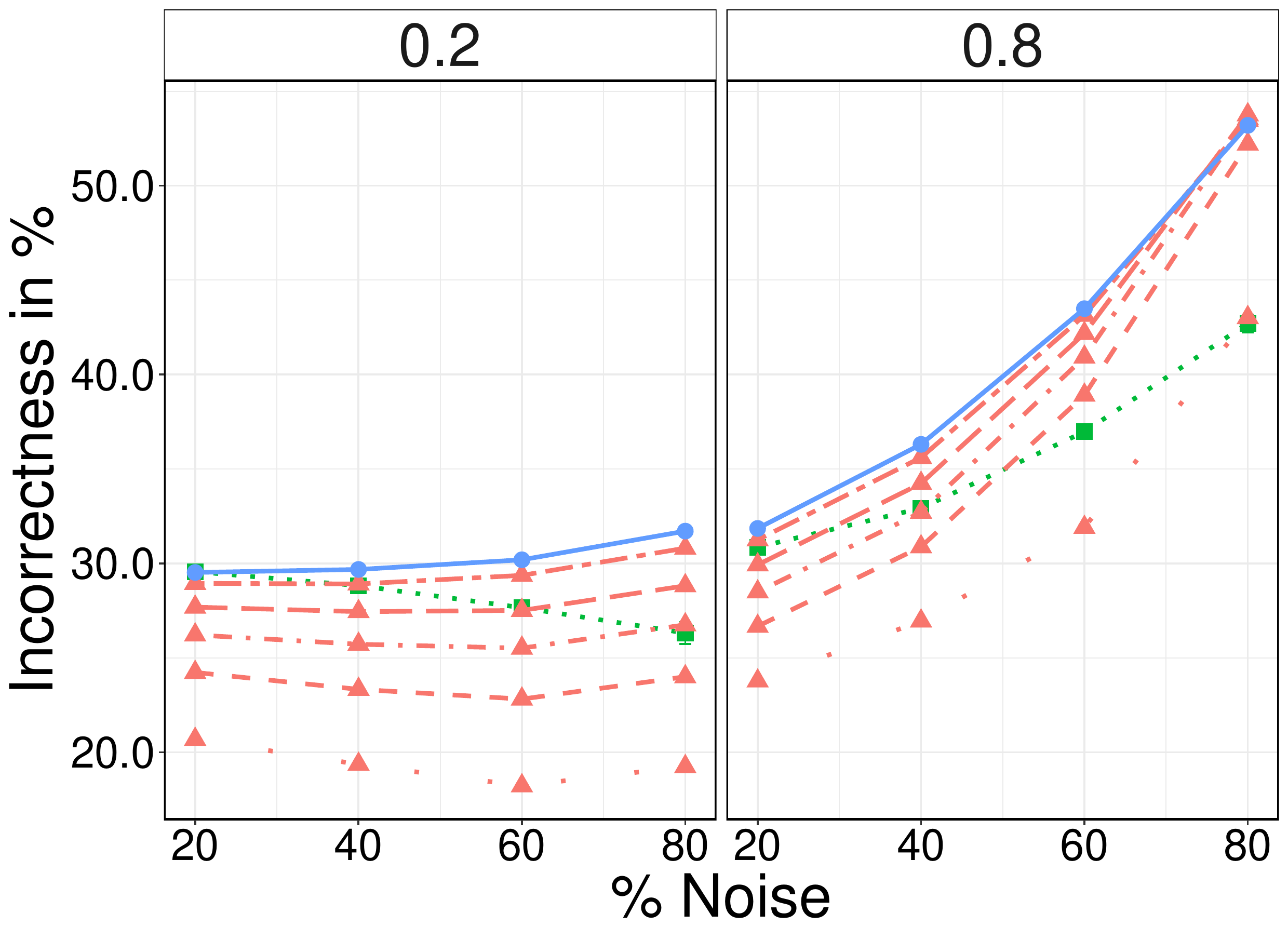}} 
	}\vspace{-2mm}
	\caption{{\bf Rejection vs. Skeptic.} Incorrectness evolution for a different level of imprecision in each setting and discretization $z=6$, and with respect to different percentages of noisy or missing labels (x-axis).}
	\label{fig:rejectionmissnoise}
\end{figure}

The results obtained of the {\bf Missing} and {\bf Flipping} settings show that the rejection rule does not react appropriately to the injection of imperfect information\footnote{Imperfect information is here used as a synonym for limited information or corrupted knowledge.} ($\geq 40\%$), be it because of the value of label is missing or corrupted, and even with a higher threshold $\gamma=0.45$ (since with $\gamma=0.50$, no label is predicted at all) it fails to abstain correctly on those hard instances where the uncertainty is higher. In contrast, we can see that our approach is quite robust in identifying this imperfection (or high uncertainty). This is proven by the fact that the incorrectness decreases significantly faster in our approach than in the rejection rule when the percentage of imperfect information increases.

On the other hand, the results based on the {\bf Reversing} setting shows that the rejection and skeptical approach have similar behaviours (i.e., slightly comparable curves). Nevertheless, we can identify that our approach has a better abstention behaviour on Emotions dataset (an in-depth study, with more labels et features, will be carried out in the next section).

Comparative experiments on the rejection and skeptical approaches provide us a first glance and an intriguing insight into how these approaches behave when confronted with imperfect information (with higher uncertainty). That is why in the next section, we perform a complementary comparative experiment between our skeptical approach, the rejection rule, and the partial abstention~\cite{nguyen2019}, all these subject to a downsampling procedure. 

\subsection{Experiments comparing skeptical approaches under downsampling}
\label{sec:downsampling}
The lack of knowledge (or limited information) is often present in different scientific and professional problems, such as unreported adverse symptoms of patients after getting the new COVID-19 vaccine, or doubtful information from a candidate applying for a job position or a housing credit. In such cases, the usage of skeptical approaches is essential, but what is the difference between the existing approaches, and how do the ones behave with the lack of knowledge?. Such questions are answered in this section in which we compare three different skeptical approaches subject to a downsampling procedure and different types of data sets (e.g. with features of type numeric, categorical, or mixed, and as well as, with few and several labels).

\paragraph{Data sets} Intending to have a diversity of data sets, we use some supplementary data sets from the MULAN repository (cf. Table~\ref{tab:datasetsdownsampling}), additionally to those presented in Table~\ref{tab:datasets}. 
\begin{table}[!ht]
	\centering
	\resizebox{!}{!}{%
	\begin{tabular}{ccccccc}
		 Data set & Domain & \#Features & \#Labels & \#Instances & \#Cardinality & \#Density \\
		\hline
		CAL500 &  music & 68 & 174 &  502 & 26.04 & 0.15\\
		Medical & text & 1449 & 45 &  978 & 1.25 & 0.03\\
		Flags & images & 19 & 7 & 194 & 3.39 & 0.49
	\end{tabular}}
	\caption{Multi-label data sets summary}\label{tab:datasetsdownsampling}
\end{table}

The data sets of Table~\ref{tab:datasetsdownsampling} complements those of Table~\ref{tab:datasets} in different aspects: (1) more labels, (2) more features, (3) different domains (text, music, ...) , and (4) different types of attributes: binary, numeric and mixed.

\paragraph{Downsampling} 
In order to show how the lack of knowledge can impact the inference step of the different skeptical approaches presented below, we downsampled the data sets of Table~\ref{tab:datasetsdownsampling} and \ref{tab:datasets} in sets of different percentages of training data sets $\{10, 20, \dots, 90\}\%$. In other words, $x\%$ is used as training data set and $(100-x)\%$ is used as test data set. We repeat this operation $50$ times for each percentage level in order to ensure that our results are close to asymptotic behaviours of classifiers.

In the case of the usage of IGDA classifier, it is necessary to ensure that the samples randomly generated have at least two samples per label to calculate the empirical covariance matrix without problems. That is why the data set CAL500 is sampled from 50\% to 90\% for our experiments. 

Metrics used for evaluating the performance of skeptical approaches on the $(1-x)\%$ test data set are: the incorrectness $IC(\hat{\mathbb{Y}}, \vect{y})$ and the incompleteness $CP(\hat{\mathbb{Y}}, \vect{y})$ already presented in Equations~\eqref{eq:incorrectness} and \eqref{eq:completeness}, where $\hat{\mathbb{Y}}$ is the skeptical inference and $\vect{y}$ is the ground-truth value.

\subsubsection{Skeptical approaches}\label{sec:absapproaches}
Among the few existing skeptical approaches already mentioned in Section~\ref{sec:introduction}, we choose to compare our proposal with the partial rejection and abstention, setting aside the indeterminate approach of Antonucci and Corani~\cite{antonucci2017multilabel}'s work which is closer to our approach but it focuses on predicting the \emph{full partial binary vector} $\hat{\vect{y}}$ by using the \textbf{zero-one loss function} whereas we perform \emph{single-label decisions} to predict the vector $\hat{\vect{y}}$ by using the \textbf{Hamming loss function}. 

\paragraph{Partial abstention} A  recent skeptical approach had been proposed by Nguyen V-L. et al in \cite{nguyen2019}, in which introduce a generalized loss function $L$ allowing to partially abstain from a prediction. This generalized loss function is based on an additive penalty and must also satisfy four properties; monotonicity, uncertainty-alignment, additive penalty for abstention, and convergence to the conventional loss function $\ell$ if no abstention (for further details, we refer to \cite{nguyen2019}). The generalized loss function therefore can be written as follows 
\begin{equation}\label{eq:partialgenloss}
	L(\vect{y}, \hat{\vect{y}}) = \ell(\vect{y}_D, \hat{\vect{y}}_D) + f(A(\hat{\vect{y}})),
\end{equation}
where $\ell$ is the original loss, $D:=D(\hat{\vect{y}})$ is a set of indices for which $\hat{y}_i\in\{0, 1\}$, and $f(A(\hat{\vect{y}}))$ a penalty function for abstaining on the set of indices  $A(\hat{\vect{y}})$ for which $\hat{y}_i$ is equal to $*$. 

In this paper, we consider two different penalty functions, as Nguyen V-L. et al did it for their experiments;
\begin{align}
	f_{SEP}(a)=a \cdot c \qquad\text{and}\qquad
	f_{PAR}(a)=\frac{(a\cdot m)}{(m + a)} \cdot c,
\end{align}
where $a$ is the number of abstained labels, $f_{SEP}$ is a linear penalty function with the hyper-parameter $c\in[0.05, 0.5]$ and $f_{PAR}$ is a concave penalty function with the hyper-parameter $c\in[0.1, 1]$. These latter functions are counting measures that only \emph{depends on the number of abstentions}, meaning that it penalizes each abstention, be it linearly $f_{SEP}$ or not $f_{PAR}$, with the same constant $c$.

When we consider the $\ell_H$ Hamming loss instead of $\ell$ and the penalty $f_{SEP}$, Nguyen V-L. et al shown that the risk minimisation under the generalized loss $L(\vect{y}, \hat{\vect{y}})$ produces the following prediction rules;
\begin{align}
	A(\hat{\vect{y}}) &= \left\{i| i\in\setn{m} ~s.t.~ c < \min(p_i, 1-p_i) \right\} \label{eq:abstainlabels}, \\
	D(\hat{\vect{y}}) &= \left\{i| i\in\setn{m} ~s.t.~ c \geq \min(p_i, 1-p_i) \right\} \label{eq:preciselabels},
\end{align}
in order to get the set of indices for the precise (Equation~\eqref{eq:preciselabels}) and abstained (Equation~\eqref{eq:abstainlabels}) labels. However, if the penalty $f_{PAR}$ is considered (or any other monotonous function), Nguyen V-L. et al proposed a procedure of time complexity $\complexity{m\log{(m)}}$~\cite[Prop. 1]{nguyen2019}. 

\paragraph{Rejection threshold} It is a well-known approach already described in Section~\ref{sec:expnoisedatasets} and well-adapted to perform single-label decisions $\hat{y}_{i,\ell_H,\gamma}\in\{1,0,*\}$, akin to our proposed generalized binary relevance described in Section~\ref{sec:Binary_rev} in terms of predicted values. As in this paper, all the predictive decisions are obtained as a result to use the Hamming loss function, we cannot use the partial rejection approach of Pillai et al.~\cite{pillai2013multi} which is based on the F-measure loss, and therefore, we will use a classical rejection approach which has already been introduced in Equation~\eqref{eq:optiHamPrecThreshold} and use the same values of the threshold parameter $\gamma$.

\subsubsection{Imprecise classifiers}
To obtain the marginal probability intervals over each label, i.e. $[\underline{p}_i, \overline{p}_i], \forall i\in\setn{m}$, we use three different credal classifiers (or imprecise classifier). These classifiers are very briefly described below and we refer to \ref{app:credalclassifier} for details on how we can calculate the marginal probability intervals in each case.

Note that in the case of the partial abstention and rejection approaches, we used the precise probability $p_i$, and not the probability interval $[\underline{p}_i, \overline{p}_i]$, to obtain skeptical decisions. Of course, in such cases, we used different values of hyper-parameters $c$ or $\gamma$ to get their skeptical decisions, see Section~\ref{sec:expresultdownsampling}.

\paragraph{Naive Credal classifier (NCC)} In the same way as in Section~\ref{sec:expnoisedatasets}, we use again the imprecise classifier NCC described above, with the same set-up but with just a single discretization level $z=6$.

\paragraph{Imprecise Gaussian discriminant analysis (IGDA)} IGDA is a extending version of the Gaussian discriminant analysis to the imprecise probabilistic setting. This extension considers the imprecision as part of its basic axioms and is based on robust Bayesian analysis and near-ignorance priors. By including an imprecise component in the model, or more precisely by using a set of Gaussian distributions with the $\mu$ mean parameter living in a convex space, IGDA can predict a set-valued decision on  hard-to-predict instances having high uncertainty. 

Depending on the internal structure of the covariance matrix parameter $\hat{\Sigma}$ of the set of Gaussian distributions, the inference complexity can become harder to compute, and that is why we opted to use an approximation method~\cite{park2017general}\footnote{Implemented in Python, see   \url{https://github.com/cvxgrp/qcqp}.} to calculate the lower probability bound instead of an exact procedure as was presented in the original paper~\cite{alarcon2021imprecise}, all this without reducing the cautious predictive global results.

In the same way as the NCC model controls the imprecision, IGDA also has a hyper-parameter $\tau$ to control the imprecision level of the probability interval. On the grounds that each dataset of Table~\ref{tab:datasetsdownsampling} and \ref{tab:datasets} can depend on the $\tau$ hyper-parameter value, we chose a set of different appropriate values $\tau \in [0, 5]$ for each one, without picking-up the most optimal value. 

\paragraph{Credal Sum-Product network classifier (CSPN)}
Recently, Mauá D.D. et al in \cite{maua2017credal} proposed to extend the relatively new deep probabilistic model known as sum-product networks (SPN)~\cite{poon2011sum} to the imprecise probabilistic setting, namely credal sum-product networks (CSPN), allowing that singleton weight $\vect{w}$ of SPN can vary in a convex space with constraints $\mathscr{C}_{\vect{w}}$.

As our purpose is to obtain the marginal probability intervals $[\underline{p}_i, \overline{p}_i]$ from an imprecise binary classification, we decided to use the class-selective SPNs approach proposed by Correia A.H et al in~\cite{Correia2019Towards}. This approach implements a suited architecture for classification tasks and reduces the inference time by using memoization techniques. Yet, as the current implementation in R language does not allow cautious inferences, we implemented an imprecise binary classification version~\footnote{Implemented in R, see~\url{https://github.com/salmuz/multilabel_cspn}.} on top of its source code~\cite{Correia2019Towards}, allowing to work, additionally of $\epsilon$-contamination approach $\mathscr{C}_{\vect{w}, \epsilon}$ already implemented in~\cite{Correia2019Towards}, with a convex space defined  from an imprecise Dirichlet model $\mathscr{C}_{N_{\vect{w}}, s}$ (IDM)~\cite{walley1996inferences}.

Like previous imprecise models, CSPN also controls the imprecision level of the probability intervals through the values of $\epsilon\in[0,1]$ hyper-parameter for $\epsilon$-contamination approach and $s>0$ hyper-parameter for the imprecise Dirichlet approach. Furthermore for sake of simplicity, we focus on the case of SPNs over discrete variables so that the data sets with continue variables are discretized in  $z\!=\!6$ equal-width intervals. 

\paragraph{Setting imprecise classifiers according to the nature of data sets}
The credal classifiers presented previously are not directly applied to all data sets, firstly because some of them does not match with the assumptions made by the credal classifier, e.g. Medical data set is composed of binary features, hence IGDA method is not adapted at all. Secondly, we consider that it is unnecessary to show redundant results, but rather consistent with the subject matter of this paper. Table~\ref{tab:methodsapplied} summarizes the credal classifier used for each data set of Table~\ref{tab:datasetsdownsampling} and \ref{tab:datasets}.
\begin{table}[!ht]
	\centering
	\resizebox{!}{!}{%
	\begin{tabular}{c||c|c|c|c|c|c}
		\sc Credal & \multirow{2}*{\sc Emotions} & \multirow{2}*{\sc Scene} & 
		\multirow{2}*{\sc Yeast} & \multirow{2}*{\sc CAL500}  & 
		\multirow{2}*{\sc Medical} & \multirow{2}*{\sc Flags}\\
		\sc Classifiers & & & & &\\
		\hline
		IGDA & \xmark &  & \xmark & \xmark &  & \xmark\\
	 	CSPN & \xmark & \xmark &  & \xmark & \xmark & \xmark\\
		NCC &   & \xmark & \xmark & \xmark & \xmark & 
	\end{tabular}}
	\caption{Credal classifiers used on data sets}
	\label{tab:methodsapplied}
\end{table}

\subsubsection{Experimental results under downsampling}\label{sec:expresultdownsampling}
The experimental results are divided into three parts, each of which shows the results obtained by using one of the credal classifiers previously discussed. To avoid overloading the illustrated results with too many curves, and also to simplify the interpretation and the narrative of our results, we preferred to display a single performance  (or green curve) of our skeptical inference approach per imprecise classifier in each chart, nevertheless, we kept five levels of inference performance for the other approaches. The performance displayed in the figures is the incorrectness, besides, the full of the experimental results are placed in \ref{app:suppresampling}.

\paragraph{NCC results}
In Figure~\ref{fig:nccsamplingresults}, we show the results of using the naive credal classifier as base classifier. The results show four important things:

\begin{figure}[!th]
	\centering%
	\resizebox{0.8\textwidth}{!}{%
	  \begin{tikzpicture}[x=1pt,y=1pt]
\definecolor{fillColor}{RGB}{255,255,255}
\definecolor{abstention}{RGB}{248,118,109}
\path[use as bounding box,fill=fillColor,fill opacity=0.00] (0,0) rectangle (794.97, 59.50);
\begin{scope}
\path[clip] ( 0.00,  0.00) rectangle (794.97, 59.50);
\definecolor{drawColor}{RGB}{0,0,0}
\node[text=drawColor,anchor=base west,inner sep=0pt, outer sep=0pt, scale=  1.98] at (30.06, 33.35) { \sc\bf Models:};
\end{scope}
\begin{scope}
\path[clip] (  0.00,  0.00) rectangle (794.97, 59.50);
\definecolor{drawColor}{RGB}{248,118,109}

\path[draw=drawColor,line width= 1.4pt,line join=round, dash pattern=on 2pt off 2pt on 6pt off 2pt] (111.19, 40.16) -- (179.48, 40.16);
\end{scope}
\begin{scope}
\path[clip] (  0.00,  0.00) rectangle (794.97, 59.50);
\definecolor{fillColor}{RGB}{248,118,109}

\path[fill=fillColor] (145.34, 40.16) circle (  4.64);
\end{scope}
\begin{scope}
\path[clip] (  0.00,  0.00) rectangle (794.97, 59.50);
\definecolor{drawColor}{RGB}{0,186,56}

\path[draw=drawColor,line width= 1.4pt,line join=round,dotted] (398.10, 40.16) -- (466.39, 40.16);
\end{scope}
\begin{scope}
\path[clip] (  0.00,  0.00) rectangle (794.97, 59.50);
\definecolor{fillColor}{RGB}{0,186,56}

\path[fill=fillColor] (432.24, 47.38) --
	(438.49, 36.56) --
	(425.99, 36.56) --
	cycle;
\end{scope}
\begin{scope}
\path[clip] (  0.00,  0.00) rectangle (794.97, 59.50);
\definecolor{drawColor}{RGB}{97,156,255}

\path[draw=drawColor,line width= 1.4pt,line join=round] (590.26, 40.16) -- (658.55, 40.16);
\end{scope}
\begin{scope}
\path[clip] (  0.00,  0.00) rectangle (794.97, 59.50);
\definecolor{fillColor}{RGB}{97,156,255}

\path[fill=fillColor] (619.77, 35.52) --
	(629.05, 35.52) --
	(629.05, 44.80) --
	(619.77, 44.80) --
	cycle;
\end{scope}
\begin{scope}
\path[clip] (  0.00,  0.00) rectangle (794.97, 59.50);
\definecolor{drawColor}{RGB}{0,0,0}

\node[text=drawColor,anchor=base west,inner sep=0pt, outer sep=0pt, scale=  1.98] at (192.92, 33.35) {Abstention or Rejection};
\end{scope}
\begin{scope}
\path[clip] (  0.00,  0.00) rectangle (794.97, 59.50);
\definecolor{drawColor}{RGB}{0,0,0}

\node[text=drawColor,anchor=base west,inner sep=0pt, outer sep=0pt, scale=  1.98] at (484.82, 33.35) {Imprecise};
\end{scope}
\begin{scope}
\path[clip] (  0.00,  0.00) rectangle (794.97, 59.50);
\definecolor{drawColor}{RGB}{0,0,0}

\node[text=drawColor,anchor=base west,inner sep=0pt, outer sep=0pt, scale=  1.98] at (676.98, 33.35) {Precise};
\end{scope}
\begin{scope}
\path[clip] (  0.00,  0.00) rectangle (794.97, 59.50);
\definecolor{drawColor}{RGB}{0,0,0}

\node[text=drawColor,anchor=base west,inner sep=0pt, outer sep=0pt, scale=  1.98] at (  8.50, 04.52) {\bf \scshape $c$-SPE:};
\end{scope}
\begin{scope}
\path[clip] (  0.00,  0.00) rectangle (794.97, 59.50);
\definecolor{drawColor}{RGB}{0,0,0}

\path[draw=drawColor,line width= 1.4pt,dash pattern=on 2pt off 2pt on 6pt off 2pt ,line join=round] ( 95.79, 9.33) -- (164.08, 9.33);
\end{scope}
\begin{scope}
\path[clip] (  0.00,  0.00) rectangle (794.97, 59.50);
\definecolor{fillColor}{RGB}{255,255,255}

\path[fill=fillColor] (231.48,  9.92) rectangle (316.84, 28.75);
\end{scope}
\begin{scope}
\path[clip] (  0.00,  0.00) rectangle (794.97, 59.50);
\definecolor{drawColor}{RGB}{0,0,0}

\path[draw=drawColor,line width= 1.4pt,dash pattern=on 7pt off 3pt ,line join=round] (240.01, 9.33) -- (308.30, 9.33);
\end{scope}
\begin{scope}
\path[clip] (  0.00,  0.00) rectangle (794.97, 59.50);
\definecolor{fillColor}{RGB}{255,255,255}

\path[fill=fillColor] (375.70,  9.92) rectangle (461.06, 28.75);
\end{scope}
\begin{scope}
\path[clip] (  0.00,  0.00) rectangle (794.97, 59.50);
\definecolor{drawColor}{RGB}{0,0,0}

\path[draw=drawColor,line width= 1.4pt,dash pattern=on 1pt off 3pt on 4pt off 3pt ,line join=round] (384.24, 9.33) -- (452.52, 9.33);
\end{scope}
\begin{scope}
\path[clip] (  0.00,  0.00) rectangle (794.97, 59.50);
\definecolor{fillColor}{RGB}{255,255,255}

\path[fill=fillColor] (519.93,  9.92) rectangle (605.28, 28.75);
\end{scope}
\begin{scope}
\path[clip] (  0.00,  0.00) rectangle (794.97, 59.50);
\definecolor{drawColor}{RGB}{0,0,0}

\path[draw=drawColor,line width= 1.4pt,dash pattern=on 4pt off 4pt ,line join=round] (528.46, 9.33) -- (596.75, 9.33);
\end{scope}
\begin{scope}
\path[clip] (  0.00,  0.00) rectangle (794.97, 59.50);
\definecolor{fillColor}{RGB}{255,255,255}

\path[fill=fillColor] (664.15,  9.92) rectangle (749.51, 28.75);
\end{scope}
\begin{scope}
\path[clip] (  0.00,  0.00) rectangle (794.97, 59.50);
\definecolor{drawColor}{RGB}{0,0,0}

\path[draw=drawColor,line width= 1.4pt,dash pattern=on 1pt off 15pt ,line join=round] (672.69, 9.33) -- (740.97, 9.33);
\end{scope}
\begin{scope}
\path[clip] (  0.00,  0.00) rectangle (794.97, 59.50);
\definecolor{drawColor}{RGB}{0,0,0}

\node[text=drawColor,anchor=base west,inner sep=0pt, outer sep=0pt, scale=  1.98] at (182.51, 04.52) {0.05};
\end{scope}
\begin{scope}
\path[clip] (  0.00,  0.00) rectangle (794.97, 59.50);
\definecolor{drawColor}{RGB}{0,0,0}

\node[text=drawColor,anchor=base west,inner sep=0pt, outer sep=0pt, scale=  1.98] at (326.74, 04.52) {0.15};
\end{scope}
\begin{scope}
\path[clip] (  0.00,  0.00) rectangle (794.97, 59.50);
\definecolor{drawColor}{RGB}{0,0,0}

\node[text=drawColor,anchor=base west,inner sep=0pt, outer sep=0pt, scale=  1.98] at (470.96, 04.52) {0.25};
\end{scope}
\begin{scope}
\path[clip] (  0.00,  0.00) rectangle (794.97, 59.50);
\definecolor{drawColor}{RGB}{0,0,0}

\node[text=drawColor,anchor=base west,inner sep=0pt, outer sep=0pt, scale=  1.98] at (615.18, 04.52) {0.35};
\end{scope}
\begin{scope}
\path[clip] (  0.00,  0.00) rectangle (794.97, 59.50);
\definecolor{drawColor}{RGB}{0,0,0}

\node[text=drawColor,anchor=base west,inner sep=0pt, outer sep=0pt, scale=  1.98] at (759.41, 04.52) {0.45};
\end{scope}
\end{tikzpicture}
	}\vspace{-2mm}\qquad%
	\renewcommand{\thesubfigure}{(a)}
	\subfigure[\scshape Partial abstention $f_{SPE}$ with NCC-$s=0.50$]{
		\hspace{-3mm}
		\subfigure[{\scshape Yeast}]{
			\includegraphics[width=0.244\linewidth]
				{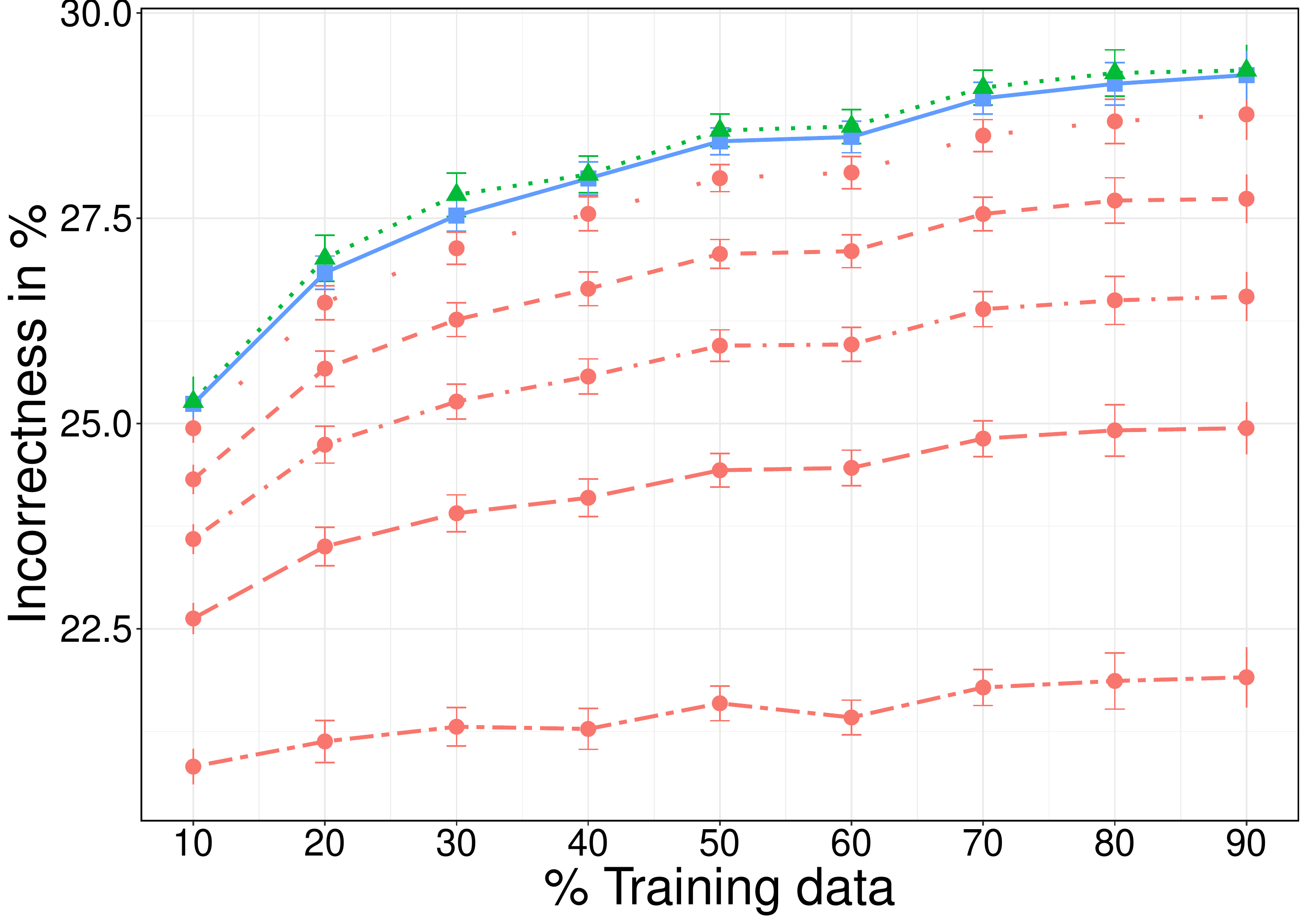}} 
		\subfigure[\scshape Scene]{
			\includegraphics[width=0.244\linewidth]
				{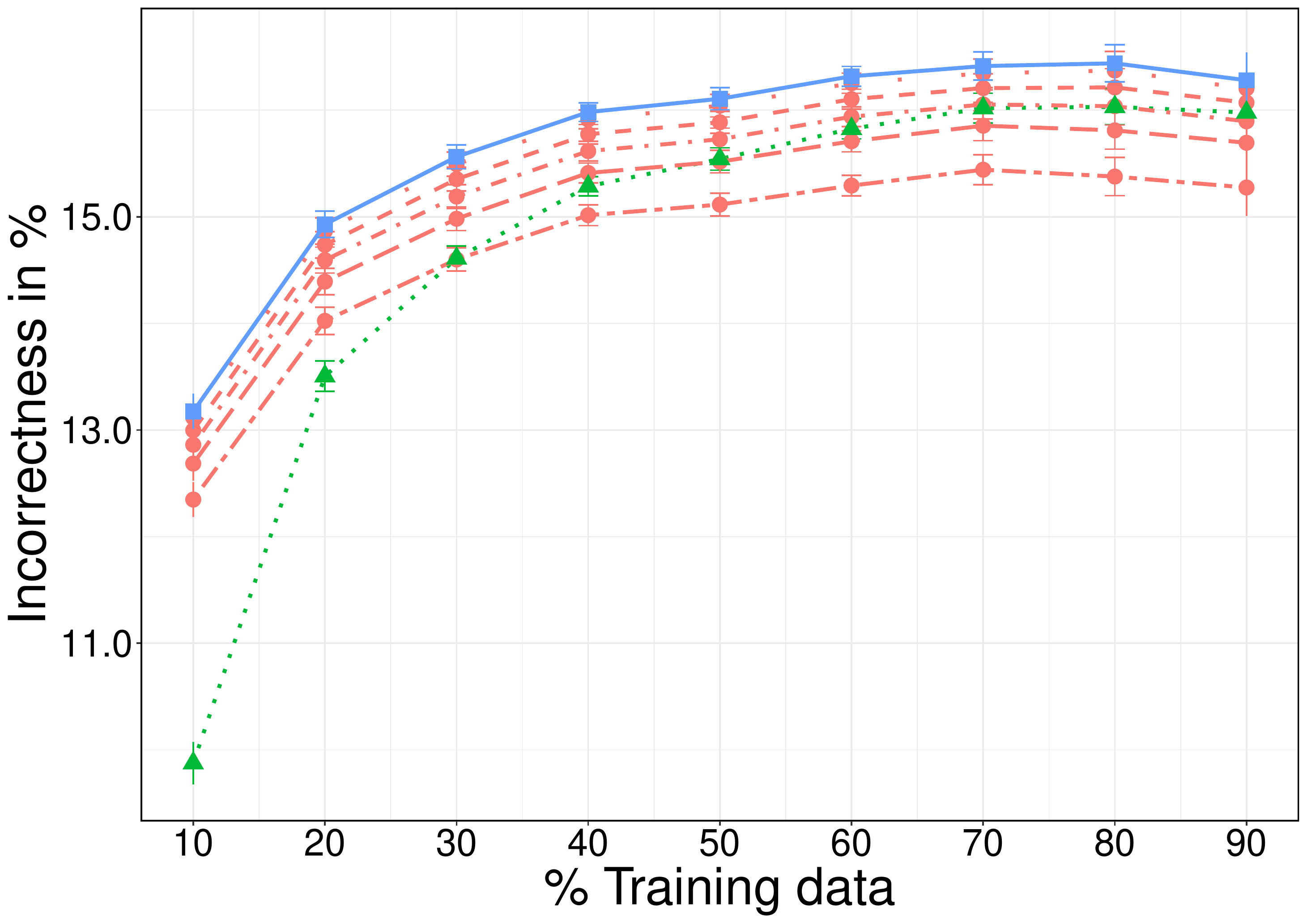}}
		\subfigure[\scshape Medical]{
			\includegraphics[width=0.244\linewidth]
				{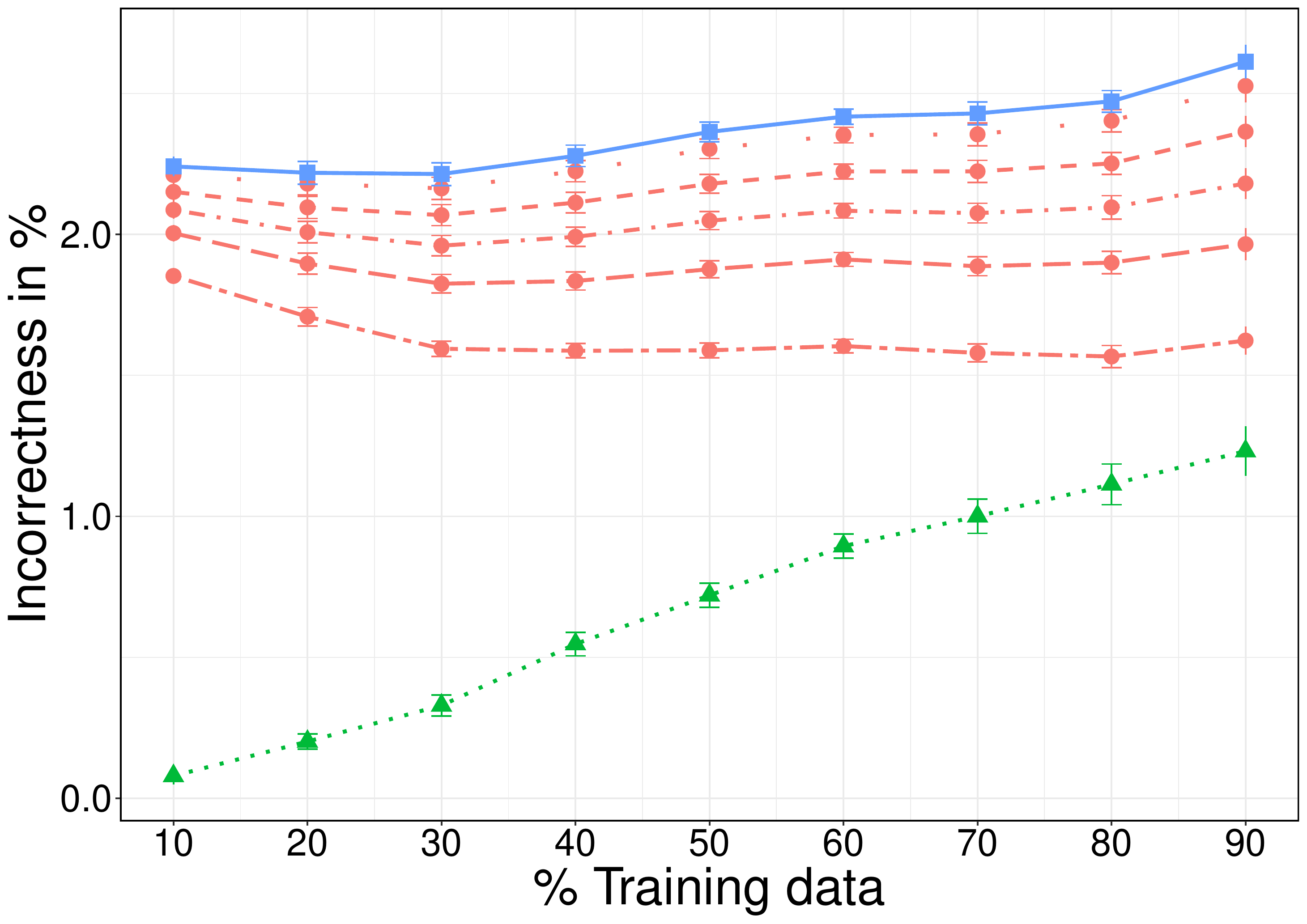}} 
		\subfigure[\scshape CAL500]{
			\includegraphics[width=0.244\linewidth]
				{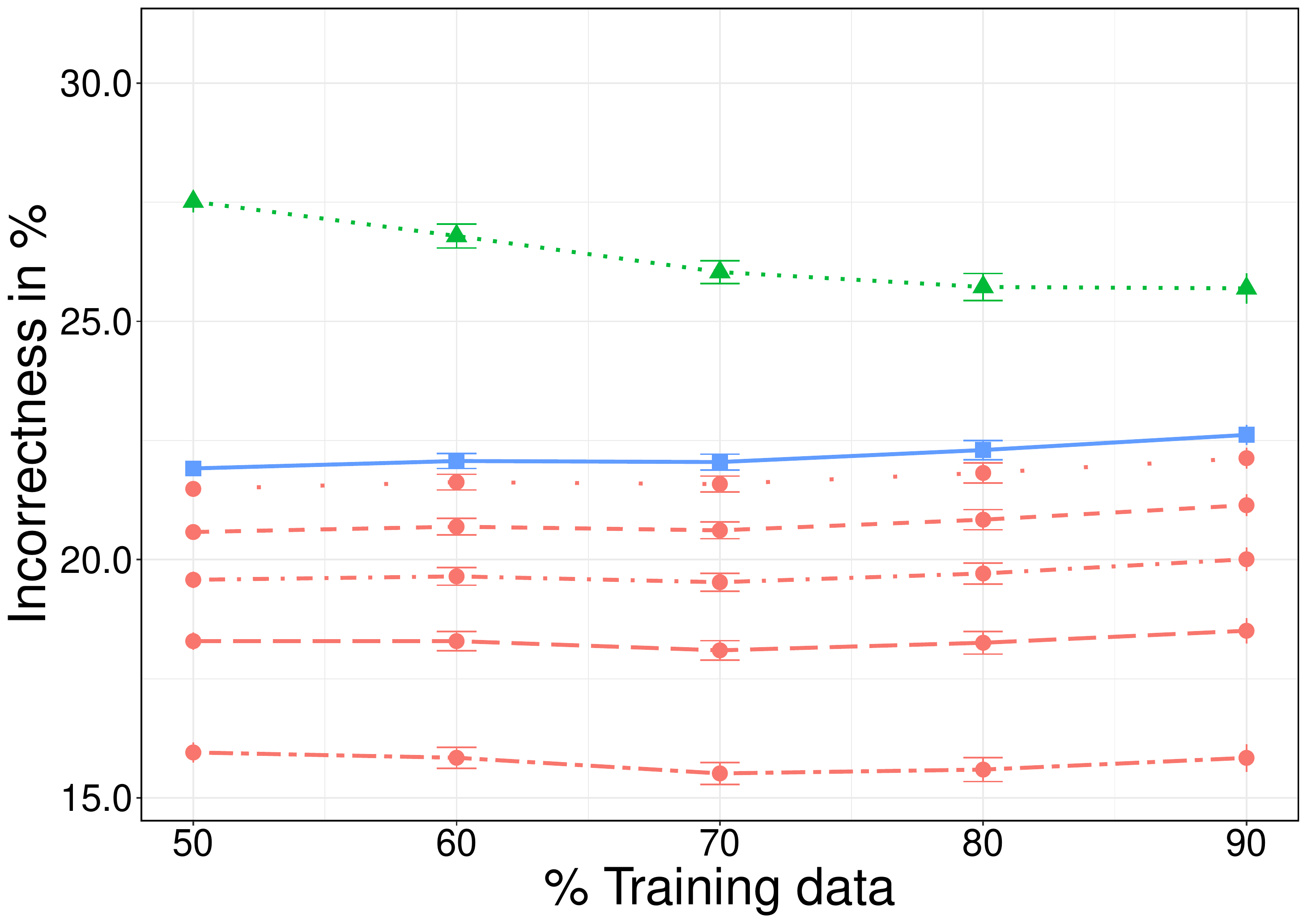}} 
	}
	\resizebox{0.8\textwidth}{!}{%
\begin{tikzpicture}[x=1pt,y=0.4pt]
\begin{scope}
\path[clip] (  0.00,  0.00) rectangle (794.97, 55.50);
\definecolor{drawColor}{RGB}{0,0,0}

\node[text=drawColor,anchor=base west,inner sep=0pt, outer sep=0pt, scale=  1.98] at (  8.50, 12.02) {\bf \scshape $c$-PAR:};
\end{scope}
\begin{scope}
\path[clip] (  0.00,  0.00) rectangle (794.97, 55.50);
\definecolor{drawColor}{RGB}{0,0,0}

\path[draw=drawColor,line width= 1.4pt,dash pattern=on 2pt off 2pt on 6pt off 2pt ,line join=round] ( 95.79, 27.33) -- (164.08, 27.33);
\end{scope}
\begin{scope}
\path[clip] (  0.00,  0.00) rectangle (794.97, 55.50);
\definecolor{fillColor}{RGB}{255,255,255}

\path[fill=fillColor] (231.48,  9.92) rectangle (316.84, 28.75);
\end{scope}
\begin{scope}
\path[clip] (  0.00,  0.00) rectangle (794.97, 55.50);
\definecolor{drawColor}{RGB}{0,0,0}

\path[draw=drawColor,line width= 1.4pt,dash pattern=on 7pt off 3pt ,line join=round] (240.01, 27.33) -- (308.30, 27.33);
\end{scope}
\begin{scope}
\path[clip] (  0.00,  0.00) rectangle (794.97, 55.50);
\definecolor{fillColor}{RGB}{255,255,255}

\path[fill=fillColor] (375.70,  9.92) rectangle (461.06, 28.75);
\end{scope}
\begin{scope}
\path[clip] (  0.00,  0.00) rectangle (794.97, 55.50);
\definecolor{drawColor}{RGB}{0,0,0}

\path[draw=drawColor,line width= 1.4pt,dash pattern=on 1pt off 3pt on 4pt off 3pt ,line join=round] (384.24, 27.33) -- (452.52, 27.33);
\end{scope}
\begin{scope}
\path[clip] (  0.00,  0.00) rectangle (794.97, 55.50);
\definecolor{fillColor}{RGB}{255,255,255}

\path[fill=fillColor] (519.93,  9.92) rectangle (605.28, 28.75);
\end{scope}
\begin{scope}
\path[clip] (  0.00,  0.00) rectangle (794.97, 55.50);
\definecolor{drawColor}{RGB}{0,0,0}

\path[draw=drawColor,line width= 1.4pt,dash pattern=on 4pt off 4pt ,line join=round] (528.46, 27.33) -- (596.75, 27.33);
\end{scope}
\begin{scope}
\path[clip] (  0.00,  0.00) rectangle (794.97, 55.50);
\definecolor{fillColor}{RGB}{255,255,255}

\path[fill=fillColor] (664.15,  9.92) rectangle (749.51, 28.75);
\end{scope}
\begin{scope}
\path[clip] (  0.00,  0.00) rectangle (794.97, 55.50);
\definecolor{drawColor}{RGB}{0,0,0}

\path[draw=drawColor,line width= 1.4pt,dash pattern=on 1pt off 15pt ,line join=round] (672.69, 27.33) -- (740.97, 27.33);
\end{scope}
\begin{scope}
\path[clip] (  0.00,  0.00) rectangle (794.97, 55.50);
\definecolor{drawColor}{RGB}{0,0,0}

\node[text=drawColor,anchor=base west,inner sep=0pt, outer sep=0pt, scale=  1.98] at (182.51, 12.02) {0.10};
\end{scope}
\begin{scope}
\path[clip] (  0.00,  0.00) rectangle (794.97, 55.50);
\definecolor{drawColor}{RGB}{0,0,0}

\node[text=drawColor,anchor=base west,inner sep=0pt, outer sep=0pt, scale=  1.98] at (326.74, 12.02) {0.20};
\end{scope}
\begin{scope}
\path[clip] (  0.00,  0.00) rectangle (794.97, 55.50);
\definecolor{drawColor}{RGB}{0,0,0}

\node[text=drawColor,anchor=base west,inner sep=0pt, outer sep=0pt, scale=  1.98] at (470.96, 12.02) {0.30};
\end{scope}
\begin{scope}
\path[clip] (  0.00,  0.00) rectangle (794.97, 55.50);
\definecolor{drawColor}{RGB}{0,0,0}

\node[text=drawColor,anchor=base west,inner sep=0pt, outer sep=0pt, scale=  1.98] at (615.18, 12.02) {0.40};
\end{scope}
\begin{scope}
\path[clip] (  0.00,  0.00) rectangle (794.97, 55.50);
\definecolor{drawColor}{RGB}{0,0,0}

\node[text=drawColor,anchor=base west,inner sep=0pt, outer sep=0pt, scale=  1.98] at (759.41, 12.02) {0.50};
\end{scope}
\end{tikzpicture}
	}\vspace{-2mm}\qquad%
    \renewcommand{\thesubfigure}{(c)}
	\subfigure[\scshape Partial abstention $f_{PAR}$ with NCC-$s=1.50$]{
		\hspace{-3mm}
	  	\subfigure[\scshape Yeast]{
			\includegraphics[width=0.244\linewidth]
				{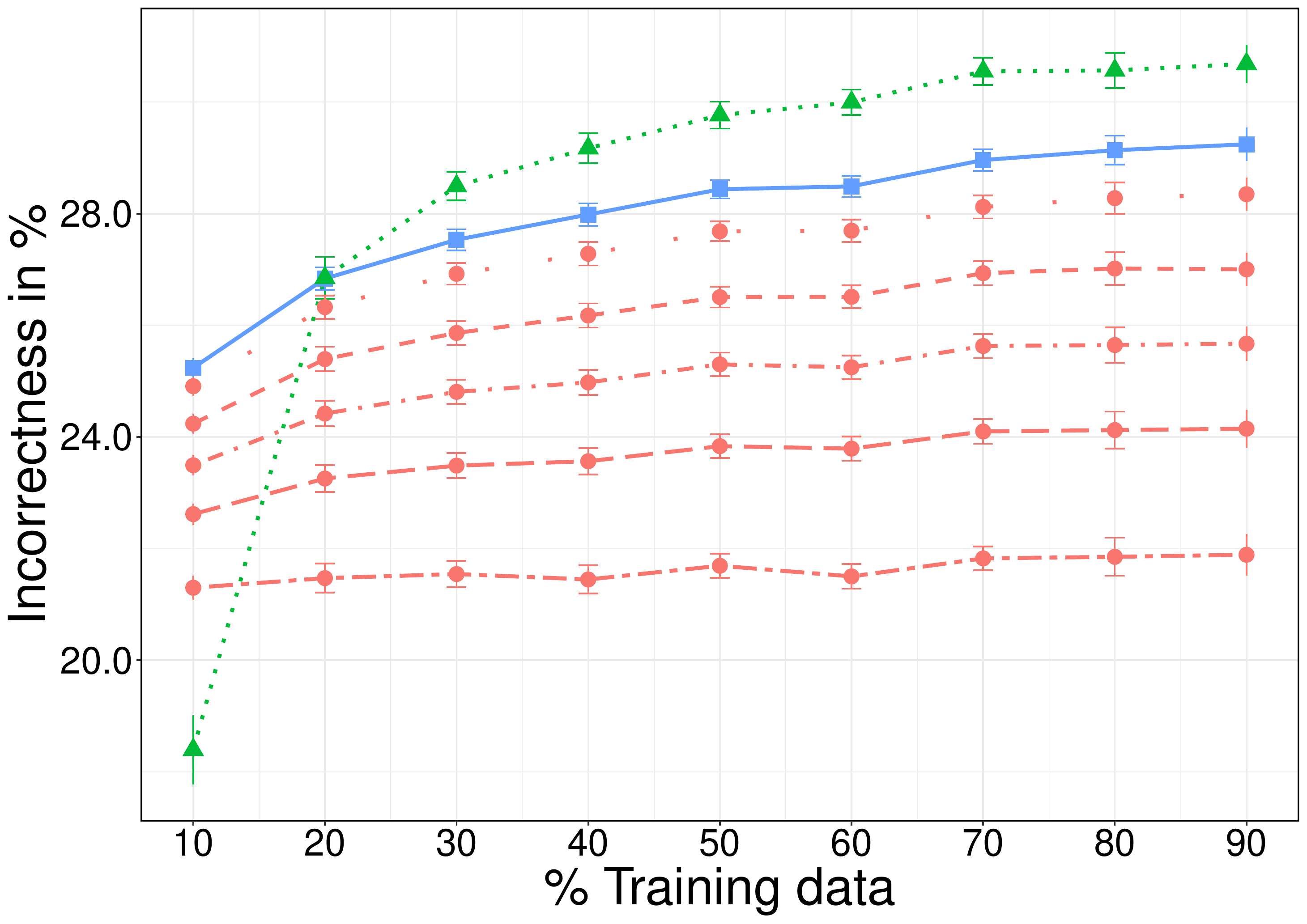}}
		\subfigure[\scshape Scene]{
			\includegraphics[width=0.244\linewidth]
				{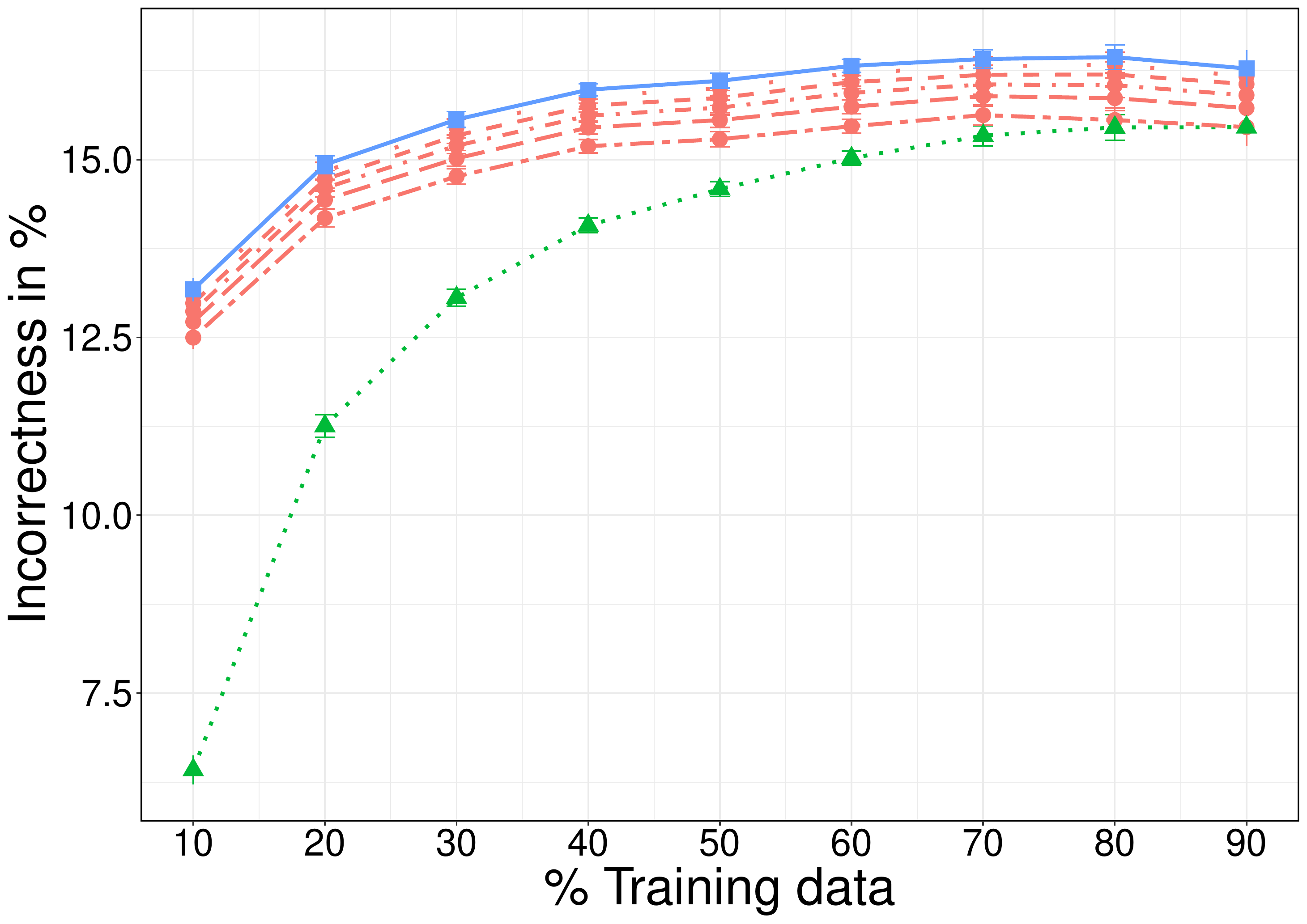}}
		\subfigure[\scshape Medical]{
			\includegraphics[width=0.244\linewidth]
				{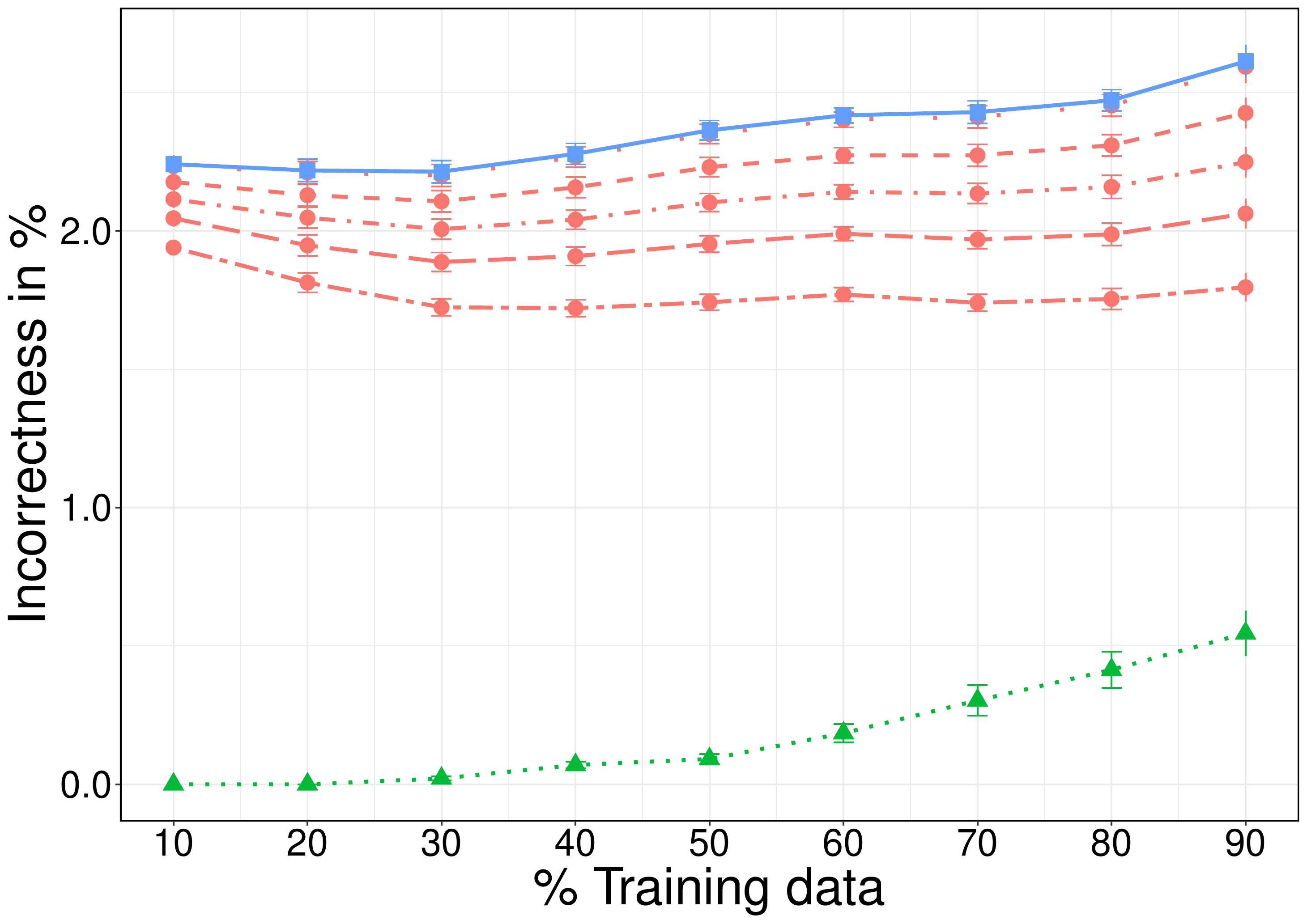}}
		\subfigure[\scshape CAL500]{
			\includegraphics[width=0.244\linewidth]
				{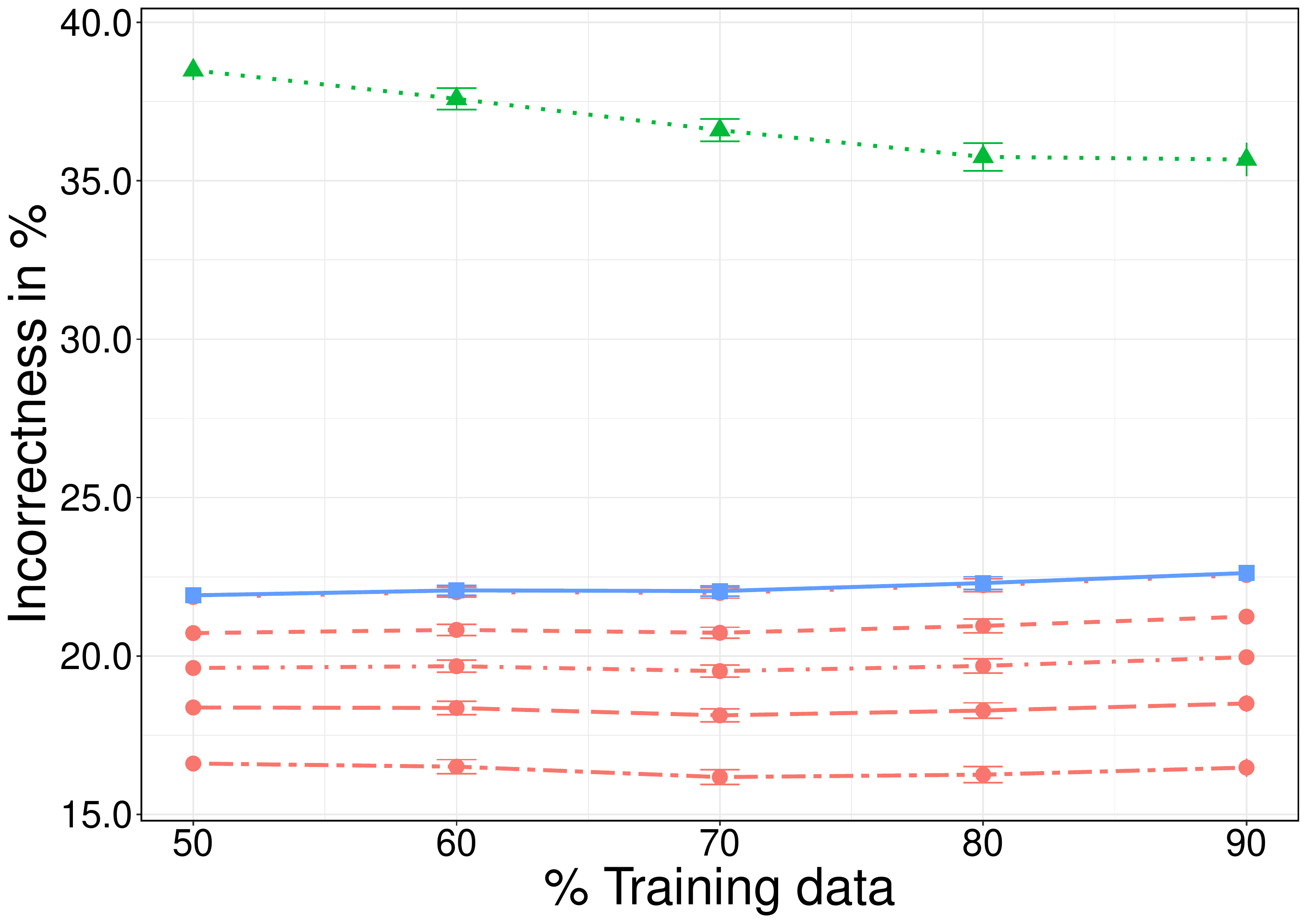}}
	}
	\resizebox{0.8\textwidth}{!}{%
\begin{tikzpicture}[x=1pt,y=0.4pt]
\begin{scope}
\path[clip] (  0.00,  0.00) rectangle (794.97, 59.50);
\definecolor{drawColor}{RGB}{0,0,0}

\node[text=drawColor, anchor=base west,inner sep=0pt, outer sep=0pt, scale=1.98] at ( 8.40, 10.52) {\!\!\bf{\small Threshold} $\gamma$:};
\end{scope}
\begin{scope}
\path[clip] (  0.00,  0.00) rectangle (794.97, 59.50);
\definecolor{drawColor}{RGB}{0,0,0}

\path[draw=drawColor,line width= 1.4pt,dash pattern=on 2pt off 2pt on 6pt off 2pt ,line join=round] ( 105.79, 23.33) -- (169.08, 23.33);
\end{scope}

\begin{scope}
\path[clip] (  0.00,  0.00) rectangle (794.97, 59.50);
\definecolor{drawColor}{RGB}{0,0,0}

\path[draw=drawColor,line width= 1.4pt,dash pattern=on 7pt off 3pt ,line join=round] (240.01, 23.33) -- (308.30, 23.33);
\end{scope}
\begin{scope}
\path[clip] (  0.00,  0.00) rectangle (794.97, 59.50);
\definecolor{fillColor}{RGB}{255,255,255}

\path[fill=fillColor] (375.70,  9.92) rectangle (461.06, 28.75);
\end{scope}
\begin{scope}
\path[clip] (  0.00,  0.00) rectangle (794.97, 59.50);
\definecolor{drawColor}{RGB}{0,0,0}

\path[draw=drawColor,line width= 1.4pt,dash pattern=on 1pt off 3pt on 4pt off 3pt ,line join=round] (384.24, 23.33) -- (452.52, 23.33);
\end{scope}
\begin{scope}
\path[clip] (  0.00,  0.00) rectangle (794.97, 59.50);
\definecolor{fillColor}{RGB}{255,255,255}

\path[fill=fillColor] (519.93,  9.92) rectangle (605.28, 28.75);
\end{scope}
\begin{scope}
\path[clip] (  0.00,  0.00) rectangle (794.97, 59.50);
\definecolor{drawColor}{RGB}{0,0,0}

\path[draw=drawColor,line width= 1.4pt,dash pattern=on 4pt off 4pt ,line join=round] (528.46, 23.33) -- (596.75, 23.33);
\end{scope}
\begin{scope}
\path[clip] (  0.00,  0.00) rectangle (794.97, 59.50);
\definecolor{fillColor}{RGB}{255,255,255}

\path[fill=fillColor] (664.15,  9.92) rectangle (749.51, 28.75);
\end{scope}
\begin{scope}
\path[clip] (  0.00,  0.00) rectangle (794.97, 59.50);
\definecolor{drawColor}{RGB}{0,0,0}

\path[draw=drawColor,line width= 1.4pt,dash pattern=on 1pt off 15pt ,line join=round] (672.69, 23.33) -- (740.97, 23.33);
\end{scope}
\begin{scope}
\path[clip] (  0.00,  0.00) rectangle (794.97, 59.50);
\definecolor{drawColor}{RGB}{0,0,0}

\node[text=drawColor,anchor=base west,inner sep=0pt, outer sep=0pt, scale=  1.98] at (182.51, 10.52) {0.05};
\end{scope}
\begin{scope}
\path[clip] (  0.00,  0.00) rectangle (794.97, 59.50);
\definecolor{drawColor}{RGB}{0,0,0}

\node[text=drawColor,anchor=base west,inner sep=0pt, outer sep=0pt, scale=  1.98] at (326.74, 10.52) {0.15};
\end{scope}
\begin{scope}
\path[clip] (  0.00,  0.00) rectangle (794.97, 59.50);
\definecolor{drawColor}{RGB}{0,0,0}

\node[text=drawColor,anchor=base west,inner sep=0pt, outer sep=0pt, scale=  1.98] at (470.96, 10.52) {0.25};
\end{scope}
\begin{scope}
\path[clip] (  0.00,  0.00) rectangle (794.97, 59.50);
\definecolor{drawColor}{RGB}{0,0,0}

\node[text=drawColor,anchor=base west,inner sep=0pt, outer sep=0pt, scale=  1.98] at (615.18, 10.52) {0.35};
\end{scope}
\begin{scope}
\path[clip] (  0.00,  0.00) rectangle (794.97, 59.50);
\definecolor{drawColor}{RGB}{0,0,0}

\node[text=drawColor,anchor=base west,inner sep=0pt, outer sep=0pt, scale=  1.98] at (759.41, 10.52) {0.45};
\end{scope}
\end{tikzpicture}
	}\vspace{-2mm}\qquad%
	\renewcommand{\thesubfigure}{(b)}
	\subfigure[\scshape Rejection with NCC-$s=2.50$]{ 
		\hspace{-3mm}
		\subfigure[\scshape Yeast]{
			\includegraphics[width=0.244\linewidth]
				{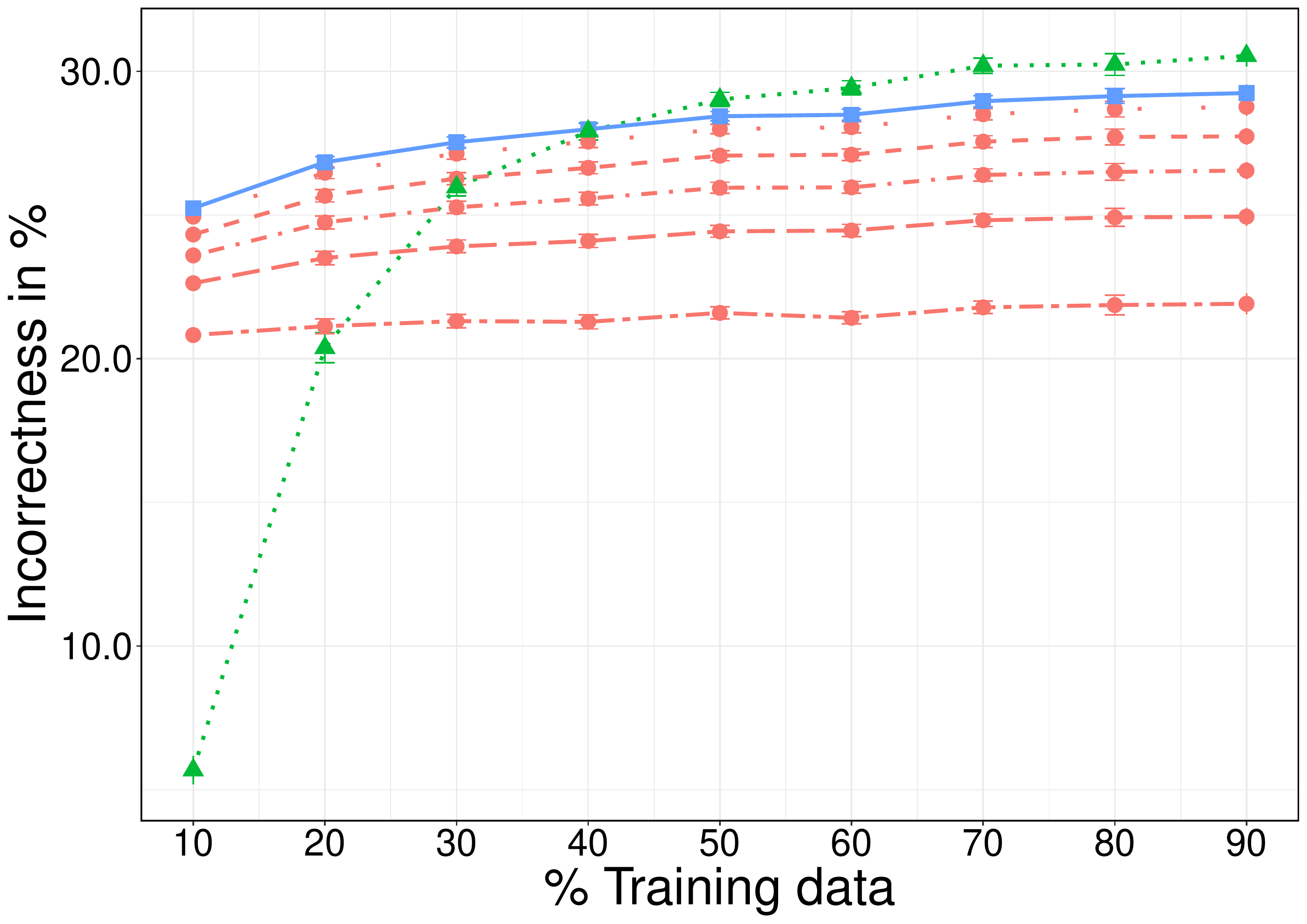}}
		\subfigure[\scshape Scene]{
			\includegraphics[width=0.244\linewidth]
				{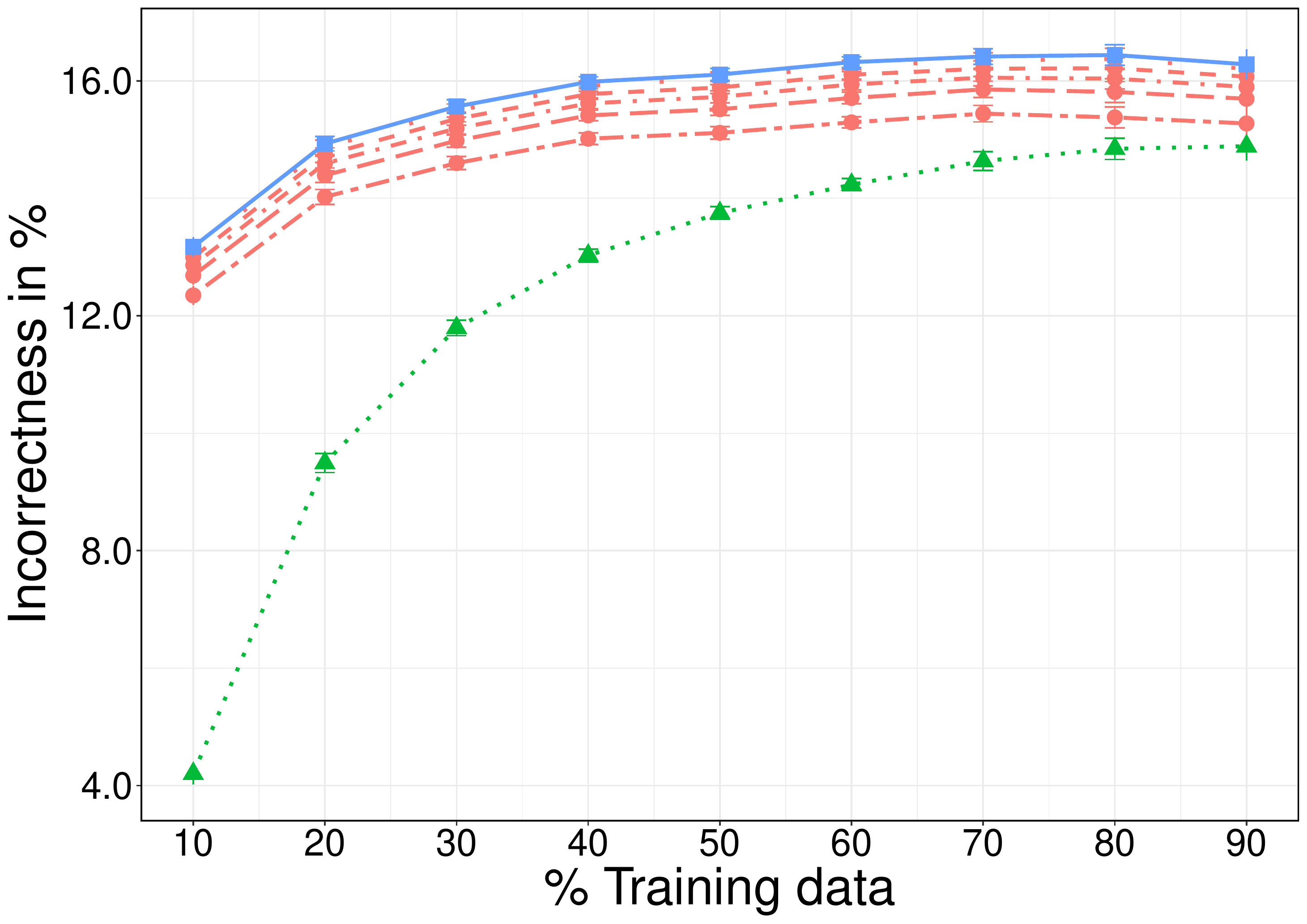}}
		\subfigure[\scshape Medical]{
			\includegraphics[width=0.244\linewidth]
				{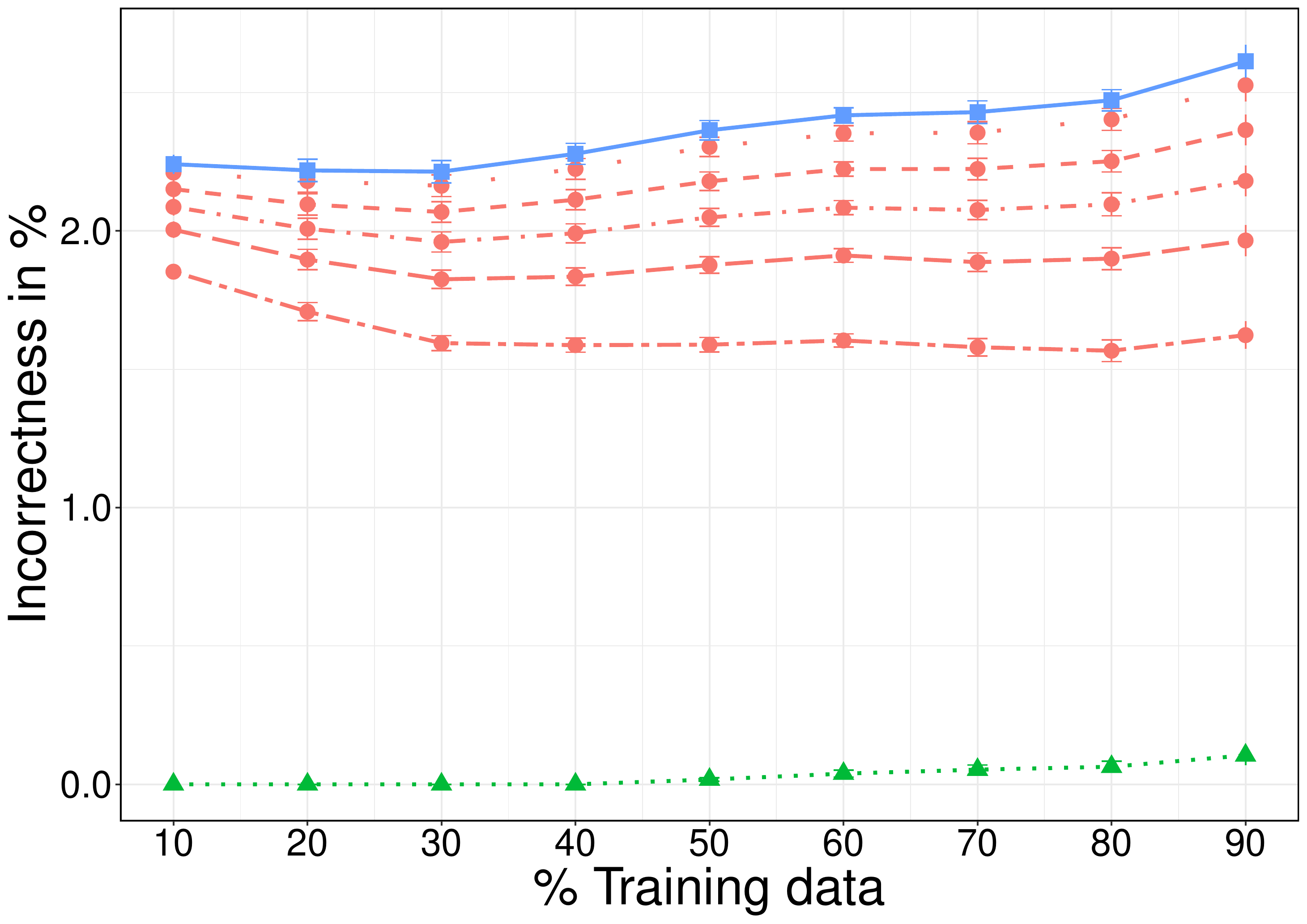}}
		\subfigure[\scshape CAL500]{
			\includegraphics[width=0.244\linewidth]
				{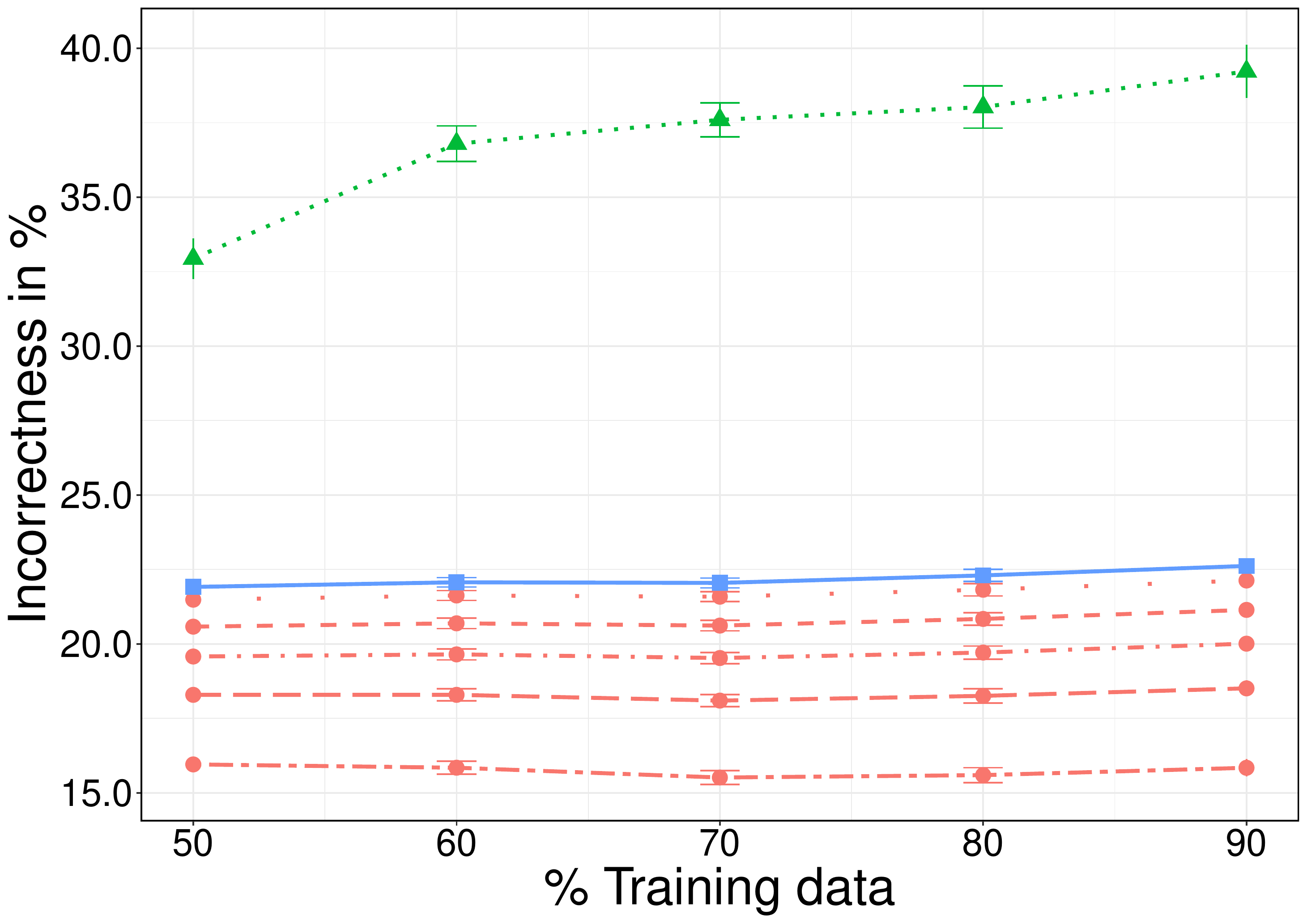}} 
	}\vspace{-2mm}
	\caption{{\bf Downsampling - Naive credal classifier.} Incorrectness (y-axis) evolution for the precise, partial abstention with $f_{SEP}$ (\nth{1} row) and $f_{PAR}$ (\nth{2} row) penalty functions, rejection (\nth{3} row), and skeptical approach, each one with different hyper-parameter levels (except to the precise approach), and with respect to different percentages of training data sets (x-axis).}
	\label{fig:nccsamplingresults}
\end{figure}

\begin{itemize}
	\item as the number of instances increase (x-axis) the incorrectness of our skeptical approach converges to that of the precisely-valued approach. This result reaffirms what we obtained in previous experiments, in which our skeptical approach  correctly adjusts to the lack of knowledge by performing many more set-valued informative predictions, when we have 10\% of knowledge of the dataset, and much less set-valued predictions when we  have 90\% of knowledge of the dataset;
	\item in contrast to the previous finding, the partial abstention and rejection option approaches do not react to lack of knowledge, in other words, no matter whatever is the size of the training data set, these approaches produce parallel performance curves (i.e., incorrectness) according to the value of hyper-parameter sets up;
	\item in addition to the previous finding, the results also confirm that there is not much significant difference between using a generic rejection option or a sophisticated partial abstention approach at any $x\%$ size of training data set; and finally 
	\item as regards the level of imprecision $s$ that needs to be injected into NCC model for each data set, in order to get our skeptical predictions, we can see that as the number of labels increases the level of imprecision must be bigger, for instance; the CAL500 data set needs bigger values of $s$ for every $\%$ training set and the YEAST data needs just a little more imprecision, see green curves in Figure~\ref{fig:nccsamplingresults}.
\end{itemize}

\paragraph{IGDA results} As behaviour of the incorrectness measure of the partial abstention approach in Figure~\ref{fig:nccsamplingresults} displays almost similar trend when we use the penalty function $f_{SEP}$ or $f_{PAR}$ coupled with the NBC model, in Figure~\ref{fig:igdasamplingresults} we consider judicious only to illustrate the results of the penalty function $f_{SEP}$ coupled with the (I)GDA model as a base classifier (and also with the aim of not overloading the results with too many charts).

\begin{figure}[!th]
	\centering%
	\resizebox{0.8\textwidth}{!}{%
	  \begin{tikzpicture}[x=1pt,y=1pt]
\definecolor{fillColor}{RGB}{255,255,255}
\path[use as bounding box,fill=fillColor,fill opacity=0.00] (0,0) rectangle (794.97, 59.50);
\begin{scope}
\path[clip] ( 0.00,  0.00) rectangle (794.97, 59.50);
\definecolor{drawColor}{RGB}{0,0,0}
\node[text=drawColor,anchor=base west,inner sep=0pt, outer sep=0pt, scale=  1.98] at (.06, 33.35) { \sc\bf Models:};
\end{scope}
\begin{scope}
\path[clip] (  0.00,  0.00) rectangle (794.97, 59.50);
\definecolor{drawColor}{RGB}{248,118,109}

\path[draw=drawColor,line width= 1.4pt,line join=round, dash pattern=on 2pt off 2pt on 6pt off 2pt] (86.19, 40.16) -- (154.48, 40.16);
\end{scope}
\begin{scope}
\path[clip] (  0.00,  0.00) rectangle (794.97, 59.50);
\definecolor{fillColor}{RGB}{248,118,109}

\path[fill=fillColor] (120.34, 40.16) circle (  4.64);
\end{scope}
\begin{scope}
\path[clip] (  0.00,  0.00) rectangle (794.97, 59.50);
\definecolor{drawColor}{RGB}{0,186,56}

\path[draw=drawColor,line width= 1.4pt,line join=round,dotted] (398.10, 40.16) -- (466.39, 40.16);
\end{scope}
\begin{scope}
\path[clip] (  0.00,  0.00) rectangle (794.97, 59.50);
\definecolor{fillColor}{RGB}{0,186,56}

\path[fill=fillColor] (432.24, 47.38) --
	(438.49, 36.56) --
	(425.99, 36.56) --
	cycle;
\end{scope}
\begin{scope}
\path[clip] (  0.00,  0.00) rectangle (794.97, 59.50);
\definecolor{drawColor}{RGB}{97,156,255}

\path[draw=drawColor,line width= 1.4pt,line join=round] (590.26, 40.16) -- (658.55, 40.16);
\end{scope}
\begin{scope}
\path[clip] (  0.00,  0.00) rectangle (794.97, 59.50);
\definecolor{fillColor}{RGB}{97,156,255}

\path[fill=fillColor] (619.77, 35.52) --
	(629.05, 35.52) --
	(629.05, 44.80) --
	(619.77, 44.80) --
	cycle;
\end{scope}
\begin{scope}
\path[clip] (  0.00,  0.00) rectangle (794.97, 59.50);
\definecolor{drawColor}{RGB}{0,0,0}

\node[text=drawColor,anchor=base west,inner sep=0pt, outer sep=0pt, scale=  1.98] at (167.92, 33.35) {Partial abstention with $f_{SPE}$};
\end{scope}
\begin{scope}
\path[clip] (  0.00,  0.00) rectangle (794.97, 59.50);
\definecolor{drawColor}{RGB}{0,0,0}

\node[text=drawColor,anchor=base west,inner sep=0pt, outer sep=0pt, scale=  1.98] at (484.82, 33.35) {Imprecise};
\end{scope}
\begin{scope}
\path[clip] (  0.00,  0.00) rectangle (794.97, 59.50);
\definecolor{drawColor}{RGB}{0,0,0}

\node[text=drawColor,anchor=base west,inner sep=0pt, outer sep=0pt, scale=  1.98] at (676.98, 33.35) {Precise};
\end{scope}
\begin{scope}
\path[clip] (  0.00,  0.00) rectangle (794.97, 59.50);
\definecolor{drawColor}{RGB}{0,0,0}

\node[text=drawColor,anchor=base west,inner sep=0pt, outer sep=0pt, scale=  1.98] at (  8.50, 04.52) {\bf \scshape $c$-SPE:};
\end{scope}
\begin{scope}
\path[clip] (  0.00,  0.00) rectangle (794.97, 59.50);
\definecolor{drawColor}{RGB}{0,0,0}

\path[draw=drawColor,line width= 1.4pt,dash pattern=on 2pt off 2pt on 6pt off 2pt ,line join=round] ( 95.79, 9.33) -- (164.08, 9.33);
\end{scope}
\begin{scope}
\path[clip] (  0.00,  0.00) rectangle (794.97, 59.50);
\definecolor{fillColor}{RGB}{255,255,255}

\path[fill=fillColor] (231.48,  9.92) rectangle (316.84, 28.75);
\end{scope}
\begin{scope}
\path[clip] (  0.00,  0.00) rectangle (794.97, 59.50);
\definecolor{drawColor}{RGB}{0,0,0}

\path[draw=drawColor,line width= 1.4pt,dash pattern=on 7pt off 3pt ,line join=round] (240.01, 9.33) -- (308.30, 9.33);
\end{scope}
\begin{scope}
\path[clip] (  0.00,  0.00) rectangle (794.97, 59.50);
\definecolor{fillColor}{RGB}{255,255,255}

\path[fill=fillColor] (375.70,  9.92) rectangle (461.06, 28.75);
\end{scope}
\begin{scope}
\path[clip] (  0.00,  0.00) rectangle (794.97, 59.50);
\definecolor{drawColor}{RGB}{0,0,0}

\path[draw=drawColor,line width= 1.4pt,dash pattern=on 1pt off 3pt on 4pt off 3pt ,line join=round] (384.24, 9.33) -- (452.52, 9.33);
\end{scope}
\begin{scope}
\path[clip] (  0.00,  0.00) rectangle (794.97, 59.50);
\definecolor{fillColor}{RGB}{255,255,255}

\path[fill=fillColor] (519.93,  9.92) rectangle (605.28, 28.75);
\end{scope}
\begin{scope}
\path[clip] (  0.00,  0.00) rectangle (794.97, 59.50);
\definecolor{drawColor}{RGB}{0,0,0}

\path[draw=drawColor,line width= 1.4pt,dash pattern=on 4pt off 4pt ,line join=round] (528.46, 9.33) -- (596.75, 9.33);
\end{scope}
\begin{scope}
\path[clip] (  0.00,  0.00) rectangle (794.97, 59.50);
\definecolor{fillColor}{RGB}{255,255,255}

\path[fill=fillColor] (664.15,  9.92) rectangle (749.51, 28.75);
\end{scope}
\begin{scope}
\path[clip] (  0.00,  0.00) rectangle (794.97, 59.50);
\definecolor{drawColor}{RGB}{0,0,0}

\path[draw=drawColor,line width= 1.4pt,dash pattern=on 1pt off 15pt ,line join=round] (672.69, 9.33) -- (740.97, 9.33);
\end{scope}
\begin{scope}
\path[clip] (  0.00,  0.00) rectangle (794.97, 59.50);
\definecolor{drawColor}{RGB}{0,0,0}

\node[text=drawColor,anchor=base west,inner sep=0pt, outer sep=0pt, scale=  1.98] at (182.51, 04.52) {0.05};
\end{scope}
\begin{scope}
\path[clip] (  0.00,  0.00) rectangle (794.97, 59.50);
\definecolor{drawColor}{RGB}{0,0,0}

\node[text=drawColor,anchor=base west,inner sep=0pt, outer sep=0pt, scale=  1.98] at (326.74, 04.52) {0.15};
\end{scope}
\begin{scope}
\path[clip] (  0.00,  0.00) rectangle (794.97, 59.50);
\definecolor{drawColor}{RGB}{0,0,0}

\node[text=drawColor,anchor=base west,inner sep=0pt, outer sep=0pt, scale=  1.98] at (470.96, 04.52) {0.25};
\end{scope}
\begin{scope}
\path[clip] (  0.00,  0.00) rectangle (794.97, 59.50);
\definecolor{drawColor}{RGB}{0,0,0}

\node[text=drawColor,anchor=base west,inner sep=0pt, outer sep=0pt, scale=  1.98] at (615.18, 04.52) {0.35};
\end{scope}
\begin{scope}
\path[clip] (  0.00,  0.00) rectangle (794.97, 59.50);
\definecolor{drawColor}{RGB}{0,0,0}

\node[text=drawColor,anchor=base west,inner sep=0pt, outer sep=0pt, scale=  1.98] at (759.41, 04.52) {0.45};
\end{scope}
\end{tikzpicture}
	}\vspace{-2mm}\qquad%
	\renewcommand{\thesubfigure}{(a)}
	\subfigure[\sc (Imprecise) Euclidian discriminant analysis]{
		\hspace{-3mm}
		\subfigure[{\scshape Emotions} ($\tau=0.41$)]{
			\includegraphics[width=0.244\linewidth]
				{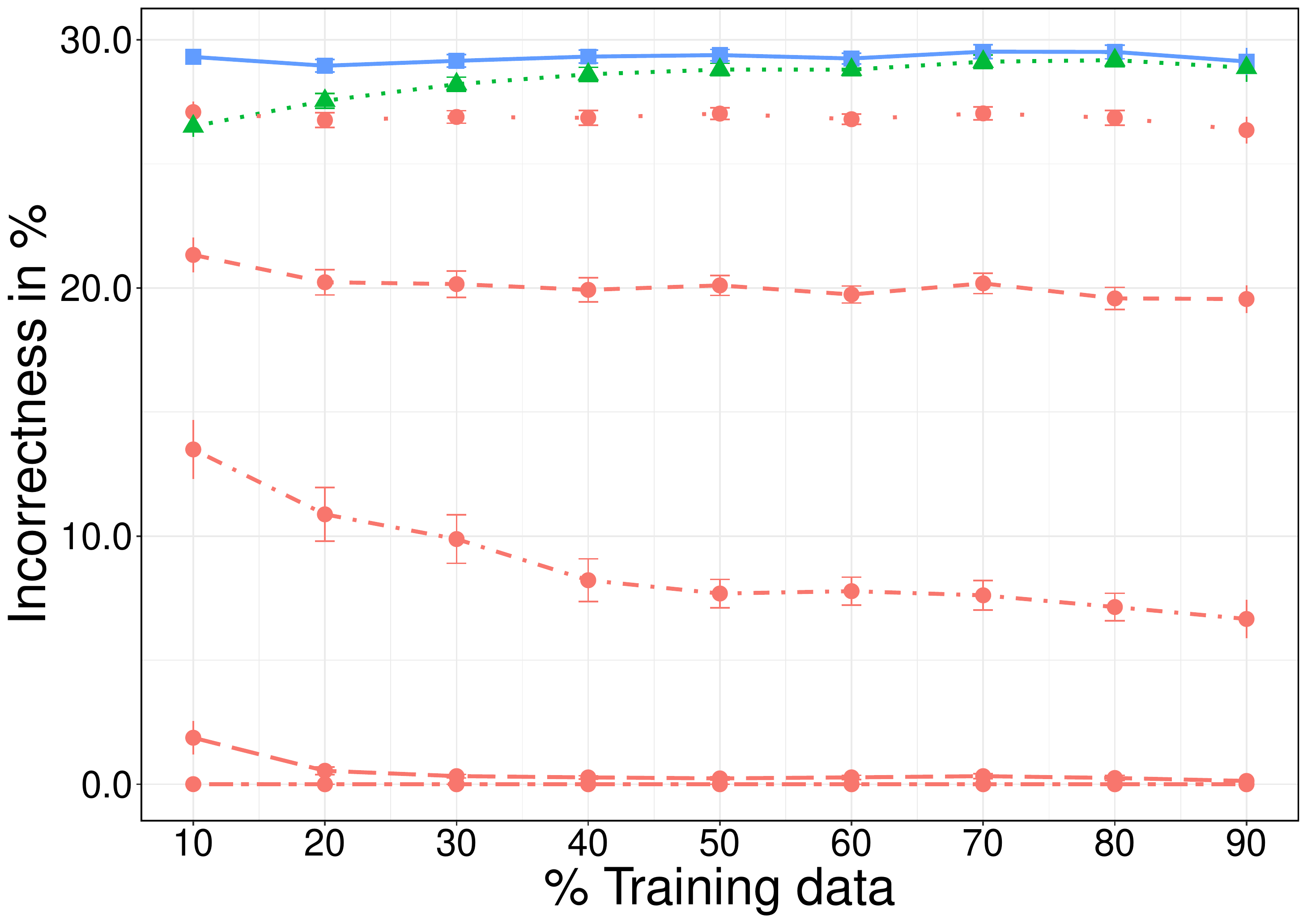}} 
		\subfigure[\scshape Flags ($\tau=0.41$)]{
			\includegraphics[width=0.244\linewidth]
				{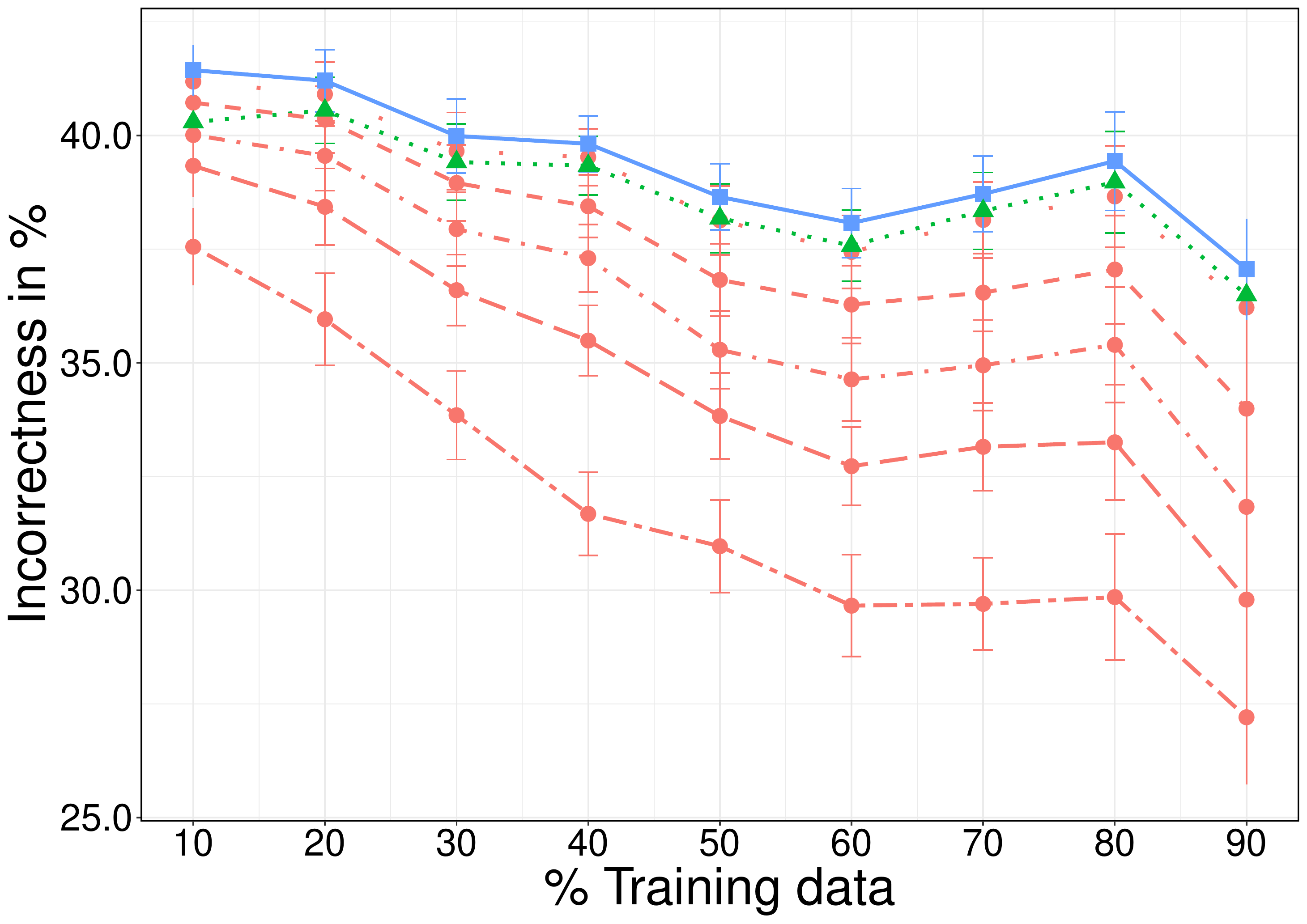}}
		\subfigure[\scshape Yeast ($\tau=4.50$)]{
			\includegraphics[width=0.244\linewidth]
				{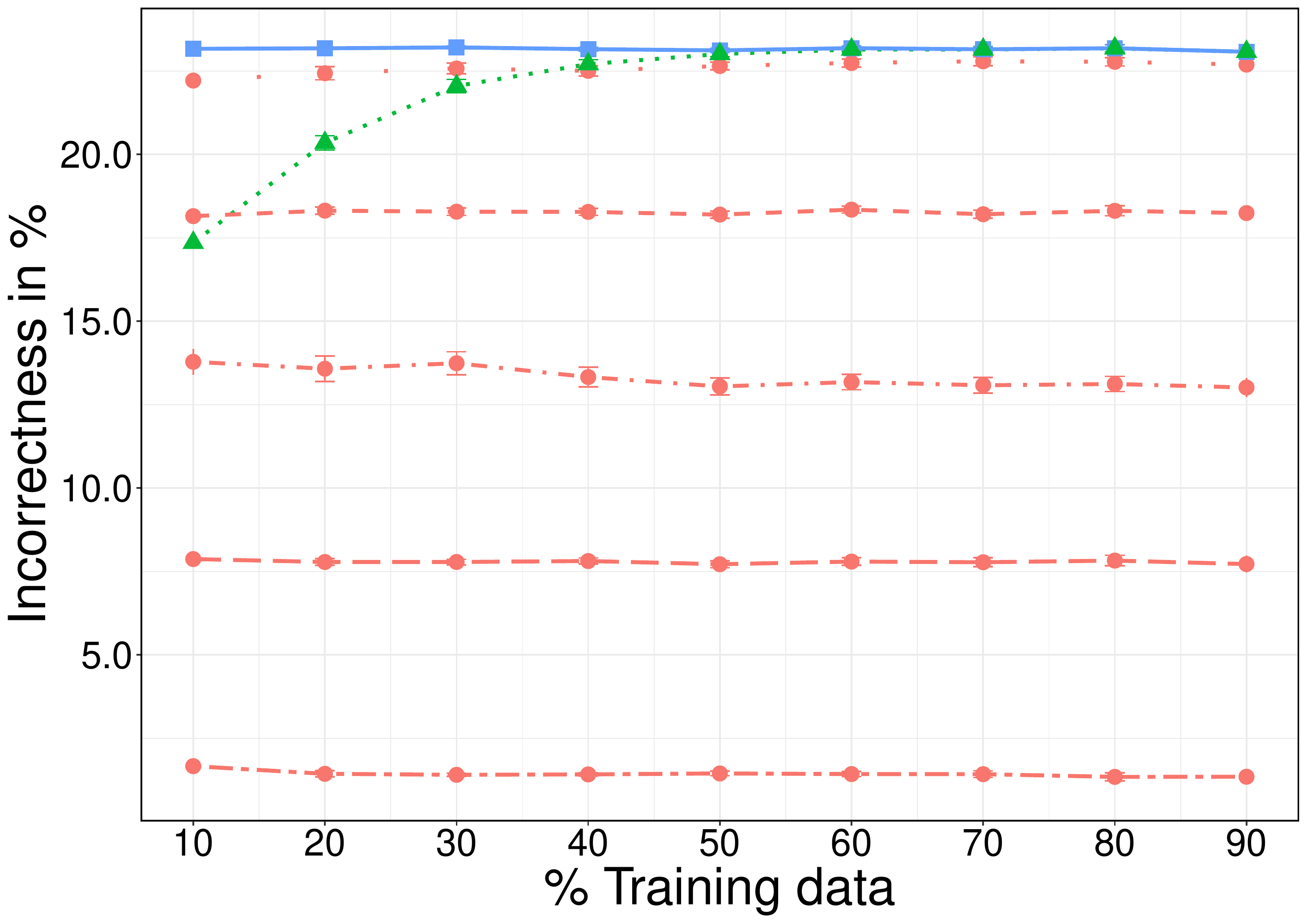}} 
		\subfigure[\scshape CAL500 ($\tau=4.50$)]{
			\includegraphics[width=0.244\linewidth]
				{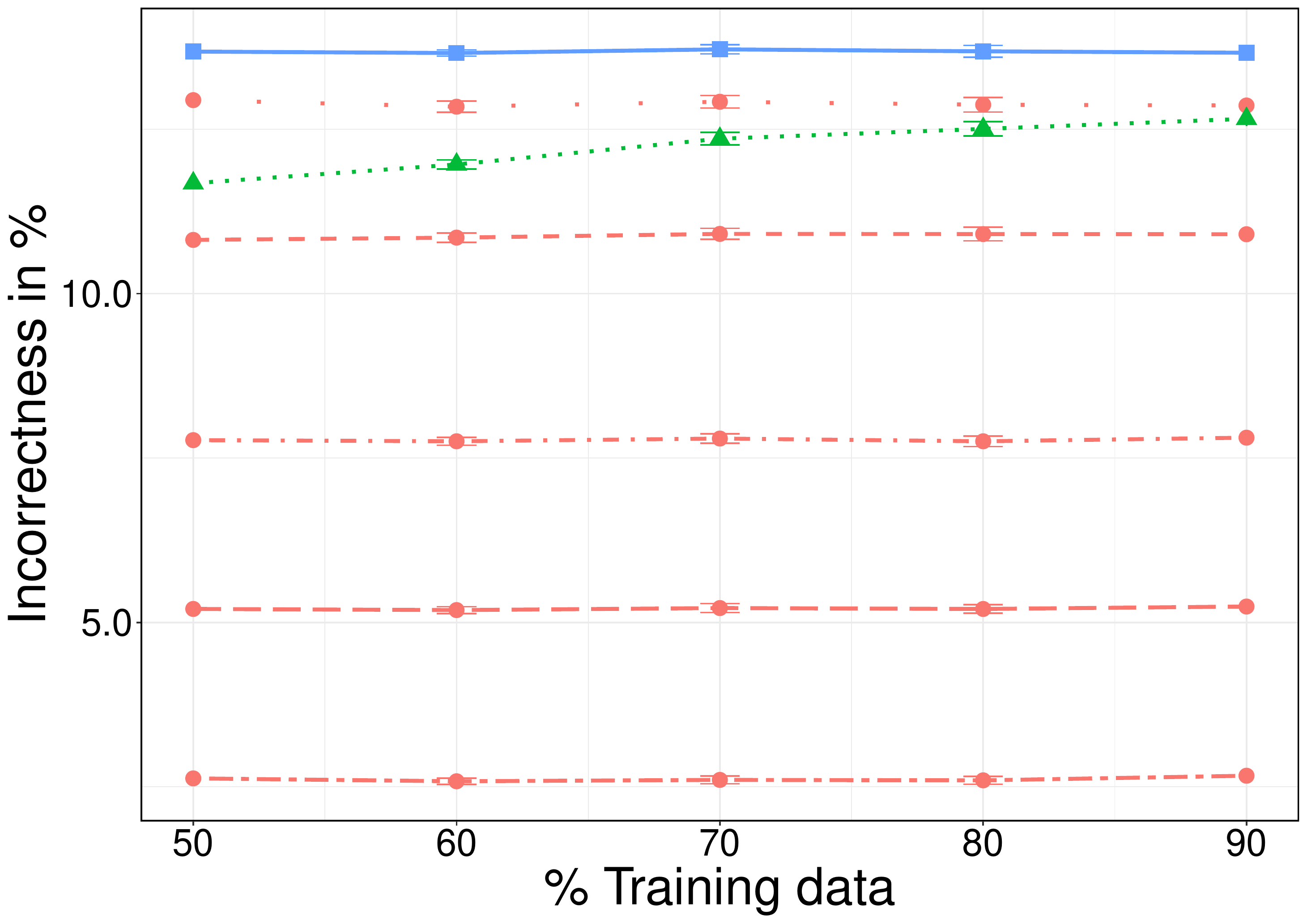}} 
	}\vspace{-2mm}\qquad%
    \renewcommand{\thesubfigure}{(c)}
	\subfigure[\sc (Imprecise) Linear discriminant analysis]{
		\hspace{-3mm}
	  	\subfigure[\scshape Emotions ($\tau=0.10$)]{
			\includegraphics[width=0.244\linewidth]
				{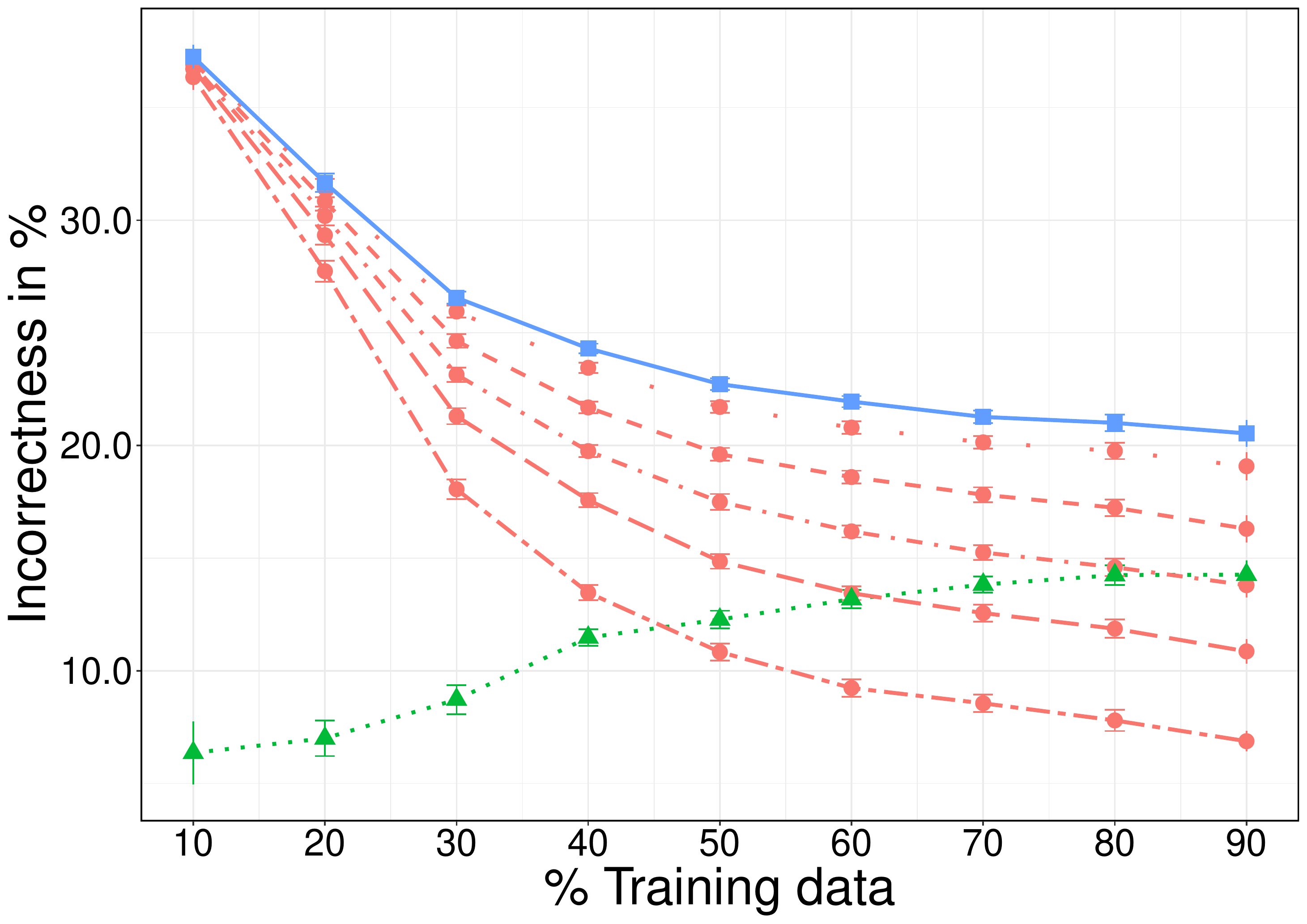}}
		\subfigure[\scshape Flags ($\tau=0.41$)]{
			\includegraphics[width=0.244\linewidth]
				{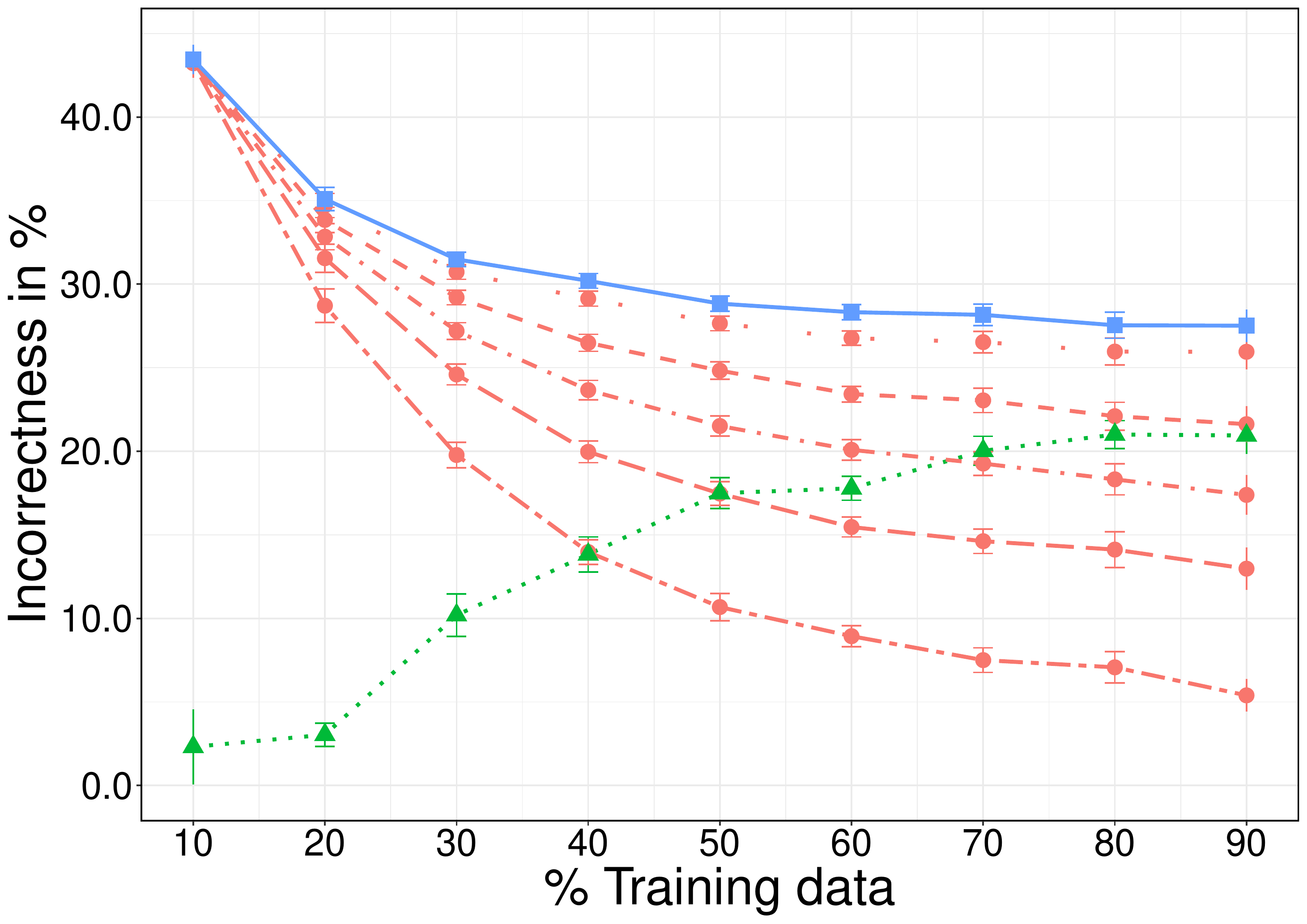}}
		\subfigure[\scshape Yeast ($\tau=0.04$)]{
			\includegraphics[width=0.244\linewidth]
				{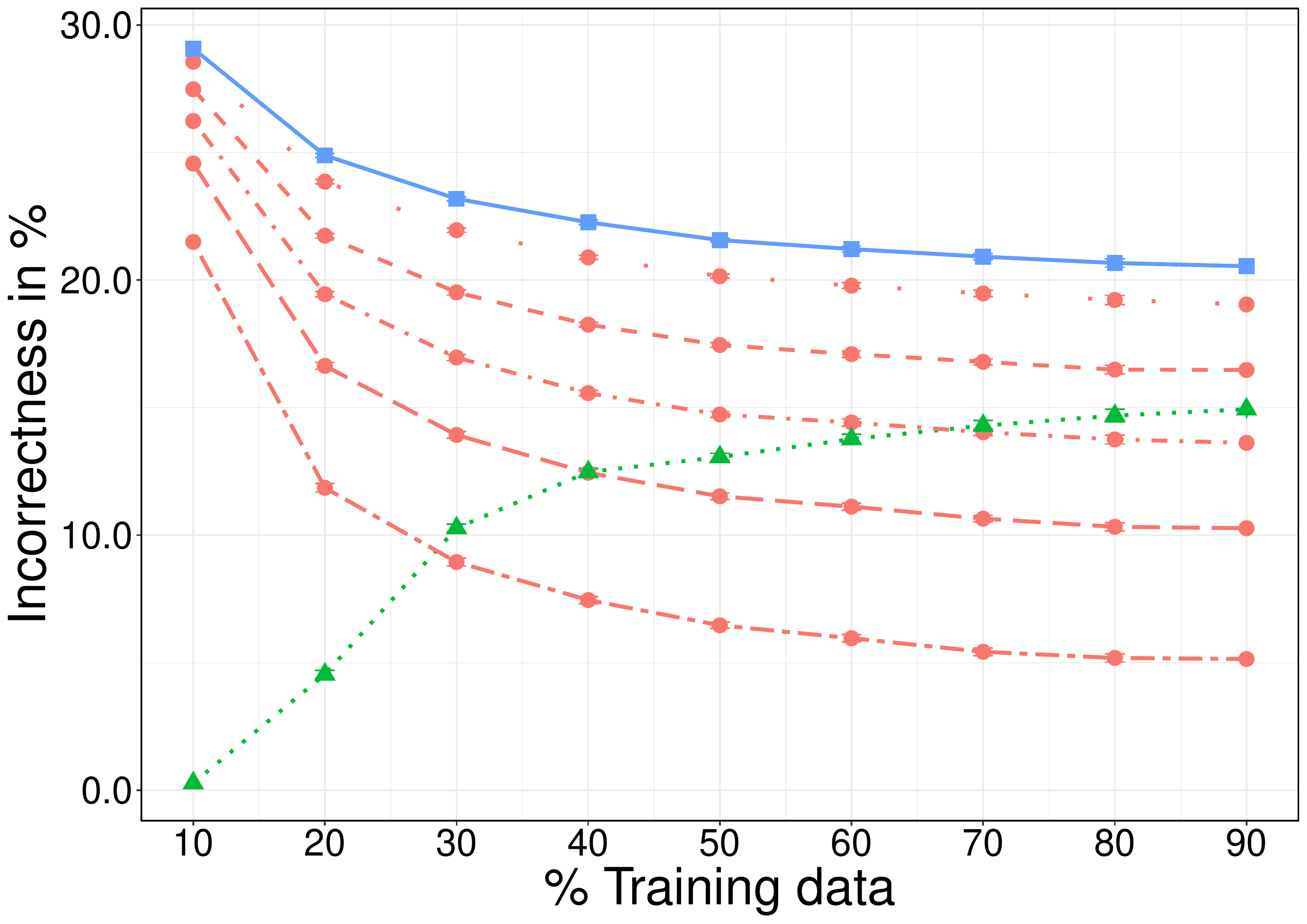}} 
		\subfigure[\scshape CAL500 ($\tau=0.15$)]{
			\includegraphics[width=0.244\linewidth]
				{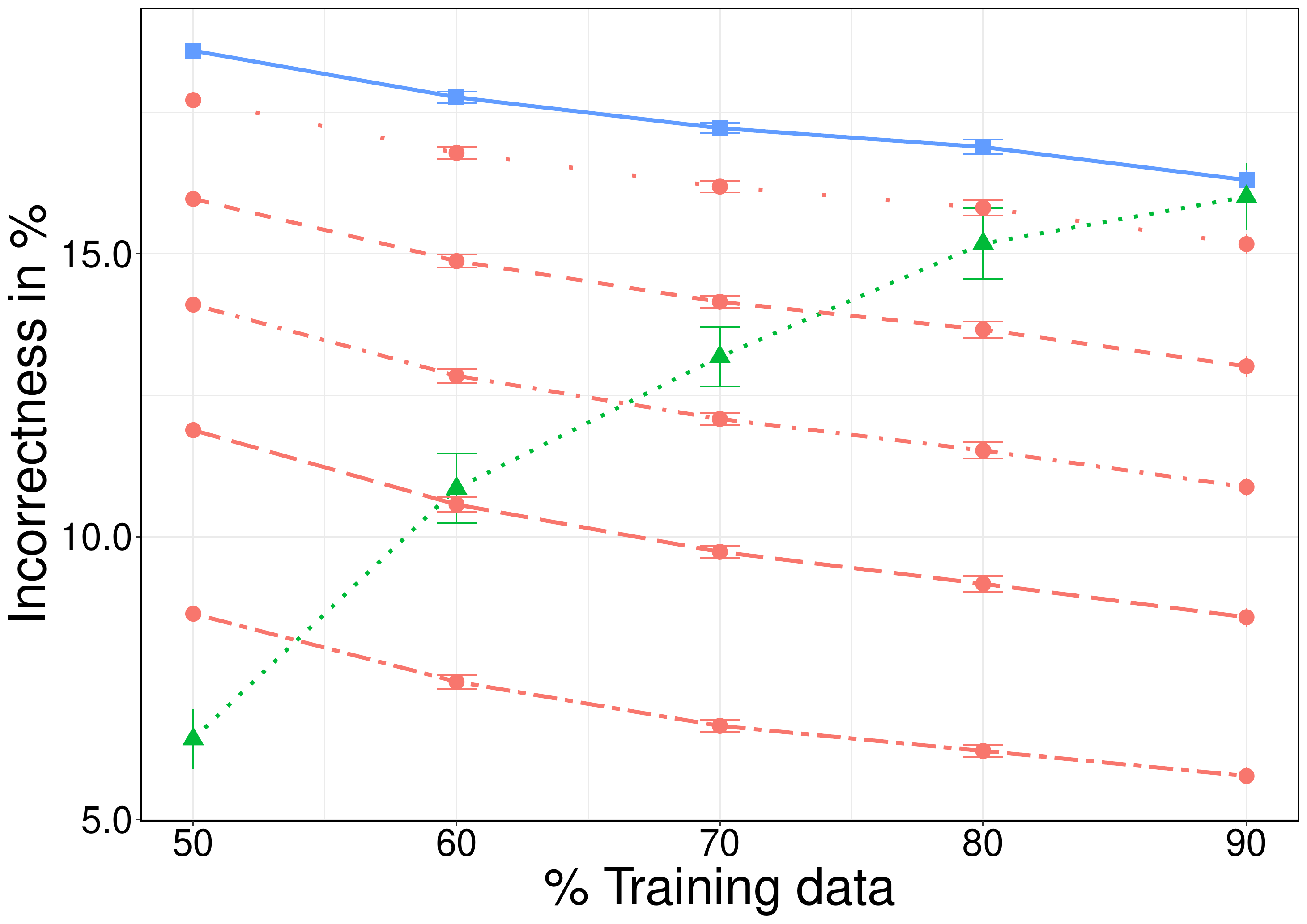}} 
	}\vspace{-2mm}\qquad%
	\renewcommand{\thesubfigure}{(b)}
	\subfigure[\sc (Imprecise) Naive discriminant analysis]{ 
		\hspace{-3mm}
		\subfigure[\scshape Emotions ($\tau=0.41$)]{
			\includegraphics[width=0.244\linewidth]
				{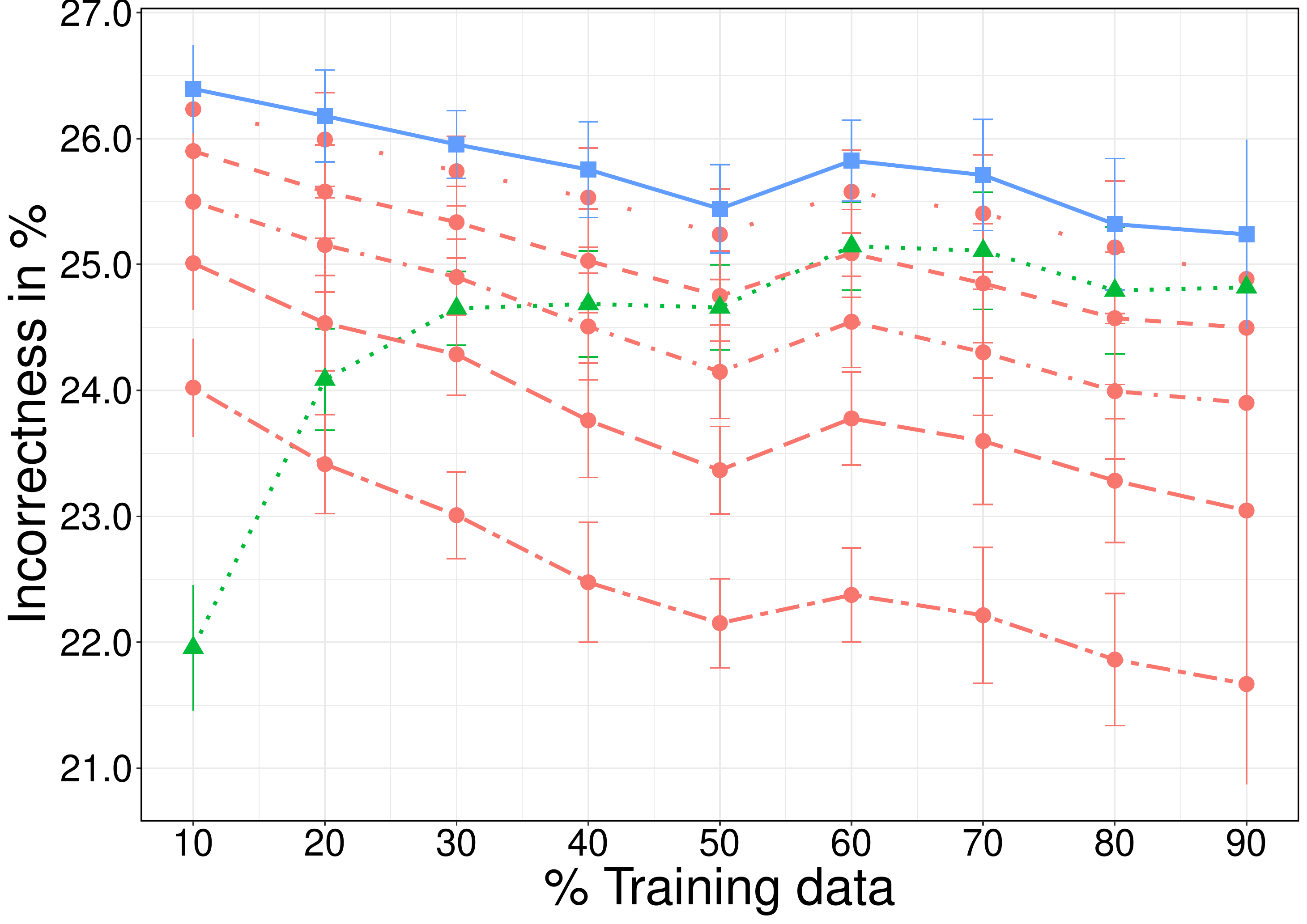}}
		\subfigure[\scshape Flags ($\tau=0.41$)]{
			\includegraphics[width=0.244\linewidth]
				{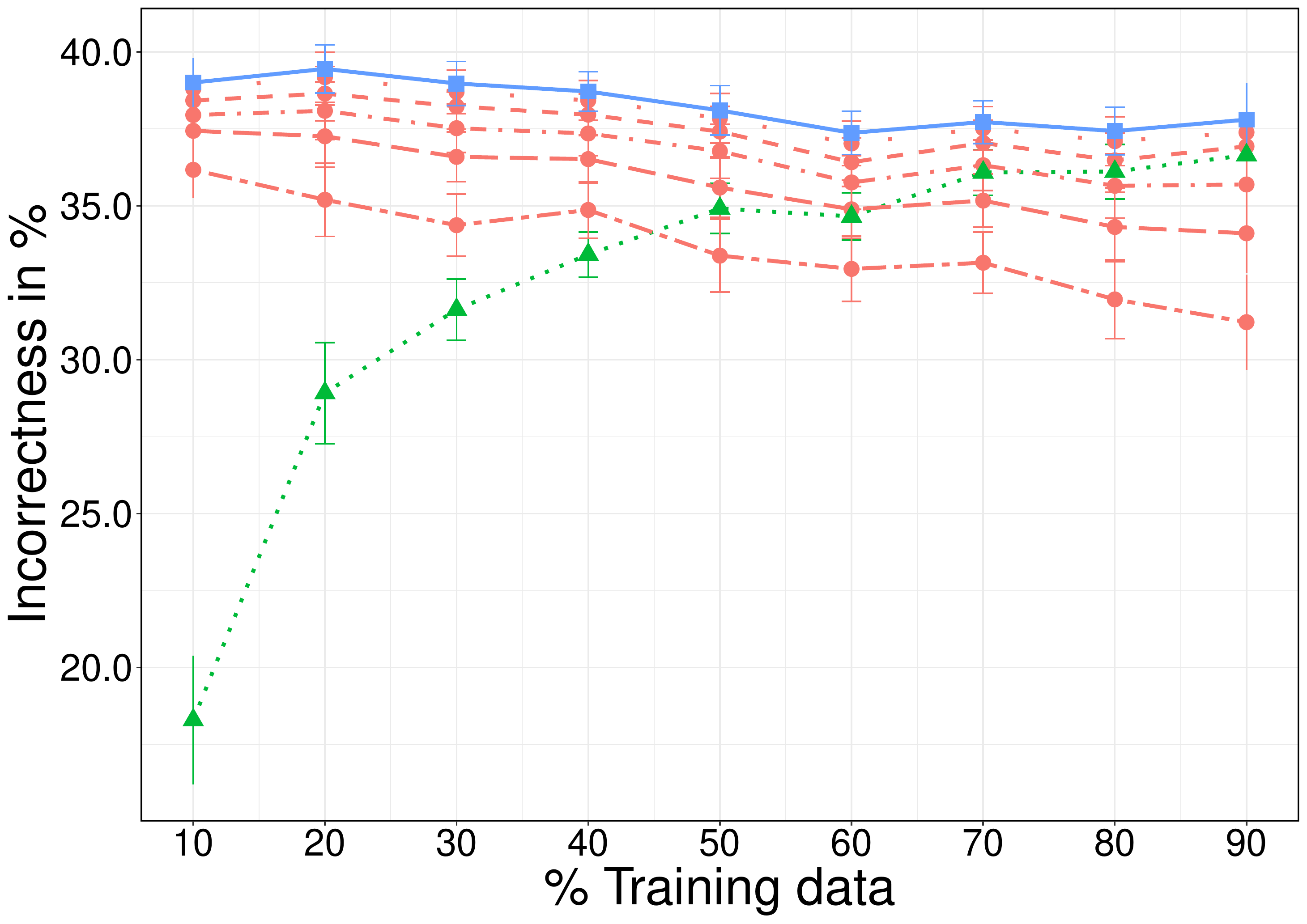}}
		\subfigure[\scshape Yeast ($\tau=4.50$)]{
			\includegraphics[width=0.244\linewidth]
				{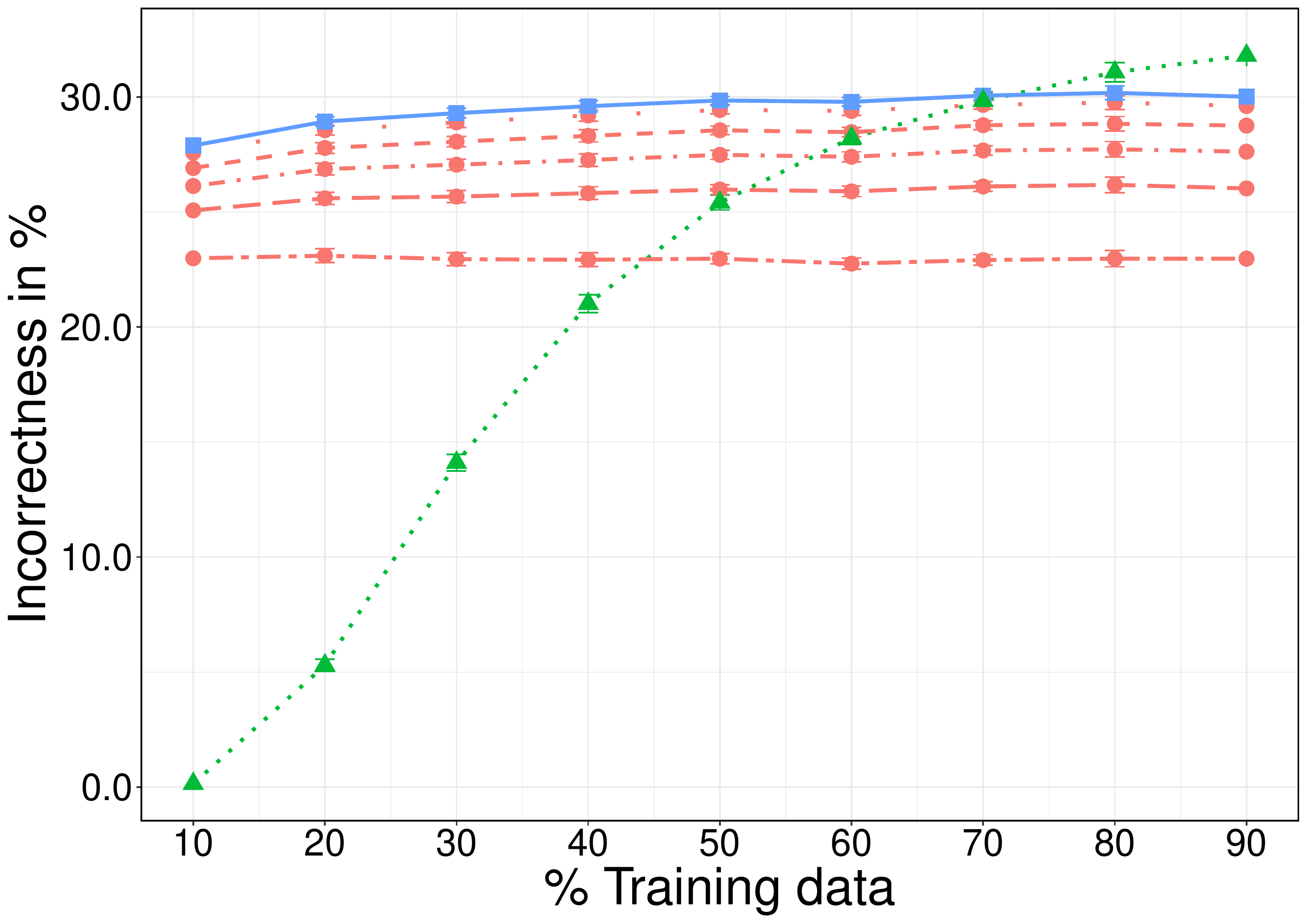}}
		\subfigure[\scshape CAL500 ($\tau=4.50$)]{
			\includegraphics[width=0.244\linewidth]
				{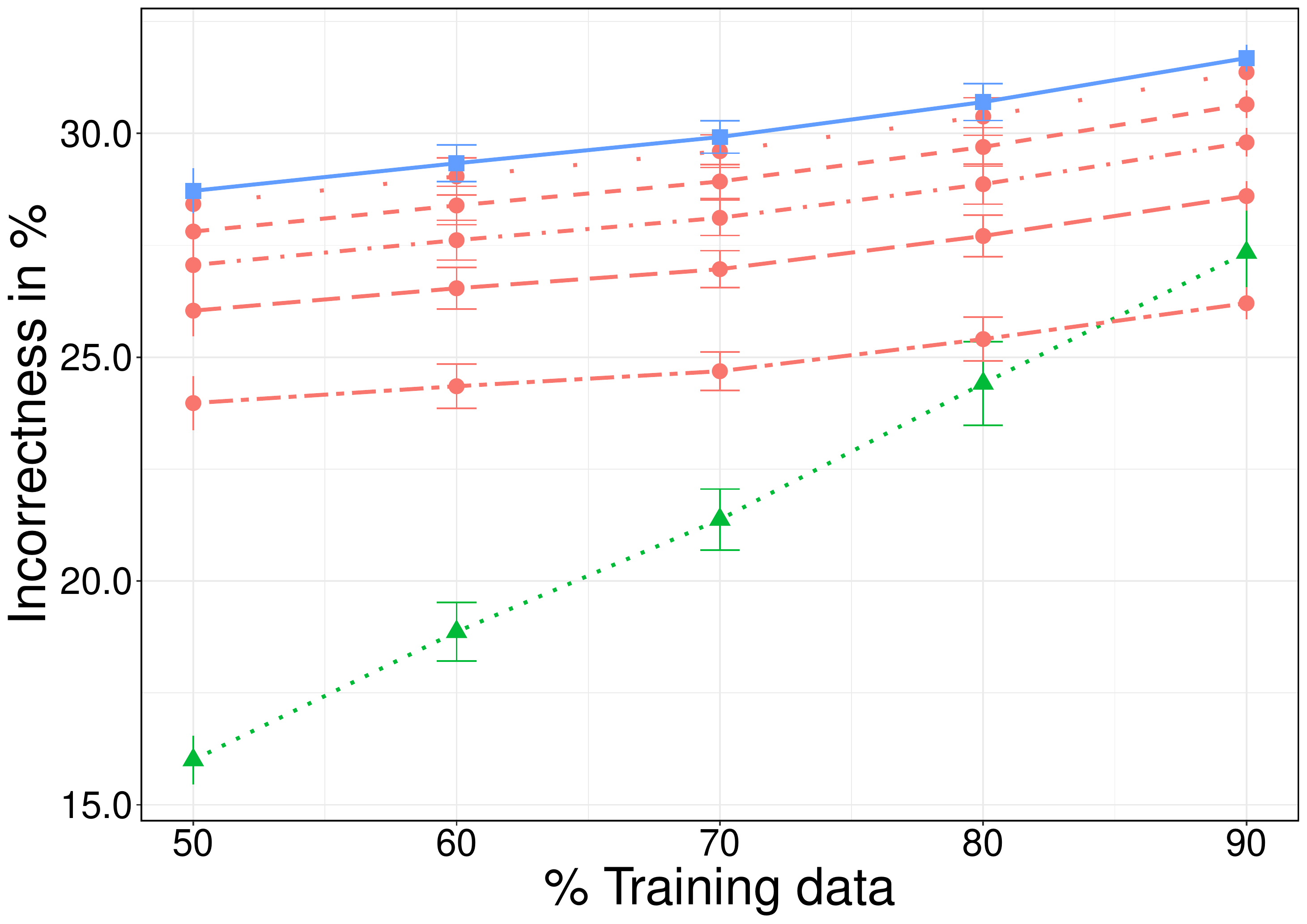}} 
	}\vspace{-2mm}\qquad%
	\renewcommand{\thesubfigure}{(d)}
	\subfigure[\sc (Imprecise) Quadratic discriminant analysis]{
		\hspace{-3mm}\label{fig:iqdasamplingresults}
		\subfigure[\scshape Emotions ($\tau=0.41$)]{
			\includegraphics[width=0.244\linewidth]
				{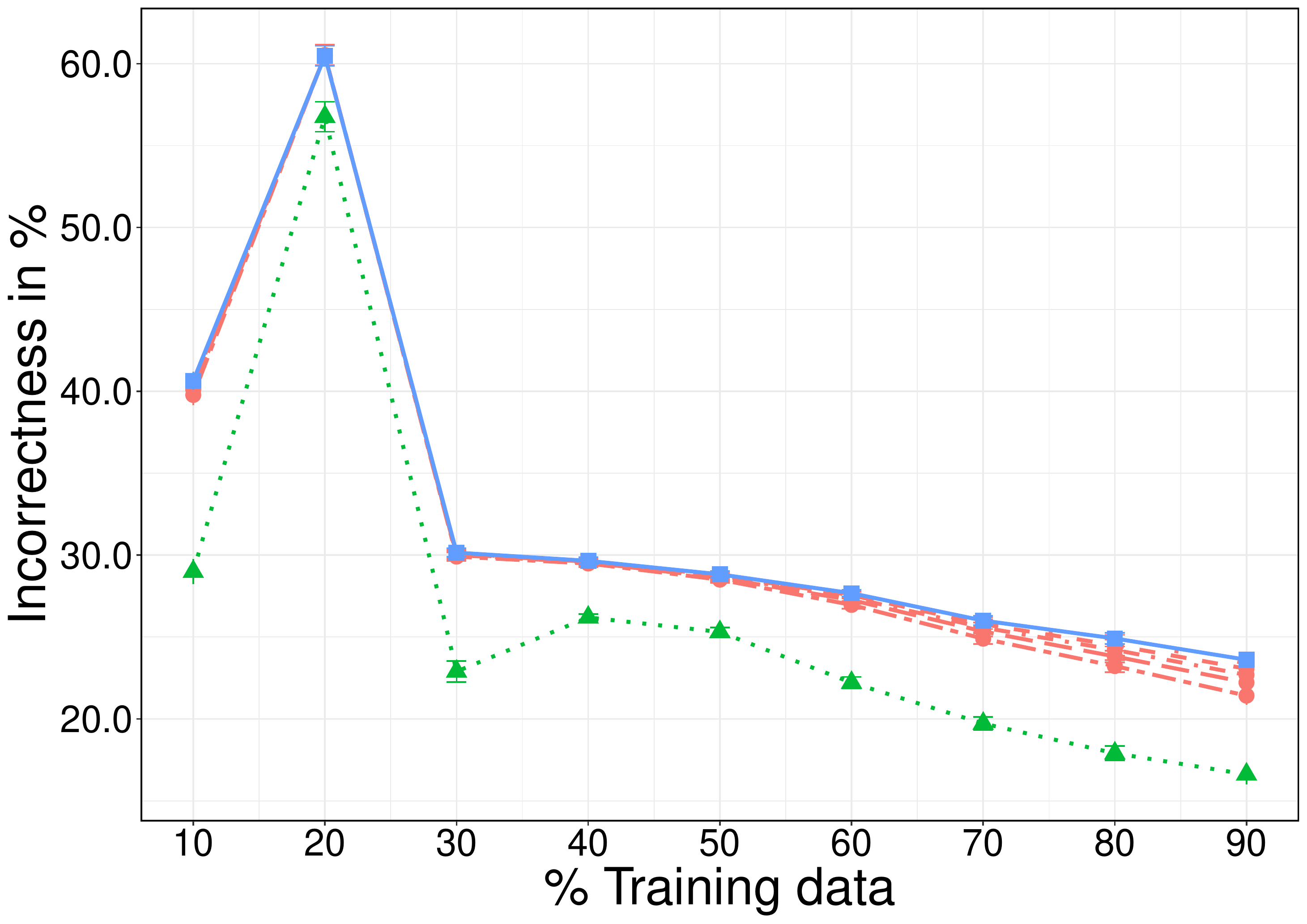}}
		\subfigure[\scshape Flags ($\tau=0.41$)]{
			\includegraphics[width=0.244\linewidth]
				{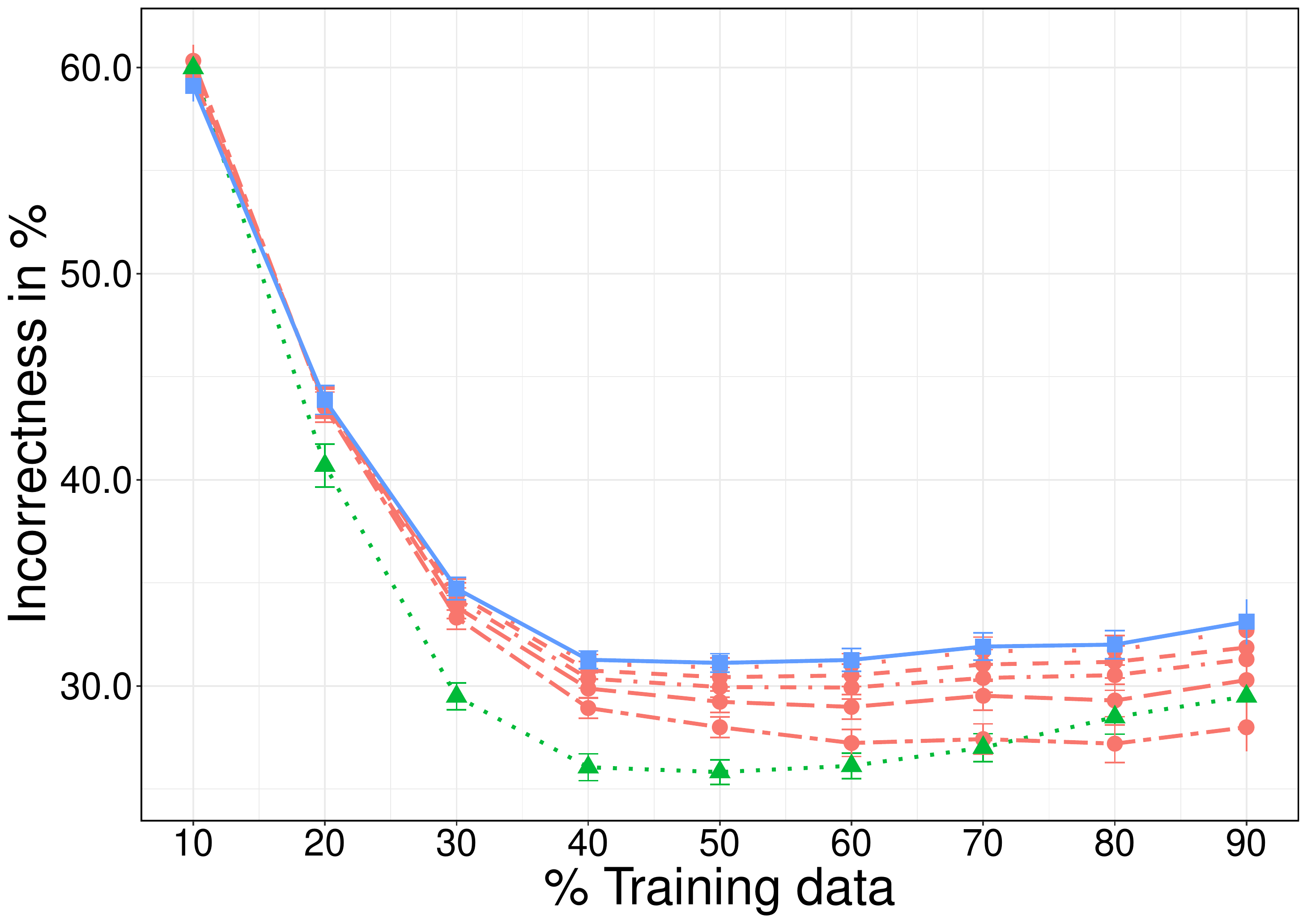}}
		\subfigure[\scshape Yeast ($\tau=0.41$)]{
			\includegraphics[width=0.244\linewidth]
				{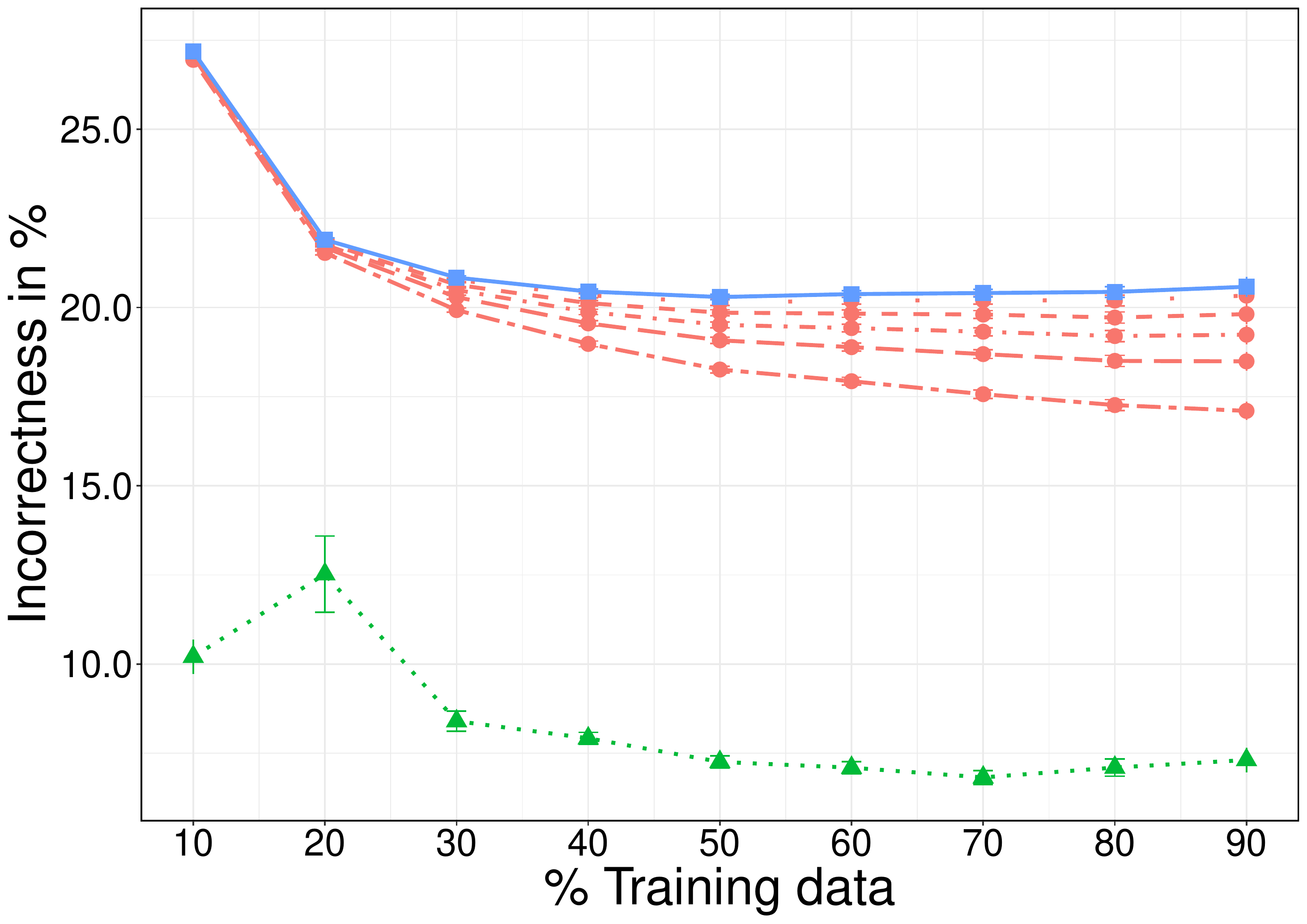}}
		\subfigure[\scshape CAL500 ($\tau=0.41$)]{
			\includegraphics[width=0.244\linewidth]
				{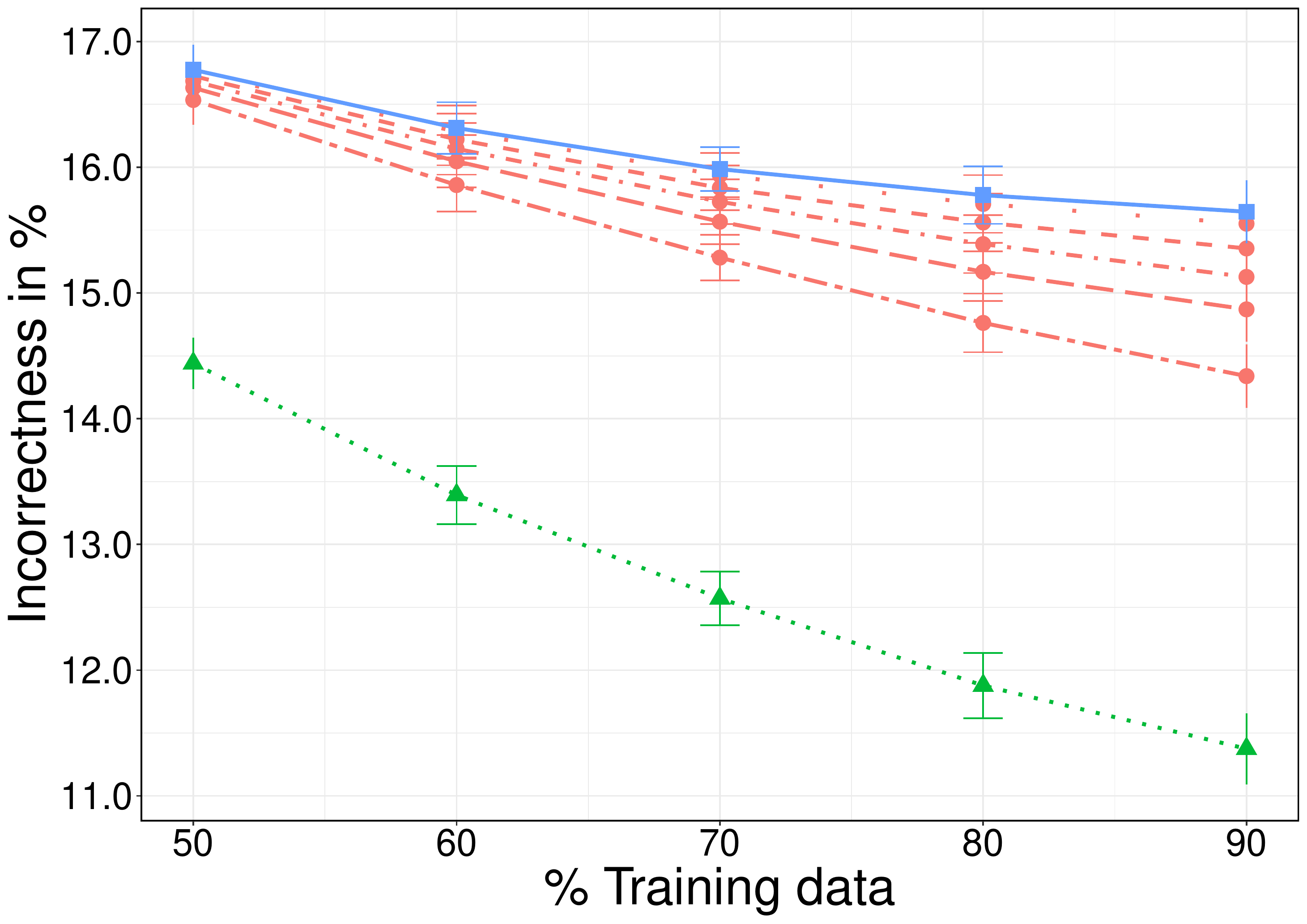}}
	}\vspace{-2mm}
	\caption{{\bf Downsampling - IGDA - Partial abstention SPE.} Incorrectness (y-axis) evolution for the precise, partial abstention with $f_{SPE}$ penalty function, and skeptical approach, each one with different hyper-parameter levels (except to the precise approach), and with respect to different percentages of training data sets (x-axis). In each row a different Gaussian classifier model is fitted.}
	\label{fig:igdasamplingresults}
\end{figure}

Therefore, in Figure~\ref{fig:igdasamplingresults}, we show the results of all different skeptical approaches by using the imprecise Gaussian discriminant analysis (IGDA) as a base classifier. These results show four important things:

\begin{itemize}
	\item in general, we can see the same trends as in Figure~\ref{fig:nccsamplingresults} on the INDA and IEDA models, since these latter can be considered as variants of the NCC model over a continuous input space;

	\item as regards with the ILDA model, it reaches the same findings found previously, but also it  highlights a noteworthy effect of the decreasing monotonic trend on the evolution of the completeness measure in the partial abstention and the rejection approaches (see in \ref{app:suppresampling}), that is; as the number of training samples increases the completeness measure decreases monotonically, or, in other words, this means that the more knowledge such approaches have, the more cautious are its predictions. This effect contradicts what is conventionally expected, i.e. a decrease in cautious predictions when more information or knowledge is obtained;
	
	\item as regards the IQDA model, in Figure~\ref{fig:iqdasamplingresults} we can see that the incorrectness evolution is not the same as the other models, but the completeness evolution has the same behaviour as the others, i.e. an increasing monotonic function with respect to the sample size (x-axis). The behaviour of incorrectness may be explained by the fact that when the sample size is small, the covariance matrix estimate can become highly unstable\footnote{Quadratic discriminant analysis (QDA) model is known to require in general larger samples than Linear discriminant analysis (LDA) model.} (i.e., it may produce a biased estimate of the eigenvalues)~\cite{srivastava2007bayesian}, and then the (I)QDA model is more likely to make more mistakes on the only precisely-valued decisions inferred in the binary vector $\vect{y}$;

	\item and finally, unlike the NCC model and its continuous variants models such as INDA and IEDA, any data set coupled with our skeptical approach does not require large or small imprecision level values ($\tau$) to provide incorrectness curves that are lower than the precise approach.
\end{itemize}

\paragraph{CSPN results} As in previous models, we illustrate the experimental results of using the credal sum-product network (CSPN) as base (imprecise) classifier, setup with an imprecise Dirichlet model (IDM) and a $\epsilon$-contamination convex space ($\epsilon$-cont) in Figure~\ref{fig:cspnidmsamplingresults} and \ref{fig:cspnepsilonsamplingresults}, respectively.

\begin{figure}[!th]
	\centering%
	\resizebox{0.75\textwidth}{!}{%
	  \begin{tikzpicture}[x=1pt,y=1pt]
\definecolor{fillColor}{RGB}{255,255,255}
\definecolor{abstention}{RGB}{248,118,109}
\path[use as bounding box,fill=fillColor,fill opacity=0.00] (0,0) rectangle (794.97, 59.50);
\begin{scope}
\path[clip] ( 0.00,  0.00) rectangle (794.97, 59.50);
\definecolor{drawColor}{RGB}{0,0,0}
\node[text=drawColor,anchor=base west,inner sep=0pt, outer sep=0pt, scale=  1.98] at (30.06, 33.35) { \sc\bf Models:};
\end{scope}
\begin{scope}
\path[clip] (  0.00,  0.00) rectangle (794.97, 59.50);
\definecolor{drawColor}{RGB}{248,118,109}

\path[draw=drawColor,line width= 1.4pt,line join=round, dash pattern=on 2pt off 2pt on 6pt off 2pt] (111.19, 40.16) -- (179.48, 40.16);
\end{scope}
\begin{scope}
\path[clip] (  0.00,  0.00) rectangle (794.97, 59.50);
\definecolor{fillColor}{RGB}{248,118,109}

\path[fill=fillColor] (145.34, 40.16) circle (  4.64);
\end{scope}
\begin{scope}
\path[clip] (  0.00,  0.00) rectangle (794.97, 59.50);
\definecolor{drawColor}{RGB}{0,186,56}

\path[draw=drawColor,line width= 1.4pt,line join=round,dotted] (398.10, 40.16) -- (466.39, 40.16);
\end{scope}
\begin{scope}
\path[clip] (  0.00,  0.00) rectangle (794.97, 59.50);
\definecolor{fillColor}{RGB}{0,186,56}

\path[fill=fillColor] (432.24, 47.38) --
	(438.49, 36.56) --
	(425.99, 36.56) --
	cycle;
\end{scope}
\begin{scope}
\path[clip] (  0.00,  0.00) rectangle (794.97, 59.50);
\definecolor{drawColor}{RGB}{97,156,255}

\path[draw=drawColor,line width= 1.4pt,line join=round] (590.26, 40.16) -- (658.55, 40.16);
\end{scope}
\begin{scope}
\path[clip] (  0.00,  0.00) rectangle (794.97, 59.50);
\definecolor{fillColor}{RGB}{97,156,255}

\path[fill=fillColor] (619.77, 35.52) --
	(629.05, 35.52) --
	(629.05, 44.80) --
	(619.77, 44.80) --
	cycle;
\end{scope}
\begin{scope}
\path[clip] (  0.00,  0.00) rectangle (794.97, 59.50);
\definecolor{drawColor}{RGB}{0,0,0}

\node[text=drawColor,anchor=base west,inner sep=0pt, outer sep=0pt, scale=  1.98] at (192.92, 33.35) {Abstention or Rejection};
\end{scope}
\begin{scope}
\path[clip] (  0.00,  0.00) rectangle (794.97, 59.50);
\definecolor{drawColor}{RGB}{0,0,0}

\node[text=drawColor,anchor=base west,inner sep=0pt, outer sep=0pt, scale=  1.98] at (484.82, 33.35) {Imprecise};
\end{scope}
\begin{scope}
\path[clip] (  0.00,  0.00) rectangle (794.97, 59.50);
\definecolor{drawColor}{RGB}{0,0,0}

\node[text=drawColor,anchor=base west,inner sep=0pt, outer sep=0pt, scale=  1.98] at (676.98, 33.35) {Precise};
\end{scope}
\begin{scope}
\path[clip] (  0.00,  0.00) rectangle (794.97, 59.50);
\definecolor{drawColor}{RGB}{0,0,0}

\node[text=drawColor,anchor=base west,inner sep=0pt, outer sep=0pt, scale=  1.98] at (  8.50, 04.52) {\bf \scshape $c$-SPE:};
\end{scope}
\begin{scope}
\path[clip] (  0.00,  0.00) rectangle (794.97, 59.50);
\definecolor{drawColor}{RGB}{0,0,0}

\path[draw=drawColor,line width= 1.4pt,dash pattern=on 2pt off 2pt on 6pt off 2pt ,line join=round] ( 95.79, 9.33) -- (164.08, 9.33);
\end{scope}
\begin{scope}
\path[clip] (  0.00,  0.00) rectangle (794.97, 59.50);
\definecolor{fillColor}{RGB}{255,255,255}

\path[fill=fillColor] (231.48,  9.92) rectangle (316.84, 28.75);
\end{scope}
\begin{scope}
\path[clip] (  0.00,  0.00) rectangle (794.97, 59.50);
\definecolor{drawColor}{RGB}{0,0,0}

\path[draw=drawColor,line width= 1.4pt,dash pattern=on 7pt off 3pt ,line join=round] (240.01, 9.33) -- (308.30, 9.33);
\end{scope}
\begin{scope}
\path[clip] (  0.00,  0.00) rectangle (794.97, 59.50);
\definecolor{fillColor}{RGB}{255,255,255}

\path[fill=fillColor] (375.70,  9.92) rectangle (461.06, 28.75);
\end{scope}
\begin{scope}
\path[clip] (  0.00,  0.00) rectangle (794.97, 59.50);
\definecolor{drawColor}{RGB}{0,0,0}

\path[draw=drawColor,line width= 1.4pt,dash pattern=on 1pt off 3pt on 4pt off 3pt ,line join=round] (384.24, 9.33) -- (452.52, 9.33);
\end{scope}
\begin{scope}
\path[clip] (  0.00,  0.00) rectangle (794.97, 59.50);
\definecolor{fillColor}{RGB}{255,255,255}

\path[fill=fillColor] (519.93,  9.92) rectangle (605.28, 28.75);
\end{scope}
\begin{scope}
\path[clip] (  0.00,  0.00) rectangle (794.97, 59.50);
\definecolor{drawColor}{RGB}{0,0,0}

\path[draw=drawColor,line width= 1.4pt,dash pattern=on 4pt off 4pt ,line join=round] (528.46, 9.33) -- (596.75, 9.33);
\end{scope}
\begin{scope}
\path[clip] (  0.00,  0.00) rectangle (794.97, 59.50);
\definecolor{fillColor}{RGB}{255,255,255}

\path[fill=fillColor] (664.15,  9.92) rectangle (749.51, 28.75);
\end{scope}
\begin{scope}
\path[clip] (  0.00,  0.00) rectangle (794.97, 59.50);
\definecolor{drawColor}{RGB}{0,0,0}

\path[draw=drawColor,line width= 1.4pt,dash pattern=on 1pt off 15pt ,line join=round] (672.69, 9.33) -- (740.97, 9.33);
\end{scope}
\begin{scope}
\path[clip] (  0.00,  0.00) rectangle (794.97, 59.50);
\definecolor{drawColor}{RGB}{0,0,0}

\node[text=drawColor,anchor=base west,inner sep=0pt, outer sep=0pt, scale=  1.98] at (182.51, 04.52) {0.05};
\end{scope}
\begin{scope}
\path[clip] (  0.00,  0.00) rectangle (794.97, 59.50);
\definecolor{drawColor}{RGB}{0,0,0}

\node[text=drawColor,anchor=base west,inner sep=0pt, outer sep=0pt, scale=  1.98] at (326.74, 04.52) {0.15};
\end{scope}
\begin{scope}
\path[clip] (  0.00,  0.00) rectangle (794.97, 59.50);
\definecolor{drawColor}{RGB}{0,0,0}

\node[text=drawColor,anchor=base west,inner sep=0pt, outer sep=0pt, scale=  1.98] at (470.96, 04.52) {0.25};
\end{scope}
\begin{scope}
\path[clip] (  0.00,  0.00) rectangle (794.97, 59.50);
\definecolor{drawColor}{RGB}{0,0,0}

\node[text=drawColor,anchor=base west,inner sep=0pt, outer sep=0pt, scale=  1.98] at (615.18, 04.52) {0.35};
\end{scope}
\begin{scope}
\path[clip] (  0.00,  0.00) rectangle (794.97, 59.50);
\definecolor{drawColor}{RGB}{0,0,0}

\node[text=drawColor,anchor=base west,inner sep=0pt, outer sep=0pt, scale=  1.98] at (759.41, 04.52) {0.45};
\end{scope}
\end{tikzpicture}
	}\vspace{-2mm}\qquad%
	\renewcommand{\thesubfigure}{(a)}
	\subfigure[\scshape Partial abstention $f_{SPE}$]{
		\hspace{-3mm}
		\subfigure[\scshape Flags ($\epsilon=0.03$)]{
			\includegraphics[width=0.244\linewidth]
				{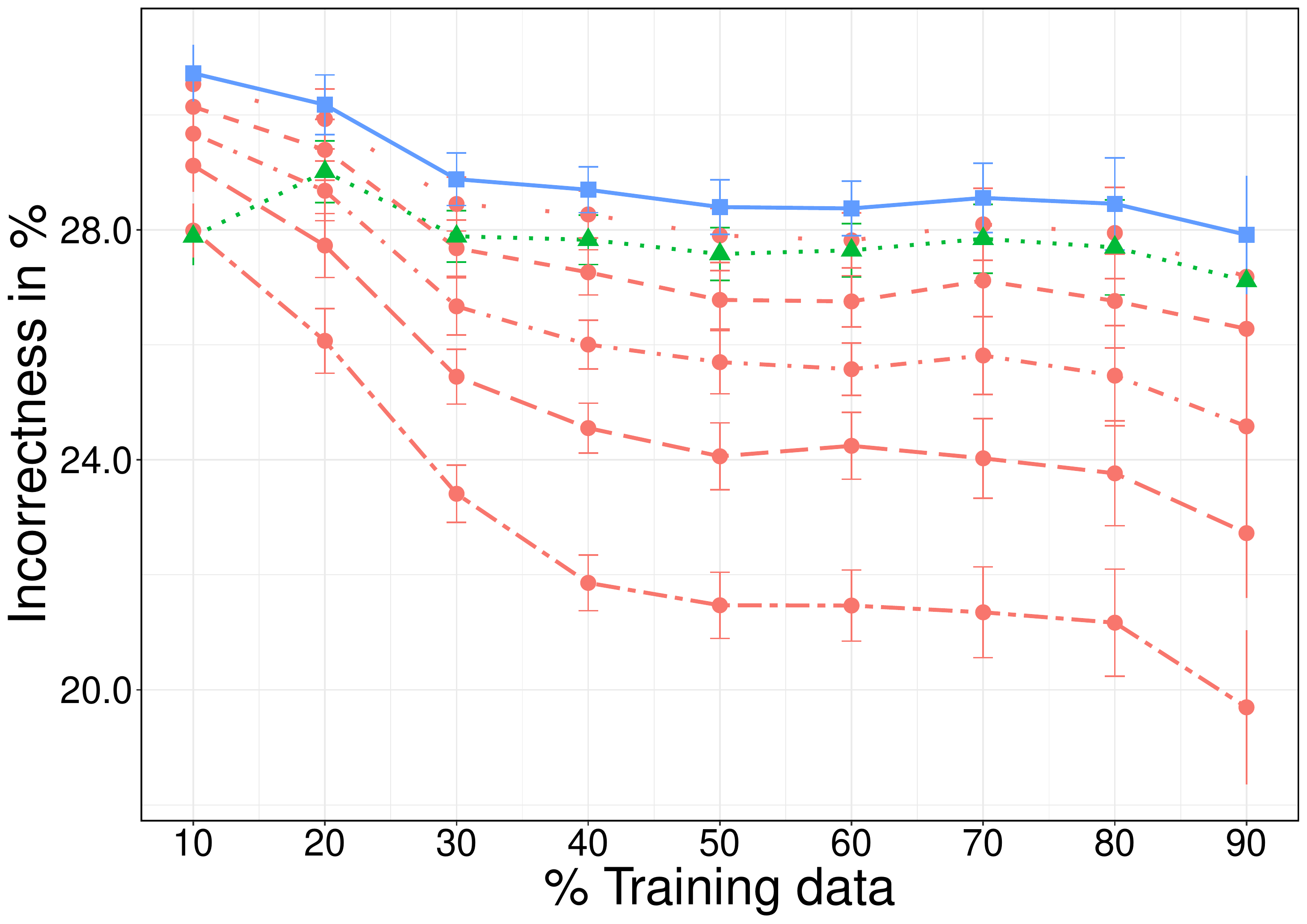}} 
		\subfigure[\scshape Scene ($\epsilon=0.05$)]{
			\includegraphics[width=0.244\linewidth]
				{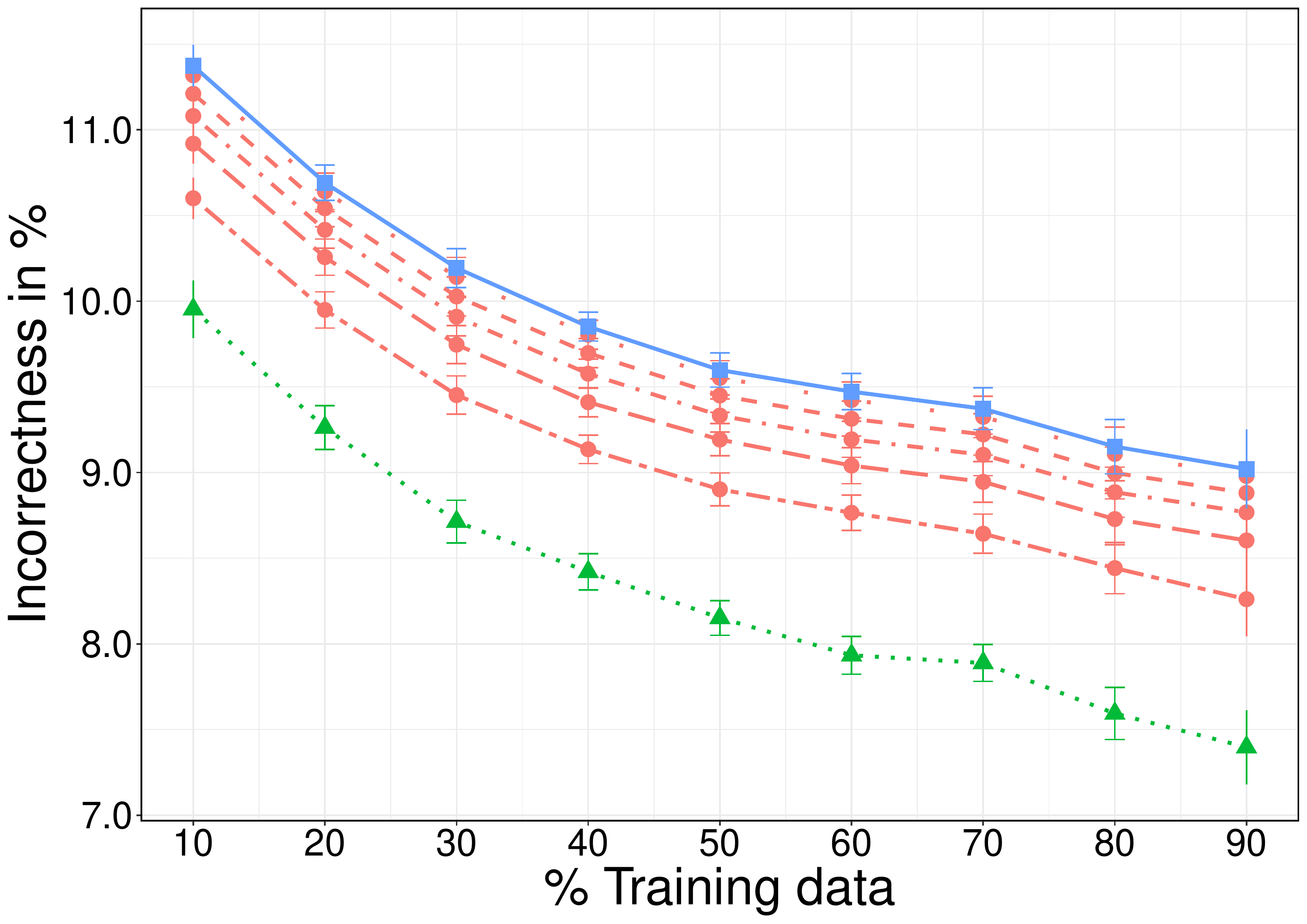}}
		\subfigure[\scshape Medical ($\epsilon=\num{1e-3}$)]{
			\includegraphics[width=0.244\linewidth]
				{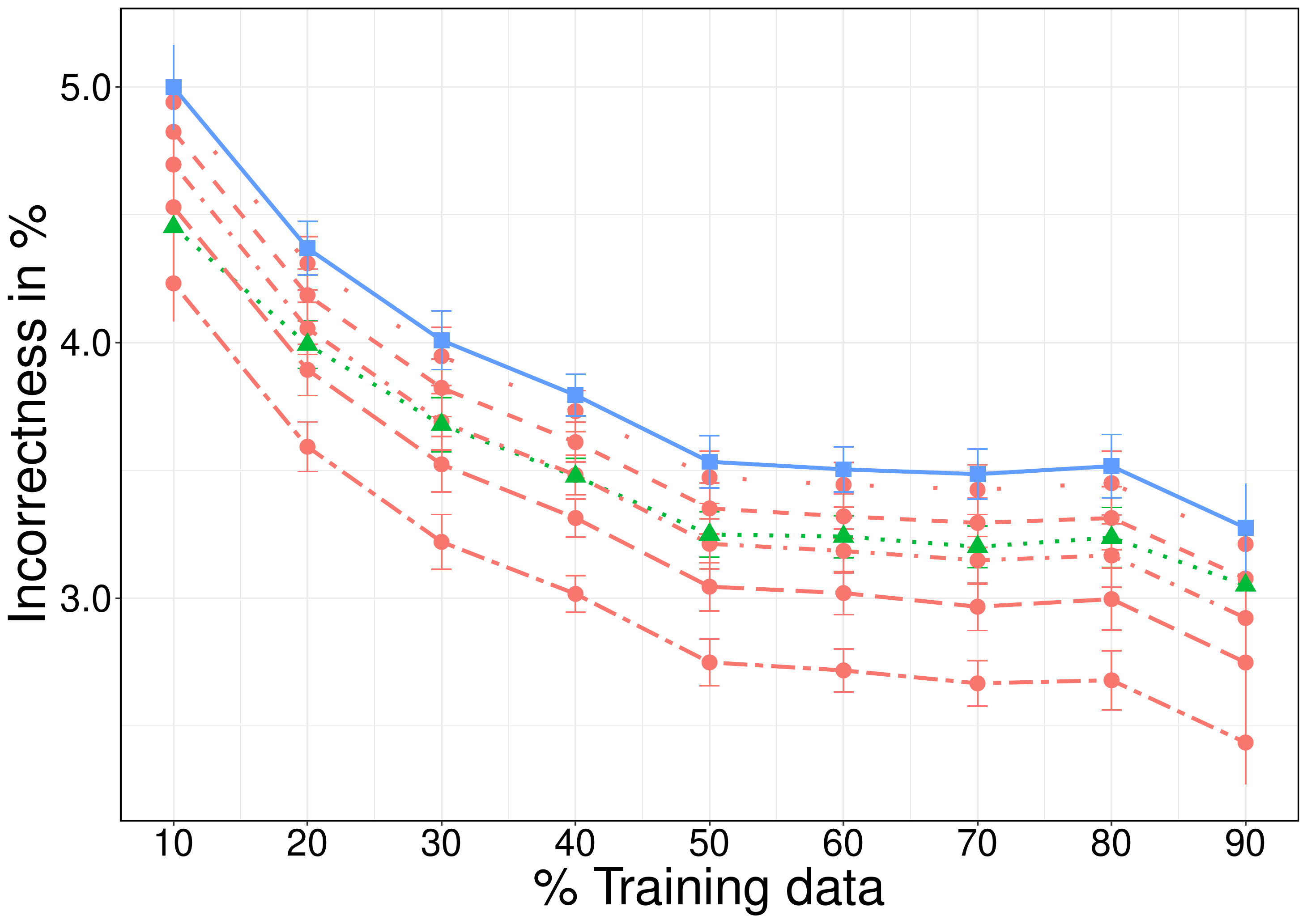}}
		\subfigure[\scshape CAL500 ($\epsilon=0.10$)]{
			\includegraphics[width=0.244\linewidth]
				{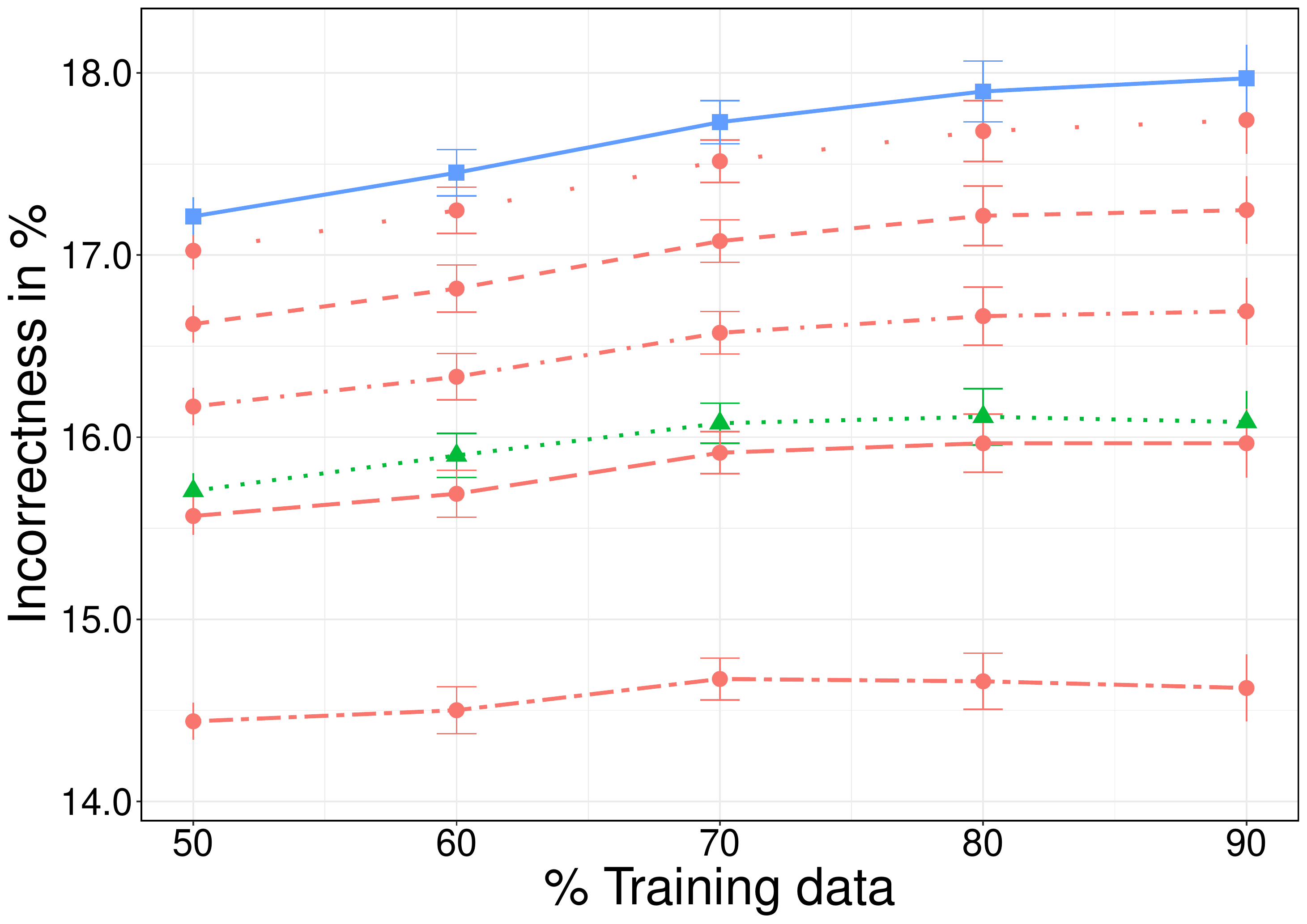}} 
	}
	\resizebox{0.75\textwidth}{!}{%
\begin{tikzpicture}[x=1pt,y=0.4pt]
\begin{scope}
\path[clip] (  0.00,  0.00) rectangle (794.97, 55.50);
\definecolor{drawColor}{RGB}{0,0,0}

\node[text=drawColor,anchor=base west,inner sep=0pt, outer sep=0pt, scale=  1.98] at (  8.50, 12.02) {\bf \scshape $c$-PAR:};
\end{scope}
\begin{scope}
\path[clip] (  0.00,  0.00) rectangle (794.97, 55.50);
\definecolor{drawColor}{RGB}{0,0,0}

\path[draw=drawColor,line width= 1.4pt,dash pattern=on 2pt off 2pt on 6pt off 2pt ,line join=round] ( 95.79, 27.33) -- (164.08, 27.33);
\end{scope}
\begin{scope}
\path[clip] (  0.00,  0.00) rectangle (794.97, 55.50);
\definecolor{fillColor}{RGB}{255,255,255}

\path[fill=fillColor] (231.48,  9.92) rectangle (316.84, 28.75);
\end{scope}
\begin{scope}
\path[clip] (  0.00,  0.00) rectangle (794.97, 55.50);
\definecolor{drawColor}{RGB}{0,0,0}

\path[draw=drawColor,line width= 1.4pt,dash pattern=on 7pt off 3pt ,line join=round] (240.01, 27.33) -- (308.30, 27.33);
\end{scope}
\begin{scope}
\path[clip] (  0.00,  0.00) rectangle (794.97, 55.50);
\definecolor{fillColor}{RGB}{255,255,255}

\path[fill=fillColor] (375.70,  9.92) rectangle (461.06, 28.75);
\end{scope}
\begin{scope}
\path[clip] (  0.00,  0.00) rectangle (794.97, 55.50);
\definecolor{drawColor}{RGB}{0,0,0}

\path[draw=drawColor,line width= 1.4pt,dash pattern=on 1pt off 3pt on 4pt off 3pt ,line join=round] (384.24, 27.33) -- (452.52, 27.33);
\end{scope}
\begin{scope}
\path[clip] (  0.00,  0.00) rectangle (794.97, 55.50);
\definecolor{fillColor}{RGB}{255,255,255}

\path[fill=fillColor] (519.93,  9.92) rectangle (605.28, 28.75);
\end{scope}
\begin{scope}
\path[clip] (  0.00,  0.00) rectangle (794.97, 55.50);
\definecolor{drawColor}{RGB}{0,0,0}

\path[draw=drawColor,line width= 1.4pt,dash pattern=on 4pt off 4pt ,line join=round] (528.46, 27.33) -- (596.75, 27.33);
\end{scope}
\begin{scope}
\path[clip] (  0.00,  0.00) rectangle (794.97, 55.50);
\definecolor{fillColor}{RGB}{255,255,255}

\path[fill=fillColor] (664.15,  9.92) rectangle (749.51, 28.75);
\end{scope}
\begin{scope}
\path[clip] (  0.00,  0.00) rectangle (794.97, 55.50);
\definecolor{drawColor}{RGB}{0,0,0}

\path[draw=drawColor,line width= 1.4pt,dash pattern=on 1pt off 15pt ,line join=round] (672.69, 27.33) -- (740.97, 27.33);
\end{scope}
\begin{scope}
\path[clip] (  0.00,  0.00) rectangle (794.97, 55.50);
\definecolor{drawColor}{RGB}{0,0,0}

\node[text=drawColor,anchor=base west,inner sep=0pt, outer sep=0pt, scale=  1.98] at (182.51, 12.02) {0.10};
\end{scope}
\begin{scope}
\path[clip] (  0.00,  0.00) rectangle (794.97, 55.50);
\definecolor{drawColor}{RGB}{0,0,0}

\node[text=drawColor,anchor=base west,inner sep=0pt, outer sep=0pt, scale=  1.98] at (326.74, 12.02) {0.20};
\end{scope}
\begin{scope}
\path[clip] (  0.00,  0.00) rectangle (794.97, 55.50);
\definecolor{drawColor}{RGB}{0,0,0}

\node[text=drawColor,anchor=base west,inner sep=0pt, outer sep=0pt, scale=  1.98] at (470.96, 12.02) {0.30};
\end{scope}
\begin{scope}
\path[clip] (  0.00,  0.00) rectangle (794.97, 55.50);
\definecolor{drawColor}{RGB}{0,0,0}

\node[text=drawColor,anchor=base west,inner sep=0pt, outer sep=0pt, scale=  1.98] at (615.18, 12.02) {0.40};
\end{scope}
\begin{scope}
\path[clip] (  0.00,  0.00) rectangle (794.97, 55.50);
\definecolor{drawColor}{RGB}{0,0,0}

\node[text=drawColor,anchor=base west,inner sep=0pt, outer sep=0pt, scale=  1.98] at (759.41, 12.02) {0.50};
\end{scope}
\end{tikzpicture}
	}\vspace{-2mm}\qquad%
    \renewcommand{\thesubfigure}{(c)} 
	\subfigure[\scshape Partial abstention $f_{PAR}$]{
		\hspace{-3mm}
	  	\subfigure[\scshape Flags ($\epsilon=0.07$)]{
			\includegraphics[width=0.244\linewidth]
				{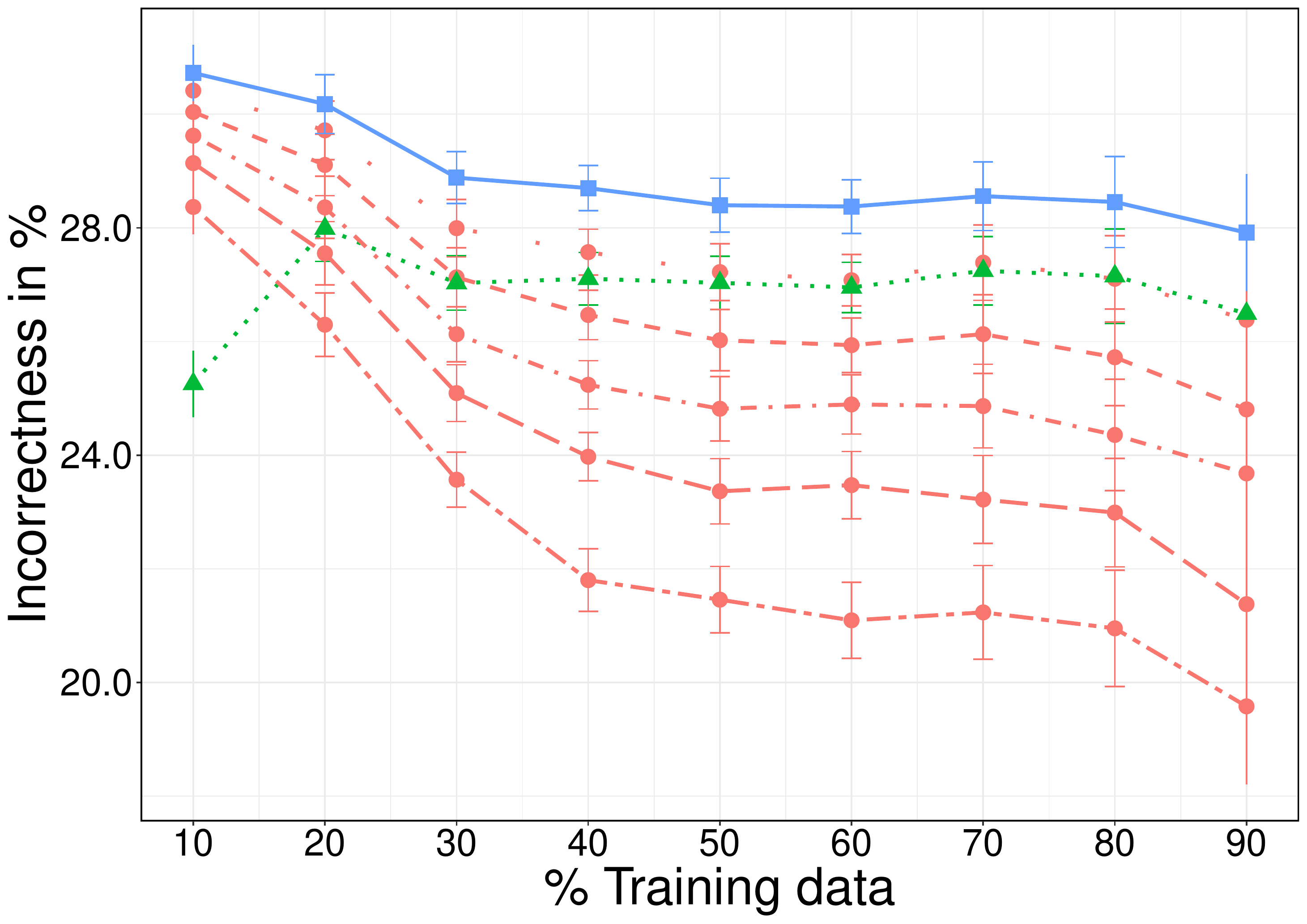}}
		\subfigure[\scshape Scene ($\epsilon=0.10$)]{
			\includegraphics[width=0.244\linewidth]
				{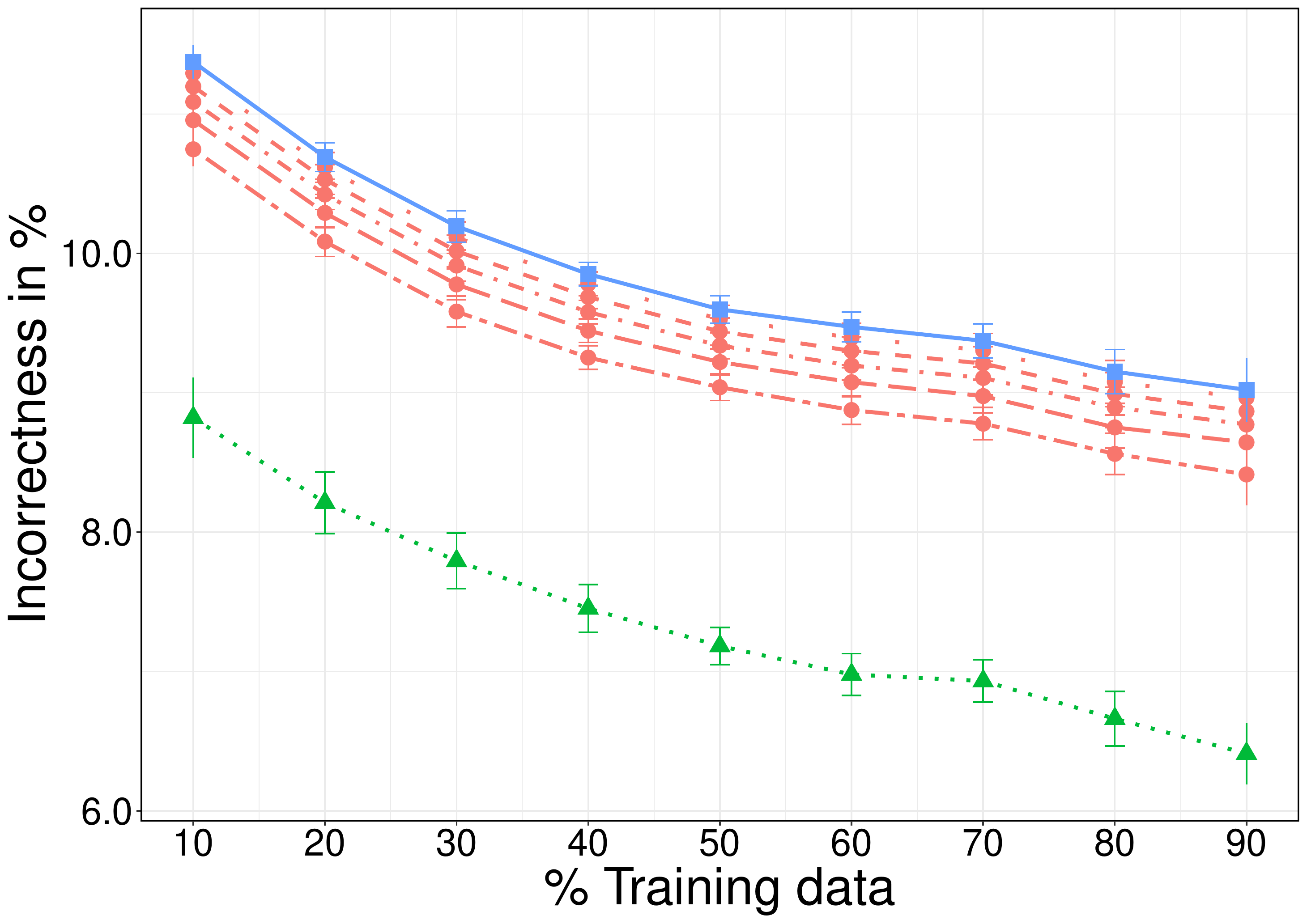}}
		\subfigure[\scshape Medical ($\epsilon=\num{3e-3}$)]{
			\includegraphics[width=0.244\linewidth]
				{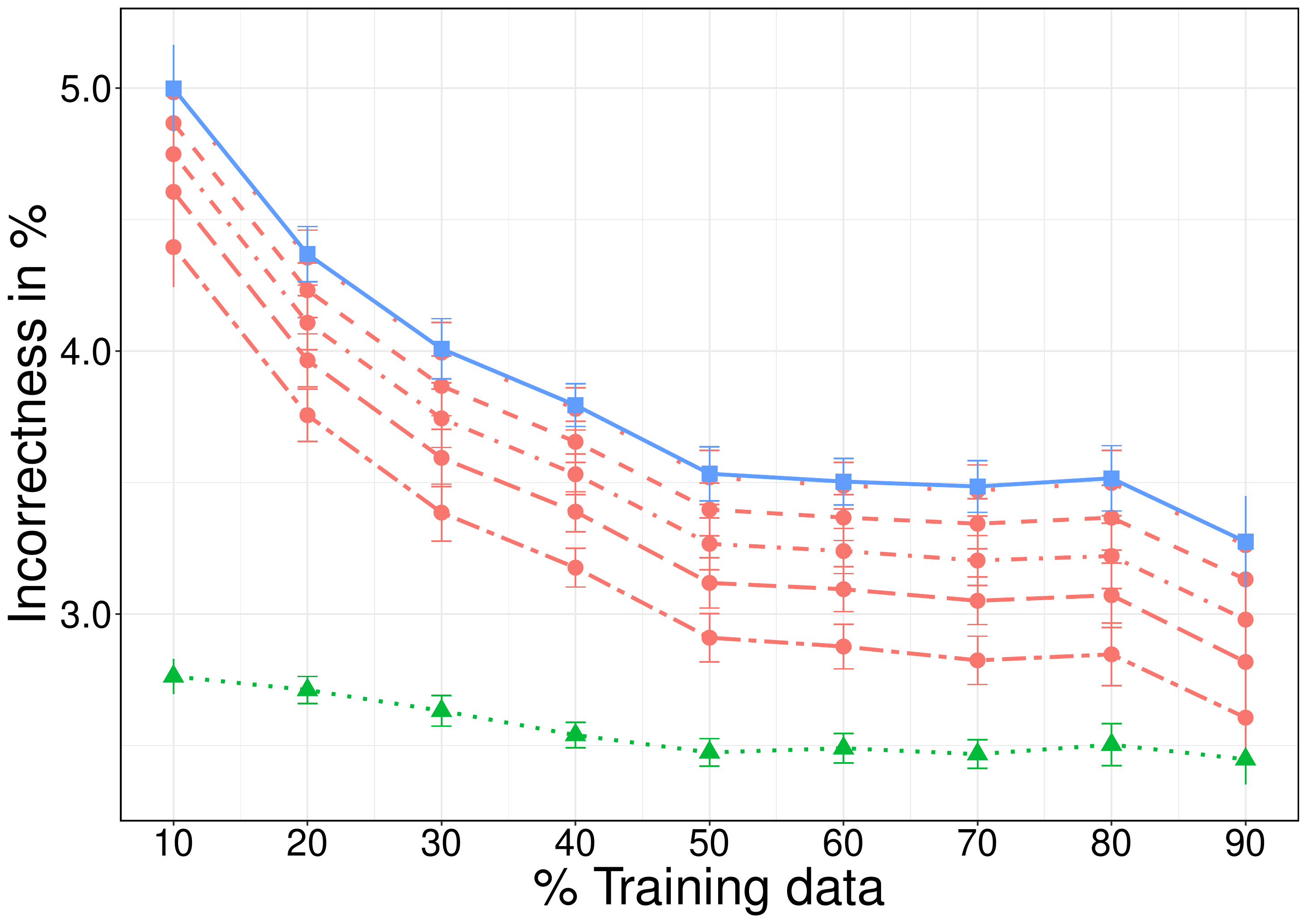}}
		\subfigure[\scshape CAL500 ($\epsilon=0.20$)]{
			\includegraphics[width=0.244\linewidth]
				{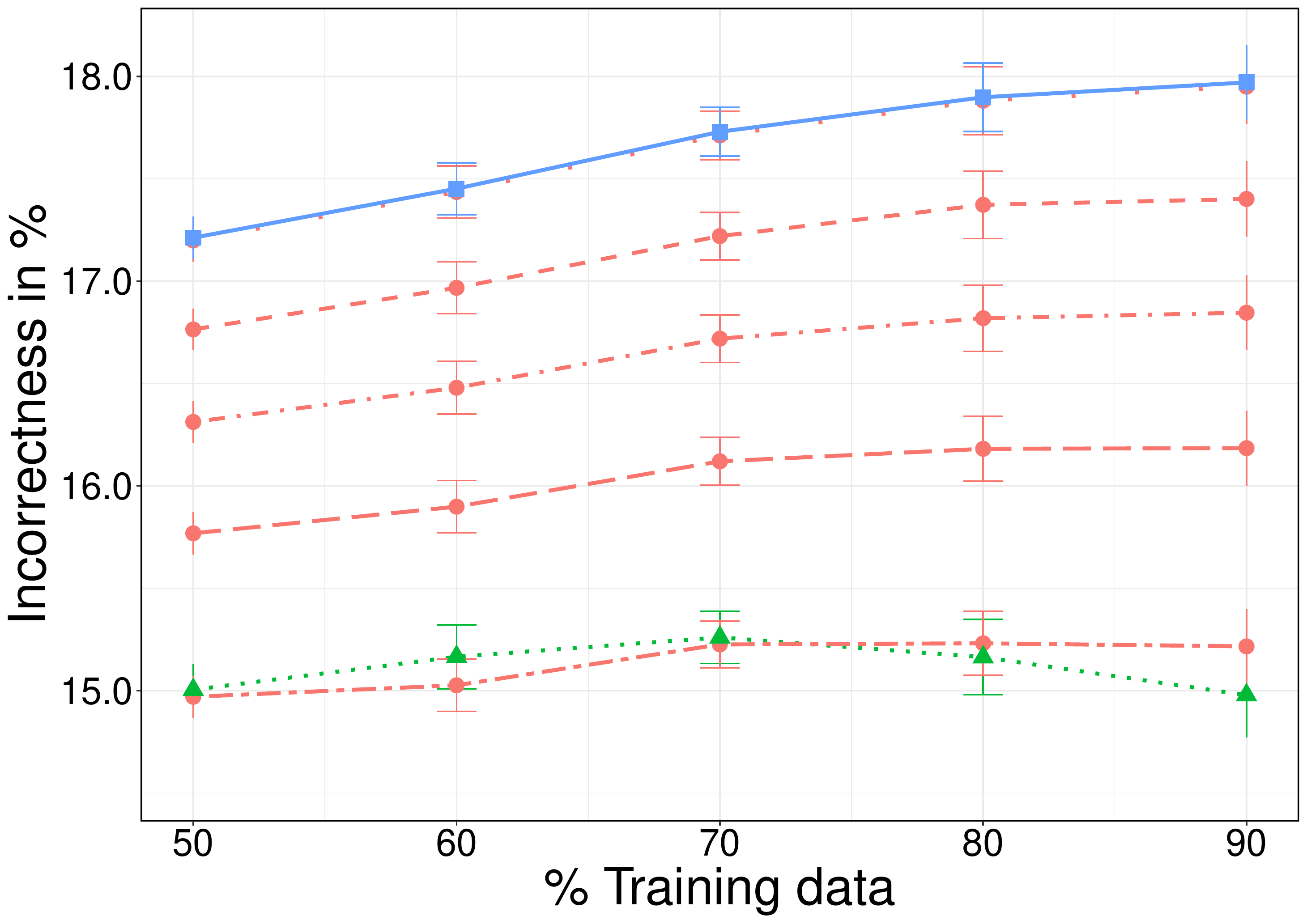}}
	}
	\resizebox{0.75\textwidth}{!}{%
\begin{tikzpicture}[x=1pt,y=0.4pt]
\begin{scope}
\path[clip] (  0.00,  0.00) rectangle (794.97, 59.50);
\definecolor{drawColor}{RGB}{0,0,0}

\node[text=drawColor, anchor=base west,inner sep=0pt, outer sep=0pt, scale=1.98] at ( 8.40, 10.52) {\!\!\bf{\small Threshold} $\gamma$:};
\end{scope}
\begin{scope}
\path[clip] (  0.00,  0.00) rectangle (794.97, 59.50);
\definecolor{drawColor}{RGB}{0,0,0}

\path[draw=drawColor,line width= 1.4pt,dash pattern=on 2pt off 2pt on 6pt off 2pt ,line join=round] ( 105.79, 23.33) -- (169.08, 23.33);
\end{scope}

\begin{scope}
\path[clip] (  0.00,  0.00) rectangle (794.97, 59.50);
\definecolor{drawColor}{RGB}{0,0,0}

\path[draw=drawColor,line width= 1.4pt,dash pattern=on 7pt off 3pt ,line join=round] (240.01, 23.33) -- (308.30, 23.33);
\end{scope}
\begin{scope}
\path[clip] (  0.00,  0.00) rectangle (794.97, 59.50);
\definecolor{fillColor}{RGB}{255,255,255}

\path[fill=fillColor] (375.70,  9.92) rectangle (461.06, 28.75);
\end{scope}
\begin{scope}
\path[clip] (  0.00,  0.00) rectangle (794.97, 59.50);
\definecolor{drawColor}{RGB}{0,0,0}

\path[draw=drawColor,line width= 1.4pt,dash pattern=on 1pt off 3pt on 4pt off 3pt ,line join=round] (384.24, 23.33) -- (452.52, 23.33);
\end{scope}
\begin{scope}
\path[clip] (  0.00,  0.00) rectangle (794.97, 59.50);
\definecolor{fillColor}{RGB}{255,255,255}

\path[fill=fillColor] (519.93,  9.92) rectangle (605.28, 28.75);
\end{scope}
\begin{scope}
\path[clip] (  0.00,  0.00) rectangle (794.97, 59.50);
\definecolor{drawColor}{RGB}{0,0,0}

\path[draw=drawColor,line width= 1.4pt,dash pattern=on 4pt off 4pt ,line join=round] (528.46, 23.33) -- (596.75, 23.33);
\end{scope}
\begin{scope}
\path[clip] (  0.00,  0.00) rectangle (794.97, 59.50);
\definecolor{fillColor}{RGB}{255,255,255}

\path[fill=fillColor] (664.15,  9.92) rectangle (749.51, 28.75);
\end{scope}
\begin{scope}
\path[clip] (  0.00,  0.00) rectangle (794.97, 59.50);
\definecolor{drawColor}{RGB}{0,0,0}

\path[draw=drawColor,line width= 1.4pt,dash pattern=on 1pt off 15pt ,line join=round] (672.69, 23.33) -- (740.97, 23.33);
\end{scope}
\begin{scope}
\path[clip] (  0.00,  0.00) rectangle (794.97, 59.50);
\definecolor{drawColor}{RGB}{0,0,0}

\node[text=drawColor,anchor=base west,inner sep=0pt, outer sep=0pt, scale=  1.98] at (182.51, 10.52) {0.05};
\end{scope}
\begin{scope}
\path[clip] (  0.00,  0.00) rectangle (794.97, 59.50);
\definecolor{drawColor}{RGB}{0,0,0}

\node[text=drawColor,anchor=base west,inner sep=0pt, outer sep=0pt, scale=  1.98] at (326.74, 10.52) {0.15};
\end{scope}
\begin{scope}
\path[clip] (  0.00,  0.00) rectangle (794.97, 59.50);
\definecolor{drawColor}{RGB}{0,0,0}

\node[text=drawColor,anchor=base west,inner sep=0pt, outer sep=0pt, scale=  1.98] at (470.96, 10.52) {0.25};
\end{scope}
\begin{scope}
\path[clip] (  0.00,  0.00) rectangle (794.97, 59.50);
\definecolor{drawColor}{RGB}{0,0,0}

\node[text=drawColor,anchor=base west,inner sep=0pt, outer sep=0pt, scale=  1.98] at (615.18, 10.52) {0.35};
\end{scope}
\begin{scope}
\path[clip] (  0.00,  0.00) rectangle (794.97, 59.50);
\definecolor{drawColor}{RGB}{0,0,0}

\node[text=drawColor,anchor=base west,inner sep=0pt, outer sep=0pt, scale=  1.98] at (759.41, 10.52) {0.45};
\end{scope}
\end{tikzpicture}
	}\vspace{-2mm}\qquad%
	\renewcommand{\thesubfigure}{(b)}
	\subfigure[\scshape Rejection threshold]{ 
		\hspace{-3mm}
		\subfigure[\scshape Flags ($\epsilon=0.15$)]{
			\includegraphics[width=0.244\linewidth]
				{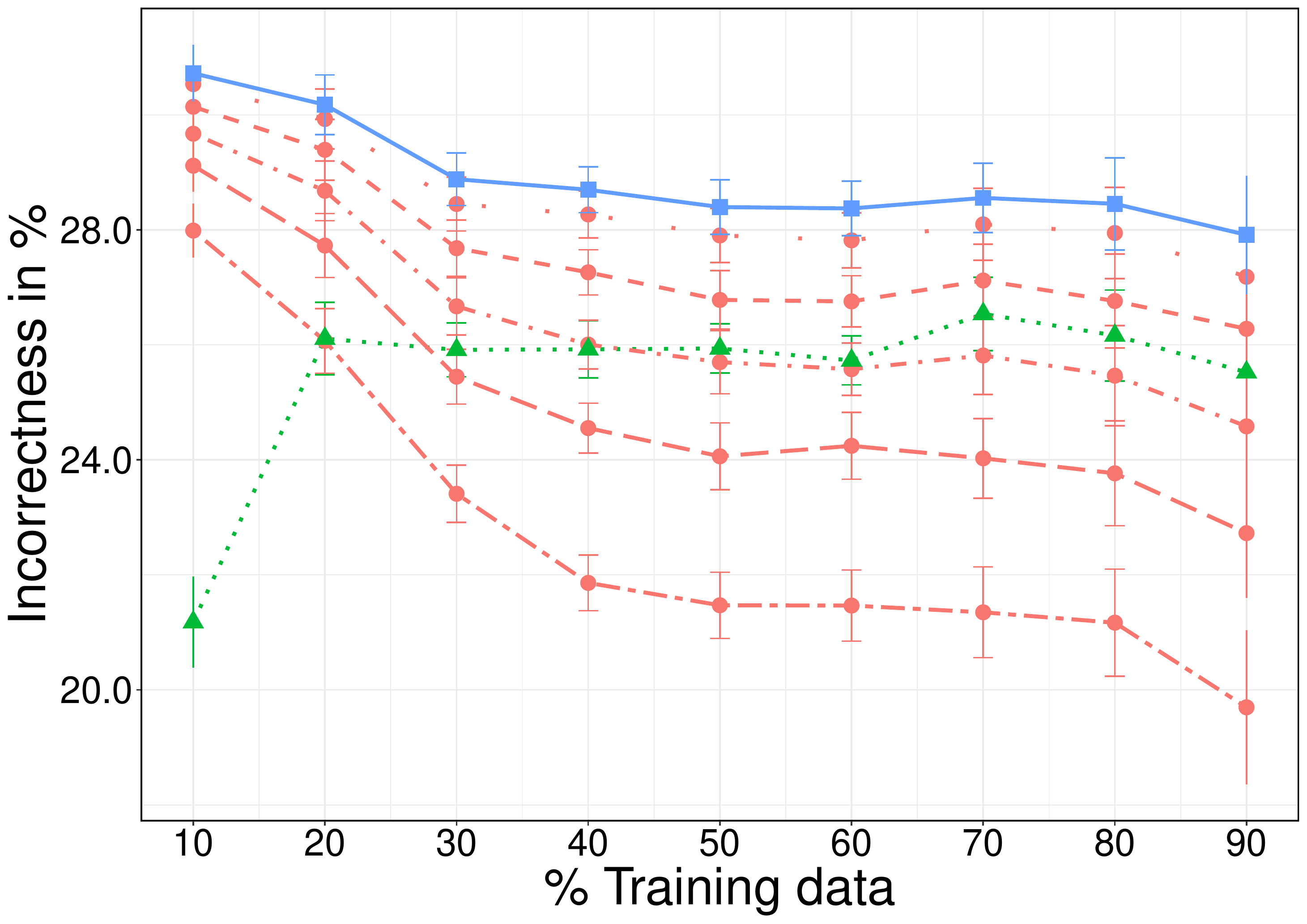}}
		\subfigure[\scshape Scene ($\epsilon=0.15$)]{
			\includegraphics[width=0.244\linewidth]
				{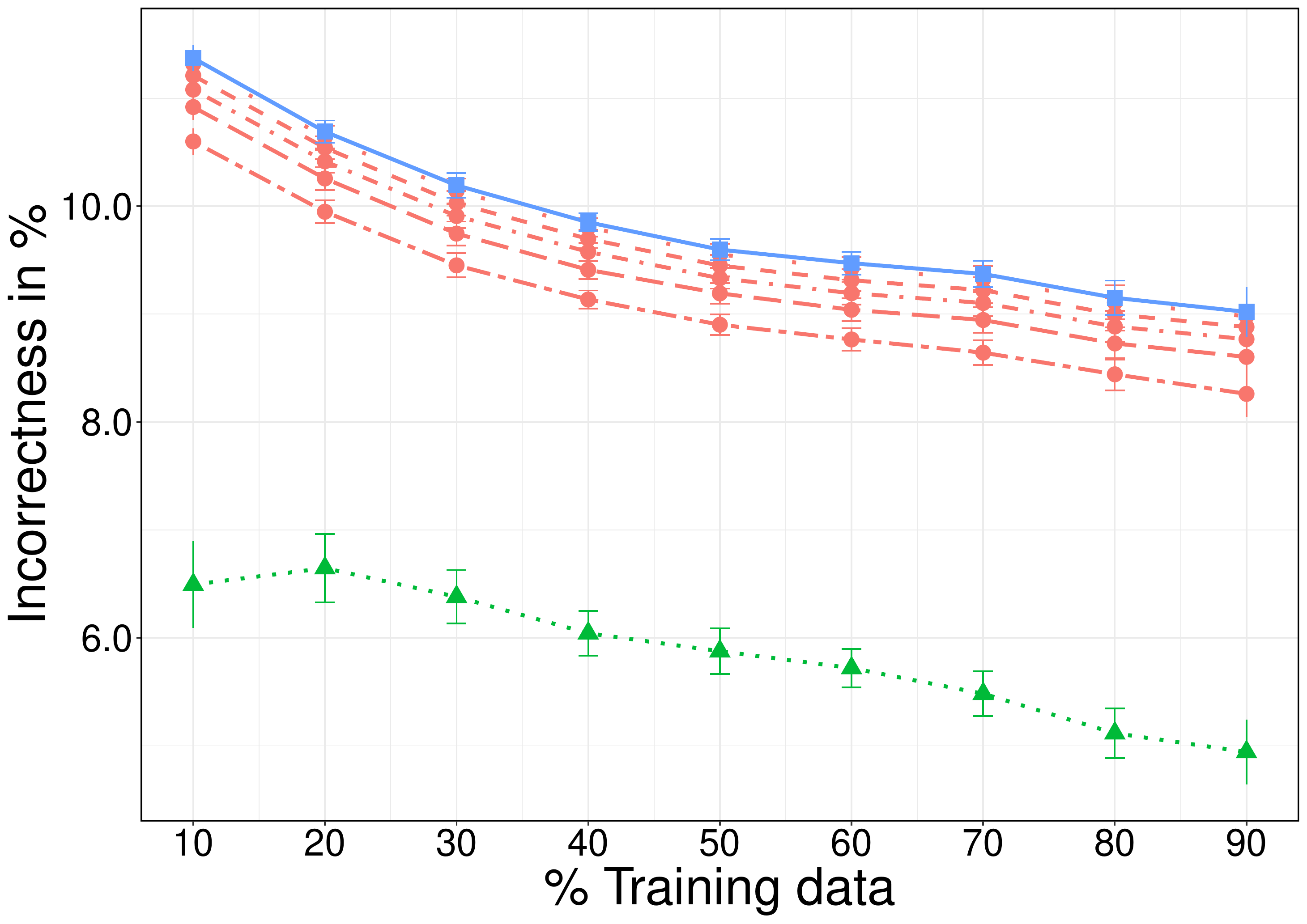}}
		\subfigure[\scshape Medical ($\epsilon=\num{5e-3}$)]{
			\includegraphics[width=0.244\linewidth]
				{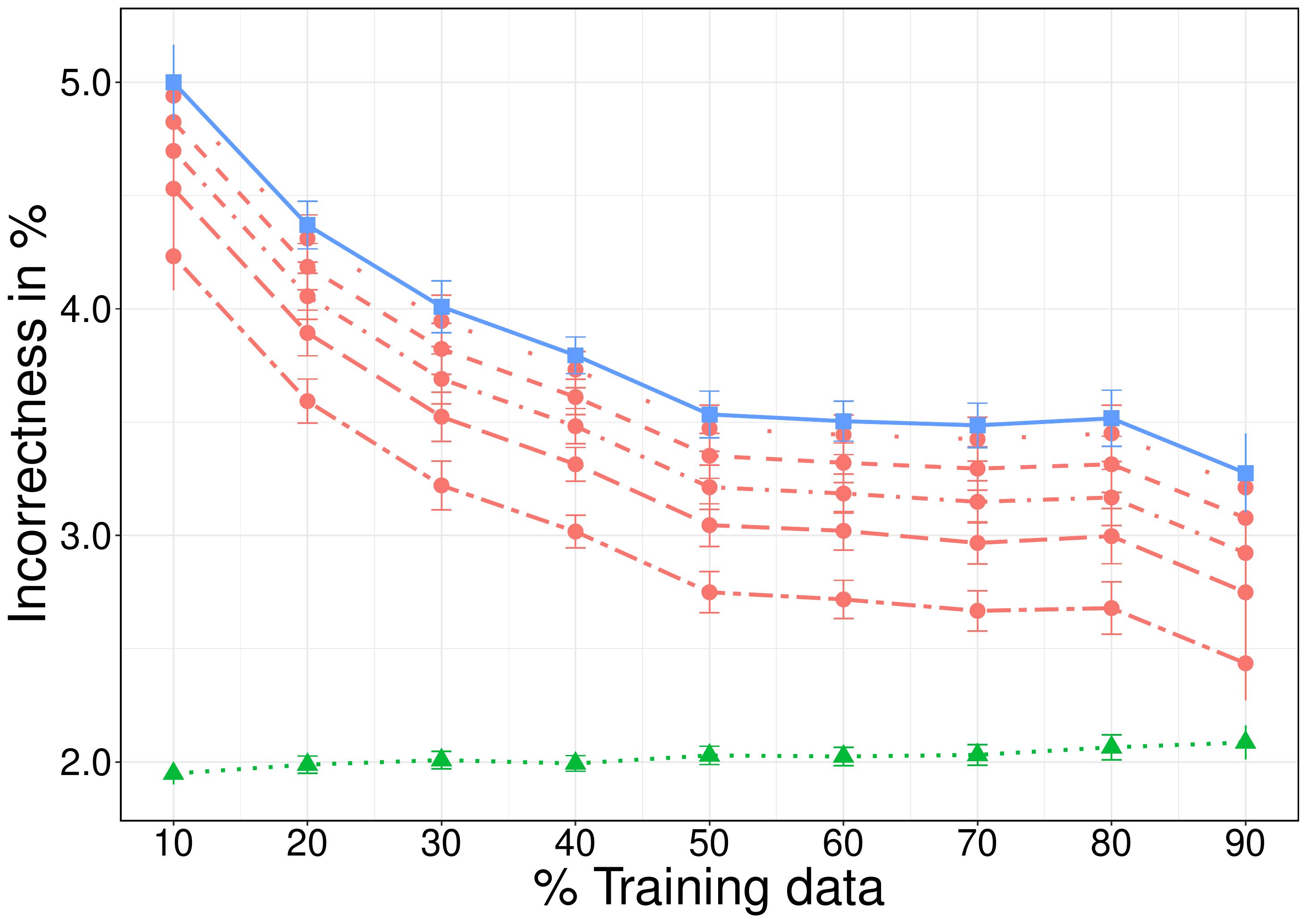}}
		\subfigure[\scshape CAL500 ($\epsilon=0.30$)]{
			\includegraphics[width=0.244\linewidth]
				{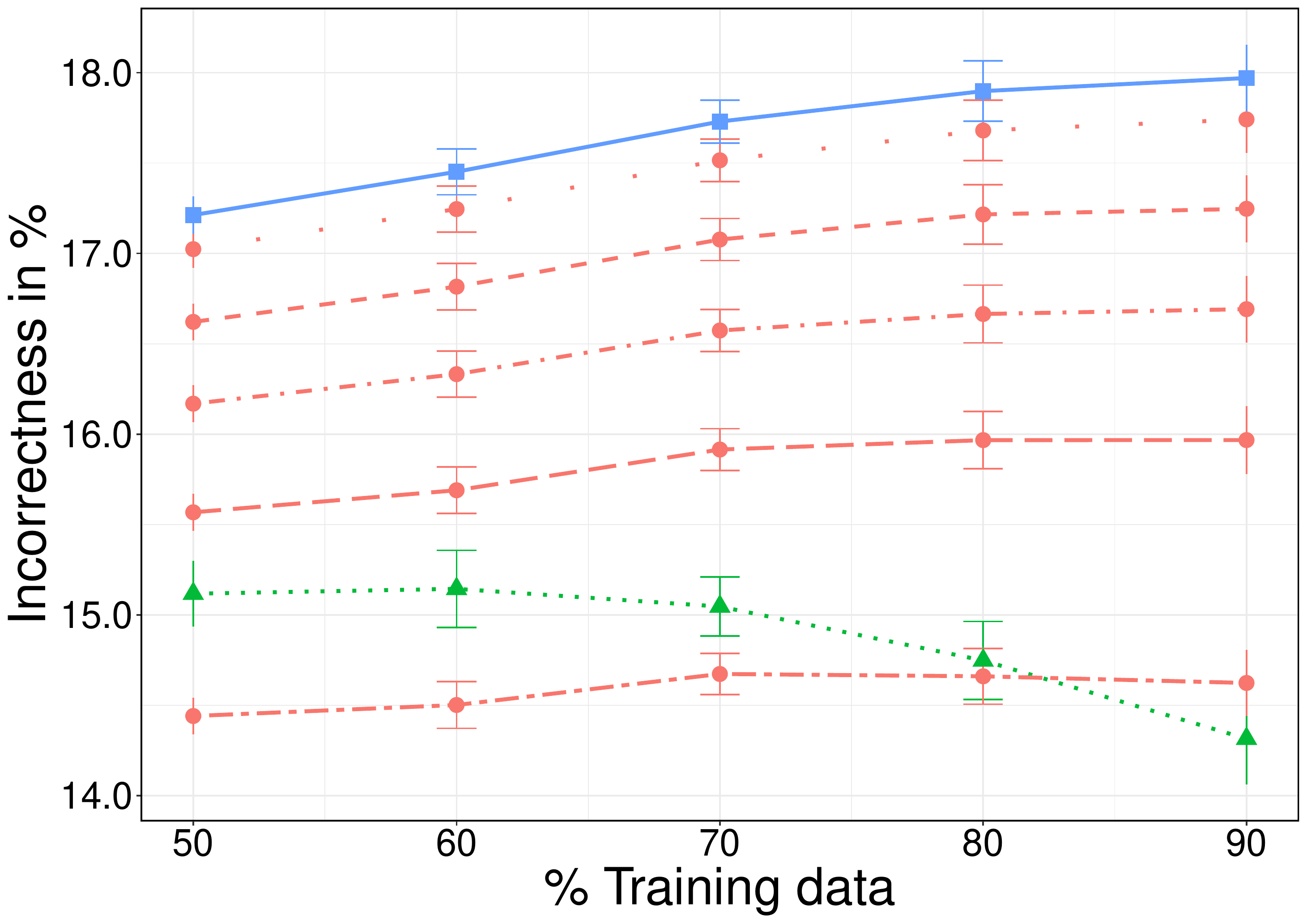}}
	}\vspace{-2mm}
	\caption{{\bf Downsampling - Credal sum product network - IDM.} Incorrectness (y-axis) evolution for the precise, partial abstention with $f_{SEP}$ (\nth{1} row) and $f_{PAR}$ (\nth{2} row) penalty functions, rejection (\nth{3} row), and skeptical approach, each one with different hyper-parameter levels (except to the precise approach), and with respect to different percentages of training data sets (x-axis).}
	\label{fig:cspnidmsamplingresults}
\end{figure}

\subparagraph{CSPN with Imprecise Dirichlet model} Regarding to CSPN-IDM, it yields the same findings found previously as the other models, but without the need of injecting large imprecision values $s$ of the convex space $\mathscr{C}_{N_{\vect{w}}, s}$ on our skeptical approach in order always to get incorrectness curves below the precise approach.

\begin{figure}[!th]
	\centering%
	\resizebox{0.8\textwidth}{!}{%
	  \begin{tikzpicture}[x=1pt,y=1pt]
\definecolor{fillColor}{RGB}{255,255,255}
\definecolor{abstention}{RGB}{248,118,109}
\path[use as bounding box,fill=fillColor,fill opacity=0.00] (0,0) rectangle (794.97, 59.50);
\begin{scope}
\path[clip] ( 0.00,  0.00) rectangle (794.97, 59.50);
\definecolor{drawColor}{RGB}{0,0,0}
\node[text=drawColor,anchor=base west,inner sep=0pt, outer sep=0pt, scale=  1.98] at (30.06, 33.35) { \sc\bf Models:};
\end{scope}
\begin{scope}
\path[clip] (  0.00,  0.00) rectangle (794.97, 59.50);
\definecolor{drawColor}{RGB}{248,118,109}

\path[draw=drawColor,line width= 1.4pt,line join=round, dash pattern=on 2pt off 2pt on 6pt off 2pt] (111.19, 40.16) -- (179.48, 40.16);
\end{scope}
\begin{scope}
\path[clip] (  0.00,  0.00) rectangle (794.97, 59.50);
\definecolor{fillColor}{RGB}{248,118,109}

\path[fill=fillColor] (145.34, 40.16) circle (  4.64);
\end{scope}
\begin{scope}
\path[clip] (  0.00,  0.00) rectangle (794.97, 59.50);
\definecolor{drawColor}{RGB}{0,186,56}

\path[draw=drawColor,line width= 1.4pt,line join=round,dotted] (398.10, 40.16) -- (466.39, 40.16);
\end{scope}
\begin{scope}
\path[clip] (  0.00,  0.00) rectangle (794.97, 59.50);
\definecolor{fillColor}{RGB}{0,186,56}

\path[fill=fillColor] (432.24, 47.38) --
	(438.49, 36.56) --
	(425.99, 36.56) --
	cycle;
\end{scope}
\begin{scope}
\path[clip] (  0.00,  0.00) rectangle (794.97, 59.50);
\definecolor{drawColor}{RGB}{97,156,255}

\path[draw=drawColor,line width= 1.4pt,line join=round] (590.26, 40.16) -- (658.55, 40.16);
\end{scope}
\begin{scope}
\path[clip] (  0.00,  0.00) rectangle (794.97, 59.50);
\definecolor{fillColor}{RGB}{97,156,255}

\path[fill=fillColor] (619.77, 35.52) --
	(629.05, 35.52) --
	(629.05, 44.80) --
	(619.77, 44.80) --
	cycle;
\end{scope}
\begin{scope}
\path[clip] (  0.00,  0.00) rectangle (794.97, 59.50);
\definecolor{drawColor}{RGB}{0,0,0}

\node[text=drawColor,anchor=base west,inner sep=0pt, outer sep=0pt, scale=  1.98] at (192.92, 33.35) {Abstention or Rejection};
\end{scope}
\begin{scope}
\path[clip] (  0.00,  0.00) rectangle (794.97, 59.50);
\definecolor{drawColor}{RGB}{0,0,0}

\node[text=drawColor,anchor=base west,inner sep=0pt, outer sep=0pt, scale=  1.98] at (484.82, 33.35) {Imprecise};
\end{scope}
\begin{scope}
\path[clip] (  0.00,  0.00) rectangle (794.97, 59.50);
\definecolor{drawColor}{RGB}{0,0,0}

\node[text=drawColor,anchor=base west,inner sep=0pt, outer sep=0pt, scale=  1.98] at (676.98, 33.35) {Precise};
\end{scope}
\begin{scope}
\path[clip] (  0.00,  0.00) rectangle (794.97, 59.50);
\definecolor{drawColor}{RGB}{0,0,0}

\node[text=drawColor,anchor=base west,inner sep=0pt, outer sep=0pt, scale=  1.98] at (  8.50, 04.52) {\bf \scshape $c$-SPE:};
\end{scope}
\begin{scope}
\path[clip] (  0.00,  0.00) rectangle (794.97, 59.50);
\definecolor{drawColor}{RGB}{0,0,0}

\path[draw=drawColor,line width= 1.4pt,dash pattern=on 2pt off 2pt on 6pt off 2pt ,line join=round] ( 95.79, 9.33) -- (164.08, 9.33);
\end{scope}
\begin{scope}
\path[clip] (  0.00,  0.00) rectangle (794.97, 59.50);
\definecolor{fillColor}{RGB}{255,255,255}

\path[fill=fillColor] (231.48,  9.92) rectangle (316.84, 28.75);
\end{scope}
\begin{scope}
\path[clip] (  0.00,  0.00) rectangle (794.97, 59.50);
\definecolor{drawColor}{RGB}{0,0,0}

\path[draw=drawColor,line width= 1.4pt,dash pattern=on 7pt off 3pt ,line join=round] (240.01, 9.33) -- (308.30, 9.33);
\end{scope}
\begin{scope}
\path[clip] (  0.00,  0.00) rectangle (794.97, 59.50);
\definecolor{fillColor}{RGB}{255,255,255}

\path[fill=fillColor] (375.70,  9.92) rectangle (461.06, 28.75);
\end{scope}
\begin{scope}
\path[clip] (  0.00,  0.00) rectangle (794.97, 59.50);
\definecolor{drawColor}{RGB}{0,0,0}

\path[draw=drawColor,line width= 1.4pt,dash pattern=on 1pt off 3pt on 4pt off 3pt ,line join=round] (384.24, 9.33) -- (452.52, 9.33);
\end{scope}
\begin{scope}
\path[clip] (  0.00,  0.00) rectangle (794.97, 59.50);
\definecolor{fillColor}{RGB}{255,255,255}

\path[fill=fillColor] (519.93,  9.92) rectangle (605.28, 28.75);
\end{scope}
\begin{scope}
\path[clip] (  0.00,  0.00) rectangle (794.97, 59.50);
\definecolor{drawColor}{RGB}{0,0,0}

\path[draw=drawColor,line width= 1.4pt,dash pattern=on 4pt off 4pt ,line join=round] (528.46, 9.33) -- (596.75, 9.33);
\end{scope}
\begin{scope}
\path[clip] (  0.00,  0.00) rectangle (794.97, 59.50);
\definecolor{fillColor}{RGB}{255,255,255}

\path[fill=fillColor] (664.15,  9.92) rectangle (749.51, 28.75);
\end{scope}
\begin{scope}
\path[clip] (  0.00,  0.00) rectangle (794.97, 59.50);
\definecolor{drawColor}{RGB}{0,0,0}

\path[draw=drawColor,line width= 1.4pt,dash pattern=on 1pt off 15pt ,line join=round] (672.69, 9.33) -- (740.97, 9.33);
\end{scope}
\begin{scope}
\path[clip] (  0.00,  0.00) rectangle (794.97, 59.50);
\definecolor{drawColor}{RGB}{0,0,0}

\node[text=drawColor,anchor=base west,inner sep=0pt, outer sep=0pt, scale=  1.98] at (182.51, 04.52) {0.05};
\end{scope}
\begin{scope}
\path[clip] (  0.00,  0.00) rectangle (794.97, 59.50);
\definecolor{drawColor}{RGB}{0,0,0}

\node[text=drawColor,anchor=base west,inner sep=0pt, outer sep=0pt, scale=  1.98] at (326.74, 04.52) {0.15};
\end{scope}
\begin{scope}
\path[clip] (  0.00,  0.00) rectangle (794.97, 59.50);
\definecolor{drawColor}{RGB}{0,0,0}

\node[text=drawColor,anchor=base west,inner sep=0pt, outer sep=0pt, scale=  1.98] at (470.96, 04.52) {0.25};
\end{scope}
\begin{scope}
\path[clip] (  0.00,  0.00) rectangle (794.97, 59.50);
\definecolor{drawColor}{RGB}{0,0,0}

\node[text=drawColor,anchor=base west,inner sep=0pt, outer sep=0pt, scale=  1.98] at (615.18, 04.52) {0.35};
\end{scope}
\begin{scope}
\path[clip] (  0.00,  0.00) rectangle (794.97, 59.50);
\definecolor{drawColor}{RGB}{0,0,0}

\node[text=drawColor,anchor=base west,inner sep=0pt, outer sep=0pt, scale=  1.98] at (759.41, 04.52) {0.45};
\end{scope}
\end{tikzpicture}
	}\vspace{-2mm}\qquad%
	\renewcommand{\thesubfigure}{(a)}
	\subfigure[\scshape Partial abstention $f_{SEP}$]{
		\hspace{-3mm}
		\subfigure[{\scshape Flags ($\epsilon=0.01$)}]{
			\includegraphics[width=0.244\linewidth]
				{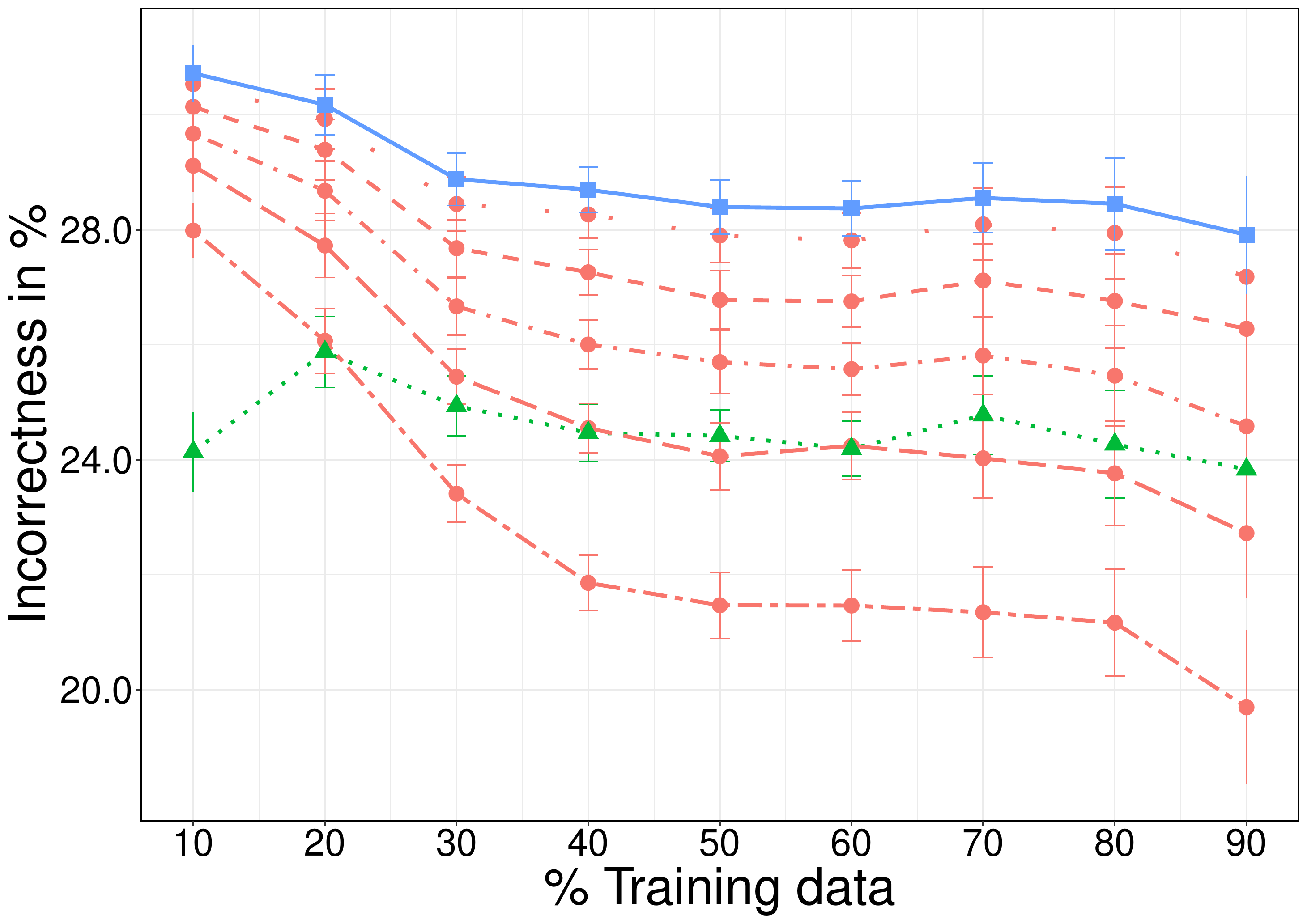}} 
		\subfigure[\scshape Scene ($\epsilon=\num{5e-4}$)]{
			\includegraphics[width=0.244\linewidth]
				{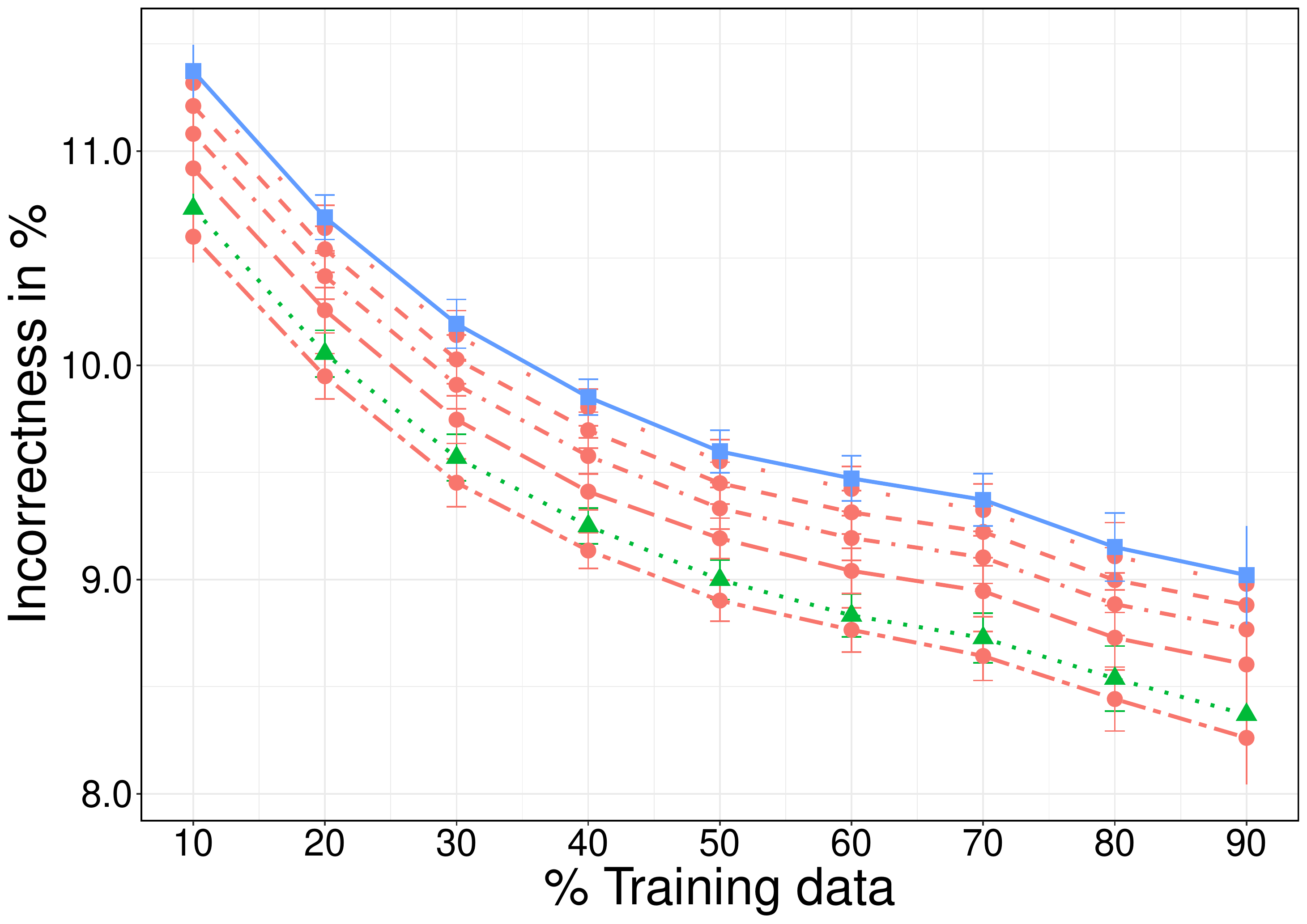}}
		\subfigure[\scshape Medical ($\epsilon=\num{5e-4}$)]{
			\includegraphics[width=0.244\linewidth]
				{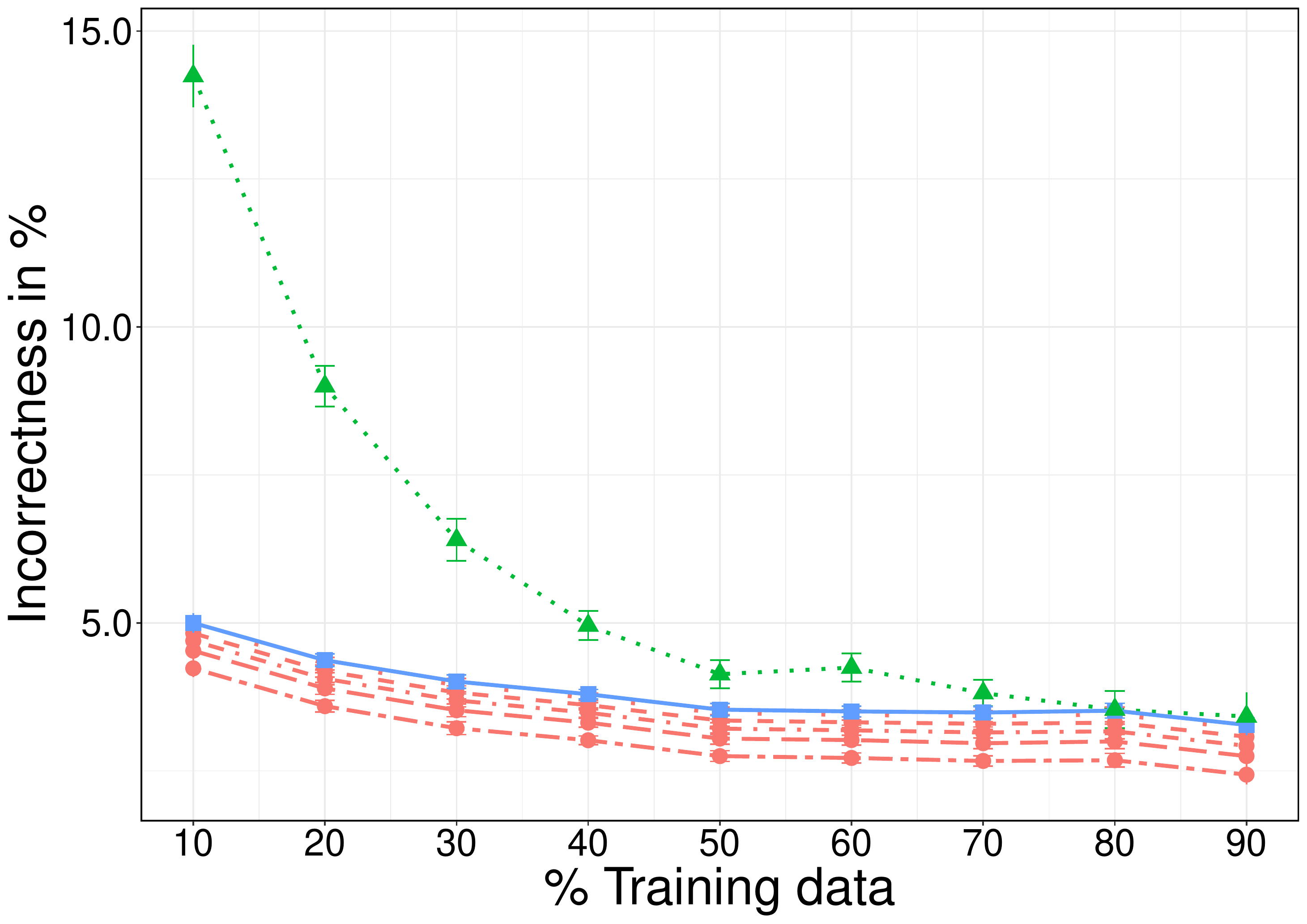}}
		\subfigure[\scshape CAL500 ($\epsilon=\num{1e-3}$)]{
			\includegraphics[width=0.244\linewidth]
				{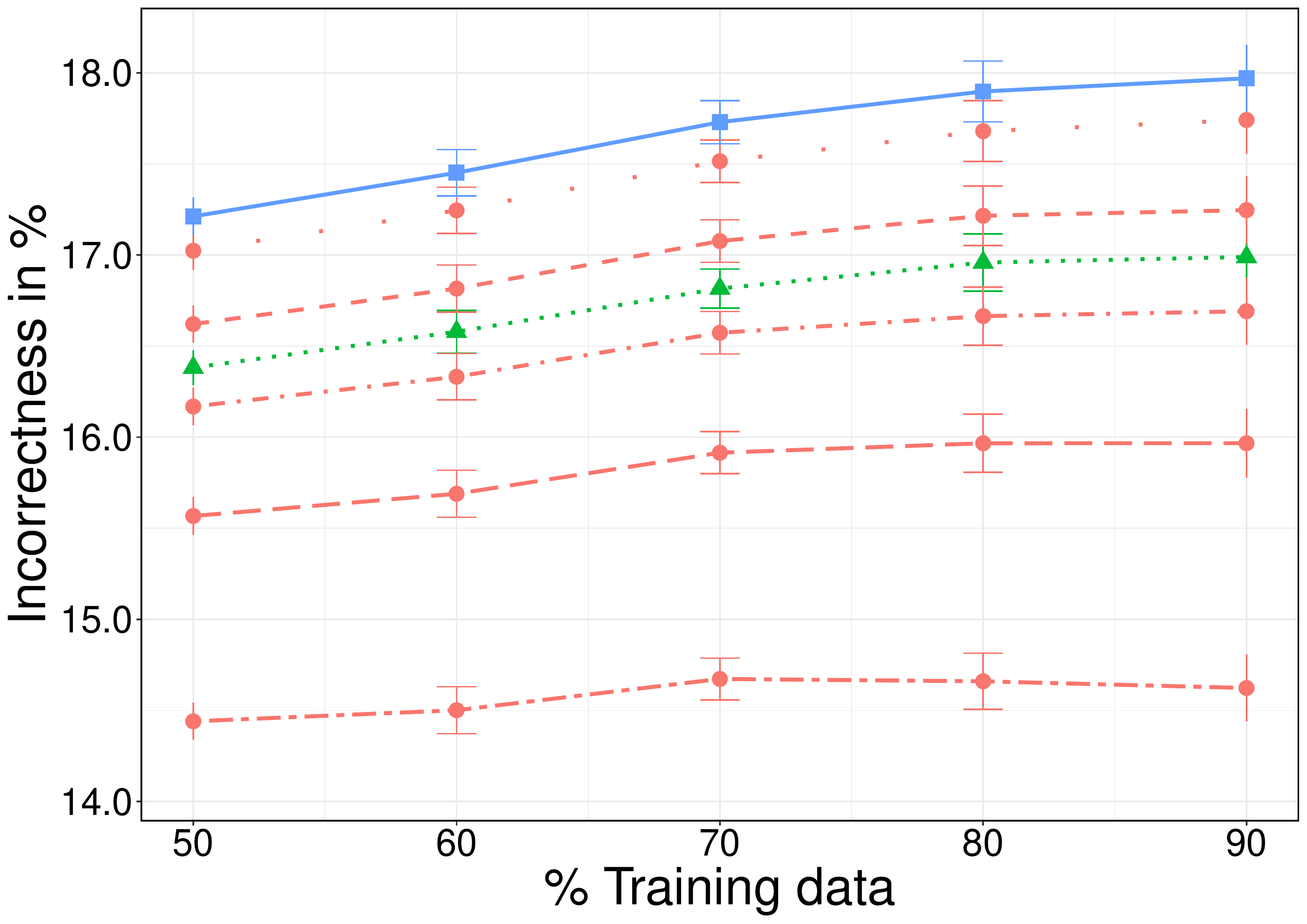}} 
	}
	\resizebox{0.75\textwidth}{!}{%
\begin{tikzpicture}[x=1pt,y=0.4pt]
\begin{scope}
\path[clip] (  0.00,  0.00) rectangle (794.97, 55.50);
\definecolor{drawColor}{RGB}{0,0,0}

\node[text=drawColor,anchor=base west,inner sep=0pt, outer sep=0pt, scale=  1.98] at (  8.50, 12.02) {\bf \scshape $c$-PAR:};
\end{scope}
\begin{scope}
\path[clip] (  0.00,  0.00) rectangle (794.97, 55.50);
\definecolor{drawColor}{RGB}{0,0,0}

\path[draw=drawColor,line width= 1.4pt,dash pattern=on 2pt off 2pt on 6pt off 2pt ,line join=round] ( 95.79, 27.33) -- (164.08, 27.33);
\end{scope}
\begin{scope}
\path[clip] (  0.00,  0.00) rectangle (794.97, 55.50);
\definecolor{fillColor}{RGB}{255,255,255}

\path[fill=fillColor] (231.48,  9.92) rectangle (316.84, 28.75);
\end{scope}
\begin{scope}
\path[clip] (  0.00,  0.00) rectangle (794.97, 55.50);
\definecolor{drawColor}{RGB}{0,0,0}

\path[draw=drawColor,line width= 1.4pt,dash pattern=on 7pt off 3pt ,line join=round] (240.01, 27.33) -- (308.30, 27.33);
\end{scope}
\begin{scope}
\path[clip] (  0.00,  0.00) rectangle (794.97, 55.50);
\definecolor{fillColor}{RGB}{255,255,255}

\path[fill=fillColor] (375.70,  9.92) rectangle (461.06, 28.75);
\end{scope}
\begin{scope}
\path[clip] (  0.00,  0.00) rectangle (794.97, 55.50);
\definecolor{drawColor}{RGB}{0,0,0}

\path[draw=drawColor,line width= 1.4pt,dash pattern=on 1pt off 3pt on 4pt off 3pt ,line join=round] (384.24, 27.33) -- (452.52, 27.33);
\end{scope}
\begin{scope}
\path[clip] (  0.00,  0.00) rectangle (794.97, 55.50);
\definecolor{fillColor}{RGB}{255,255,255}

\path[fill=fillColor] (519.93,  9.92) rectangle (605.28, 28.75);
\end{scope}
\begin{scope}
\path[clip] (  0.00,  0.00) rectangle (794.97, 55.50);
\definecolor{drawColor}{RGB}{0,0,0}

\path[draw=drawColor,line width= 1.4pt,dash pattern=on 4pt off 4pt ,line join=round] (528.46, 27.33) -- (596.75, 27.33);
\end{scope}
\begin{scope}
\path[clip] (  0.00,  0.00) rectangle (794.97, 55.50);
\definecolor{fillColor}{RGB}{255,255,255}

\path[fill=fillColor] (664.15,  9.92) rectangle (749.51, 28.75);
\end{scope}
\begin{scope}
\path[clip] (  0.00,  0.00) rectangle (794.97, 55.50);
\definecolor{drawColor}{RGB}{0,0,0}

\path[draw=drawColor,line width= 1.4pt,dash pattern=on 1pt off 15pt ,line join=round] (672.69, 27.33) -- (740.97, 27.33);
\end{scope}
\begin{scope}
\path[clip] (  0.00,  0.00) rectangle (794.97, 55.50);
\definecolor{drawColor}{RGB}{0,0,0}

\node[text=drawColor,anchor=base west,inner sep=0pt, outer sep=0pt, scale=  1.98] at (182.51, 12.02) {0.10};
\end{scope}
\begin{scope}
\path[clip] (  0.00,  0.00) rectangle (794.97, 55.50);
\definecolor{drawColor}{RGB}{0,0,0}

\node[text=drawColor,anchor=base west,inner sep=0pt, outer sep=0pt, scale=  1.98] at (326.74, 12.02) {0.20};
\end{scope}
\begin{scope}
\path[clip] (  0.00,  0.00) rectangle (794.97, 55.50);
\definecolor{drawColor}{RGB}{0,0,0}

\node[text=drawColor,anchor=base west,inner sep=0pt, outer sep=0pt, scale=  1.98] at (470.96, 12.02) {0.30};
\end{scope}
\begin{scope}
\path[clip] (  0.00,  0.00) rectangle (794.97, 55.50);
\definecolor{drawColor}{RGB}{0,0,0}

\node[text=drawColor,anchor=base west,inner sep=0pt, outer sep=0pt, scale=  1.98] at (615.18, 12.02) {0.40};
\end{scope}
\begin{scope}
\path[clip] (  0.00,  0.00) rectangle (794.97, 55.50);
\definecolor{drawColor}{RGB}{0,0,0}

\node[text=drawColor,anchor=base west,inner sep=0pt, outer sep=0pt, scale=  1.98] at (759.41, 12.02) {0.50};
\end{scope}
\end{tikzpicture}
	}\vspace{-2mm}\qquad%
    \renewcommand{\thesubfigure}{(c)}
	\subfigure[\scshape Partial abstention $f_{PAR}$]{
		\hspace{-3mm}
	  	\subfigure[\scshape Flags ($\epsilon=0.03$)]{
			\includegraphics[width=0.244\linewidth]
				{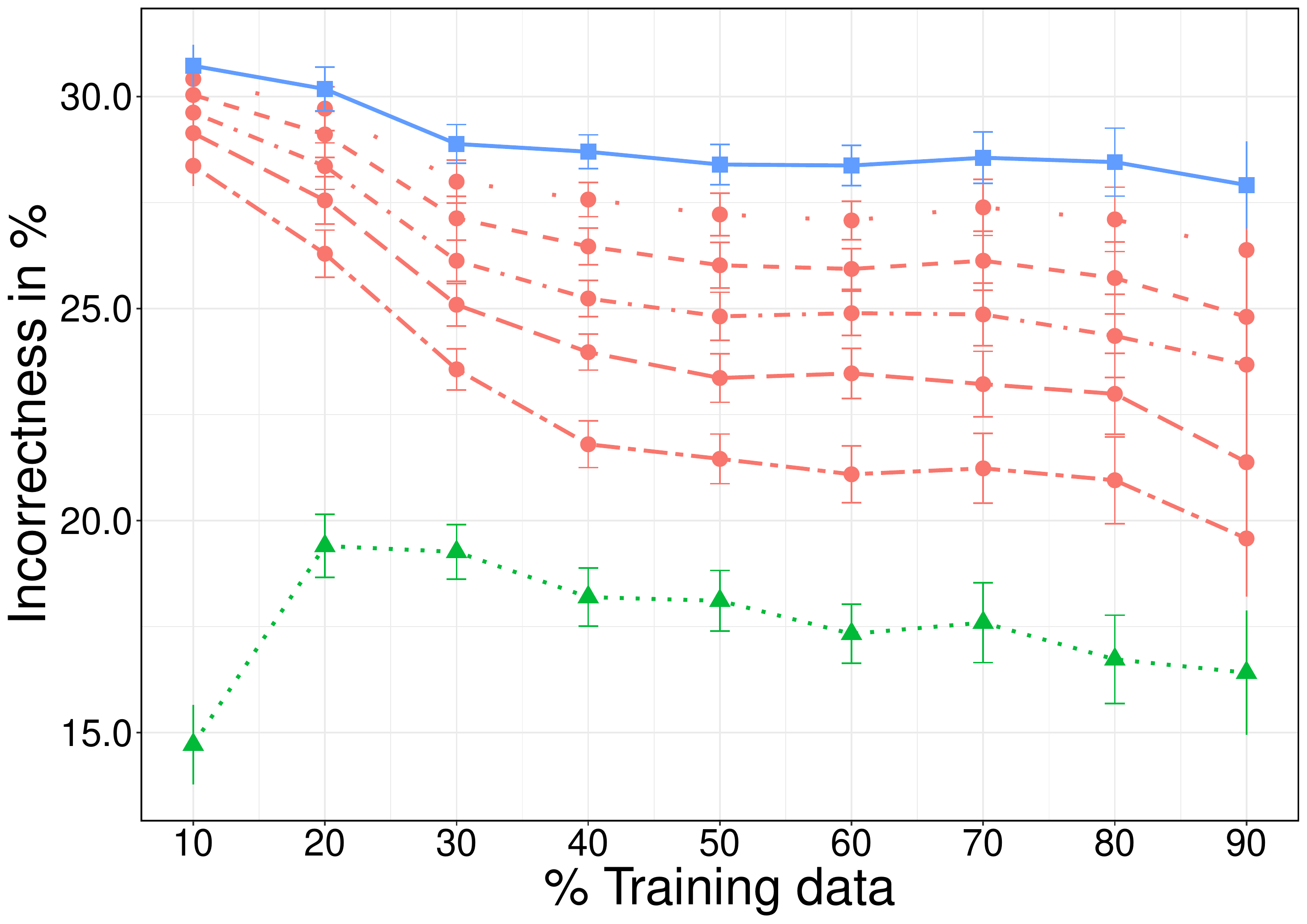}}
		\subfigure[\scshape Scene ($\epsilon=\num{1e-3}$)]{
			\includegraphics[width=0.244\linewidth]
				{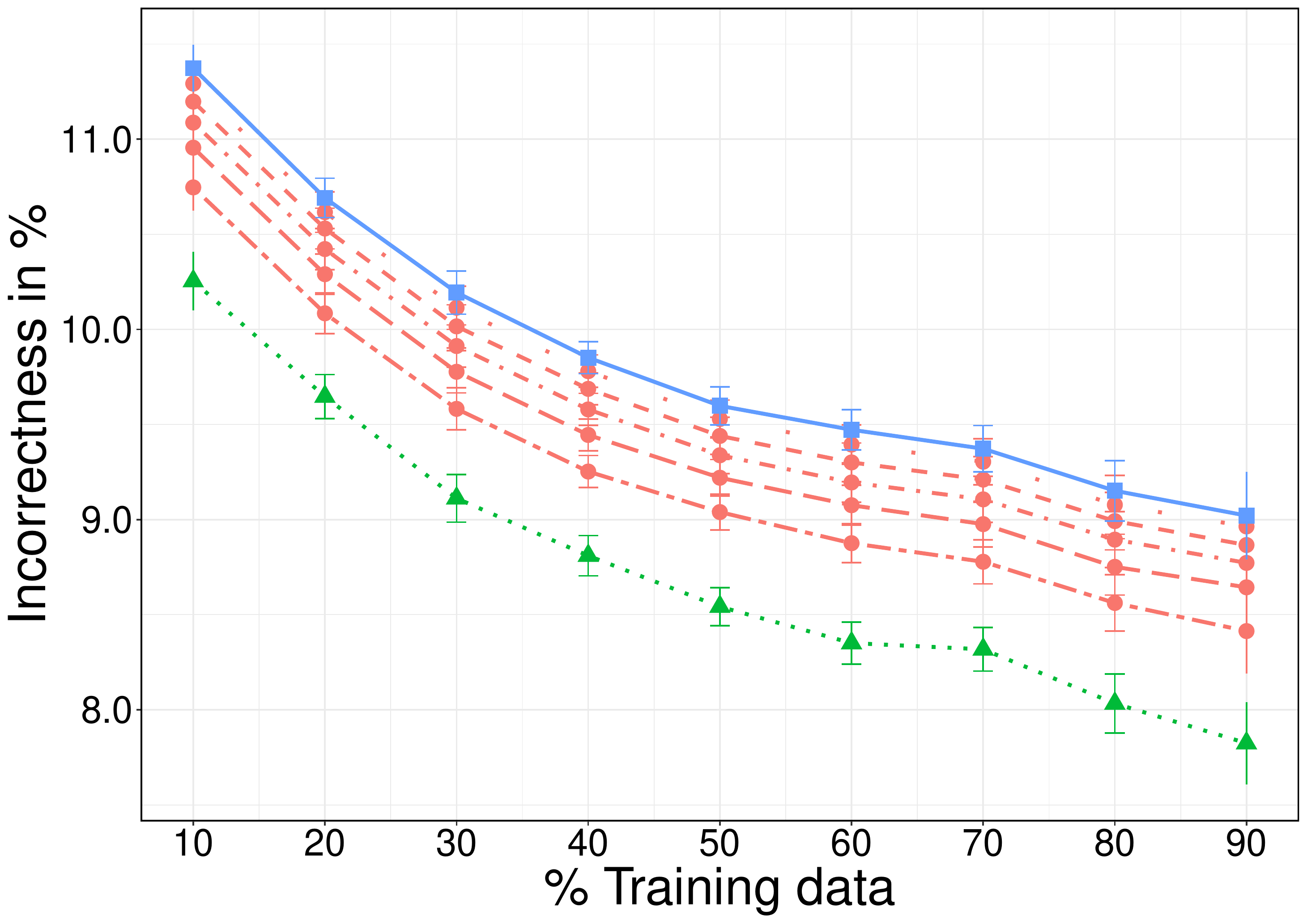}}
		\subfigure[\scshape Medical ($\epsilon=\num{1e-3}$)]{
			\includegraphics[width=0.244\linewidth]
				{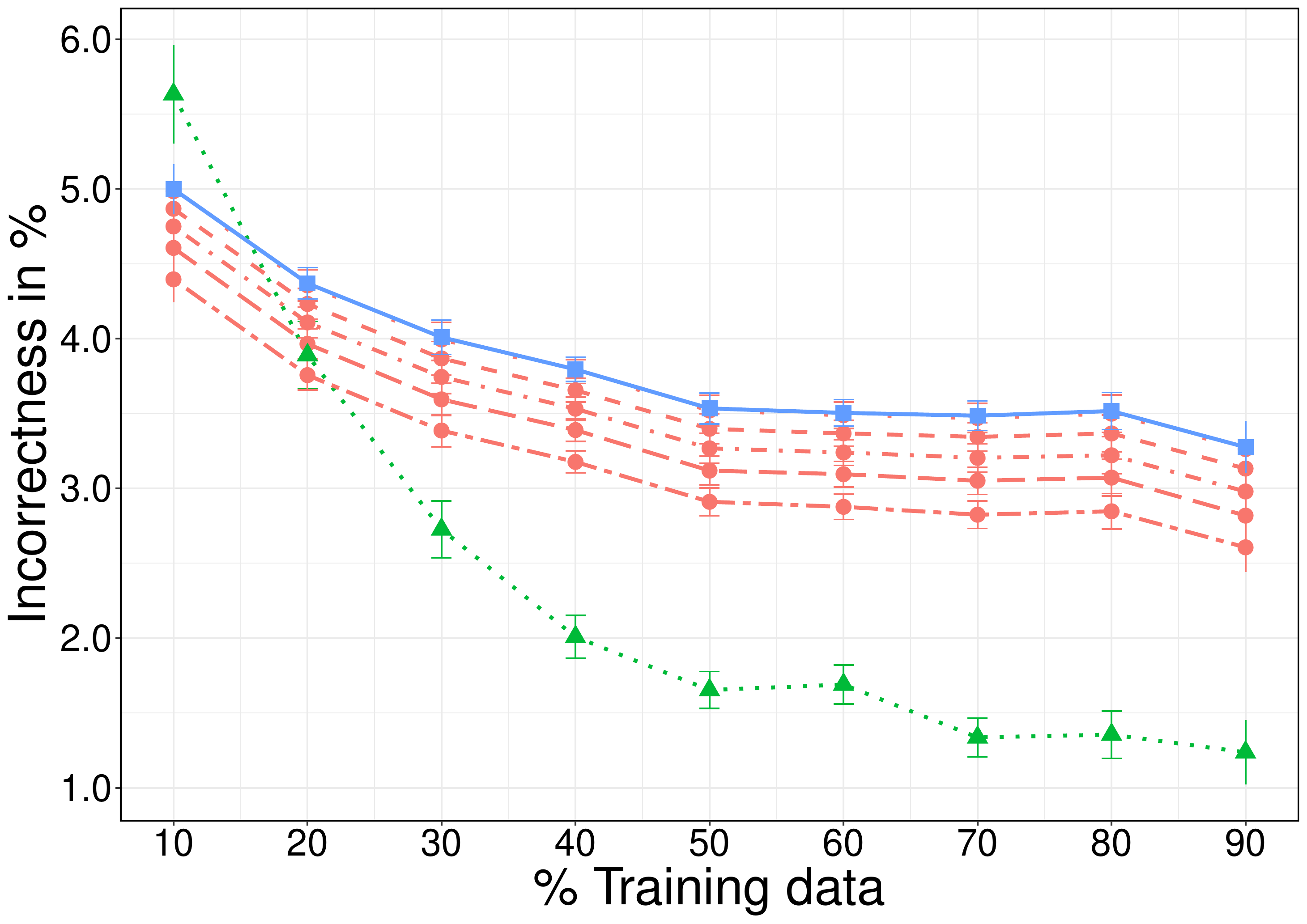}}
		\subfigure[\scshape CAL500 ($\epsilon=\num{3e-3}$)]{
			\includegraphics[width=0.244\linewidth]
				{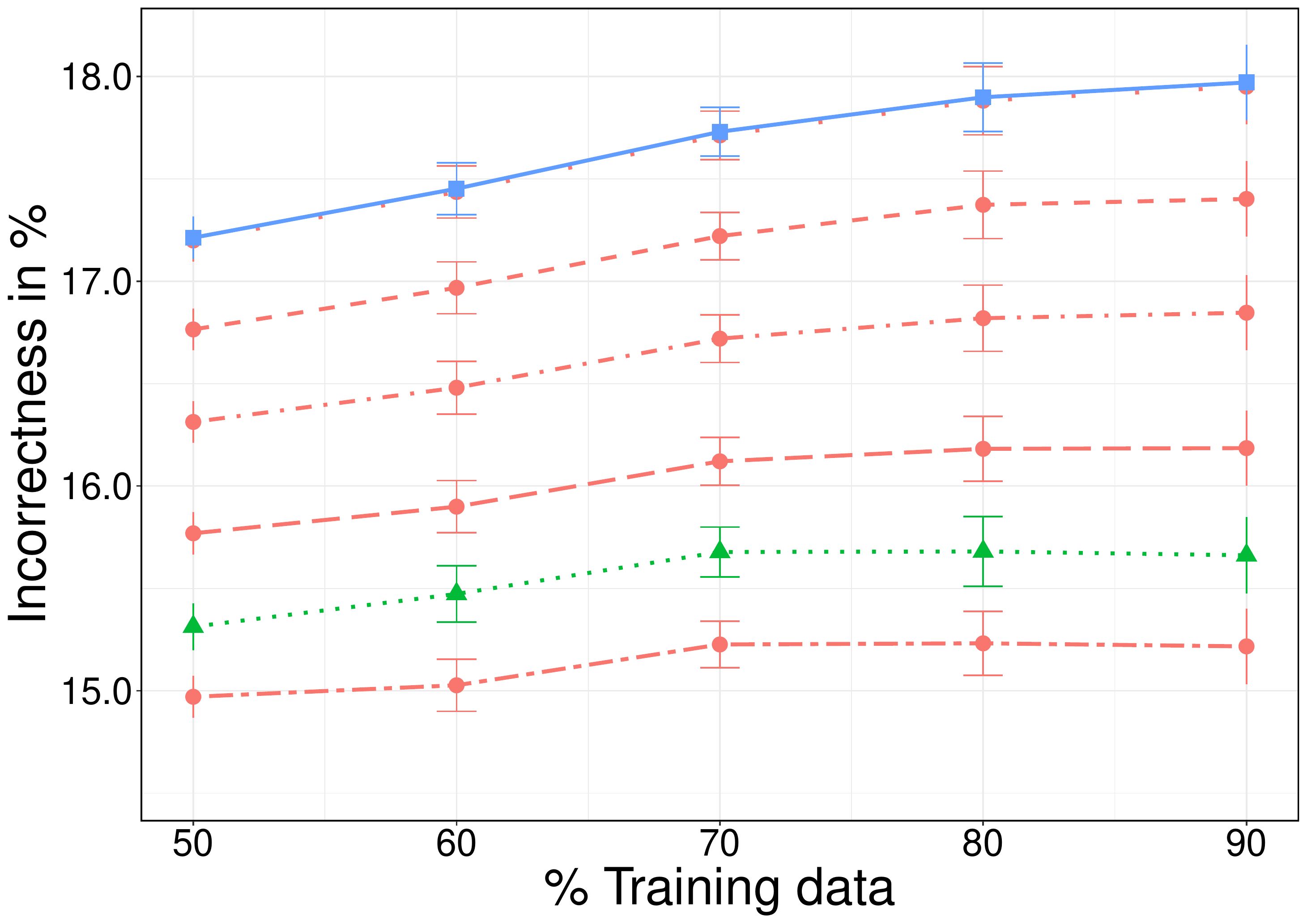}}
	}
	\resizebox{0.75\textwidth}{!}{%
\begin{tikzpicture}[x=1pt,y=0.4pt]
\begin{scope}
\path[clip] (  0.00,  0.00) rectangle (794.97, 59.50);
\definecolor{drawColor}{RGB}{0,0,0}

\node[text=drawColor, anchor=base west,inner sep=0pt, outer sep=0pt, scale=1.98] at ( 8.40, 10.52) {\!\!\bf{\small Threshold} $\gamma$:};
\end{scope}
\begin{scope}
\path[clip] (  0.00,  0.00) rectangle (794.97, 59.50);
\definecolor{drawColor}{RGB}{0,0,0}

\path[draw=drawColor,line width= 1.4pt,dash pattern=on 2pt off 2pt on 6pt off 2pt ,line join=round] ( 105.79, 23.33) -- (169.08, 23.33);
\end{scope}

\begin{scope}
\path[clip] (  0.00,  0.00) rectangle (794.97, 59.50);
\definecolor{drawColor}{RGB}{0,0,0}

\path[draw=drawColor,line width= 1.4pt,dash pattern=on 7pt off 3pt ,line join=round] (240.01, 23.33) -- (308.30, 23.33);
\end{scope}
\begin{scope}
\path[clip] (  0.00,  0.00) rectangle (794.97, 59.50);
\definecolor{fillColor}{RGB}{255,255,255}

\path[fill=fillColor] (375.70,  9.92) rectangle (461.06, 28.75);
\end{scope}
\begin{scope}
\path[clip] (  0.00,  0.00) rectangle (794.97, 59.50);
\definecolor{drawColor}{RGB}{0,0,0}

\path[draw=drawColor,line width= 1.4pt,dash pattern=on 1pt off 3pt on 4pt off 3pt ,line join=round] (384.24, 23.33) -- (452.52, 23.33);
\end{scope}
\begin{scope}
\path[clip] (  0.00,  0.00) rectangle (794.97, 59.50);
\definecolor{fillColor}{RGB}{255,255,255}

\path[fill=fillColor] (519.93,  9.92) rectangle (605.28, 28.75);
\end{scope}
\begin{scope}
\path[clip] (  0.00,  0.00) rectangle (794.97, 59.50);
\definecolor{drawColor}{RGB}{0,0,0}

\path[draw=drawColor,line width= 1.4pt,dash pattern=on 4pt off 4pt ,line join=round] (528.46, 23.33) -- (596.75, 23.33);
\end{scope}
\begin{scope}
\path[clip] (  0.00,  0.00) rectangle (794.97, 59.50);
\definecolor{fillColor}{RGB}{255,255,255}

\path[fill=fillColor] (664.15,  9.92) rectangle (749.51, 28.75);
\end{scope}
\begin{scope}
\path[clip] (  0.00,  0.00) rectangle (794.97, 59.50);
\definecolor{drawColor}{RGB}{0,0,0}

\path[draw=drawColor,line width= 1.4pt,dash pattern=on 1pt off 15pt ,line join=round] (672.69, 23.33) -- (740.97, 23.33);
\end{scope}
\begin{scope}
\path[clip] (  0.00,  0.00) rectangle (794.97, 59.50);
\definecolor{drawColor}{RGB}{0,0,0}

\node[text=drawColor,anchor=base west,inner sep=0pt, outer sep=0pt, scale=  1.98] at (182.51, 10.52) {0.05};
\end{scope}
\begin{scope}
\path[clip] (  0.00,  0.00) rectangle (794.97, 59.50);
\definecolor{drawColor}{RGB}{0,0,0}

\node[text=drawColor,anchor=base west,inner sep=0pt, outer sep=0pt, scale=  1.98] at (326.74, 10.52) {0.15};
\end{scope}
\begin{scope}
\path[clip] (  0.00,  0.00) rectangle (794.97, 59.50);
\definecolor{drawColor}{RGB}{0,0,0}

\node[text=drawColor,anchor=base west,inner sep=0pt, outer sep=0pt, scale=  1.98] at (470.96, 10.52) {0.25};
\end{scope}
\begin{scope}
\path[clip] (  0.00,  0.00) rectangle (794.97, 59.50);
\definecolor{drawColor}{RGB}{0,0,0}

\node[text=drawColor,anchor=base west,inner sep=0pt, outer sep=0pt, scale=  1.98] at (615.18, 10.52) {0.35};
\end{scope}
\begin{scope}
\path[clip] (  0.00,  0.00) rectangle (794.97, 59.50);
\definecolor{drawColor}{RGB}{0,0,0}

\node[text=drawColor,anchor=base west,inner sep=0pt, outer sep=0pt, scale=  1.98] at (759.41, 10.52) {0.45};
\end{scope}
\end{tikzpicture}
	}\vspace{-2mm}\qquad%
	\renewcommand{\thesubfigure}{(b)}
	\subfigure[\scshape Rejection threshold]{ 
		\hspace{-3mm}
		\subfigure[\scshape Flags ($\epsilon=0.05$)]{
			\includegraphics[width=0.244\linewidth]
				{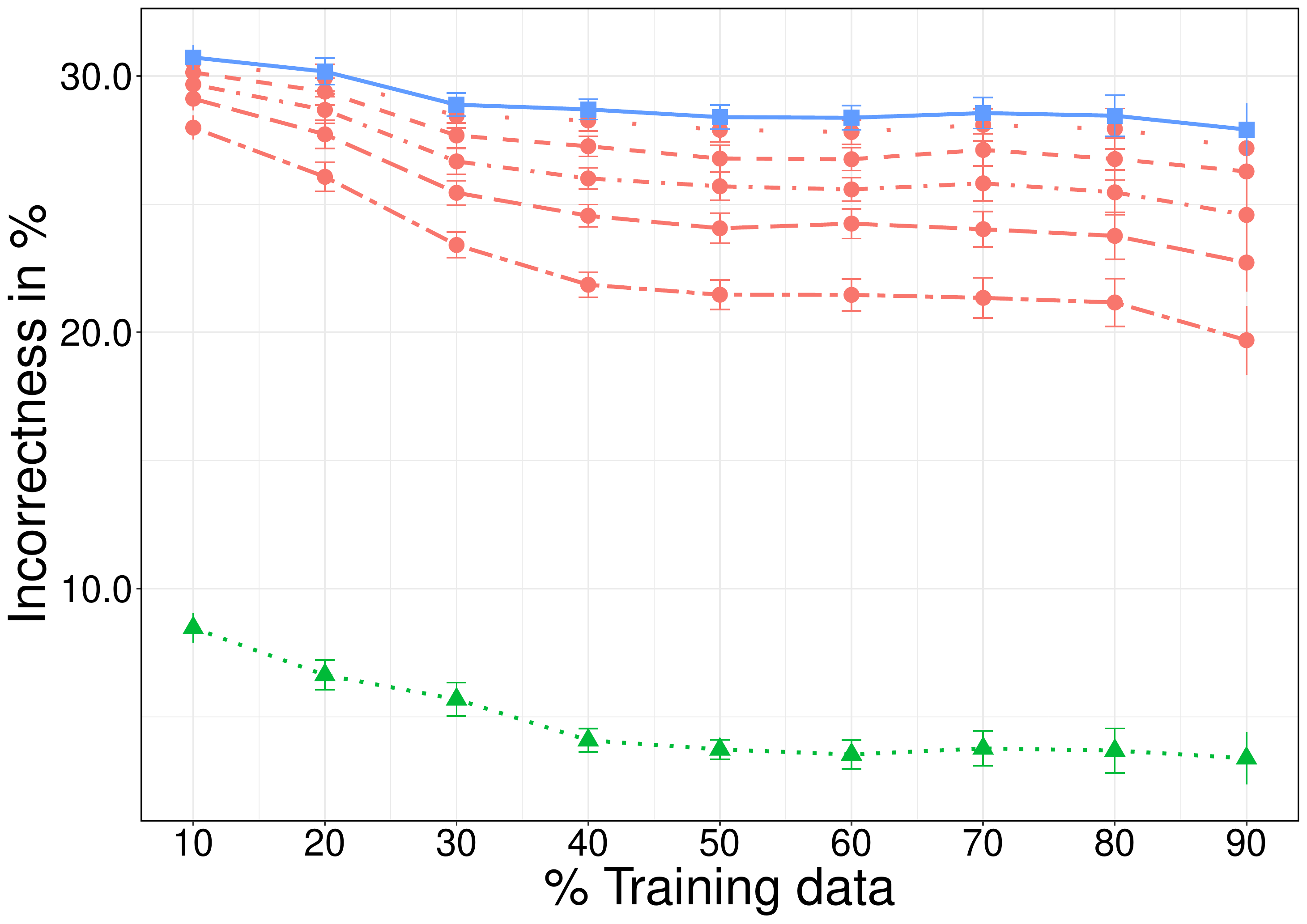}}
		\subfigure[\scshape Scene ($\epsilon=2e-3$)]{
			\includegraphics[width=0.244\linewidth]
				{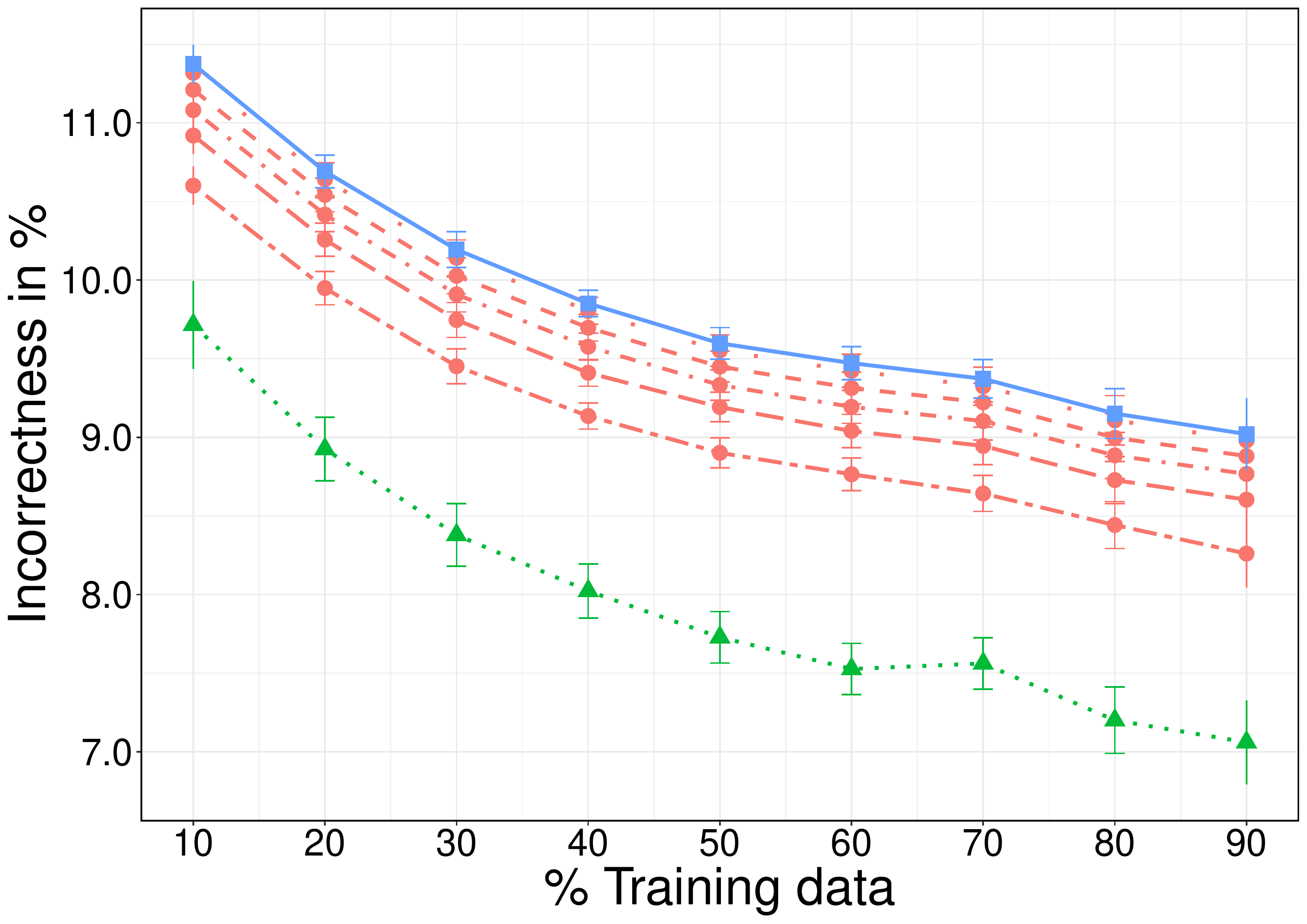}}
		\subfigure[\scshape Medical ($\epsilon=\num{3e-3}$)]{
			\includegraphics[width=0.244\linewidth]
				{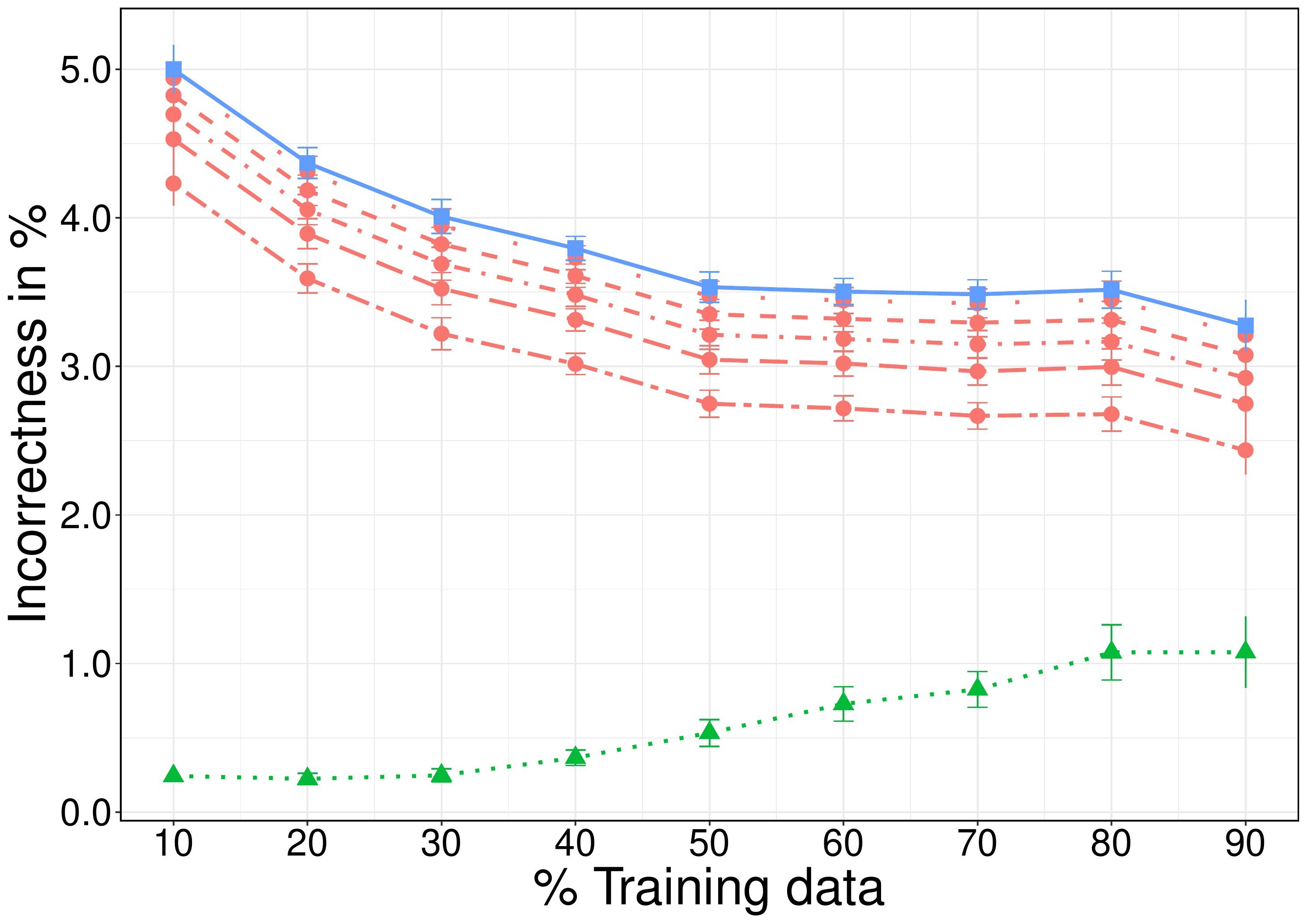}}
		\subfigure[\scshape CAL500 ($\epsilon=\num{5e-3}$)]{
			\includegraphics[width=0.244\linewidth]
				{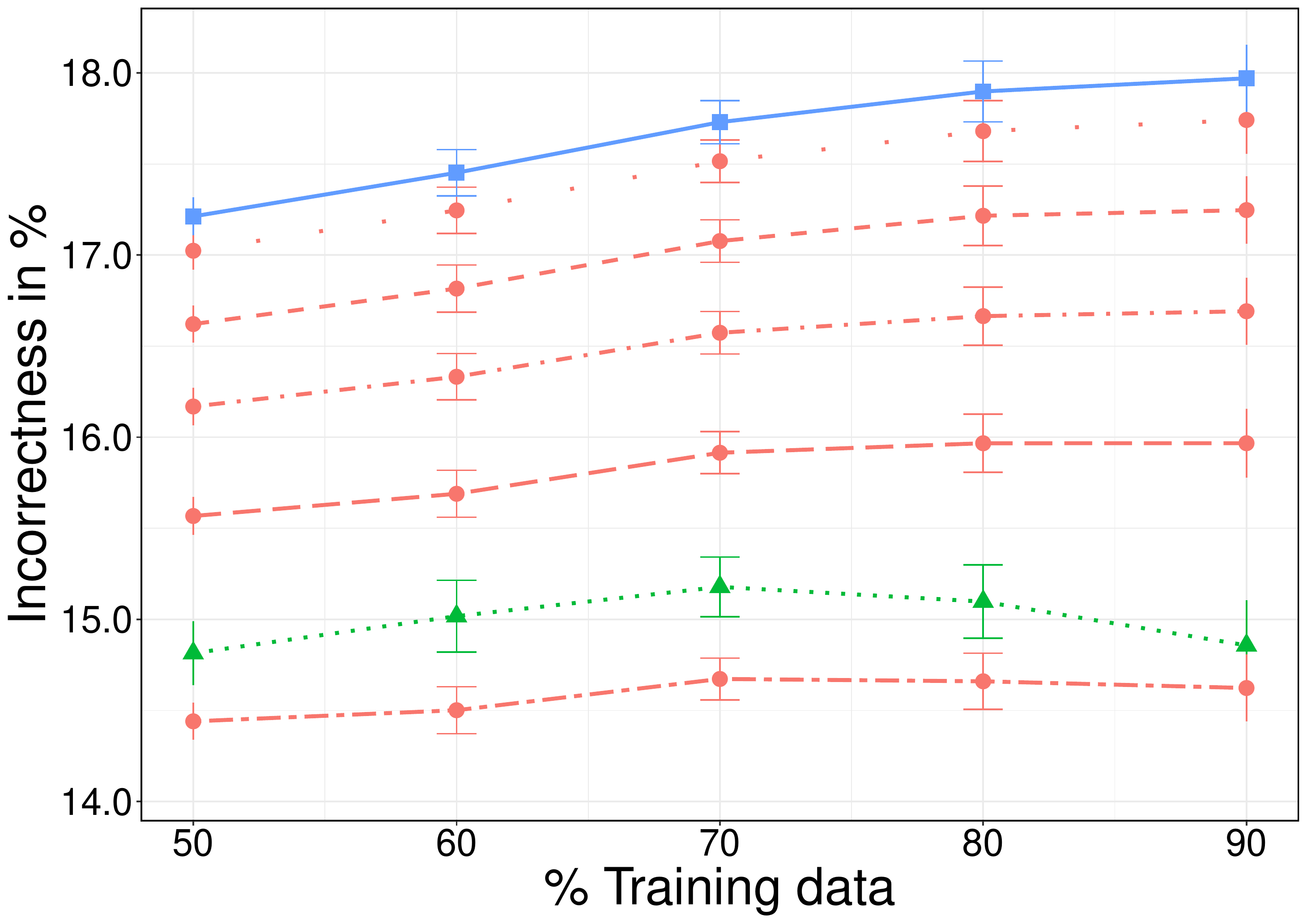}}
	}\vspace{-2mm}
	\caption{{\bf Downsampling - Credal sum product network - $\epsilon$ contamination.} Incorrectness (y-axis) evolution for the precise, partial abstention with $f_{SEP}$ (\nth{1} row) and $f_{PAR}$ (\nth{2} row) penalty functions, rejection (\nth{3} row), and skeptical approach, each one with different hyper-parameter levels (except to the precise approach), and with respect to different percentages of training data sets (x-axis).}
	\label{fig:cspnepsilonsamplingresults}
\end{figure}
\subparagraph{CSPN with $\epsilon$-contamination} Roughly speaking, we can see the same trends as the previous experiments, except by the fact that when we use the $\epsilon$-contamination space, it is necessary, in some cases, e.g. in the Medical data set, to inject much more imprecision values of $\epsilon$.

Note that, independently of the convex space $\mathscr{C}_{N_{\vect{w}}, s}$ or $\mathscr{C}_{\vect{w}, \epsilon}$ applied to the CSPN model, we can see that there is not a monotonic  convergence of the incorrectness measure, estimated by our skeptical approach (i.e. the green curve in Figures \ref{fig:cspnidmsamplingresults} and \ref{fig:cspnepsilonsamplingresults}), to the precisely-valued approach, and besides, this incorrectness performance at different levels of imprecision tends to have almost similar results to their counterparts' approaches. This behaviour is generally unexpected in a credal classifier when more information is injected into to model, yet such behaviour can be explained by the fact that when the tree structure of CSPN is built using a specific training data set, this same tree structure is generally no longer used for the other training data set. That means that the lower and upper estimation of $\vect{w}\in\mathscr{C}_{\vect{w},*}$ in any node of the tree structure of the CSPN model can not converge to a point-wise estimation of $\vect{w}$ when more information is available.  

Results obtained from applying all different imprecise classifiers on different skeptical approaches are sufficient to show how our skeptic approach with sets of probabilities can provide better benefits than those counterparts, when information is not enough. 
\paragraph{Partial Abstention vs. Skeptical inference with sets of probabilities}\label{sec:workrelated}
Based on the experimental results obtained previously, we can highlight two important points:

\begin{itemize}
	\item as regards the skeptical inference with sets of probabilities, we can conclude that it is significantly dependent on the \emph{convergence} of the sets of probability distributions towards a point-wise probability distribution as the number of observations $N$ increases, i.e.  $\credal_Y \xrightarrow{~N\rightarrow\infty~} \mathbb{P}_Y $. In order words, the marginal probability intervals $[\underline{p}_i, \overline{p}_i]$ must converge to a point-wise probability $p_i$ when $N$ is bigger, so that it is consistent with the precise approach at the limit. This behaviour ensure that the skeptic decision (Maximality or E-admissibility) can more often abstain when information is lacking or imprecise (i.e. epistemic uncertainty).

	 \item as regards the partial abstention approach, we can conclude that such approach, as well as rejection rules, does fail to take into account the supplemental information (or knowledge) provided in the training step in order to reduce the number of abstentions (or skeptical predictions). Indeed, it is because the penalty function $f_*$ does not depend of the number of observed samples, making the convergence of observed samples $N\rightarrow\infty$ inconsequential in Equation~\eqref{eq:partialgenloss}.
	 
		In order to improve the penalty function, one could suggest to define a penalty function $f_*(A(\hat{\vect{y}}), N)$ that depends on the set of indices $A(\hat{\vect{y}})$ and the number of observed samples $N$, so that the penalty value of $f_*(\cdot, \cdot)$ vanishes when $N\rightarrow\infty$. Nonetheless, we leave to the reader an open question based on the last proposal, should one still abstain after acquiring all the information (or knowledge) of a real-world data set?.
		
\end{itemize}

\section{Conclusion and discussion}
Describing our uncertainty by a set of probabilities over combinatorial domains such as binary vectors usually leads to difficult optimisation problems at the decision step. In this paper, we investigated those problems when considering the well-known Hamming loss, providing efficient inference methods and, when considering the binary relevance scheme, connecting it to the zero/one loss. 

In essence, we significantly reduced the complexity of computing exact skeptic, cautious predictions for general probability sets, and showed that in the Binary relevance scheme, those same predictions were reduced to partial vectors computable from marginal probability bounds over the labels.  

Experiments on the simulated data sets show that this last solution, when used as an outer-approximation in the general case, degrades in quality as the number of labels increase and the level of imprecision is mild. On the other hand, experiments on various real data sets show that making skeptical inferences generally provide quite satisfactory results on different scenarios, involving missing or noisy labels.

The results of experiments conducted to compare our skeptical approach against that rejection and abstaining approaches in terms of cautious (or skeptical) inference, have clearly demonstrated that our proposal reacts very well to lack of knowledge (or information) while their counterparts do not so. The last finding makes it clear that it is better to model our uncertainty by a set of probability distributions instead of a single probability distribution.


Another natural next step will be to solve the maximality criterion using other loss functions commonly used in multi-label problems, e.g. ranking loss, Jaccard loss, F-measure, and so on. As noticed in Remark~\ref{rem:imp_prec_loss}, such problems are likely to be much more intricate when considering sets of probabilities. 

Finally, let us notice that while this paper focused on the issue of multi-label learning problems, our results readily apply to any Boolean vectors of $m$ items. As Boolean vectors and structures as well as probability bounds naturally appear in a number of other applications, including occupancy grids~\cite{mouhagir2017using} or data bases~\cite{gatterbauer2014oblivious}, a future work would be to investigate how our present findings can help in such problems. 

\section*{Acknowledgements}
This work was partially carried out in the framework of the Labex MS2T, funded by the French Government, through the National Agency for Research (Reference ANR-11-IDEX-0004-02).
\bibliographystyle{abbrv}
\bibliography{Bibli}

\begin{thebibliography}{10}

\bibitem{antonucci2017multilabel}
A.~Antonucci and G.~Corani.
\newblock The multilabel naive credal classifier.
\newblock {\em International Journal of Approximate Reasoning}, 83:320--336,
  2017.

\bibitem{augustin2014introduction}
T.~Augustin, F.~P. Coolen, G.~de~Cooman, and M.~C. Troffaes.
\newblock {\em Introduction to imprecise probabilities}.
\newblock John Wiley \& Sons, 2014.

\bibitem{boutell2004learning}
M.~R. Boutell, J.~Luo, X.~Shen, and C.~M. Brown.
\newblock Learning multi-label scene classification.
\newblock {\em Pattern recognition}, 37(9):1757--1771, 2004.

\bibitem{alarcon2019imprecise}
Y.-C. Carranza-Alarcon and S.~Destercke.
\newblock Imprecise gaussian discriminant classification.
\newblock In {\em International Symposium on Imprecise Probabilities: Theories
  and Applications}, pages 59--67, 2019.

\bibitem{alarcon2021ipsita}
Y.~C. Carranza~Alarc\'on and S.~Destercke.
\newblock Distributionally robust, skeptical binary inferences in multi-label
  problems.
\newblock In A.~Cano, J.~De~Bock, E.~Miranda, and S.~Moral, editors, {\em
  Proceedings of the Twelveth International Symposium on Imprecise Probability:
  Theories and Applications}, volume 147 of {\em Proceedings of Machine
  Learning Research}, pages 51–60--51–60. PMLR, 06--09 Jul 2021.

\bibitem{alarcon2021imprecise}
Y.~C. {Carranza Alarc\'on} and S.~Destercke.
\newblock Imprecise gaussian discriminant classification.
\newblock {\em Pattern Recognition}, 112:107739, 2021.

\bibitem{chen2018robust}
R.~Chen and I.~C. Paschalidis.
\newblock A robust learning approach for regression models based on
  distributionally robust optimization.
\newblock {\em The Journal of Machine Learning Research}, 19(1):517--564, 2018.

\bibitem{corani2015credal}
G.~Corani and A.~Mignatti.
\newblock Credal model averaging for classification: representing prior
  ignorance and expert opinions.
\newblock {\em International Journal of Approximate Reasoning}, 56:264--277,
  2015.

\bibitem{corani2008learning}
G.~Corani and M.~Zaffalon.
\newblock Learning reliable classifiers from small or incomplete data sets: the
  naive credal classifier 2.
\newblock {\em Journal of Machine Learning Research}, 9(Apr):581--621, 2008.

\bibitem{Correia2019Towards}
A.~H. Correia and C.~P.~de Campos.
\newblock {\em Towards scalable and robust sum-product networks}, pages
  409--422.
\newblock Lecture Notes in Computer Science. Springer, Germany, 2019.

\bibitem{dembczynski2012label}
K.~Dembczy{\'n}ski, W.~Waegeman, W.~Cheng, and E.~H{\"u}llermeier.
\newblock On label dependence and loss minimization in multi-label
  classification.
\newblock {\em Machine Learning}, 88(1-2):5--45, 2012.

\bibitem{destercke2014multilabel}
S.~Destercke.
\newblock Multilabel prediction with probability sets: the hamming loss case.
\newblock In {\em Information Processing and Management of Uncertainty in
  Knowledge-Based Systems}, pages 496--505. Springer, 2014.

\bibitem{furnkranz2008multilabel}
J.~F{\"u}rnkranz, E.~H{\"u}llermeier, E.~L. Menc{\'\i}a, and K.~Brinker.
\newblock Multilabel classification via calibrated label ranking.
\newblock {\em Machine Learning}, 73(2):133--153, 2008.

\bibitem{gatterbauer2014oblivious}
W.~Gatterbauer and D.~Suciu.
\newblock Oblivious bounds on the probability of boolean functions.
\newblock {\em ACM Transactions on Database Systems (TODS)}, 39(1):1--34, 2014.

\bibitem{hermans2009imprecise}
F.~Hermans, E.~Quaeghebeur, et~al.
\newblock Imprecise markov chains and their limit behavior.
\newblock {\em Probability in the Engineering and Informational Sciences},
  23(4):597--635, 2009.

\bibitem{hu2018does}
W.~Hu, G.~Niu, I.~Sato, and M.~Sugiyama.
\newblock Does distributionally robust supervised learning give robust
  classifiers?
\newblock In {\em International Conference on Machine Learning}, pages
  2029--2037, 2018.

\bibitem{jain2016extreme}
H.~Jain, Y.~Prabhu, and M.~Varma.
\newblock Extreme multi-label loss functions for recommendation, tagging,
  ranking \& other missing label applications.
\newblock In {\em Proceedings of the 22nd ACM SIGKDD International Conference
  on Knowledge Discovery and Data Mining}, pages 935--944, 2016.

\bibitem{kotlowski2016surrogate}
W.~Kotlowski and K.~Dembczy{\'n}ski.
\newblock Surrogate regret bounds for generalized classification performance
  metrics.
\newblock In {\em Asian Conference on Machine Learning}, pages 301--316, 2016.

\bibitem{koyejo2015consistent}
O.~O. Koyejo, N.~Natarajan, P.~K. Ravikumar, and I.~S. Dhillon.
\newblock Consistent multilabel classification.
\newblock In {\em Advances in Neural Information Processing Systems}, pages
  3321--3329, 2015.

\bibitem{levi1980a}
I.~Levi.
\newblock {\em The Enterprise of Knowledge}.
\newblock MIT Press, London, 1980.

\bibitem{maua2017credal}
D.~D. Mau{\'a}, F.~G. Cozman, D.~Conaty, and C.~P. Campos.
\newblock Credal sum-product networks.
\newblock In {\em Proceedings of the Tenth International Symposium on Imprecise
  Probability: Theories and Applications}, pages 205--216, 2017.

\bibitem{mouhagir2017using}
H.~Mouhagir, V.~Cherfaoui, R.~Talj, F.~Aioun, and F.~Guillemard.
\newblock Using evidential occupancy grid for vehicle trajectory planning under
  uncertainty with tentacles.
\newblock In {\em 2017 IEEE 20th International Conference on Intelligent
  Transportation Systems (ITSC)}, pages 1--7. IEEE, 2017.

\bibitem{park2017general}
J.~Park and S.~Boyd.
\newblock General heuristics for nonconvex quadratically constrained quadratic
  programming.
\newblock {\em arXiv preprint arXiv:1703.07870}, 2017.

\bibitem{peharz2014learning}
R.~Peharz, R.~Gens, and P.~Domingos.
\newblock Learning selective sum-product networks.
\newblock In {\em LTPM workshop}, volume~32, 2014.

\bibitem{pillai2013multi}
I.~Pillai, G.~Fumera, and F.~Roli.
\newblock Multi-label classification with a reject option.
\newblock {\em Pattern Recognition}, 46(8):2256--2266, 2013.

\bibitem{plass2019reliable}
J.~Plass, M.~E. Cattaneo, T.~Augustin, G.~Schollmeyer, and C.~Heumann.
\newblock Reliable inference in categorical regression analysis for
  non-randomly coarsened observations.
\newblock {\em International Statistical Review}, 87(3):580--603, 2019.

\bibitem{poon2011sum}
H.~Poon and P.~Domingos.
\newblock Sum-product networks: A new deep architecture.
\newblock In {\em 2011 IEEE International Conference on Computer Vision
  Workshops (ICCV Workshops)}, pages 689--690. IEEE, 2011.

\bibitem{srivastava2007bayesian}
S.~Srivastava.
\newblock {\em Bayesian Minimum Expected Risk Estimation of Distributions for
  Statistical Learning}.
\newblock University of Washington, 2007.

\bibitem{Troffaes07}
M.~Troffaes.
\newblock Decision making under uncertainty using imprecise probabilities.
\newblock {\em Int. J. of Approximate Reasoning}, 45:17--29, 2007.

\bibitem{tsoumakas2007multi}
G.~Tsoumakas and I.~Katakis.
\newblock Multi-label classification: An overview.
\newblock {\em International Journal of Data Warehousing and Mining (IJDWM)},
  3(3):1--13, 2007.

\bibitem{nguyen2019}
E.~H. Vu-Linh~Nguyen.
\newblock Reliable multilabel classification: Prediction with partial
  abstention.
\newblock In {\em Thirty-Fourth AAAI Conference on Artificial Intelligence},
  2019.

\bibitem{walley1996inferences}
P.~Walley.
\newblock Inferences from multinomial data: learning about a bag of marbles.
\newblock {\em Journal of the Royal Statistical Society: Series B
  (Methodological)}, 58(1):3--34, 1996.

\bibitem{yang2014nested}
G.~Yang, S.~Destercke, and M.-H. Masson.
\newblock Nested dichotomies with probability sets for multi-class
  classification.
\newblock In {\em ECAI}, pages 363--368, 2014.

\bibitem{zaffalon2002naive}
M.~Zaffalon.
\newblock The naive credal classifier.
\newblock {\em Journal of statistical planning and inference}, 105(1):5--21,
  2002.

\end{thebibliography}
\newpage
\appendix 

\section{A brief description of different Credal Classifiers used in Section~\ref{sec:expebinarybr}}\label{app:credalclassifier}
In this appendix, we introduce basic and necessary notations to understand how we can compute the lower and upper conditional probabilities of each label.

\subsection{Naive Credal classifier} \label{app:nccmethod}
In this section, we briefly remind  technical notions about how to obtain marginal probability intervals $[\underline{P}_{\vect{x}}(Y_{\{j\}}=y_j), \overline{P}_{\vect{x}}(Y_{\{j\}}=y_j)]$ over each label $j^{th}$ by using the imprecise classifier called the naïve credal classifier (NCC)\footnote{Bearing in mind that it can be replaced by any other (credal) imprecise classifiers, see~\cite[\S 10]{augustin2014introduction} or \cite{alarcon2019imprecise}.}~\cite{zaffalon2002naive}, which extends the classical naive Bayes classifier (NBC). We here remind its main features, and refer to Zaffalon~\cite{zaffalon2002naive} for further details.

NCC is based on the same assumptions as NBC, meaning that it also assumes the attribute independence given a class. In our case, we will consider each label as a simple binary classification problem, and will estimate the marginal probabilities using the model
\begin{align}\label{eq:marginalbayes}
	P(Y_{\{j\}}=y_j|X=\newinstance) =
		\frac{P(Y_{\{j\}}=y_j)
				\prod_{i=1}^d P(X_i = x_i|Y_{\{j\}}=y_j)
			}{\sum_{y_l\in\{0, 1\}}P(Y_{\{j\}}=y_l)
				\prod_{i=1}^d P(X_i = x_i|Y_{\{j\}}=y_l)
			}.
\end{align}

Computing lower and upper probability bounds  $[\underline{P},\overline{P}]$ over all possible marginals $\mathcal{P}_{Y_j}$ and conditional $\mathcal{P}_{X|Y_j}$ distributions can be performed solving the following minimization/maximization problem for Equation~\eqref{eq:marginalbayes}
\begin{align}
	\underline{P}(Y_{\{j\}}&=y_j|X=\newinstance) = \min_{P\in\mathcal{P}_{Y_j}} \min_{\substack{P\in\mathcal{P}_{X_i|Y_j}\\i=1,\dots,d}} P(Y_{\{j\}}=y_j|X=\newinstance),\label{eq:lowernccprob}\\
	\overline{P}(Y_{\{j\}}&=y_j|X=\newinstance) = \max_{P\in\mathcal{P}_{Y_j}} \max_{\substack{P\in\mathcal{P}_{X_i|Y_j}\\i=1,\dots,d}} P(Y_{\{j\}}=y_j|X=\newinstance).\label{eq:uppernccprob}
\end{align}

For practical purposes and as the number of training data is usually sufficient, we assume a precise estimation of the marginal distribution $\mathbb{P}_{Y_j}$ instead of a credal set $\mathcal{P}_{Y_j}$, so that the optimization over the credal of marginal distributions $\mathcal{P}_{Y_j}$ can be ignored. In this case, it can be shown~\cite{zaffalon2002naive} that the last equations are equivalent to
\begin{align}
	\underline{P}(Y_{\{j\}}=y_j|X=\newinstance) &=
		\left(1+
		\frac{P(Y_{\{j\}}=\overline{y}_j)
			\prod_{i=1}^d \overline{P}(X_i = x_i|Y_{\{j\}}=\overline{y}_j)}
			{P(Y_{\{j\}}=y_j)
			\prod_{i=1}^d \underline{P}(X_i = x_i|Y_{\{j\}}=y_j)}
		\right)^{-1}, \\
	\overline{P}(Y_{\{j\}}=y_j|X=\newinstance) &=
		\left(1+
		\frac{P(Y_{\{j\}}=\overline{y}_j)
			\prod_{i=1}^d  \underline{P}(X_i = x_i|Y_{\{j\}}=\overline{y})
			}
			{P(Y_{\{j\}}=y_j)
			\prod_{i=1}^d \overline{P}(X_i = x_i|Y_{\{j\}}=y_j)}
		\right)^{-1},
\end{align}
where $\overline{y}_j$ is the complement to $y_j$. To obtain the other bounds of those equations, we use the Imprecise Dirichlet model (IDM)~\cite{walley1996inferences}
\begin{align}
	\underline{P}(X_i=x_i| Y_{\{j\}}=y_j) =  \frac{n_i(y_j)}{N_{y_j}+s}
	\quad &\text{and}\quad
	\overline{P}(X_i=x_i| Y_{\{j\}}=y_j) =  \frac{n_i(y_j)+s}{N_{y_j}+s}
\end{align}
where $s\in\mathbb{R}$ is the hyper-parameter that controls the imprecision level. $n_i(y_j)$ is the number of instances in the training set where $X_i=x_i$ and the label value is $y_j$ and $N_{y_j}$ is the number of instances in the training set where the label value is $y_j$. Note that if the input space is continuous, it shall be discretized in $z$ equal-width intervals in order to get values of $n_i(y_j)$. 

Note that the higher $s$ is, the wider the intervals $[\underline{P}_{\vect{x}}(Y_{\{j\}}=y_j),\overline{P}_{\vect{x}}(Y_{\{j\}}=y_j)]$ are. For $s=0$, we retrieve the classical NBC with precise predictions, and for high enough values of $s\!>>>\!0$, the NCC model will make vacuous predictions (i.e. abstain for all labels $\forall i, Y_i=*$).



\subsection{Imprecise Gaussian discriminant classifier}
\label{app:igdadescription}
In this section, we briefly remind technical notions on how we can obtain marginal probability bounds of the $j^{th}$ label, i.e. $[\underline{P}_{\vect{x}}(Y_{\{j\}}=y_j), \overline{P}_{\vect{x}}(Y_{\{j\}}=y_j)]$, by using the imprecise Gaussian discriminant analysis (IGDA), and we refer to \cite{alarcon2021imprecise} for further details.

IGDA is an imprecise version of the classical Gaussian discriminant analysis (GDA), and as such, it is based on the same assumptions as GDA. That means, the conditional distribution $\mathbb{P}_{X|Y_{\{i\}}=y_i}$ is also modelled as a multivariate Gaussian distribution $\mathcal{N}(\mu_{Y_{\{i\}}=y_i}, \Sigma_{Y_{\{i\}}=y_i})$. However, IGDA does not just use a single conditional distribution as GDA, but a set of conditional distributions (or a set of Gaussian distributions) $\credal_{X|Y_{\{i\}}=y_i}$, and moreover, IGDA uses a robust Bayesian inference approach with a set of near-ignorance priors to estimate the set of posterior distribution of parameter $\mu_{Y_{\{i\}}=y_i}$. Thus, the set of conditional Gaussian distributions is
\begin{equation}\label{eq:setgaussianpost}
\mathscr{P}_{X|Y_{\{i\}}=y_i} = \left\{\mathbb{P}_{X|Y_{\{i\}}=y_i} ~\Big|~ \mathbb{P}_{X|Y_{\{i\}}=y_i} \sim \mathcal{N}(\mu_{Y_{\{i\}}=y_i}, \widehat{\Sigma}_{Y_{\{i\}}=y_i}),~\mu_{Y_{\{i\}}=y_i} \in \mathbb{G}_{Y_{\{i\}}=y_i} \right\},
\end{equation}
where $\widehat{\Sigma}_{Y_{\{i\}}=y_i}$ is the precise estimation of covariance matrix (i.e. the empirical covariance matrice of label $Y_{\{i\}}=y_i$) and $\mathbb{G}_{Y_{\{i\}}=y_i}$ is a convex space of estimated values for the mean $\mu_{Y_{\{i\}}=y_i}$ defined as follows
\begin{equation}
\mathbb{G}_{Y_{\{i\}}=y_i} = \left\{  \widehat{\mu}_{Y_{\{i\}}=y_i} \in \mathbb{R}^p ~\Bigg|~
  \widehat{\mu}_{i,Y_{\{i\}}=y_i} \in \left[ \frac{-\tau + n_{y_i} \overline{\boldsymbol x}_{i,n_{y_i}}}{n_{y_i}}, \frac{\tau + n_{y_i} \overline{\boldsymbol x}_{i,n_{y_i}}}{n_{y_i}} \right],
\forall i = \{ 1,...,p \}
\right\},
\end{equation}
where $\tau$ is the hyper-parameter that controls the imprecision level of the marginal probability interval (i.e. the size of the interval).

In the same way as the NCC model estimes the lower and upper probability bounds of the $j^{th}$ label, i.e. Equations \eqref{eq:lowernccprob} and \eqref{eq:uppernccprob}, IGDA does the same by applying the Bayes' theorem and the maximality criterion\footnote{The same results is obtained if we use the interval-dominance criterion.} on a binary space $y_i\in\{0, 1\}$ in order to get the lower and upper probabilities bounds over all possible conditional distributions of Equation~\eqref{eq:setgaussianpost} as follows
\begin{align}
	\underline{P}(Y_{\{j\}}=y_j|X=\newinstance) &\propto
		\min_{P\in\mathcal{P}_{Y_j}}
		\min_{P\in\mathcal{P}_{X|Y_j}}
		P(Y_{\{j\}}=y_j)
		P(X = \newinstance |Y_{\{j\}}=y_j),
		\label{eq:lowerigdaprob}\\
	\overline{P}(Y_{\{j\}}=y_j|X=\newinstance) &\propto
		\max_{P\in\mathcal{P}_{Y_j}}
		\max_{P\in\mathcal{P}_{X|Y_j}}
		P(Y_{\{j\}}=y_j)
		P(X = \newinstance |Y_{\{j\}}=y_j).
		\label{eq:upperigdaprob}
\end{align}

Once again, as in the NCC estimation case, we assume a precise estimation of the marginal distribution $\mathbb{P}_{Y_j}$ instead of a credal set $\mathcal{P}_{Y_j}$. Thus, the optimisation problems of Equations \eqref{eq:lowerigdaprob} and \eqref{eq:upperigdaprob} can be reduced by solving
\begin{align}
\underline{P}(X=\newinstance|Y_{\{j\}}=y_j) &= \underset{P \in \mathscr{P}_{X|Y_{\{i\}}=y_i}}{\inf}  ~ P(X=\newinstance|Y_{\{j\}}=y_j),
	\label{eq:infconddistrib} \\
\overline{P}(X=\newinstance|Y_{\{j\}}=y_j) &= \underset{P \in \mathscr{P}_{X|Y_{\{i\}}=y_i}}{\sup} ~ P(X=\newinstance|Y_{\{j\}}=y_j).
	\label{eq:subconddistrib}
\end{align}
As $\mathscr{P}_{X|Y_{\{i\}}=y_i}$ is a set of Gaussian distributions, the probability bounds of Equations \eqref{eq:infconddistrib} and \eqref{eq:subconddistrib} are respectively obtained by calculating the mean bounds as follows
\begin{align}
\underline{\mu}_{y_i} &= \underset{\mu_{y_i} \in \mathbb{G}_{y_i}}{\arg \inf}  -\frac{1}{2}  (\newinstance - \mu_{y_i})^T\widehat{\Sigma}_{y_i}^{-1}(\newinstance - \mu_{y_i}), \label{eq:infgaussian}\\
\overline{\mu}_{y_i} &=  \underset{\mu_{y_i}  \in \mathbb{G}_{y_i}}{\arg \sup} -\frac{1}{2}  (\newinstance - \mu_{y_i})^T\widehat{\Sigma}_{y_i}^{-1}(\newinstance - \mu_{y_i}),  \label{eq:supgaussian}
\end{align}
 where $\hat{\Sigma}_{y_i}^{-1}$ is the inverse of the covariance matrix and the subscript $y_i$ denotes $Y_{\{i\}}=y_i$. Besides, depending on the internal structure of the covariance matrix $\hat{\Sigma}_k$, the optimisation problems of Equations \eqref{eq:infgaussian} and \eqref{eq:supgaussian} can be solved easier and we can obtain four variants with different time complexities, see Table~\ref{tbl:diffvariantigda}.
\begin{table}[h]
	\centering
	\begin{tabular}{|l |l | c | c |} \hline
	 Name & Assumptions & Acronym & \shortstack{Inference\\Complexity}\\ [0.5ex] \hline\hline
	 \hline
Imprecise \textbf{Quadratic} Discriminant Analysis &
Heteroscedasticity: $\hat{\Sigma}_{Y_{\{j\}}=y_j} = \hat{\Sigma}_{y_j}$ & IQDA & $\ge\complexity{p^2}$\\
Imprecise \textbf{Linear} Discriminant Analysis &
Homoscedasticity: $\hat{\Sigma}_{Y_{\{j\}}=y_j} =\hat{\Sigma}$ & ILDA & $\ge\complexity{p^2}$ \\
Imprecise \textbf{Naïve} Discriminant Analysis &
Feature independence: $\hat{\Sigma}_{Y_{\{j\}}=y_j} = \bm \hat{\sigma}_{y_j}^T \mathbb{I}$&  INDA &   $\complexity{p}$ \\
Imprecise \textbf{Euclidean} Discriminant Analysis &
Unit-variance feature indep.: $\hat{\Sigma}_{Y_{\{j\}}=y_j} = \mathbb{I}$ &  IEDA &  $\complexity{p}$\\
	 \hline
	\end{tabular}
	\caption{Different variants of IGDA classifier}
	\label{tbl:diffvariantigda}
\end{table}

Note that we can retrieve the classical GDA model with precise predictions when the value of hyper-parameter $\tau =0$ or if the number of samples $n_{y_j}$ of sub-population $Y_{\{j\}}=y_j$ is much bigger (or tends to infinity).

\subsection{Credal Sum-Product Network classifier}
Like in the previous sections, we remind major technical features regarding how to obtain marginal probability intervals over each label $Y_{\{j\}}$, i.e. $[\underline{P}_{\vect{x}}(Y_{\{j\}}=y_j), \overline{P}_{\vect{x}}(Y_{\{j\}}=y_j)]$, by using the Credal Sum-Product Network (CSPN), which extends the \emph{precise} deep probabilistic graphical model known as Sum-Product Network (SPN)~\cite{poon2011sum} to the \emph{imprecise} probabilistic setting. For further technical details, we refer to D.D. Mau\'a et al.~\cite{maua2017credal}.

A Sum-Product Network\footnote{Poon and Domingos introduced SPN in~\cite{poon2011sum} as a new deep architecture in order to solve the intractable inference problem of Graphical models. SPN computes the inference in a time polynomial by traversing the network.} $\mathbb{S}$ represents a joint probability distribution $\mathbb{P}$ over a set of random variables by a rooted weighted directed acyclic graph with indicator variables (or univariate distributions) as leaves, and sums and products operation as internal nodes (i.e. weighted sums and products of smaller SPNs). The directed edges  $i\rightarrow j$ in $\mathbb{S}$ are associated with non-negative \textbf{weights} $w_{i,j}$, such that: (1) the directed edges from a product node to its children $j$ are labelled with a weight of one, and (2) the directed edges from a sum node to its children are labelled with a corresponding weight $w_{i,j}$ such that $\sum_j w_{ij} = 1$.

To ensure that $\mathbb{S}$ represents a valid probability distribution and computes the MAP inferences in a linear time, $\mathbb{S}$ must satisfy the following properties: \emph{Complenteness}, \emph{Decomposition}, \emph{Normalization}, and \emph{Selectivity} (for further details see~\cite{peharz2014learning}). The SPNs satisfying such properties are called selective SPNs. Becsides, as we aim to obtain the marginal probability $p_i:=P_\newinstance(Y_i=y_i)$ of the label $Y_i$ from a binary classification (or the probability interval $[\underline{p}_i, \overline{p}_i]$ for the case of the imprecise setting), we decided to use the \textbf{class-selective} SPNs approach proposed by Correia A.H et al in~\cite{Correia2019Towards}. \textbf{Class-selective} SPNs implement a suited architecture for classification tasks (see Figure~\ref{fig:csspnarchiteture}) and reduces the inference time by using memoization techniques.
\begin{figure}[!ht]
	\centering
	\includegraphics[scale=1]{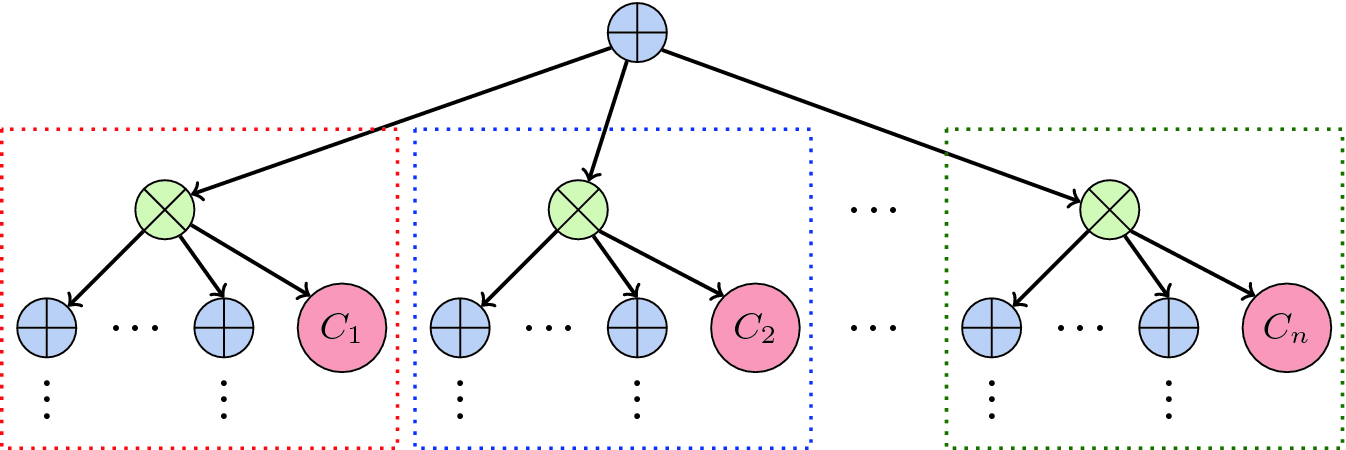}
	\caption{Class-selective SPN architecture~\cite{Correia2019Towards}, in which each $C_i$ is a leaf node denoting a different class (\{0,1\} in our case) and allowing active only one of the sub-networks $C_i$.}
	\label{fig:csspnarchiteture}
\end{figure}

A class-selective Credal SPN (CS-CSPN) is defined similarly, except that instead of using a singleton weight vector $\vect{w}$ in each sum node, it uses a set of weight vectors which lives in a convex space with constraints $\mathscr{C}_{\vect{w}, *}$. We decided to use two different constrains;
\begin{enumerate}
	\item a convex space defined from an $\epsilon$-contamination approach $\mathscr{C}_{\vect{w}, \epsilon}$
	\begin{align}
		\mathscr{C}_{\vect{w}, \epsilon} = \left\{
		(1-\epsilon)\vect{w}, + \epsilon\vect{v}: v_j \geq 0,\sum_j v_j = 1
	\right\};\label{eq:econtcspnconstrain}
	\end{align}
	\item and, a convex space defined from an imprecise Dirichlet model $\mathscr{C}_{N_{\vect{w}}, s}$ (IDM)~\cite{walley1996inferences}
	\begin{align}
	  \mathscr{C}_{N_{\vect{w}}, s} = \left\{
		\vect{w}_i: w_{ij} = \frac{N_j + s\cdot v_i}{N_i+s}, v_j\geq 0,\sum_j v_j = 1
	\right\}.\label{eq:idmcspnconstrain}
	\end{align}
\end{enumerate}

Therefore, a CS-CSPN is defined as a set of class-selective SPNs $\mathbb{C}=\{\mathbb{S}_\vect{w}: \vect{w}\in \mathscr{C}_{\vect{w}, *}\}$ over \textbf{the same network structure} of $\mathbb{S}$, and where $\mathscr{C}_{\vect{w}, *}$ can be replaced by $\mathscr{C}_{\vect{w}, \epsilon}$ or $\mathscr{C}_{N_{\vect{w}}, s}$. Of course, as $\mathbb{S}$ represents a single joint distribution $\mathbb{P}$ over its variables, $\mathbb{C}$ represents a set of joint distributions $\credal$ over its same variables, so that one can use $\mathbb{C}$ to obtain the lower and upper probability bounds.

In the same way as in the previous sections, make a decision on a binary space $y_j\in\{0,1\}$ (i.e. an (imprecise) binary classification problem) by using the set of class-selective SPNs models $\mathbb{C}$ and the maximality criterion (under to the $\ell_{1/0}$ function) amounts to calculate the lower and upper probability bounds $[\underline{p}_j, \overline{p}_j]$ of the $j^{th}$ label as follows:
\begin{align}
	\underline{P}(Y_{\{j\}}&=y_j|X=\newinstance_\varepsilon) \propto
			\min_{\vect{w}\in\mathscr{C}_{\vect{w}, *}}
				P_\vect{w}(Y_{\{j\}}=y_j, X=\newinstance_\varepsilon),\label{eq:lowercspnprob}\\
	\overline{P}(Y_{\{j\}}&=y_j|X=\newinstance_\varepsilon) \propto
		\max_{\vect{w}\in\mathscr{C}_{\vect{w}, *}}
	 		  P_\vect{w}(Y_{\{j\}}=y_j, X=\newinstance_\varepsilon).\label{eq:uppercspnprob}
\end{align}
where $\mathbb{S}_\vect{w} (y_j,\newinstance_\varepsilon) := P_\vect{w}(Y_{\{j\}}=y_j, X=\newinstance_\varepsilon)$ denotes the evaluation of the network $\mathbb{S}_\vect{w}$ subject to the evidence input value $(y_j,\newinstance_\varepsilon)$, $P_\vect{w}(X=\newinstance_\varepsilon)$ must be strictly positive so that the equations above are well-defined, $\mathscr{C}_{\vect{w}, *}$ is the constrained convex space chosen between Equation \eqref{eq:econtcspnconstrain} or \eqref{eq:idmcspnconstrain}, and finally, $\newinstance_\varepsilon$ is known as the \emph{partial} evidence which indicates that not every variable is observed of $\newinstance$, since $\varepsilon$ is a subset of indices from the total set $\{1, \dots, p\}$ of random variables $X=\{X_1, \dots, X_p\}$

As in the previous imprecise classifiers, we can retrieve the precise probability $P(Y_{\{j\}}=y_j|X=\newinstance_\varepsilon)$ by setting the hyper-parameter $\epsilon$ to $0$ for $\mathscr{C}_{\vect{w}, \epsilon}$ and  $s$ to $0$ for $\mathscr{C}_{N_{\vect{w}}, s}$.

\section{On Binary relevance and other decision criteria}
\label{app:other_dec_crit}
So far, we considered only the most common skeptic decision criteria (maximality and E-admissibility) in order to get a set of predictions (either partial or not). However, there are other decision criteria using probability sets and extending the classical expected loss criterion. The most common being  (1) Interval dominance, (2) $\Gamma$-minimin, (3) $\Gamma$-maximin (at work in distributionally robust approaches). 

In this appendix, we provide some additional results regarding them, with the proofs given in \ref{app:supproof}. Let us first introduce some definitions. Given a loss function $\ell$, we will denote by 
\begin{equation}
    \overline{\mathbb{E}}_\credal\left[\ell(y, \cdot)\right]:=\max_{P \in \credal} \expe_P\left[\ell(y, \cdot)\right] \text{ and } \underline{\mathbb{E}}_\credal\left[\ell(y, \cdot)\right]:=\min_{P \in \credal} \expe_P\left[\ell(y, \cdot)\right]
\end{equation}
the upper and lower expected values of this loss under uncertainty $\credal$. They respectively provide an assessment of the worst-case and best-case situations. We can then define the decision criteria explored in this appendix.

\begin{definition}[Interval dominance] The decision $\hat{Y}^{ID}_{\ell,\credal}$ obtained by interval dominance is the set
\begin{equation}
    \hat{Y}^{ID}_{\ell,\credal}=\{y \in \mathcal{Y}: \not\exists y' \text{ s.t. } \overline{\mathbb{E}}_\credal\left[\ell(y', \cdot)\right] < \underline{\mathbb{E}}_\credal\left[\ell(y, \cdot)\right]\}
\end{equation}
\end{definition}
That is, interval dominance retain all these prediction not dominated by the worst-case expected loss situation of another prediction. It is a very conservative rule, as one has $\hat{Y}^{M}_{\ell,\credal} \subseteq \hat{Y}^{ID}_{\ell,\credal}$.

\begin{definition}[$\Gamma$-\uppercase{m}\textsc{inimax}]\label{def:minmax}
$\Gamma$-\uppercase{m}\textsc{inimax} consists in replacing the expected value of Equation~\eqref{eq:minexpe} by its upper expectation 
\begin{equation}\label{eq:gminmax}
	\hat{y}^{\Gamma_{\max}}_{\ell,\credal} = 
	\underset{y \in \mathcal{Y}}{\arg \min}~
	\overline{\mathbb{E}}_\credal\left[\ell(y, \cdot)\right].
\end{equation}
It amounts to returning the best worst-case prediction (i.e. a pessimistic attitude), since it consists in minimizing the worst possible expected loss. 
\end{definition}

\begin{definition}[ $\Gamma$-{\scshape\MakeUppercase minimin}]\label{def:minmin}
$\Gamma$-{\scshape\MakeUppercase minimin}, in contrast, consists in replacing the expected value of Equation~\eqref{eq:minexpe} by its lower expectation 
\begin{equation}\label{eq:gminmin}
	\hat{y}^{\Gamma_{\min}}_{\ell,\credal} =
	\underset{y \in \mathcal{Y}}{\arg \min}~
	\underline{\mathbb{E}}_\credal\left[\ell(y, \cdot)\right].
\end{equation} 
\end{definition}
It amounts to returning the best best-case prediction (i.e. an optimistic attitude), since it consists in choosing the prediction with the smallest lower expectation. Usually, one has that $\hat{y}^{\Gamma_{\min}}_{\ell,\credal} \neq \hat{y}^{\Gamma_{\max}}_{\ell,\credal}$.

However, when considering probability sets satisfying the hypothesis of Section~\ref{sec:Binary_rev}, we have the following results regarding these two last criteria:

\begin{proposition}\label{prop:gammaminmax}
	Given a probability set $\credal_{BR}$ and the Hamming loss $\ell_H$, we have: 
	\begin{equation}
		\hat{y}^{\Gamma_{\max}}_{\ell_H,\credal_{BR}} =
		\hat{y}^{\Gamma_{\min}}_{\ell_H,\credal_{BR}}
	\end{equation}
\end{proposition}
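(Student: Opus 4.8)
The plan is to exploit the additive structure of the Hamming loss together with the product form of $\credal_{BR}$ to reduce both optimisation problems to one and the same label-wise threshold rule. First, by linearity of expectation applied to $\ell_H(\hat{\vect{y}}, \cdot) = \sum_{i=1}^m \indicator{\hat{y}_i \neq Y_i}$ (the same decomposition underlying Lemma~\ref{prop:expcond}), the expected loss of any fixed prediction $\hat{\vect{y}}$ under a distribution $P$ reads
\begin{equation*}
\expe_P[\ell_H(\hat{\vect{y}}, \cdot)] = \sum_{i:\,\hat{y}_i=1}(1 - p_i) + \sum_{i:\,\hat{y}_i=0} p_i,
\end{equation*}
where $p_i := P(Y_{\{i\}}=1)$. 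The crucial observation is that this quantity depends on $P \in \credal_{BR}$ only through the marginals $p_1, \dots, p_m$, and that under the product construction of Equation~\eqref{eq:BRvectIPprob} these marginals range over the box $\prod_{i=1}^m [\underline{p}_i, \overline{p}_i]$ independently of one another.

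Second, I would use this separability to compute the upper and lower expectations coordinate-wise. Since the objective is a sum of terms each involving a single $p_i$, maximising (resp.\ minimising) over $\credal_{BR}$ amounts to optimising each term separately: for the upper expectation one takes $p_i = \underline{p}_i$ on the indices where $\hat{y}_i=1$ and $p_i = \overline{p}_i$ where $\hat{y}_i=0$, and symmetrically for the lower expectation. This gives
\begin{align*}
\overline{\mathbb{E}}_{\credal_{BR}}[\ell_H(\hat{\vect{y}}, \cdot)] &= \sum_{i:\,\hat{y}_i=1}(1 - \underline{p}_i) + \sum_{i:\,\hat{y}_i=0} \overline{p}_i, \\
\underline{\mathbb{E}}_{\credal_{BR}}[\ell_H(\hat{\vect{y}}, \cdot)] &= \sum_{i:\,\hat{y}_i=1}(1 - \overline{p}_i) + \sum_{i:\,\hat{y}_i=0} \underline{p}_i.
\end{align*}

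Third, I would minimise each of these over $\hat{\vect{y}} \in \mathcal{Y}$. As both right-hand sides are again sums of independent per-label contributions, the minimisation decomposes label by label. For $\Gamma$-minimax, label $i$ contributes $1 - \underline{p}_i$ if $\hat{y}_i=1$ and $\overline{p}_i$ if $\hat{y}_i=0$, so the optimal choice is $\hat{y}_i = 1$ exactly when $1 - \underline{p}_i < \overline{p}_i$, i.e.\ when $\underline{p}_i + \overline{p}_i > 1$. For $\Gamma$-minimin, label $i$ contributes $1 - \overline{p}_i$ if $\hat{y}_i=1$ and $\underline{p}_i$ if $\hat{y}_i=0$, and the condition $1 - \overline{p}_i < \underline{p}_i$ again reduces to $\underline{p}_i + \overline{p}_i > 1$. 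The two criteria therefore select the same prediction, with the same indifference (hence the same set of minimisers) in the boundary case $\underline{p}_i + \overline{p}_i = 1$, which establishes the claimed equality.

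The main obstacle is the separability argument of the second step: one must make explicit that $\credal_{BR}$ genuinely allows each marginal $p_i$ to be chosen independently within its interval, so that optimising a separable linear functional over $\credal_{BR}$ factorises into $m$ scalar optimisations. This is precisely where the binary-relevance/independence hypothesis of Section~\ref{sec:Binary_rev} is used, and it is the feature that fails for arbitrary credal sets $\credal$, as stressed in Remark~\ref{rem:imp_prec_loss}; this explains why no analogous collapse of the two criteria is to be expected in the general case.
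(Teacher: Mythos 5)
Your proposal is correct and follows essentially the same route as the paper's proof: both exploit the label-wise decomposability of the expected Hamming loss together with the fact that $\credal_{BR}$ lets each marginal vary independently in its interval, reducing $\Gamma$-minimax and $\Gamma$-minimin to per-label threshold rules that coincide. The only cosmetic difference is that you parametrise everything by $p_i = P(Y_{\{i\}}=1)$ so that both rules visibly reduce to $\underline{p}_i + \overline{p}_i > 1$, whereas the paper reaches the same conclusion by invoking the duality $\overline{P}(Y_i=1) = 1 - \underline{P}(Y_i=0)$ between the two label-wise argmax conditions.
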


It is well known the set $\hat{Y}^{ID}_{\ell,\credal}$ is a superset of $\hat{\mathbb{Y}}^{M}_{\ell_H,\credal}$, due to its conservative nature. The next simple example shows that even in the case of binary relevance models, this inclusion can be strict. 

\begin{example}
    Consider the simple case where we have two labels with the following bounds: $P(Y_1=1)\in [0.6,1]$ and $P(Y_2=1)\in [0,1]$. We then have the following expectation bounds for the various predictions and a Hamming loss
    \begin{displaymath}
    \begin{array}{ccccc}
         y & (1,1) & (1,0) & (0,1) & (0,0) \\
         \underline{\mathbb{E}}\left[\ell(y, \cdot)\right] & 0 & 0 & 0.6 & 0.6 \\
         \overline{\mathbb{E}}\left[\ell(y, \cdot)\right] & 1.4 & 1.4 & 2 & 2  
    \end{array}
    \end{displaymath}
    from which we deduce that $\hat{Y}^{ID}_{\ell,\credal}=(*,*)$, while $\hat{\mathbb{Y}}^{M}_{\ell_H,\credal}=(1,*)$.
\end{example}

\begin{corollary}
Given a probability set $\credal_{BR}$ and the Hamming loss $\ell_H$, in the Figure~\ref{fig:decimplies}, we can show the following implications for the different decision criteria that are {\bf M}aximality, {\bf E}-admissibility, $\Gamma$-minimax, $\Gamma$-minimin, and {\bf I}nterval {\bf D}ominance. As usual with sets, an implication $A \to B$ means that $A \subset B$.  
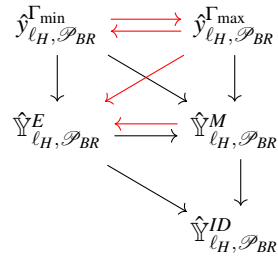
\begin{figure}[!ht]
	\centering
	\begin{tikzcd}
		\hat{y}^{\Gamma_{\min}}_{\ell_H,\credal_{BR}}
		\arrow[r,  yshift=0.5ex, red]
		\arrow[dr] \arrow[d]
		& \hat{y}^{\Gamma_{\max}}_{\ell_H,\credal_{BR}} 
		\arrow[l, yshift=-0.5ex, red]\arrow[d]
		\arrow[dl, red]\\
		\hat{\mathbb{Y}}^{E}_{\ell_H,\credal_{BR}} 
		\arrow[r, yshift=-0.5ex]\arrow[dr, yshift=-0.5ex] 
		&\hat{\mathbb{Y}}^{M}_{\ell_H,\credal_{BR}} 
		\arrow[l, yshift=0.5ex, red] 
		\arrow[d, xshift=0.5ex] \\
		&\hat{\mathbb{Y}}^{ID}_{\ell_H,\credal_{BR}}
	\end{tikzcd}	
	\caption{Decision relation under a $\credal_{BR}$ and a $\ell_H$. 
	   In red arrow, the new implications.}
	\label{fig:decimplies}
\end{figure}
\end{corollary}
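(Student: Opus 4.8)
The plan is to split the diagram into the \emph{generic} inclusions (the black arrows), which hold for any loss and any closed credal set, and the equalities that are specific to $\credal_{BR}$ (the red arrows), the substance of which is already delivered by Propositions~\ref{prop:gammaminmax} and~\ref{prop:EadmEqualMax}. Throughout I would use that, for the Hamming loss, every expectation $\expe_P[\ell_H(\vect{y},\cdot)]=\sum_i P(Y_i\neq y_i)$ depends on $P$ only through the marginals $p_i\in[\underline{p}_i,\overline{p}_i]$, so all infima and suprema defining the criteria are attained by a product (binary-relevance) distribution whose marginals sit at the appropriate interval endpoints; this sidesteps any compactness or convexity issue for the joint set $\credal_{BR}$.

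First I would record the standard hierarchy. The inclusions $\hat{\mathbb{Y}}^{E}_{\ell_H,\credal_{BR}}\subseteq\hat{\mathbb{Y}}^{M}_{\ell_H,\credal_{BR}}$ and $\hat{\mathbb{Y}}^{M}_{\ell_H,\credal_{BR}}\subseteq\hat{Y}^{ID}_{\ell_H,\credal_{BR}}$ are already stated in Section~\ref{sec:prel} and in the definition of interval dominance, and $\hat{\mathbb{Y}}^{E}\subseteq\hat{Y}^{ID}$ follows by composition. To place the point-valued criteria, for $\Gamma$-minimin let $P^{*}$ attain $\underline{\mathbb{E}}_{\credal_{BR}}[\ell_H(\hat{y}^{\Gamma_{\min}},\cdot)]$. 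Then for every $\vect{y}$ one has $\expe_{P^{*}}[\ell_H(\hat{y}^{\Gamma_{\min}},\cdot)]=\underline{\mathbb{E}}_{\credal_{BR}}[\ell_H(\hat{y}^{\Gamma_{\min}},\cdot)]\le\underline{\mathbb{E}}_{\credal_{BR}}[\ell_H(\vect{y},\cdot)]\le\expe_{P^{*}}[\ell_H(\vect{y},\cdot)]$, so $\hat{y}^{\Gamma_{\min}}$ is Bayes-optimal for $P^{*}$ and hence E-admissible; this yields the black chain $\hat{y}^{\Gamma_{\min}}\to\hat{\mathbb{Y}}^{E}\to\hat{\mathbb{Y}}^{M}$. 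For $\Gamma$-maximin a dual argument gives maximality: if some $\vect{y}'$ strictly dominated $\hat{y}^{\Gamma_{\max}}$ in the sense of Equation~\eqref{eq:compaIP}, then evaluating at the $P^{\dagger}$ attaining $\overline{\mathbb{E}}_{\credal_{BR}}[\ell_H(\vect{y}',\cdot)]$ would force $\overline{\mathbb{E}}_{\credal_{BR}}[\ell_H(\vect{y}',\cdot)]<\overline{\mathbb{E}}_{\credal_{BR}}[\ell_H(\hat{y}^{\Gamma_{\max}},\cdot)]$, contradicting the minimality of the upper expectation at $\hat{y}^{\Gamma_{\max}}$; this is the black arrow $\hat{y}^{\Gamma_{\max}}\to\hat{\mathbb{Y}}^{M}$.

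Next I would turn to the red arrows. The pair $\hat{y}^{\Gamma_{\min}}\leftrightarrows\hat{y}^{\Gamma_{\max}}$, i.e. the equality $\hat{y}^{\Gamma_{\min}}_{\ell_H,\credal_{BR}}=\hat{y}^{\Gamma_{\max}}_{\ell_H,\credal_{BR}}$, is precisely Proposition~\ref{prop:gammaminmax}. The arrow $\hat{\mathbb{Y}}^{M}\to\hat{\mathbb{Y}}^{E}$, which combined with the generic $\hat{\mathbb{Y}}^{E}\subseteq\hat{\mathbb{Y}}^{M}$ gives $\hat{\mathbb{Y}}^{E}_{\ell_H,\credal_{BR}}=\hat{\mathbb{Y}}^{M}_{\ell_H,\credal_{BR}}$, is Proposition~\ref{prop:EadmEqualMax}. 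The last red arrow, $\hat{y}^{\Gamma_{\max}}\to\hat{\mathbb{Y}}^{E}$, then comes for free by composing the generic inclusion $\hat{y}^{\Gamma_{\max}}\subseteq\hat{\mathbb{Y}}^{M}$ with the equality $\hat{\mathbb{Y}}^{M}=\hat{\mathbb{Y}}^{E}$ just obtained.

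There is no deep obstacle here: the heavy lifting is entirely delegated to Propositions~\ref{prop:gammaminmax} and~\ref{prop:EadmEqualMax}, and the corollary is an assembly step. The one point deserving explicit comment is the asymmetry that the coloring encodes, namely that $\Gamma$-minimin is \emph{always} E-admissible (by the Bayes-optimality argument above) whereas $\Gamma$-maximin is in general only maximal and not E-admissible; this is exactly why $\hat{y}^{\Gamma_{\max}}\to\hat{\mathbb{Y}}^{E}$ is a genuinely new implication that relies on the identity $\hat{\mathbb{Y}}^{M}=\hat{\mathbb{Y}}^{E}$ holding for $\credal_{BR}$, rather than being a generic fact.
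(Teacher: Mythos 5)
Your proposal is correct and follows essentially the same route as the paper: the corollary is treated there as a direct assembly of Proposition~\ref{prop:gammaminmax} (the $\Gamma$-minimin/$\Gamma$-maximin equality) and Proposition~\ref{prop:EadmEqualMax} (E-admissibility equals maximality under $\credal_{BR}$), combined with the standard hierarchy $\hat{\mathbb{Y}}^{E}\subseteq\hat{\mathbb{Y}}^{M}\subseteq\hat{\mathbb{Y}}^{ID}$ and the well-known facts that the minimin solution is E-admissible and the maximin solution is maximal, which the paper cites as known rather than reproving. Your only addition is to spell out those standard arguments explicitly (the Bayes-optimality argument at the attaining distribution $P^{*}$, and the contradiction via $P^{\dagger}$), together with the attainment remark justified by the compactness of the marginal intervals, which is sound and harmless extra rigor rather than a different approach.
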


\section{Result proofs}\label{app:supproof}
\begin{proof}[\bf Proof of Lemma~\ref{prop:expcond}]\label{proof:hamming}
	Let us first develop $\mathbb{E}\left[ \ell_H(\vect{y}^2, \cdot) - \ell_{H}(\vect{y}^1, \cdot) | X=x \right]$:
	\begin{align}
		\sum_{\bm y \in \spacelabel} &\left( \hams{y^2} - \hams{y^1} \right)P_x(Y=\bm y)\\
		\sum_{y_1\in \{0,1\}}\sum_{y_2\in \{0,1\}}\dots\sum_{y_m\in \{0,1\}}
 				&\left( \hams{y^2} - \hams{y^1} \right)P_x(Y=\bm y)
	\end{align}
	For a given $k \in \{1,\ldots,m\}$, let us consider the rewriting
	\begin{align}
		\overbrace{\sum_{y_1\in \{0,1\}}\sum_{y_2\in \{0,1\}}\dots\sum_{y_m\in \{0,1\}}}^{m-1}
 				&\left[\sum_{y_k\in \{0,1\}} \left( \hams{y^2} - \hams{y^1} \right) \right] P_x(Y=\bm y).
	\end{align}
	Developing the sum between brackets, we get
	\begin{align}\label{eq:bigsumproof}
		\sum_{y_k\in \{0,1\}}\hams{y^2} P_x(Y=\bm y) - \sum_{y_k\in \{0,1\}}\hams{y^1}P_x(Y=\bm y) \quad\text{(by linearity)}
	\end{align}
	Developing again the left term, we obtain
	\begin{align*}
		\sum_{y_k\in \{0,1\}}\hams{y^2} P_x(Y=\bm y) &= 
			\sum_{y_k\in \{0,1\}} \left( \hamtwo{1}+\hamtwo{2}+\dots + \hamtwo{m} \right) P_x(Y=\bm y)\\
		&= \hamtwo{1} \sum_{y_k\in \{0,1\}} P_x(Y^k=y_k) + \dots +  \sum_{y_k\in \{0,1\}} \hamtwo{k} P_x(Y^k=y_k) + \\
		& \qquad \dots + \hamtwo{m} \sum_{y_k\in \{0,1\}}  P_x(Y^k=y_k)\\
		&= \hamtwo{1} P_x(Y_{\{-k\}}) + \dots + \sum_{y_k\in \{0,1\}} \hamtwo{k} P_x(Y^k=y_k) + \\
		& \qquad \dots  + \hamtwo{m} P_x(Y_{\{-k\}})\\
		&= \sum_{y_k\in \{0,1\}} \hamtwo{k} P_x(Y^k=y_k) + \sum_{i=1, i\ne k}^m \hamtwo{i} P_x(Y_{\{-k\}}),
	\end{align*}
	where $P_x(Y^k=y_k) := P_x(Y_1, \dots, Y_k=y_k, \dots, Y_m)$ and $P_x(Y_{\{-k\}}):=P_x(Y_1, \dots, Y_{k-1}, Y_{k+1}, \dots, Y_m)$.

	Similarly, we get for the right term
	\begin{align*}
		\sum_{y_k\in \{0,1\}}\hams{y^1} P_x(Y=\bm y) &= \sum_{y_k\in \{0,1\}} \hamone{k} P_x(Y^k=y_k) + \sum_{i=1, i\ne k}^m \hamone{i} P_x(Y_{\{-k\}})
	\end{align*}
	We put back these rewritten sums in Equation~\eqref{eq:bigsumproof}
	\begin{align}
		&\overbrace{\sum_{y_1\in \{0,1\}}\sum_{y_2\in \{0,1\}}\dots\sum_{y_m\in \{0,1\}}}^{m-1}  \Bigg[
		\sum_{y_k\in \{0,1\}} \hamtwo{k} P_x(Y^k=y_k) - \sum_{y_k\in \{0,1\}} \hamone{k} P_x(Y^k=y_k)   \nonumber\\
		&\qquad + \sum_{i=1, i\ne k}^m \hamtwo{i} P_x(Y_{\{-k\}}) - \sum_{i=1, i\ne k}^m \hamone{i} P_x(Y_{\{-k\}}) \Bigg]= \nonumber\\
		&\sum_{\bm y \in \spacelabel} (\hamtwo{k} - \hamone{k}) P_x(Y=\bm y)  
		+ \overbrace{\sum_{y_1\in \{0,1\}}\dots\sum_{y_m\in \{0,1\}}}^{m-1} 
			\left[\sum_{i=1, i\ne k}^m \hamtwo{i} - \hamone{i}\right] P_x(Y_{\{-k\}}) \label{eq:summoins1}
	\end{align} 
	The left term can be reduced in the following way:
	\begin{align*}
		\sum_{\bm y \in \spacelabel} (\hamtwo{k} - \hamone{k}) P_x(Y=\bm y) &= 
		\sum_{\bm y \in \spacelabel} \hamtwo{k}P_x(Y=\bm y) - \sum_{\bm y \in \spacelabel}\hamone{k} P_x(Y=\bm y)\\
&=P_x(Y_k \neq y^2_k) -  P_x(Y_k \neq y^1_k)\\
		&=P_x(Y_k=y_{k}^1)-P_x(Y_k=y_{k}^2)
	\end{align*}
	since we have $P_x(Y_k \neq y_k)=1-P_x(Y_k = y_k)$. We can apply the same operations we just did on the right term of Equation \eqref{eq:summoins1}, and do so recursively, to finally obtain
	\begin{align}
		\sum_{i=1}^m  P(Y_i=y_i^1) - P(Y_i=y_i^2) 
	\end{align}
\end{proof}
\begin{proof}[\bf Proof of Proposition~\ref{prop:newdecision}] Using Equation~\eqref{eq:hamsimpl}, one can readily see that
\begin{align}
\vect{y}^1 \succ_M \vect{y}^2 & \iff \inf_{P \in \credal}  \sum_{i \in \mathcal{I}_{\vect{y}^1\neq \vect{y}^2}}  P(Y_i=y^1_i) - P(Y_i=y^2_i) >0& \\
& \iff \inf_{P \in \credal}  \sum_{i \in \mathcal{I}}  P(Y_i=a_i) - P(Y_i=\overline{a}_i)>0& \\
	\intertext{Accounting for the fact that  $P(Y_i=a_i) + P(Y_i=\overline{a}_i)=1$, we get } 
& \iff \inf_{P \in \credal}  \sum_{i \in \mathcal{I}}  2 P(Y_i=a_i) - 1 >0 & \\
& \iff \inf_{P \in \credal}  \sum_{i \in \mathcal{I}}   P(Y_i=a_i) > \frac{|\mathcal{I}|}{2} \label{eq:infexpectation}
\end{align}
\end{proof}

\begin{proof}[\bf Proof of Proposition~\ref{prop:Ham_partial_loss}] First, let us simply notice that $P(Y_i=a_i)=\sum_{\vect{y} \in \mathcal{Y}} \mathbbm{1}_{y_i=a_i} P(Y=\vect{y})$ and $\mathbbm{1}_{y_i=a_i}=\mathbbm{1}_{y_i\neq\overline{a}_i}$. Putting these together, we get 
\begin{align*}	
	\sum_{i \in \mathcal{I}}   P(Y_i=a_i) 
&= \sum_{i\in\mathcal{I}} \sum_{\vect{y}\in\mathcal{Y}} \mathbbm{1}_{y_i \neq \overline{a}_i} P(Y=\vect{y}) \\ 
	&= \sum_{\vect{y}\in\mathcal{Y}} \sum_{i\in\mathcal{I}} 
		\mathbbm{1}_{y_i \neq \overline{a}_i} P(Y=\vect{y}) \tag{by linearity}\\
	&= \mathbb{E}[\ell_H^*(\cdot, \overline{\vect{a}}_\mathcal{I})]
\end{align*}
where $\ell_H^*(\cdot, \overline{\vect{a}}_\mathcal{I})$ is the hamming loss
calculated in the set of indices $\mathcal{I}=\{i_1,\dots,i_q\}$ of vector $\overline{\vect{a}}_{\mathcal{I}}$, which is created in the line 5 of the Algorithm \ref{alg:HamMaxim}. Thus, we apply infimum, $\inf_{P\in\credal}$, to each side of the last equation and get what we sought. 
\end{proof}

\begin{proof}[\bf Proof of Proposition~\ref{prop:algcomplexity}]
Let us simply analyze the number of computations needed. We will need to perform $m$ times the loop of Line 2. For a given $i$, we have that $\mathcal{Z}_i=\binom{m}{i}$, meaning that this is the number of elements to check in the loop starting Line 4. Finally, there $2^i$ elements to check in the loop starting Line 5. The table below summarise the different steps. 

	\begin{displaymath}
	\begin{array}{c|ccccc}
	\text{ Index Line 2}&	i=1 &i=2  &\dots  &i=m-2 &i=m-1\\ & & & & \\
	|\mathcal{Z}_i| & \frac{m!}{1!(m-1)!} &\frac{m!}{2!(m-2)!}  &\dots  &\frac{m!}{(m-2)!2!} &\frac{m!}{(m-1)!1!}\\ & & & & \\
	|\mathcal{Y}_z| & 	\{0,1\}^{1} &\{0,1\}^{2} &\dots, &\{0,1\}^{m-2} &\{0,1\}^{m-1}
	\end{array}
	\end{displaymath}
	Overall, the number of checks to perform amounts to
	\begin{equation}
		\sum_{k=1}^{m} 2^{m-k}\frac{m!}{k!(m-k)!} = 3^m -1
	\end{equation}
\end{proof}

The proofs of the next propositions, that concern partial binary vectors, require us to first prove an intermediate result characterising partial vectors in terms of the vector set they represent. More precisely, we first express a condition for a subset $\mathbb{Y}$ of $\mathcal{Y}$ to be a partial vector, in terms of its elements. 

\begin{lemma}\label{lem:partialbincond}
A subset $\mathbb{Y}$ belongs to the space $\mathfrak{Y}$ if and only if
$$\forall \vect{y},\vect{y}' \in  \mathbb{Y}, \text{ we have that all } \vect{y}'' \in \mathcal{Y} \text{ s.t. } y_i''=y'_i \quad\forall i \in \mathcal{I}_{\vect{y}=\vect{y}'} \text{ are also in } \mathbb{Y}$$
\end{lemma}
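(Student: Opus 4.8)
The key observation driving the whole proof is that a partial vector $\bm{\mathfrak{y}}\in\mathfrak{Y}$ is nothing but a \emph{subcube} (a face) of the Boolean cube $\mathcal{Y}$: one fixes the coordinates outside $\mathcal{I}^*$ to prescribed binary values and lets the coordinates in $\mathcal{I}^*$ vary freely. Moreover, for two vectors $\vect{y},\vect{y}'$, the set $\{\vect{y}''\in\mathcal{Y}: y_i''=y_i'\ \forall i\in\mathcal{I}_{\vect{y}=\vect{y}'}\}$ is exactly the smallest subcube containing both $\vect{y}$ and $\vect{y}'$ (coordinates on which they agree are frozen, coordinates on which they differ are released). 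Hence the right-hand condition reads: \emph{$\mathbb{Y}$ contains the subcube spanned by any two of its points.} The plan is to prove the two implications separately, assuming throughout that $\mathbb{Y}\neq\emptyset$.

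For the forward implication, suppose $\mathbb{Y}=\bm{\mathfrak{y}}$ with abstained index set $\mathcal{I}^*$. Any two $\vect{y},\vect{y}'\in\mathbb{Y}$ share the fixed values on $\overline{\mathcal{I}}^*:=\{1,\dots,m\}\setminus\mathcal{I}^*$, so $\overline{\mathcal{I}}^*\subseteq\mathcal{I}_{\vect{y}=\vect{y}'}$. If $\vect{y}''$ agrees with $\vect{y}'$ on all of $\mathcal{I}_{\vect{y}=\vect{y}'}$, then in particular it agrees with $\vect{y}'$, hence with $\mathfrak{y}$, on $\overline{\mathcal{I}}^*$, which is exactly the membership condition for $\bm{\mathfrak{y}}$; thus $\vect{y}''\in\mathbb{Y}$. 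This direction is a direct substitution.

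For the converse, I would first \emph{construct the candidate partial vector} from $\mathbb{Y}$: let $\overline{\mathcal{I}}^*$ be the set of coordinates on which $\mathbb{Y}$ is constant, set $\mathfrak{y}_i$ to that common value for $i\in\overline{\mathcal{I}}^*$ and $\mathfrak{y}_i=*$ for $i\in\mathcal{I}^*:=\{1,\dots,m\}\setminus\overline{\mathcal{I}}^*$. The inclusion $\mathbb{Y}\subseteq\bm{\mathfrak{y}}$ is immediate from the definition of $\overline{\mathcal{I}}^*$. The real work is the reverse inclusion $\bm{\mathfrak{y}}\subseteq\mathbb{Y}$, and the engine is a \textbf{single-flip lemma}: for any $\vect{z}\in\mathbb{Y}$ and any free coordinate $d\in\mathcal{I}^*$, the vector obtained from $\vect{z}$ by flipping coordinate $d$ also lies in $\mathbb{Y}$. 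To prove it, note that by definition of $\mathcal{I}^*$ there are $\vect{u},\vect{v}\in\mathbb{Y}$ with $u_d\neq v_d$; one of them, say $\vect{u}$, satisfies $u_d\neq z_d$. Then $d\in\mathcal{I}_{\vect{z}\neq\vect{u}}$ is a released coordinate of $\mathrm{span}(\vect{z},\vect{u})$, so the flip of $\vect{z}$ at $d$, which agrees with $\vect{z}$ (and therefore with $\vect{u}$ on $\mathcal{I}_{\vect{z}=\vect{u}}$) everywhere but $d$, belongs to that spanned subcube, hence to $\mathbb{Y}$ by hypothesis.

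Finally I would upgrade the single-flip lemma to arbitrary elements of $\bm{\mathfrak{y}}$ by induction on the number of flips: pick any $\vect{z}\in\mathbb{Y}$, take $\vect{y}''\in\bm{\mathfrak{y}}$, observe that $\vect{z}$ and $\vect{y}''$ agree on $\overline{\mathcal{I}}^*$ so their differing coordinates $D$ lie in $\mathcal{I}^*$, and flip them one at a time, each intermediate vector staying in $\mathbb{Y}$ by the single-flip lemma; the last vector is $\vect{y}''$, giving $\bm{\mathfrak{y}}\subseteq\mathbb{Y}$. The main obstacle is precisely this reverse inclusion: it is tempting but \emph{false} to hope that every element of $\bm{\mathfrak{y}}$ is spanned by a single pair from $\mathbb{Y}$, which is why the incremental flipping argument, bootstrapping pairwise closure into a full subcube, is needed. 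One edge case deserves a remark: the empty set satisfies the right-hand condition vacuously yet is not a partial vector, so the equivalence is stated for nonempty $\mathbb{Y}$, which is harmless since the prediction sets to which the lemma is applied are always nonempty.
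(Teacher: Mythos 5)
Your proof is correct, and its ``if'' direction takes a genuinely different route from the paper's. Both proofs handle the forward implication identically (immediate substitution, since two elements of a partial vector can only disagree on abstained coordinates), but for the converse the paper runs an iterative block-growing construction: after reindexing, it finds a pair of vectors in $\mathbb{Y}$ differing on exactly the first $k$ coordinates, concludes that the subcube freeing those $k$ coordinates lies in $\mathbb{Y}$, then repeatedly picks a new vector $\vect{y}''$ disagreeing on the next $k'$ coordinates and a representative of the already-freed subcube whose first $k$ coordinates are opposite to those of $\vect{y}''$, thereby freeing $k+k'$ coordinates at once, and iterates until all of $D_{\mathbb{Y}}$ is freed. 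You instead isolate a single-flip lemma (any element of $\mathbb{Y}$ flipped at one non-constant coordinate stays in $\mathbb{Y}$) and then reach an arbitrary target of the candidate subcube by a hypercube path, inducting on the number of differing coordinates. The two arguments use the pairwise-closure hypothesis in the same essential way, but your decomposition is more modular and more elementary: it avoids the WLOG reindexing, avoids the somewhat delicate choice of a representative with prescribed opposite coordinates, and makes the induction quantity (Hamming distance to the target) explicit. What the paper's version buys in exchange is that it directly exhibits ever-larger partial vectors sitting inside $\mathbb{Y}$, so the partial-vector structure is visible at every stage rather than only at the end. Your remark about the empty set is also a genuine (if minor) sharpening: $\emptyset$ satisfies the right-hand condition vacuously but is not an element of $\mathfrak{Y}$, a caveat the paper's statement and proof silently ignore by always drawing pairs from $\mathbb{Y}$.
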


\begin{proof}[\bf Proof of Lemma~\ref{lem:partialbincond}]
\textbf{Only if}: Immediate, since by assumption $\mathcal{I}_{\vect{y}\neq \vect{y}'} \subseteq \mathcal{I}^*$, the set of label indices on which we abstain.

\textbf{If}: Consider the set $D_{\mathbb{Y}}=\{j | \exists \vect{y},\vect{y}' \in \mathbb{Y}, y_j \neq y'_j\}$ of indices for which at least two elements of $\mathbb{Y}$ disagree. What we have to show is that under the condition of Lemma~\ref{lem:partialbincond}, any completion of $D_{\mathbb{Y}}$ is within $\mathbb{Y}$.

Without loss of generality, as we can always permute the indices, let us consider that $D_{\mathbb{Y}}$ are the $|D_{\mathbb{Y}}|$ first indices. We can then find a couple $\vect{y},\vect{y}' \in \mathbb{Y}$ such that the $k$ first elements are distinct, that is $\mathcal{I}_{\vect{y}\neq \vect{y}'}=\{1,\ldots,k\}$. It follows that the subset of vectors
\begin{equation}\label{eq:vectorsin}(\underbrace{*,\ldots,*}_{k \text{ times }},y_{k+1},\ldots,y_{|D_{\mathbb{Y}}|},y_{|D_{\mathbb{Y}}|+1}, \ldots,y_m )\end{equation}
is within $\mathbb{Y}$, by assumption. If $k <|D_{\mathbb{Y}}|$, we can find a vector $\vect{y}''$ such that its $k'$ next elements (after the $k$th first) are different from $\vect{y}$, i.e., $y_j\neq y''_j$ for $j=k+1,\ldots,k+k'$ with $k+k' \leq |D_{\mathbb{Y}}|$. Note that $k'\geq 1$ by assumption. Since the vector~\eqref{eq:vectorsin} is in $\mathbb{Y}$, we can always consider the vector $\vect{y}$ such that its $k$ first elements are different from those of $\vect{y}''$, that is in $\mathbb{Y}$. Since $\mathcal{I}_{\vect{y}\neq \vect{y}'}=\{1,\ldots,k+k'\}$, the subset of vectors
\begin{equation*}
(\underbrace{*,\ldots,*}_{k+k' \text{ times }},y_{k+k'+1},\ldots,y_{|D_{\mathbb{Y}}|},y_{|D_{\mathbb{Y}}|+1}, \ldots,y_m )
\end{equation*}
is also in $\mathbb{Y}$. Since we can repeat this construction until having two vectors with the $|D_{\mathbb{Y}}|$ first labels different, this finishes the proof. 
\end{proof}

\begin{proof}[\bf Proof of Proposition~\ref{prop:HammingPartialBR}]
   Let us first notice that, after Equation~\eqref{eq:optiHamPrec}, we have that 
   \begin{align}\label{eq:eadmissibility}
   	\vect{y} \in \hat{\mathbb{Y}}^{E}_{\ell_H,\credal} \iff 
   	\begin{cases}
   	 \underline{p}_i \leq 0.5 & \text{ for } i \in \mathcal{I}_{\vect{y}=0} \\
     \overline{p}_i \geq 0.5 & \text{ for } i \in \mathcal{I}_{\vect{y}=1}
     \end{cases}
   \end{align}
   where $\mathcal{I}_{\vect{y}=0}$, $\mathcal{I}_{\vect{y}=1}$ are the indices of labels for which $y_i=0$ and $y_i=1$. Indeed, since here we start from the marginals, $\vect{y}$ is optimal according to Hamming loss and a distribution in $\credal_{BR}$ iff we can fix $p_{i}$ to be lower than $0.5$ if $y_i=0$, and higher else. 
   
   Now, let us consider two vectors $\vect{y}^1,\vect{y}^2$ and the indices $\mathcal{I}_{\vect{y}^1\neq \vect{y}^2}$. Given the first part of this proof, if $\vect{y}^1,\vect{y}^2 \in \hat{\mathbb{Y}}^{E}_{\ell_H,\credal}$, this means that $0.5 \in [\underline{p}_i,\overline{p}_i]$ for any $i \in \mathcal{I}_{\vect{y}^1\neq \vect{y}^2}$. Therefore, given any vector $\vect{y}''$ such that $y''_i=y^1_i$ for $i \in \mathcal{I}_{\vect{y}^1= \vect{y}^2}$, for the other indices $i \in \mathcal{I}_{\vect{y}^1\neq \vect{y}^2}$, we can always fix a precise value $p_i \in [\underline{p}_i,\overline{p}_i]$ such that $\vect{y}''$ is also optimal w.r.t. $p$. More precisely, assume the assignments $p_i^1$ and $p_i^2$ result in $\vect{y}^1,\vect{y}^2$ being optimal predictions for the Hamming loss, respectively. Then $\vect{y}''$ is optimal for the assignment
   $$p''_i=\begin{cases} p_i^1 & \text{ if } y''_i=y^1_i \\
   p_i^2 & \text{ if } y''_i=y^2_i \end{cases}$$
   that is by definition within $\credal_{BR}$.
   \end{proof}

\begin{proof}[\bf Proof of Proposition~\ref{prop:EadmEqualMax}] As Proposition~\ref{prop:HammingPartialBR} shows, the E-admissible set is given by the partial vector $\hat{\vect{\mathfrak{y}}}_{\ell_H,\credal_{BR}}$. To show that it also coincides with $\hat{\mathbb{Y}}^{M}_{\ell_H,\credal_{BR}}$, we will consider the fact that $\hat{\vect{\mathfrak{y}}}_{\ell_H,\credal_{BR}} \subseteq \hat{\mathbb{Y}}^{M}_{\ell_H,\credal_{BR}}$, and will demonstrate that any vector outside $\hat{\vect{\mathfrak{y}}}_{\ell_H,\credal_{BR}}$ is dominated (in the sense of Equation~\eqref{eq:compaIP}) by a vector within $\hat{\vect{\mathfrak{y}}}_{\ell_H,\credal_{BR}}$. 

Let us consider a vector $\vect{y}' \not\in \hat{\vect{\mathfrak{y}}}_{\ell_H,\credal_{BR}}$, and the indices 
$$\mathcal{I}_{\vect{y}' \neq \hat{\vect{\mathfrak{y}}}}=\{i : \hat{\mathfrak{y}}_{i,\ell_H,\credal_{BR}}\neq *,~\hat{\mathfrak{y}}_{i,\ell_H,\credal_{BR}} \neq \vect{y}'_i\}$$
on which they necessarily differ (as we can always set the labels for which $\hat{\mathfrak{y}}_{i,\ell_H,\credal_{BR}}=*$ to be equal to $y'_i$). By Proposition~\ref{prop:expcond}, we have that 
$$\hat{\vect{\mathfrak{y}}}_{\ell_H,\credal_{BR}} \succ_M  \vect{y}' \iff \inf_{P \in \credal}  \sum_{i \in \mathcal{I}_{y' \neq \hat{y}^*}}   P(Y_i=\hat{\mathfrak{y}}_i) > \frac{|\mathcal{I}|}{2}$$
and since we have that $\hat{\mathfrak{y}}_i=1 \Rightarrow \underline{P}(Y_i=1) > 0.5$ and $\hat{\mathfrak{y}}_i=0 \Rightarrow \underline{P}(Y_i=0) > 0.5$, the right hand side inequality is satisfied. Hence, we can show that any vector outside $\hat{\vect{\mathfrak{y}}}_{\ell_H,\credal_{BR}}$ is maximally dominated by another vector in $\hat{\vect{\mathfrak{y}}}_{\ell_H,\credal_{BR}}$, meaning that $\hat{\vect{\mathfrak{y}}}_{\ell_H,\credal_{BR}} \supseteq \hat{\mathbb{Y}}^{M}_{\ell_H,\credal_{BR}}$. Combined with the fact that $\hat{\vect{\mathfrak{y}}}_{\ell_H,\credal_{BR}} \subseteq \hat{\mathbb{Y}}^{M}_{\ell_H,\credal_{BR}}$, this finishes the proof. 
\end{proof}

\begin{proof}[\bf Proof of Proposition~\ref{prop:gammaminmax}]
Let us prove first how get the prediction of each decision criterions, by harnessing the facts that the Hamming loss is decomposable and that $\mathbb{E}\left[\ell_{H}(\vect{y}, \cdot) \right]=-\sum_{i} P(Y_i=y_i)$.
\begin{enumerate}
	\item {$\Gamma$-\uppercase{m}\textsc{inimax}.---} as the Hamming
	 loss is decomposable we can easily reduce Equation \eqref{eq:gminmax}
	 as follows:
	 \begin{equation}
		\underset{\vect{y} \in \mathscr{Y}}{\arg \min}~
			\overline{\mathbb{E}}_\credal\left[ 
				\ell_{H}(\vect{y}, \cdot) \right] \iff
			\underset{\vect{y} \in \mathscr{Y}}{\arg \max}~
				\inf_{P \in \credal} \sum_{i=1}^m P(Y_i=\hat{y}_i)
	\end{equation}
	by using a probability set $\credal_{BR}$
	, we have
	\begin{equation}
		\hat{\vect{y}}^{\Gamma_{\max}}=\underset{\vect{y} \in \mathscr{Y}}{\arg \max}~
		\sum_{i=1}^m  \underline{P}(Y_i=y_i) \iff
		\hat{\vect{y}}^{\Gamma_{\max}}=\begin{cases}
			1 & if~ \underline{P}(Y_i=1) > \underline{P}(Y_i=0) \\
			0 & if~ \underline{P}(Y_i=1) < \underline{P}(Y_i=0) \\
			* & otherwise
		\end{cases}
	\end{equation}
	where $*$ can here be replaced by $0$ or $1$, as all those predictions are deemed indifferent by the \uppercase{m}\textsc{inimax} principle. 
	\item {$\Gamma$-\uppercase{m}\textsc{inimin}.---} in the same way
	as previously, we easily have :
	\begin{equation}
		\hat{\vect{y}}^{\Gamma_{\min}}=\underset{\vect{y} \in \mathscr{Y}}{\arg \max}~
		\sum_{i=1}^m  \overline{P}(Y_i=y_i) \iff
		\hat{\vect{y}}^{\Gamma_{\min}}= \begin{cases}
			1 & if~ \overline{P}(Y_i=1) > \overline{P}(Y_i=0) \\
			0 & if~ \overline{P}(Y_i=1) < \overline{P}(Y_i=0) \\
			* & otherwise
		\end{cases}
	\end{equation}
	where $*$ is to be understood as in the previous case. 
\end{enumerate}
By using the fact the the lower and upper probabilities are dual, 
$\overline{P}(Y_i=1) = 1 - \underline{P}(Y_i=0)$, it is easy to see 
that the criteria to choose the $\hat{\vect{y}}^{\Gamma_{\min}}$ is equal to 
$\hat{\vect{y}}^{\Gamma_{\max}}$, therefore, 
$\hat{\vect{y}}^{\Gamma_{\max}}_{\ell_H,\credal_{BR}} = 
\hat{\vect{y}}^{\Gamma_{\min}}_{\ell_H,\credal_{BR}}$.
\end{proof}

\newpage
\section{Complementary experimental results} 
\label{app:supresults}
\subsection{Missing labels}\label{app:missinglabels}
\vspace*{-6mm}
\begin{figure}[!th]
	\centering
	\resizebox{0.45\textwidth}{!}{%
	  \includegraphics{images/missing/legends}
	}\vspace{-2mm}\qquad%
	\subfigure[\sc Emotions]{
	   \includegraphics[width=0.32\linewidth]{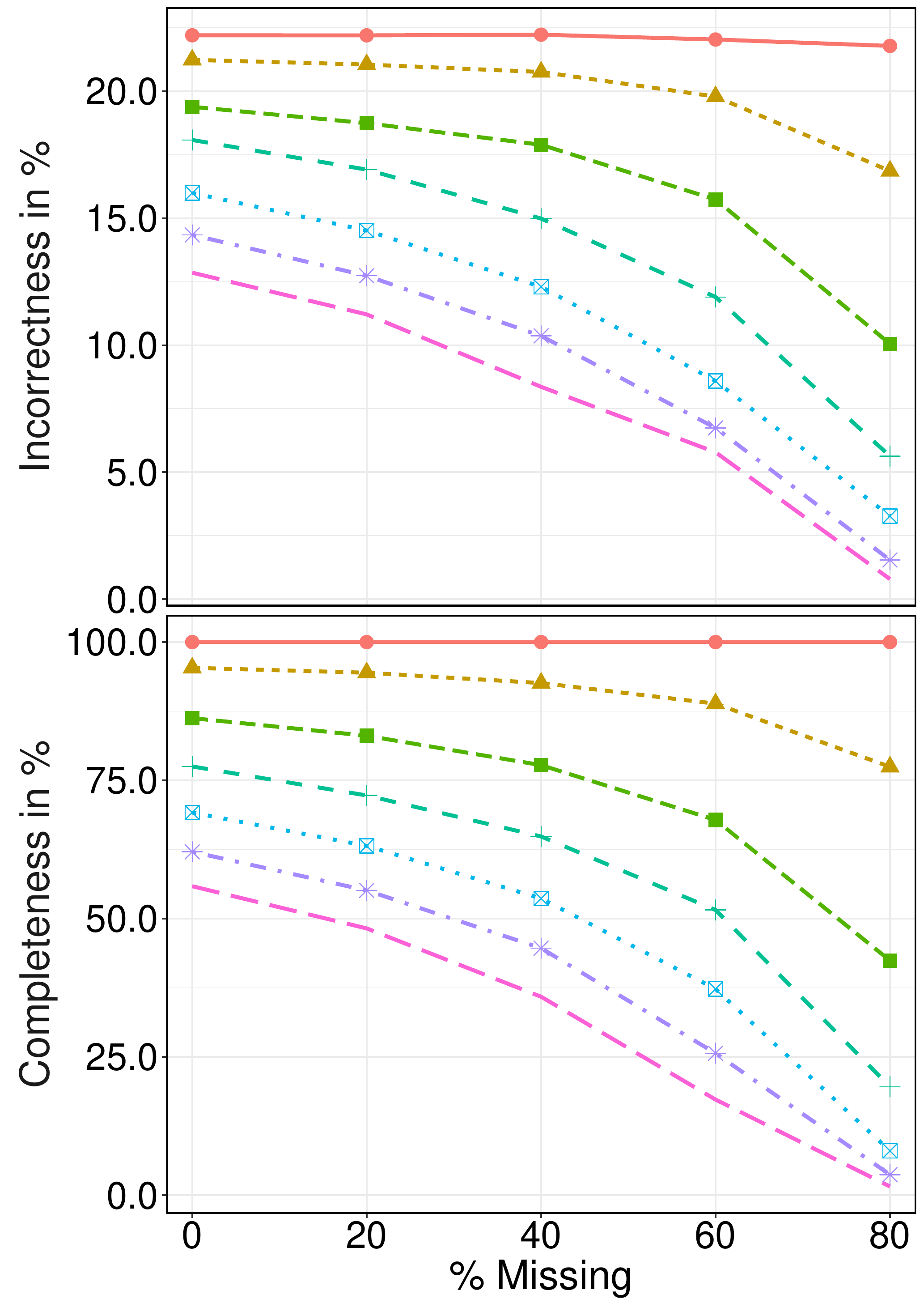}
	}%
	\subfigure[\sc Scene]{
		\includegraphics[width=0.32\linewidth]{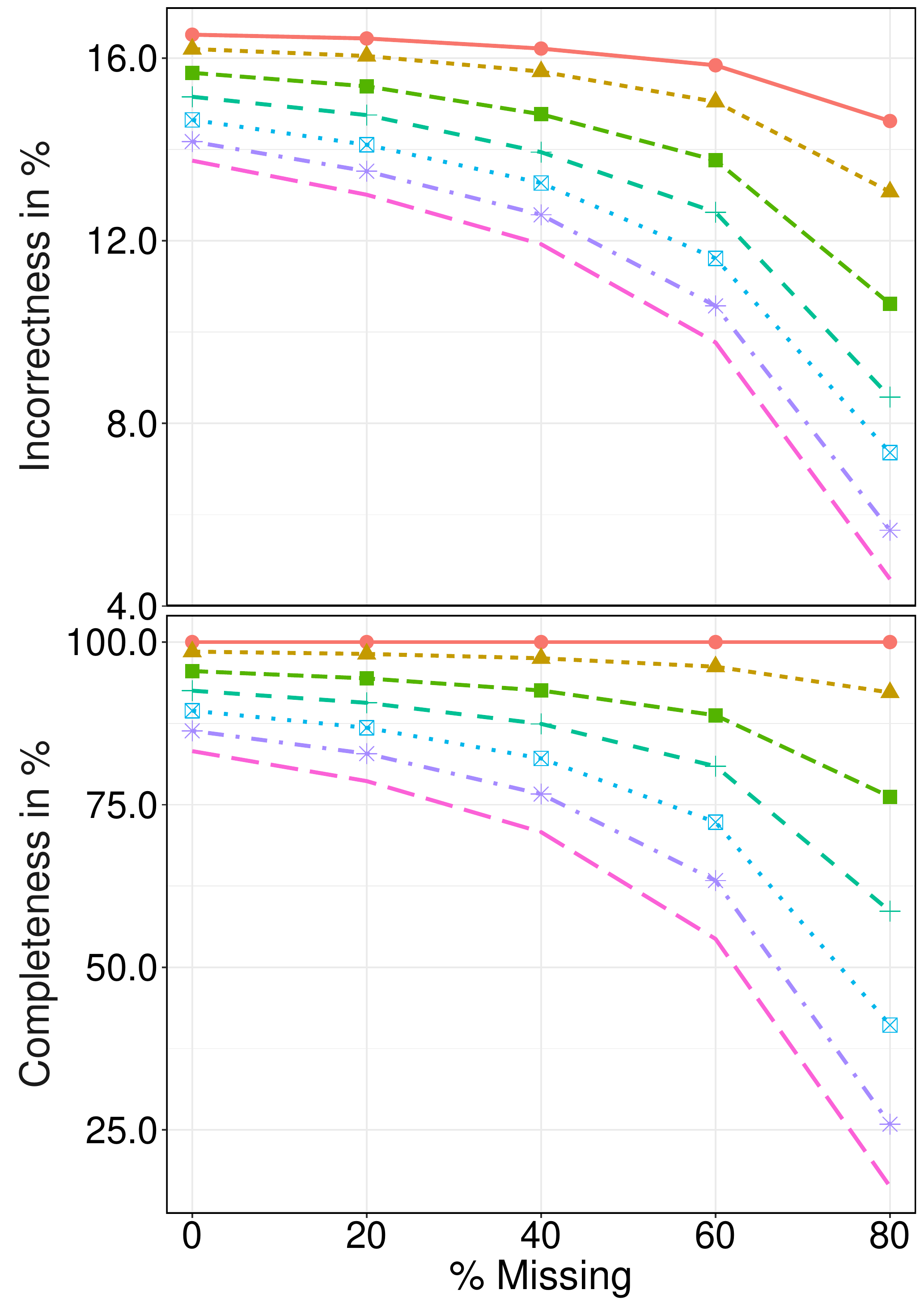}
	}%
	\subfigure[\sc Yeast]{
		\includegraphics[width=0.32\linewidth]{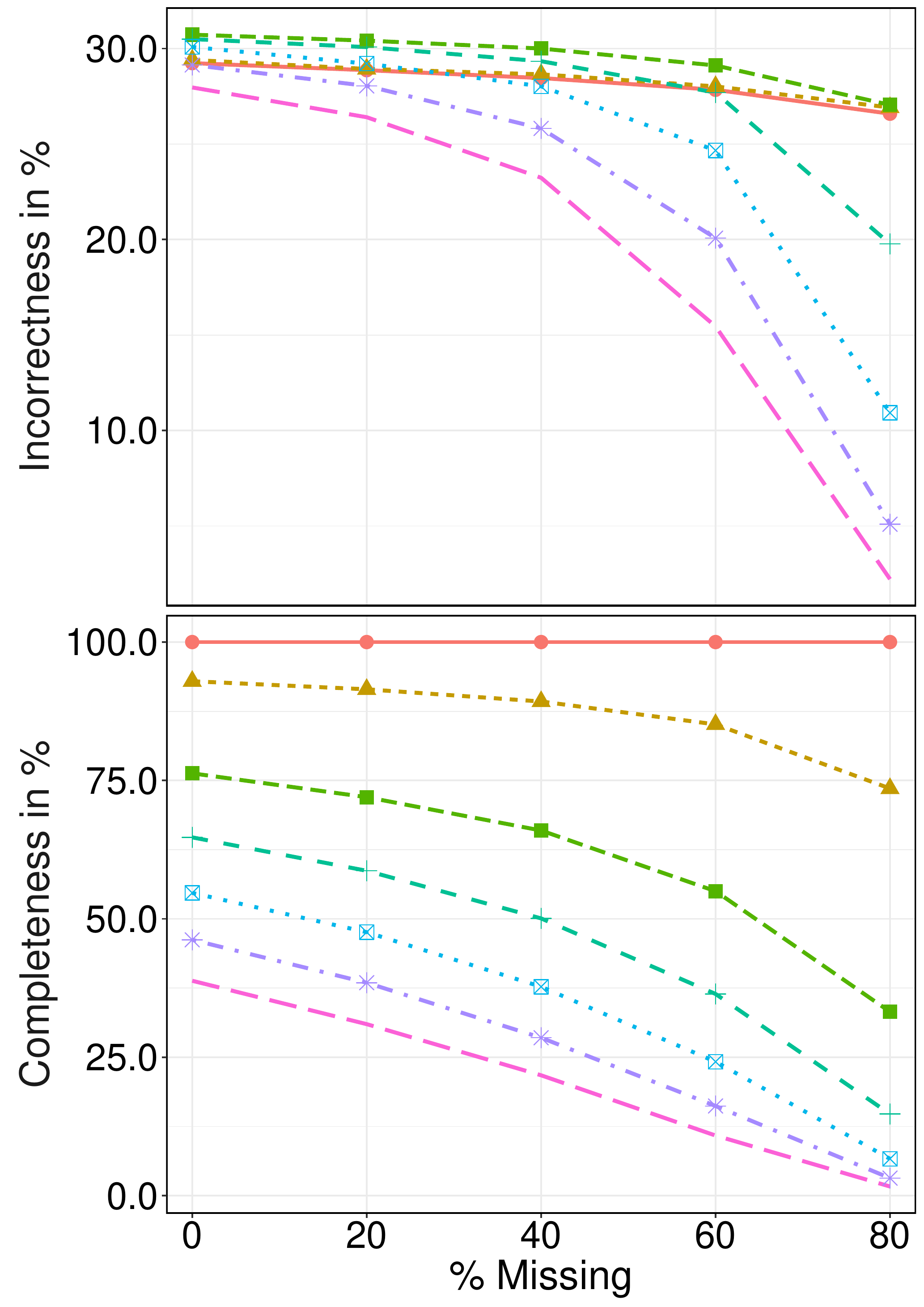}
	}%
	\vspace*{-2mm}
	\caption{\textbf{Missing labels}. Evolution of the average incorrectness (top) and completeness (bottom) for each level of imprecision (a curve for each one) and a discretization $z\!=\!6$, and with respect to different percentages of missing labels (x-axis).}\label{fig:expcphmissing}%
\end{figure}
\subsection{Noise Reversing}\label{app:noisereversing}
\begin{figure}[!th]
	\vspace*{-4mm}\centering
	\resizebox{0.45\textwidth}{!}{%
	  \includegraphics{images/missing/legends}
	}\vspace{-2mm}\qquad%
	\subfigure[\sc Emotions]{
	   \includegraphics[width=0.32\linewidth]{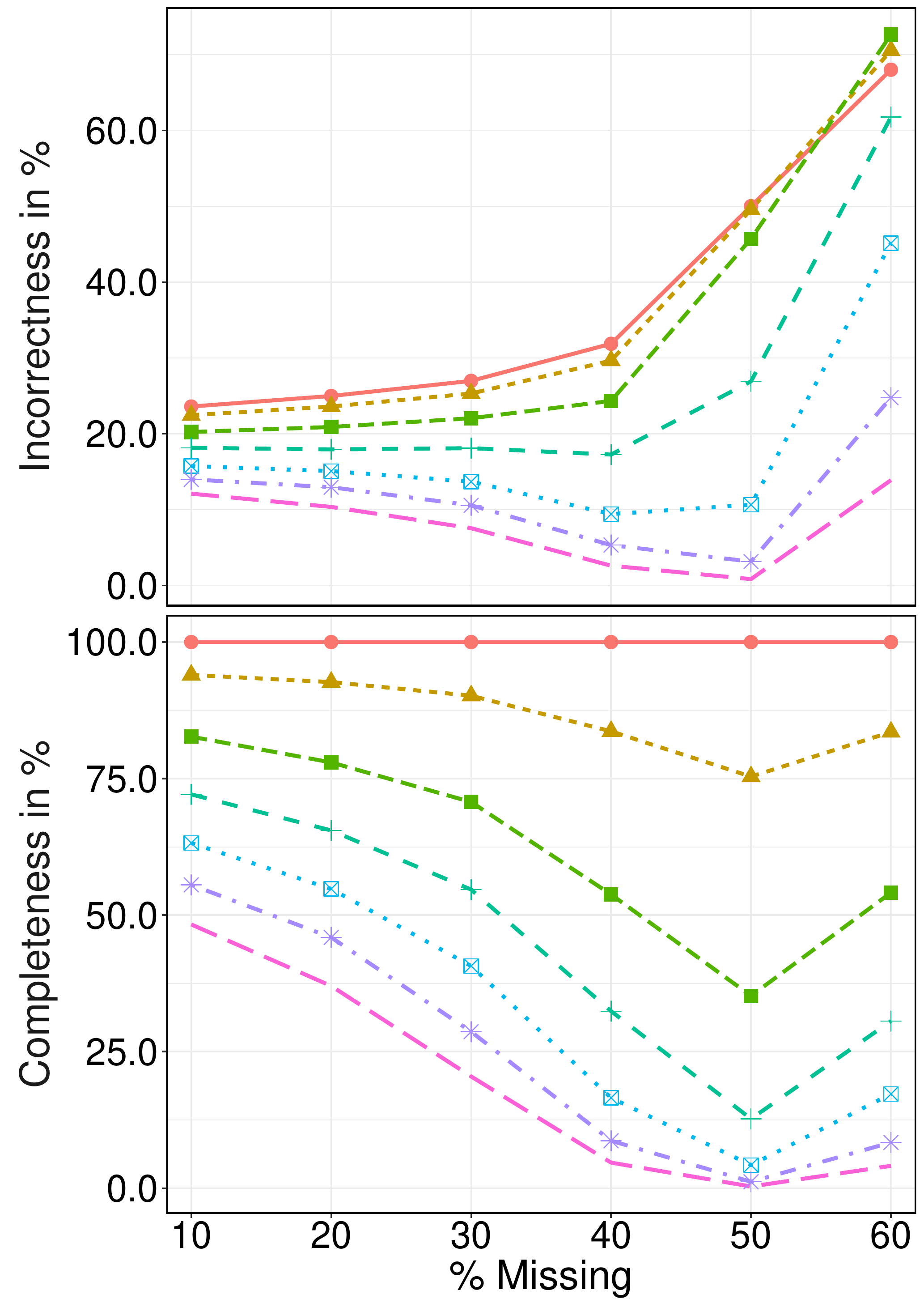}
	}%
	\subfigure[\sc Scene]{
		\includegraphics[width=0.32\linewidth]{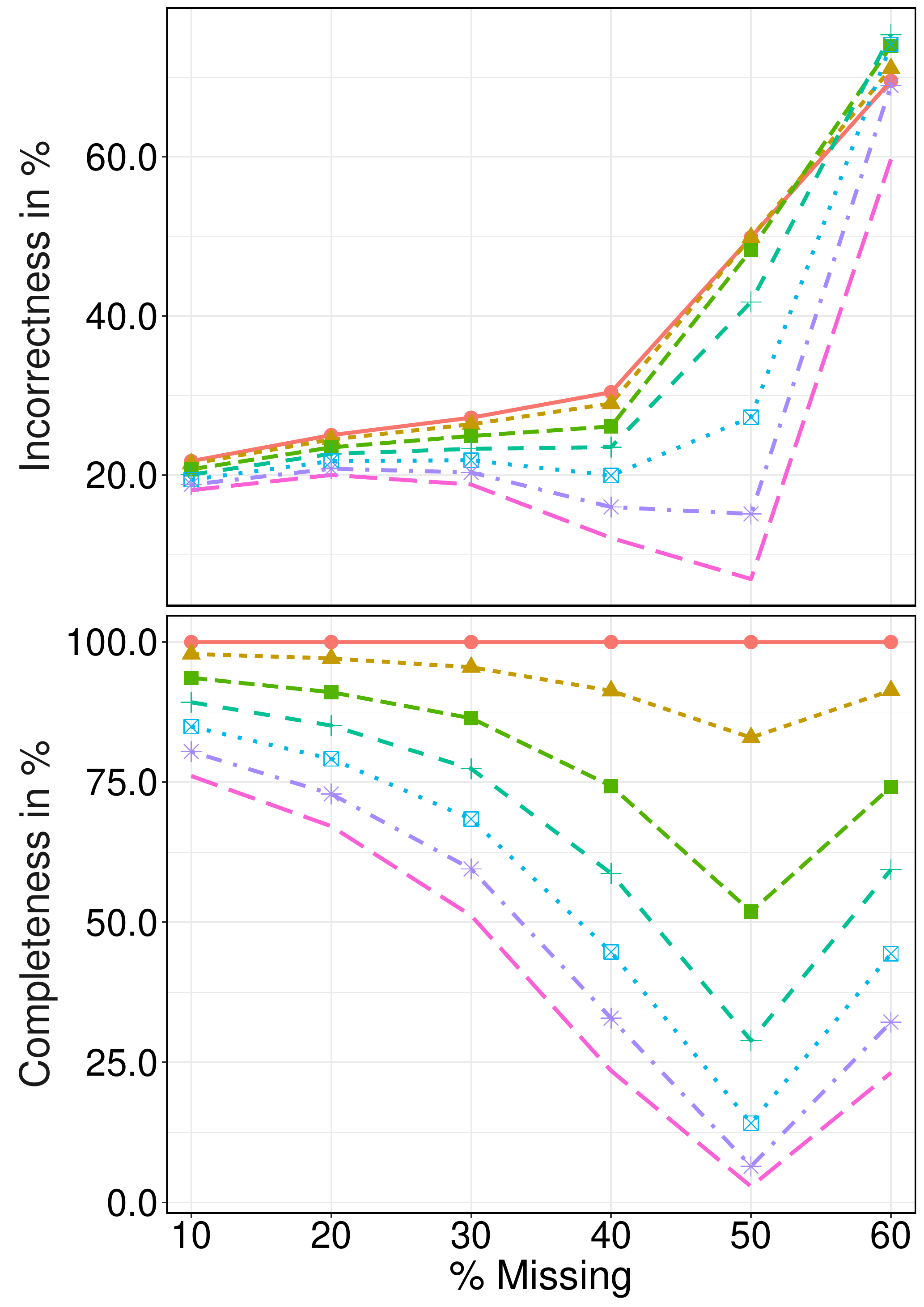}
	}%
	\subfigure[\sc Yeast]{
		\includegraphics[width=0.32\linewidth]{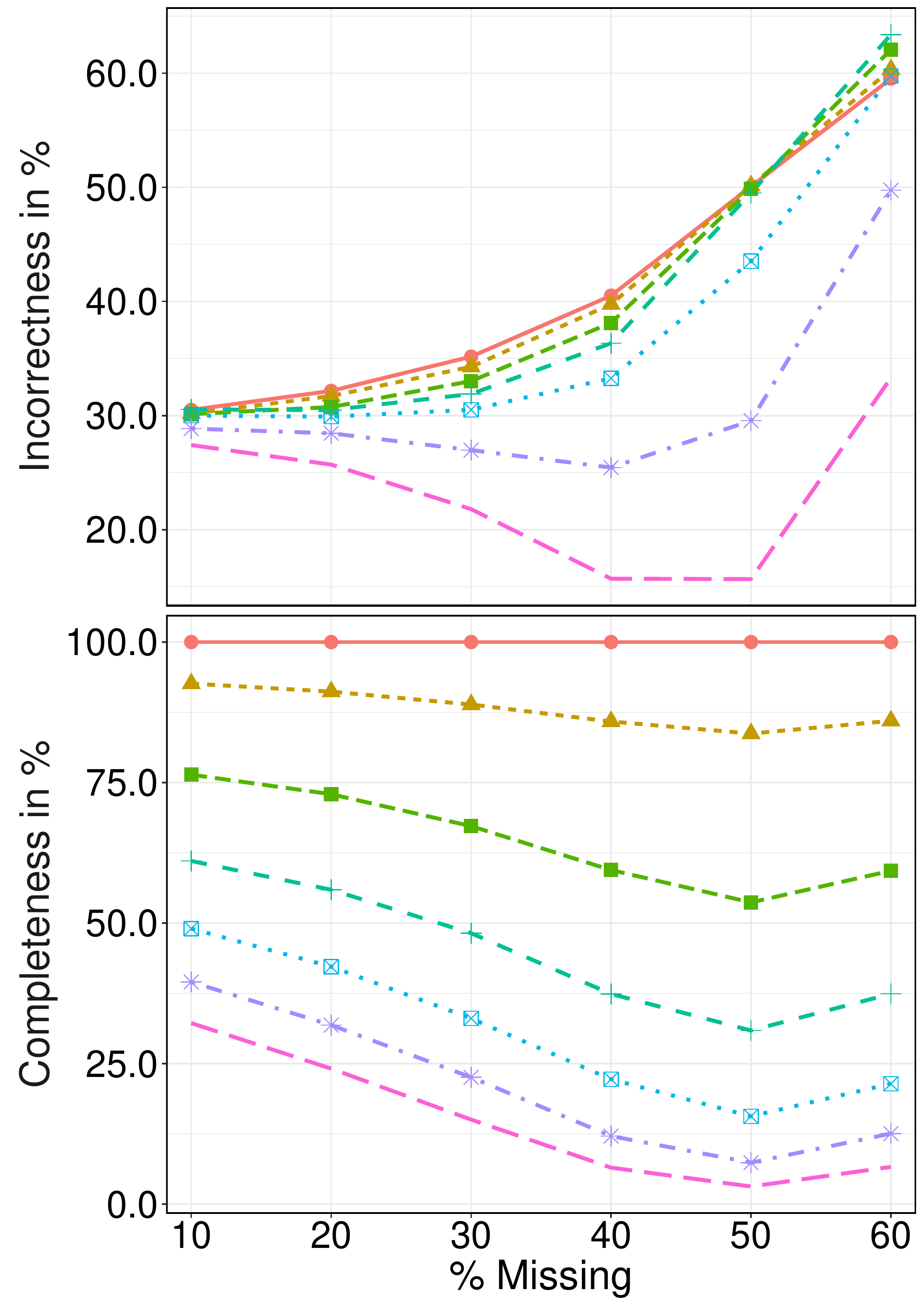}
	}%
	\vspace*{-2mm}
	\caption{\textbf{Noise-Reversing}. Evolution of the average incorrectness (top) and completeness (bottom) for each level of imprecision (a curve for each one) and a discretization $z\!=\!6$, and with respect to different percentages of noisy labels (x-axis).}\label{fig:expcphnoisereverse}%
\end{figure}
\subsection{Noise Flipping}\label{app:noiseflipping}
\begin{figure}[!th]
	\vspace*{-4mm}\centering
	\resizebox{0.45\textwidth}{!}{%
	  \includegraphics{images/missing/legends}
	}\vspace{-2mm}\qquad%
	\subfigure[\sc Emotions]{
	   \includegraphics[width=0.32\linewidth]{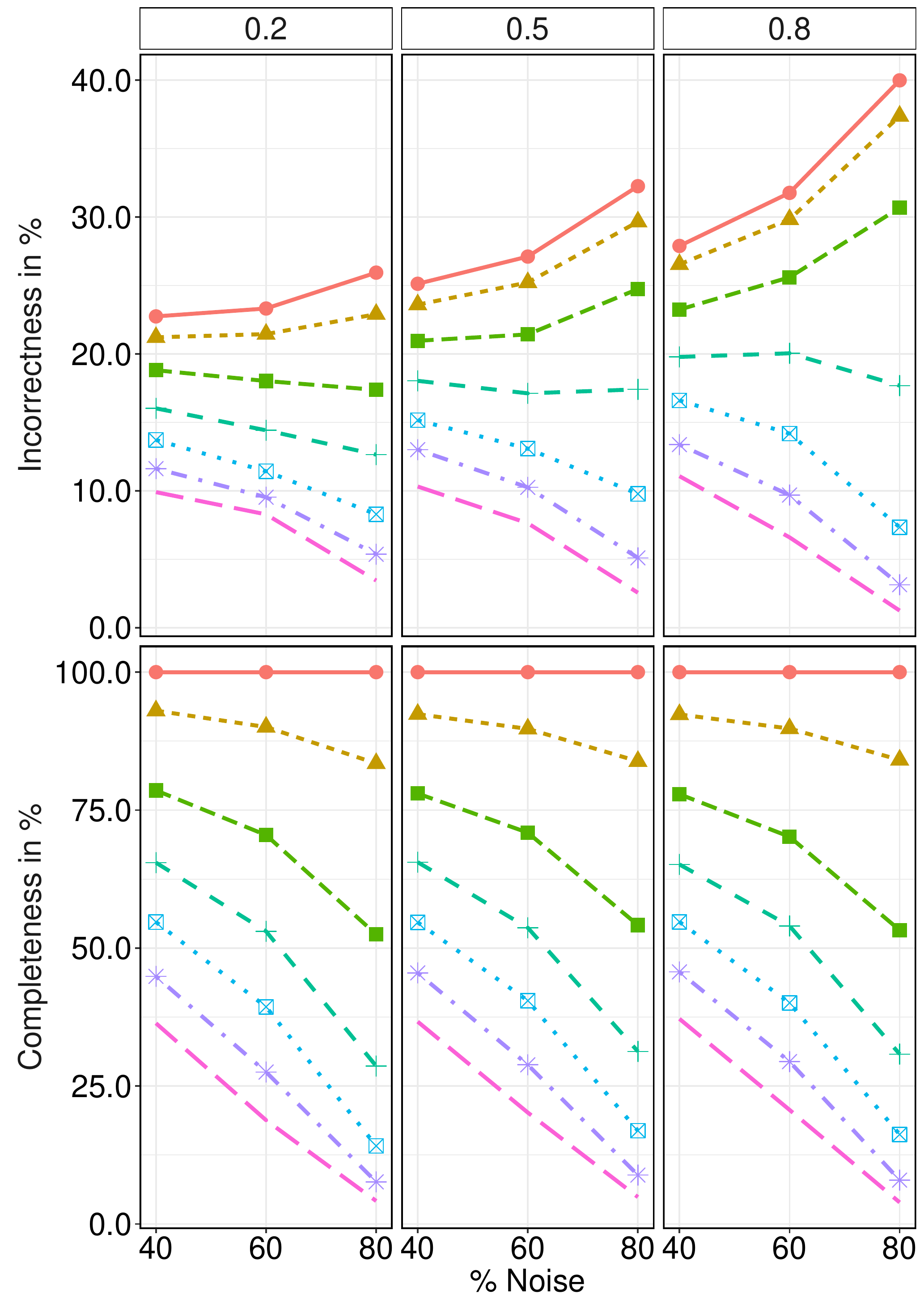}
	}%
	\subfigure[\sc Scene]{
		\includegraphics[width=0.32\linewidth]{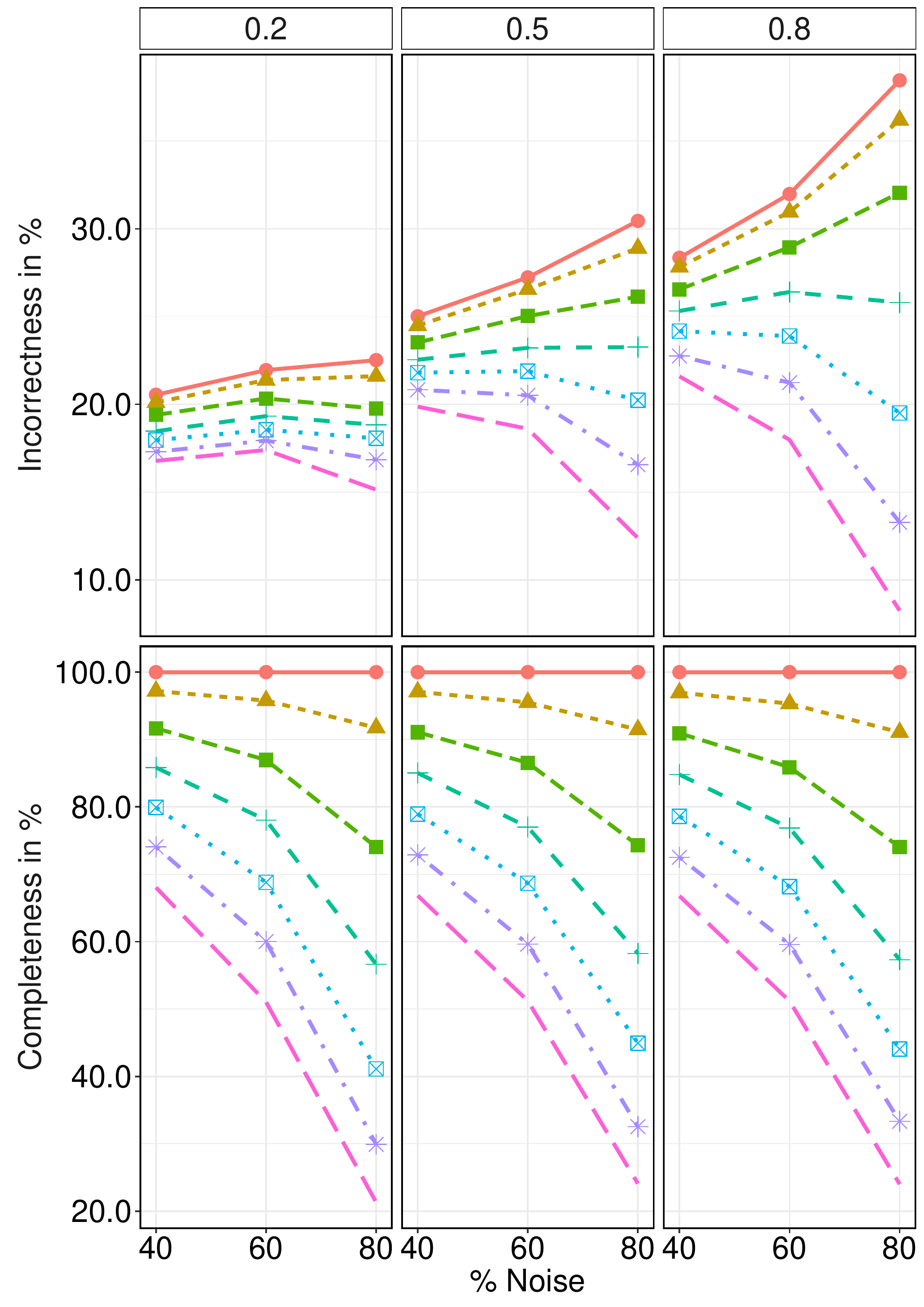}
	}%
	\subfigure[\sc Yeast]{
		\includegraphics[width=0.32\linewidth]{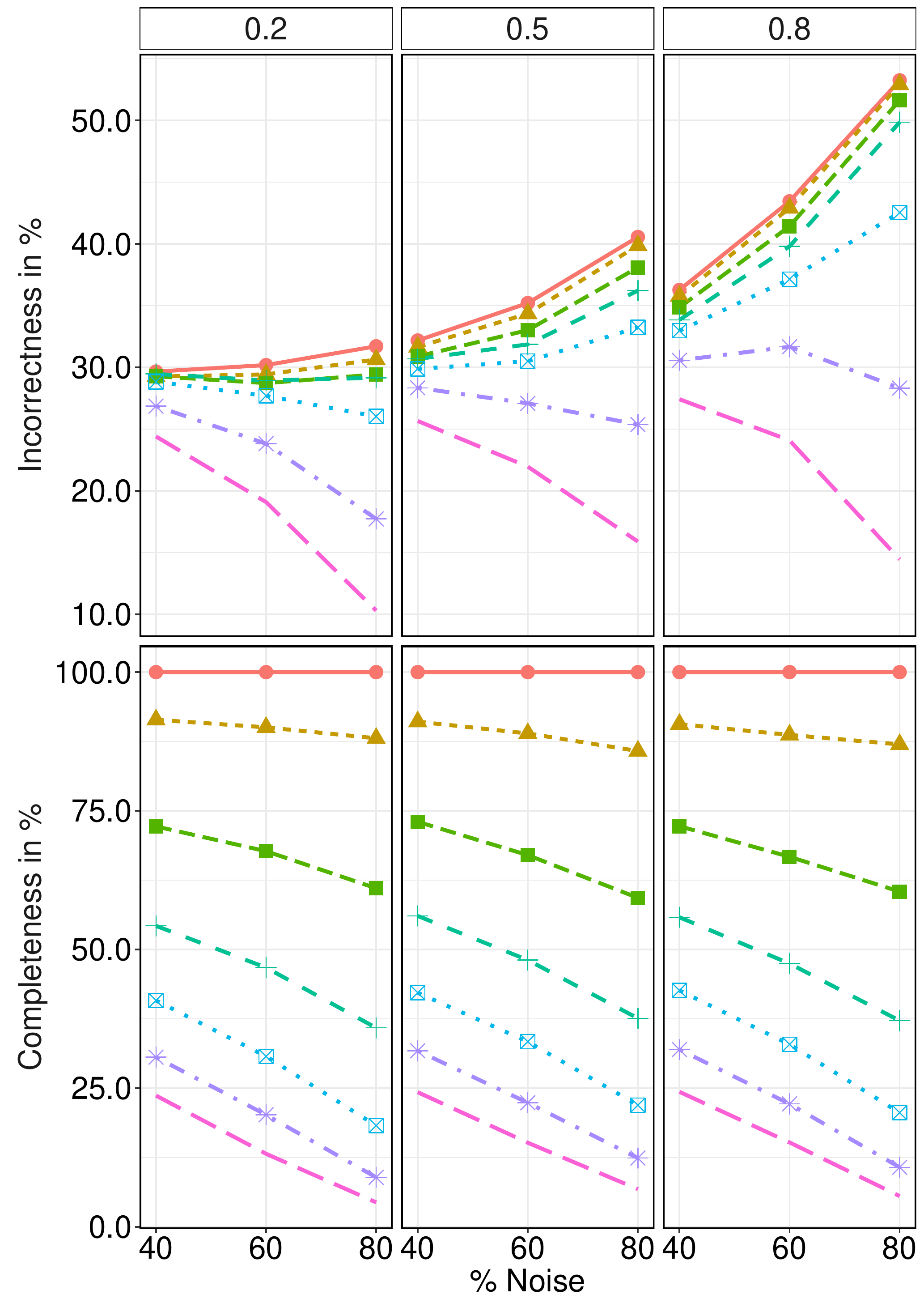}
	}%
	\vspace*{-2mm}
	\caption{\textbf{Noise-Flipping}. Evolution of the average incorrectness (top) and completeness (bottom) for each level of imprecision (a curve for each one), a level of discretization $z\!=\!6$, and three different probabilities $\beta\!=\!0.2$ (left), $\beta\!=\!0.5$ (middle) and $\beta\!=\!0.8$ (right) of replacing the selected label with a $1$, and with respect to different percentages of noisy labels (x-axis).}\label{fig:expcphnoiseflipping}%
\end{figure}

\subsection{Supplementary results - Rejection Option}\label{app:rejection}
\begin{figure}[!th]
	\vspace*{-4mm}\centering
	\subfigure{
		\includegraphics[width=\linewidth]{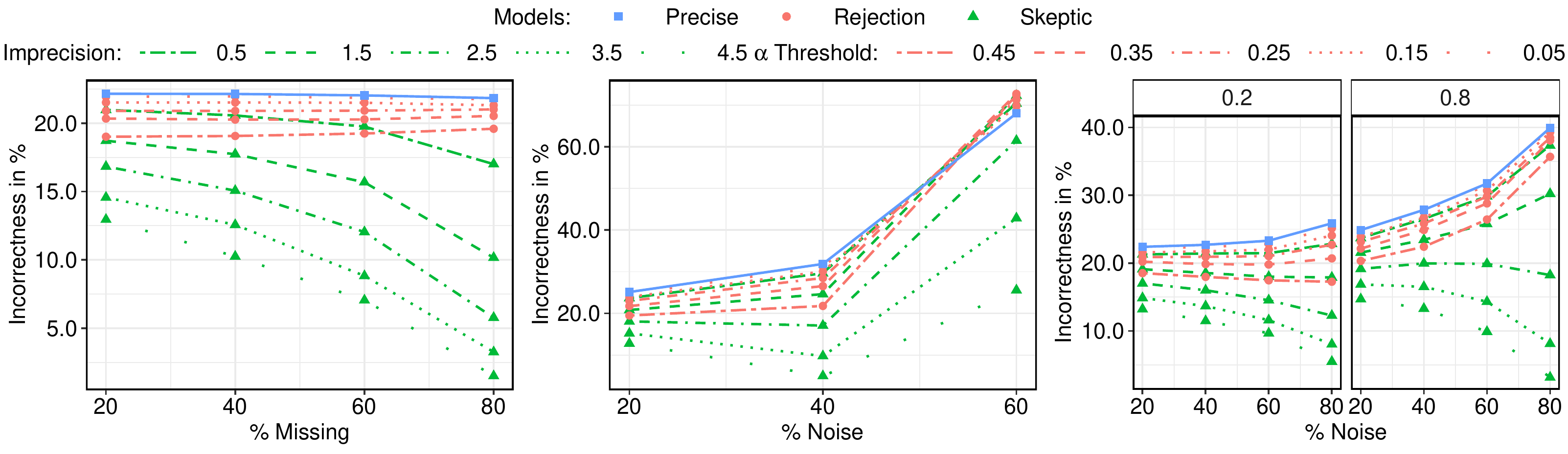}
	}\vspace{-4mm}\qquad%
	\subfigure{
		\includegraphics[width=\linewidth]{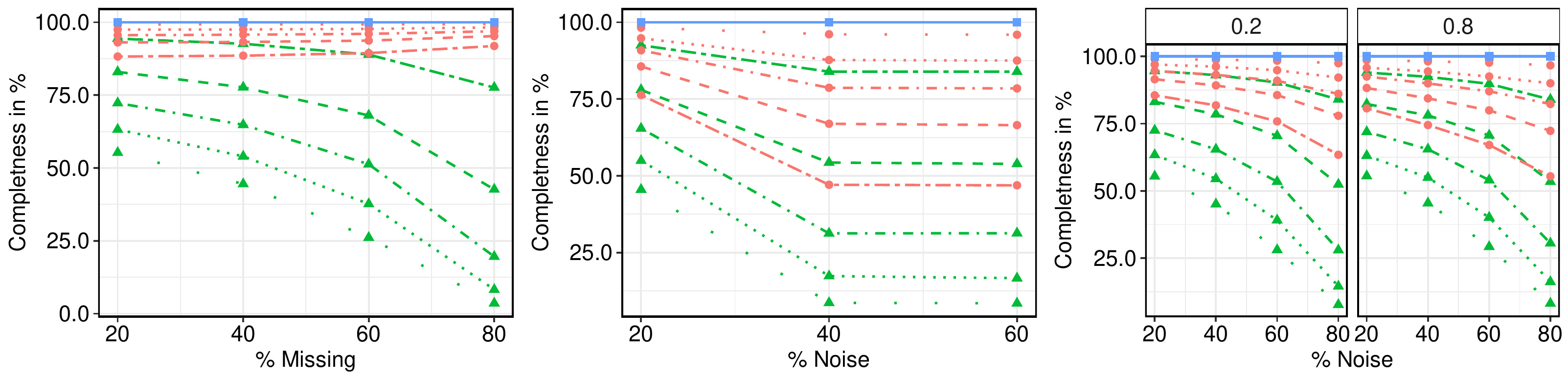}
	}\vspace{-4mm}
	\caption{{\sc Emotions:~}{\bf Rejection vs. Skeptic.} Incorrectness (top) and Completeness (bottom) evolution for different levels of imprecision and discretization $z=6$, and with respect to different percentages of noisy or missing labels (x-axis).}
\end{figure}

\begin{figure}[!th]
	\centering
	\subfigure{
		\includegraphics[width=\linewidth]{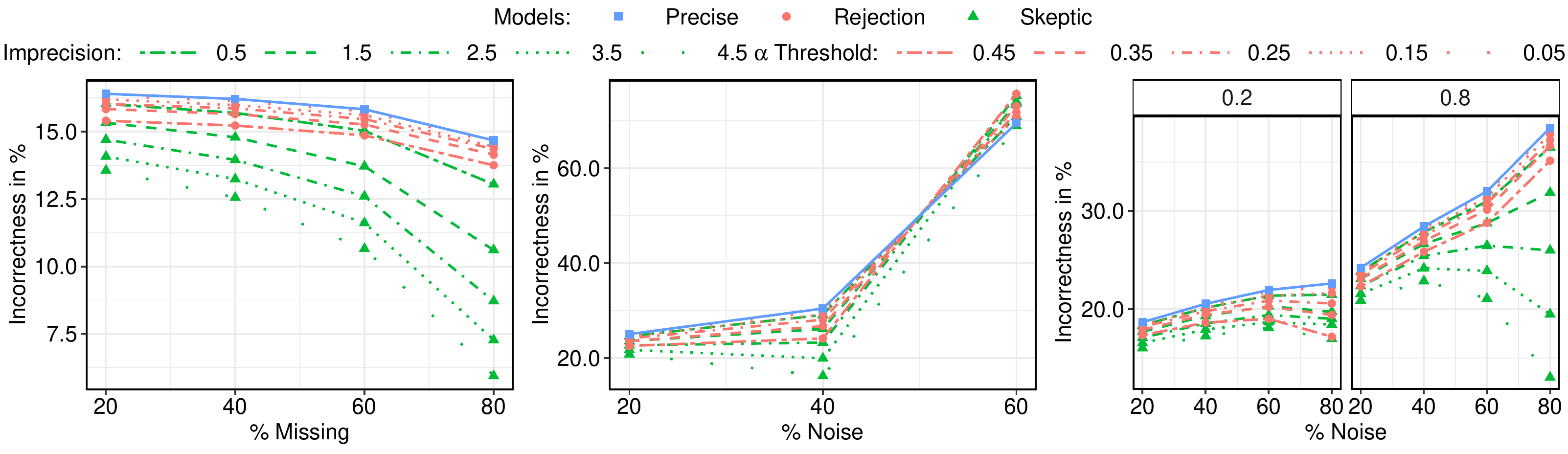}
	}\vspace{-2mm}\qquad%
	\subfigure{
		\includegraphics[width=\linewidth]{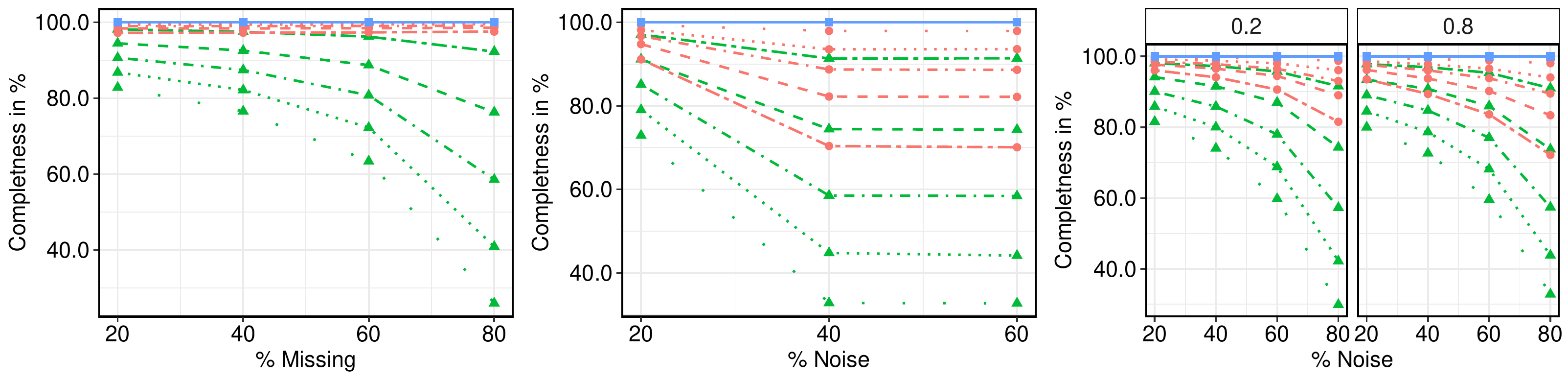}
	}\vspace{-1mm}
	\caption{{\sc Scene:~}{\bf Rejection vs. Skeptic.} Incorrectness (top) and Completeness (bottom) evolution for different levels of imprecision and discretization $z=6$, and with respect to different percentages of noisy or missing labels (x-axis).}
\end{figure}

\begin{figure}[!th]
	\centering
	\subfigure{
		\includegraphics[width=\linewidth]{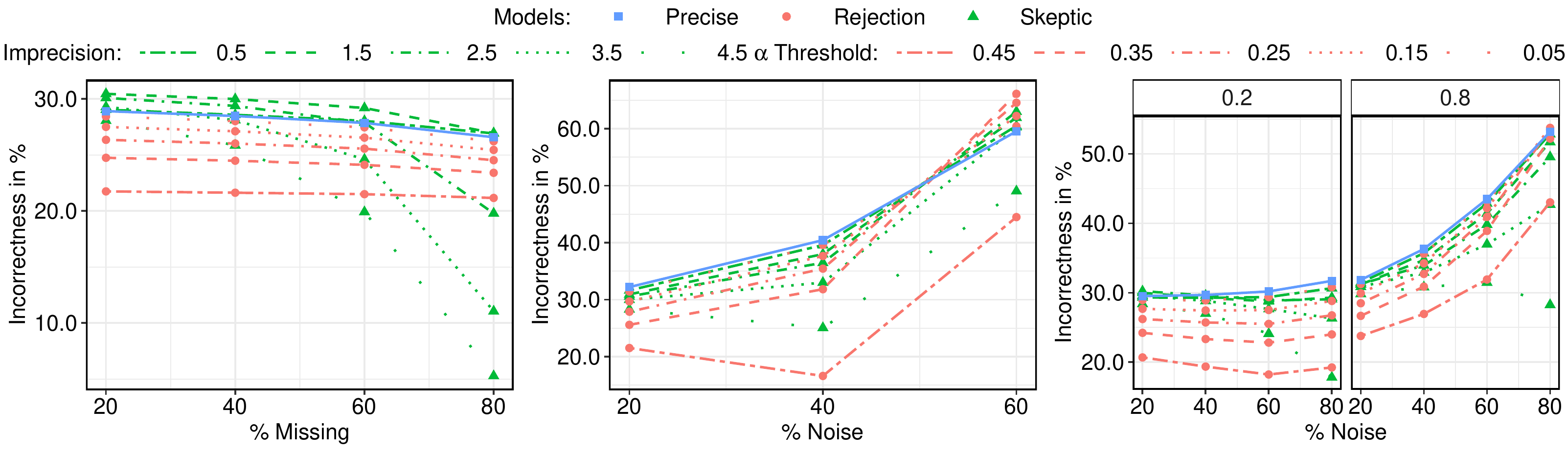}
	}\vspace{-2mm}\qquad%
	\subfigure{
		\includegraphics[width=\linewidth]{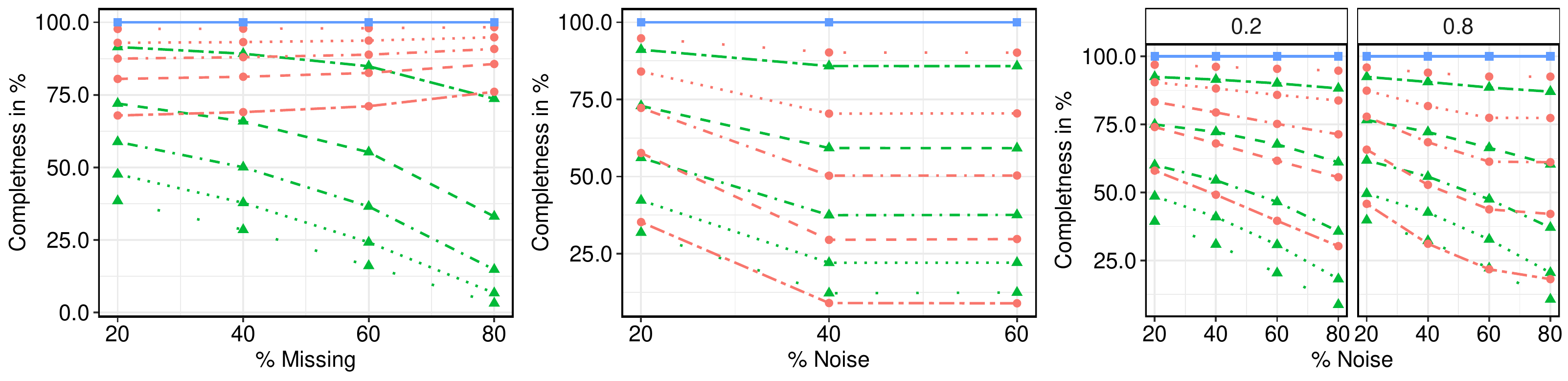}
	}\vspace{-1mm}
	\caption{{\sc Yeast:~}{\bf Rejection vs. Skeptic.} Incorrectness (top) and Completeness (bottom) evolution for different levels of imprecision and discretization $z=6$, and with respect to different percentages of noisy or missing labels (x-axis).}
\end{figure}

\clearpage
\newcommand\wimg{0.99}
\section{Supplementary Downsampling}\vspace{-5mm}
\label{app:suppresampling}
\begin{figure}[!th]
	\centering
	\subfigure{
		\includegraphics[width=\wimg\linewidth]{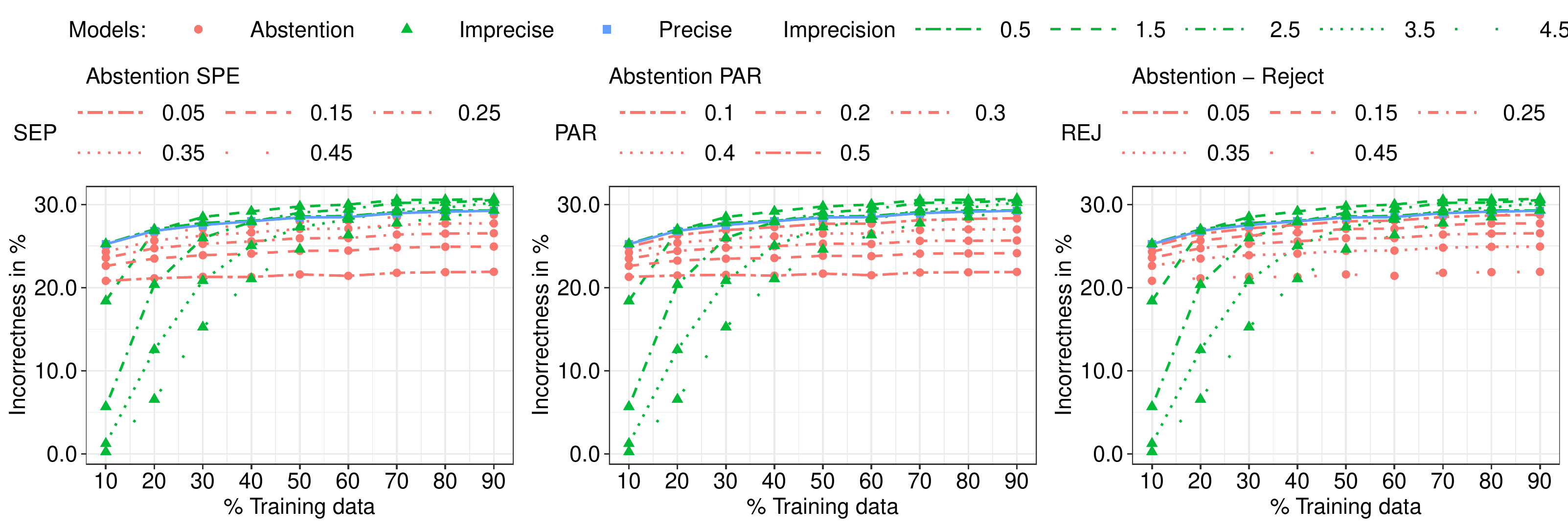}
	}\vspace{-3mm}\qquad%
	\subfigure{
		\includegraphics[width=\wimg\linewidth]{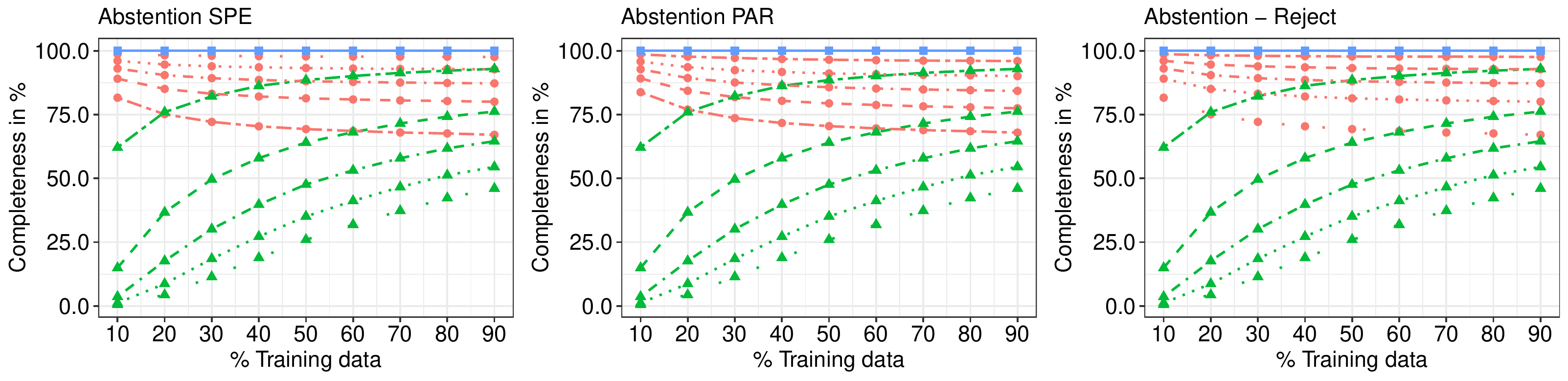}
	}\vspace{-4mm}
	\caption{{\bf Yeast - Downsampling - Naive credal classifier.} Incorrectness and Completeness (y-axis) evolution of all skeptical approachs with respect to different percentages of training data sets (x-axis).}
\end{figure}
\begin{figure}[!th]\vspace*{-4mm}
	\centering
	\subfigure{
		\includegraphics[width=\wimg\linewidth]{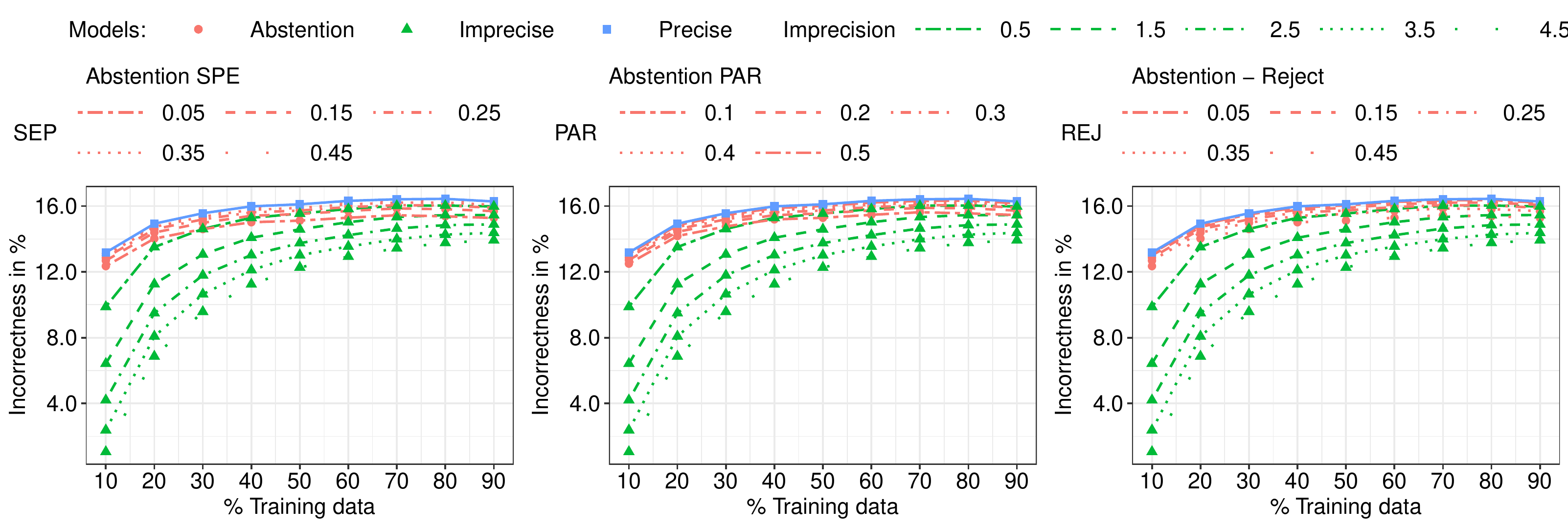}
	}\vspace{-3mm}\qquad%
	\subfigure{
		\includegraphics[width=\wimg\linewidth]{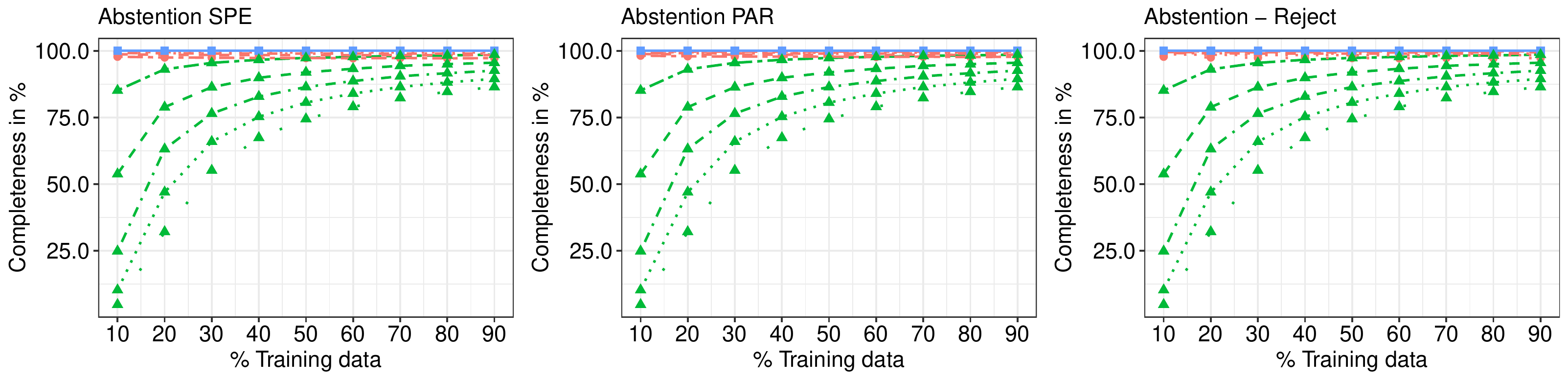}
	}\vspace{-4mm}
	\caption{{\bf Scene - Downsampling - Naive credal classifier.} Incorrectness and Completeness (y-axis) evolution of all skeptical approachs with respect to different percentages of training data sets (x-axis).}
\end{figure}
\begin{figure}[!th]
	\centering
	\subfigure{
		\includegraphics[width=\linewidth]{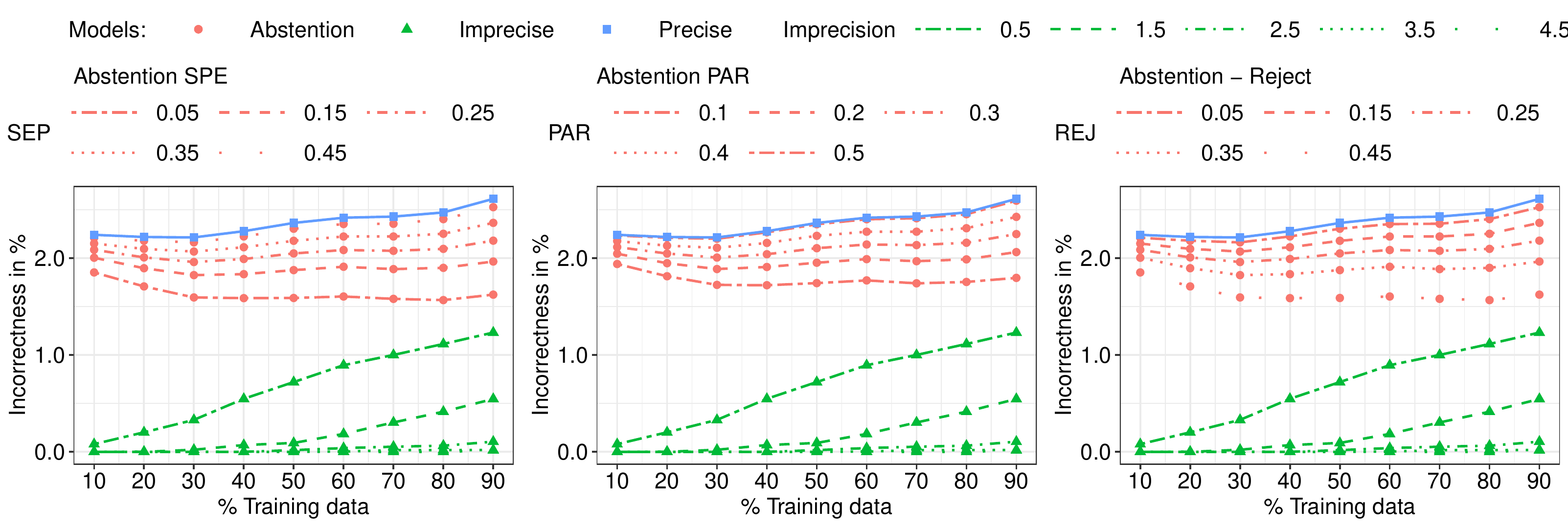}
	}\vspace{-3mm}\qquad%
	\subfigure{
		\includegraphics[width=\linewidth]{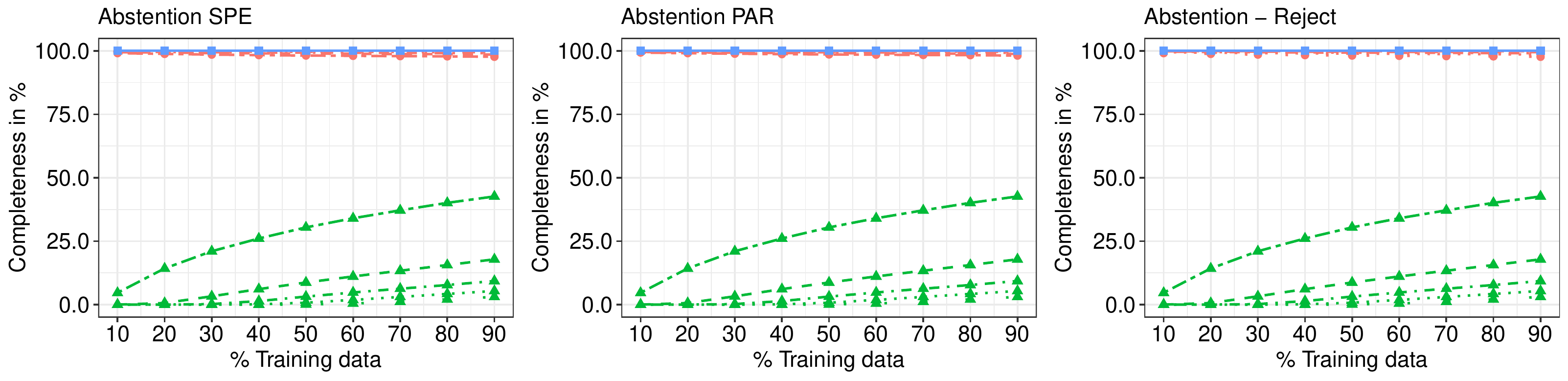}
	}\vspace{-4mm}
	\caption{{\bf Medical - Downsampling - Naive credal classifier.} Incorrectness and Completeness (y-axis) evolution of all skeptical approaches with respect to different percentages of training data sets (x-axis).}
\end{figure}
\begin{figure}[!th]
	\centering
	\subfigure{
		\includegraphics[width=\linewidth]{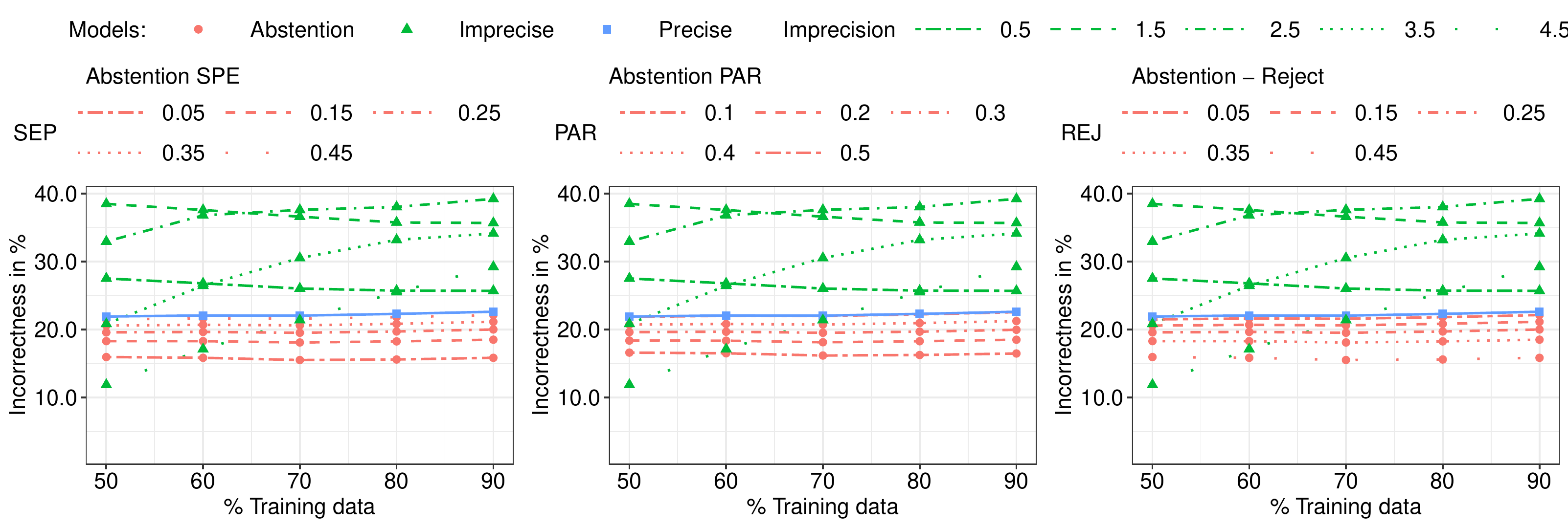}
	}\vspace{-3mm}\qquad%
	\subfigure{
		\includegraphics[width=\linewidth]{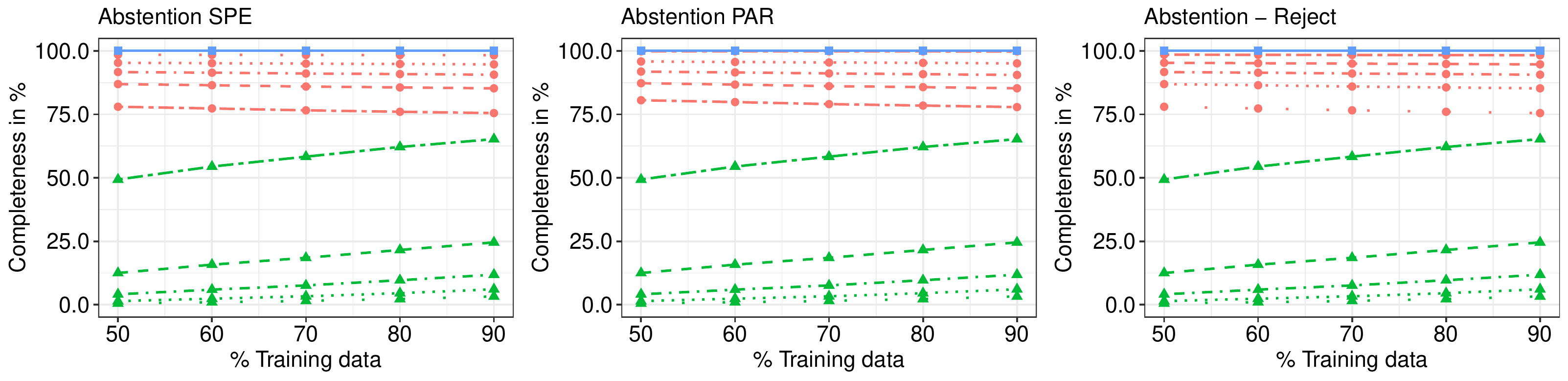}
	}\vspace{-4mm}
	\caption{{\bf CAL500 - Downsampling - Naive credal classifier.} Incorrectness and Completeness (y-axis) evolution of all skeptical approaches with respect to different percentages of training data sets (x-axis).}
\end{figure}
\begin{figure}[!th]
	\centering
	\subfigure{
		\includegraphics[width=\linewidth]{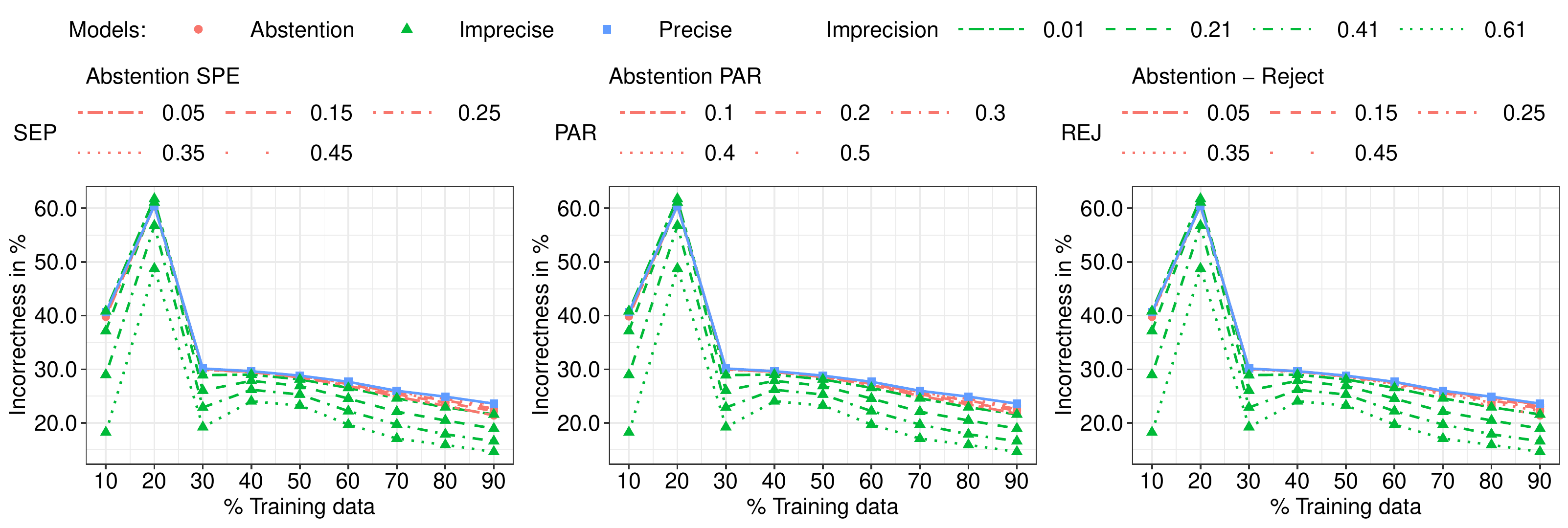}
	}\vspace{-3mm}\qquad%
	\subfigure{
		\includegraphics[width=\linewidth]{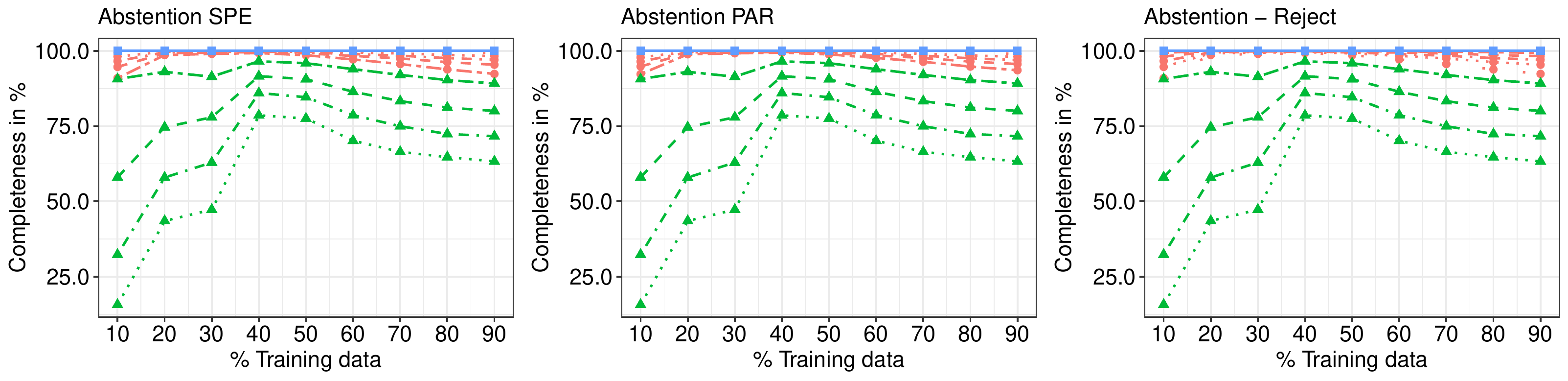}
	}\vspace{-4mm}
	\caption{{\bf Emotions - Downsampling - IQDA.} Incorrectness and Completeness (y-axis) evolution of all skeptical approaches with respect to different percentages of training data sets (x-axis).}
\end{figure}
\begin{figure}[!th]
	\centering
	\subfigure{
		\includegraphics[width=\linewidth]{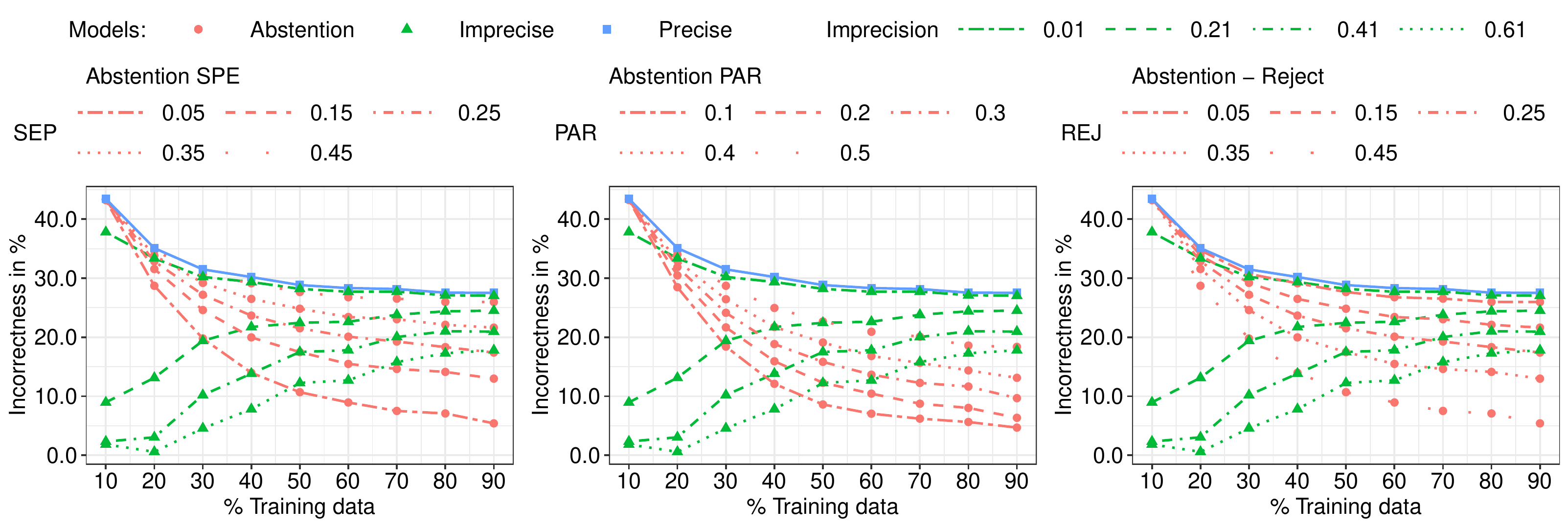}
	}\vspace{-3mm}\qquad%
	\subfigure{
		\includegraphics[width=\linewidth]{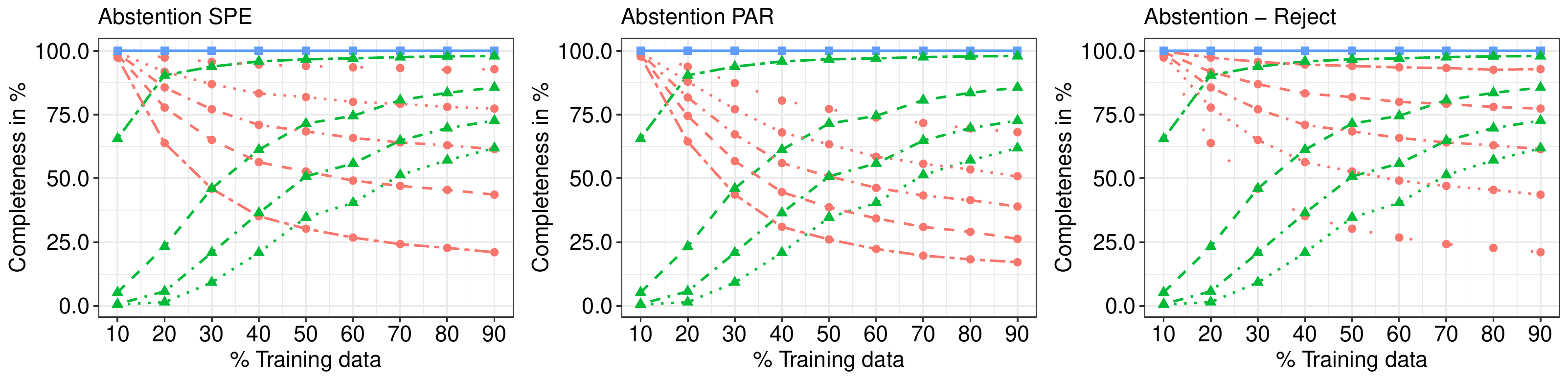}
	}\vspace{-4mm}
	\caption{{\bf Flags - Downsampling - ILDA.} Incorrectness and Completeness (y-axis) evolution of all skeptical approaches with respect to different percentages of training data sets (x-axis).}
\end{figure}
\begin{figure}[!th]
	\centering
	\subfigure{
		\includegraphics[width=\linewidth]{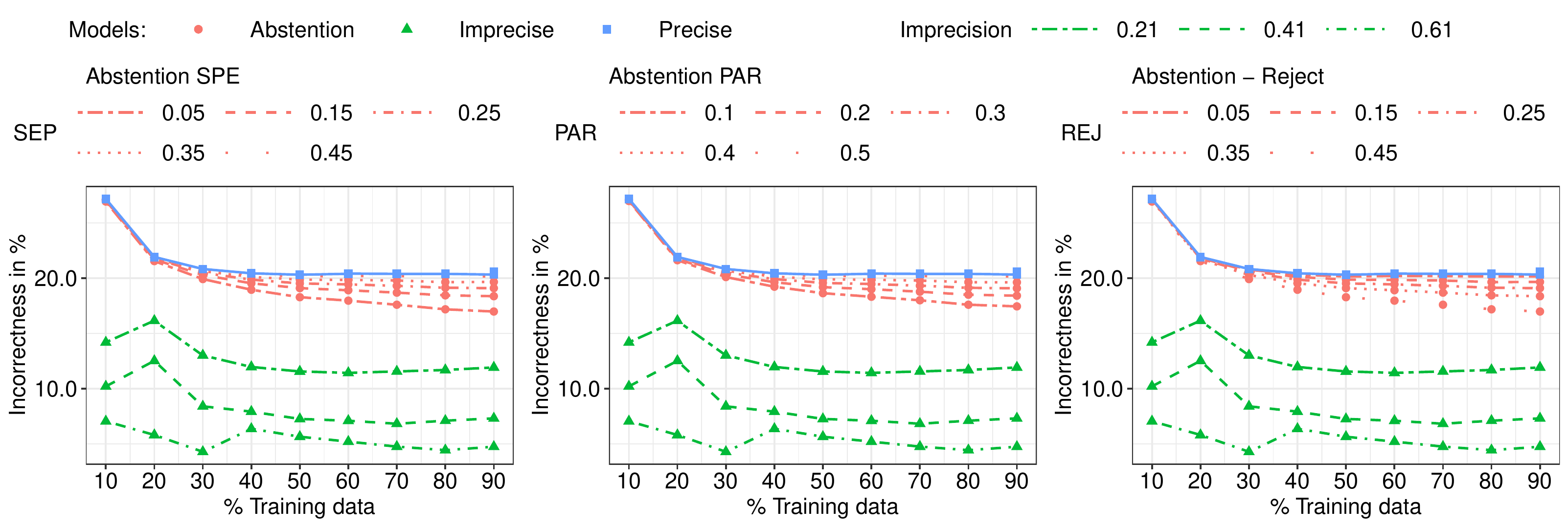}
	}\vspace{-3mm}\qquad%
	\subfigure{
		\includegraphics[width=\linewidth]{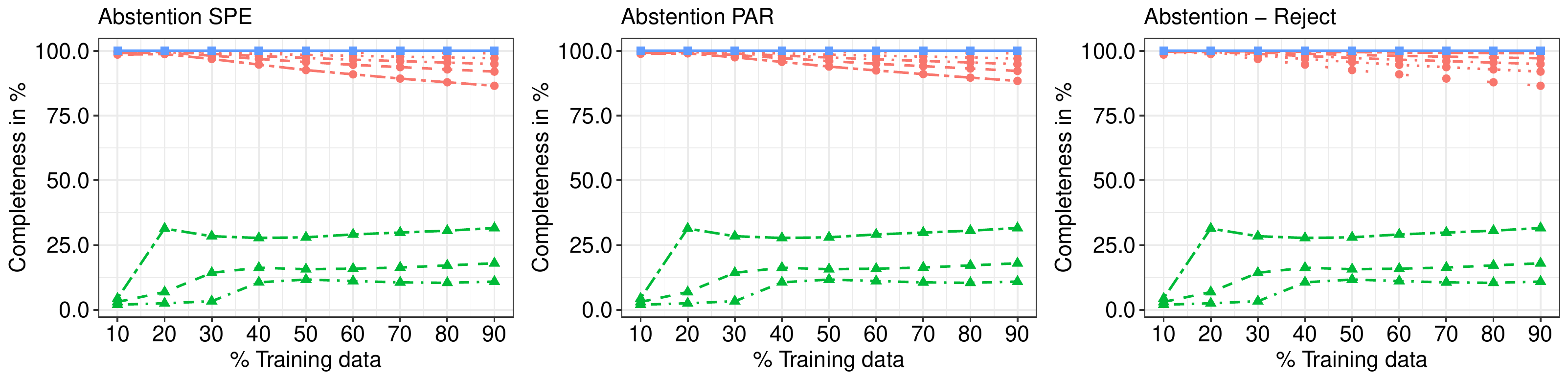}
	}\vspace{-4mm}
	\caption{{\bf Yeast - Downsampling - IQDA.} Incorrectness and Completeness (y-axis) evolution of all skeptical approaches with respect to different percentages of training data sets (x-axis).}
\end{figure}
\begin{figure}[!th]
	\centering
	\subfigure{
		\includegraphics[width=\linewidth]{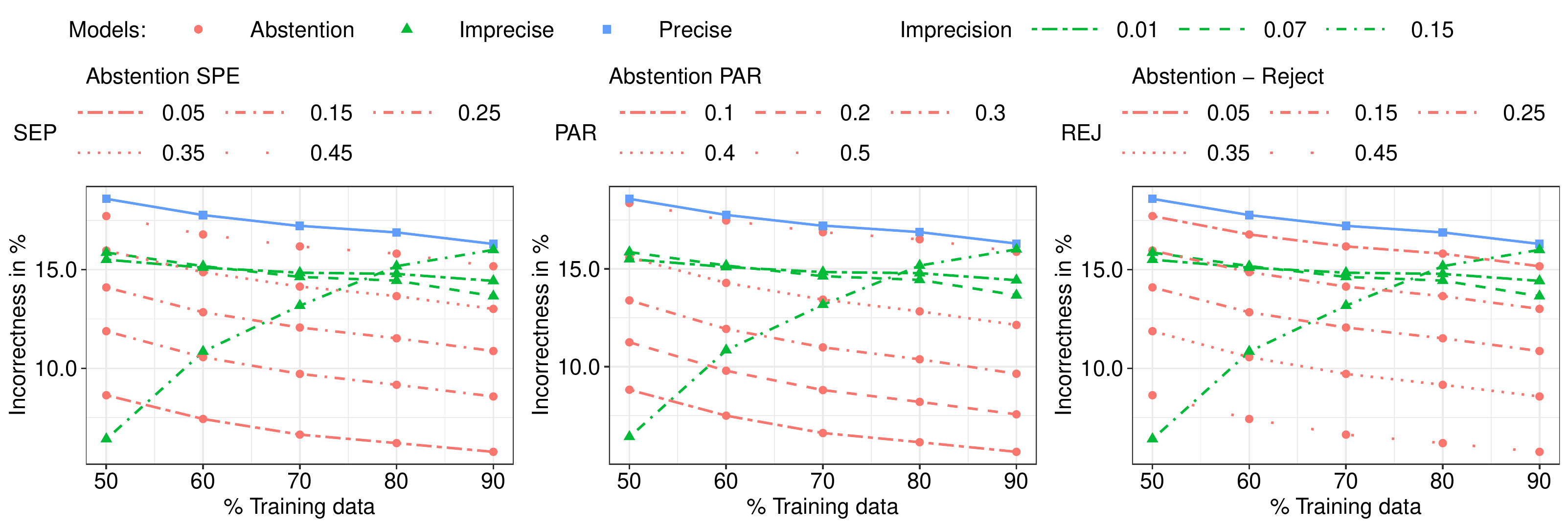}
	}\vspace{-3mm}\qquad%
	\subfigure{
		\includegraphics[width=\linewidth]{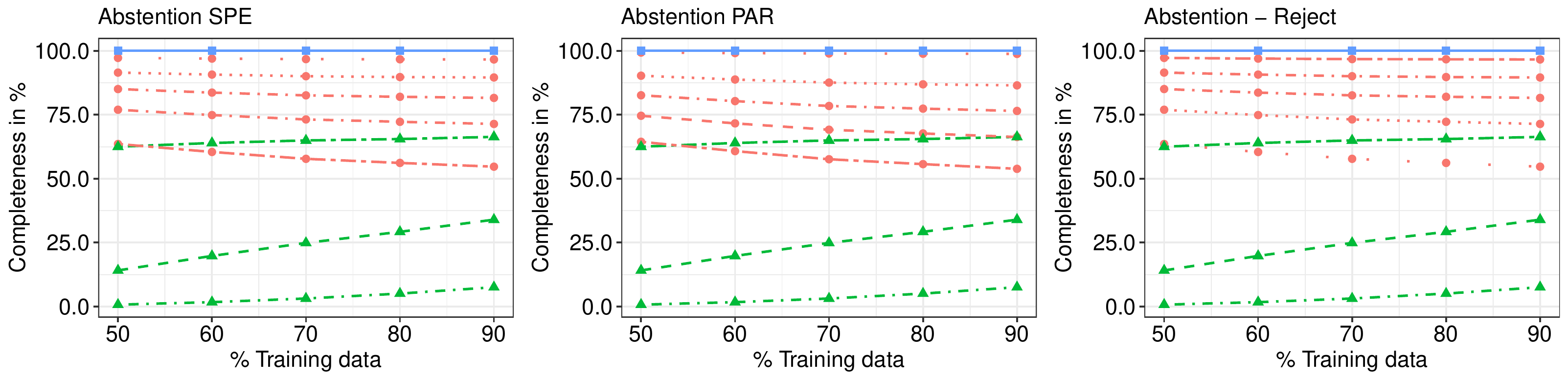}
	}\vspace{-4mm}
	\caption{{\bf CAL500 - Downsampling - ILDA.} Incorrectness and Completeness (y-axis) evolution of all skeptical approaches with respect to different percentages of training data sets (x-axis).}
\end{figure} 
\begin{figure}[!th]
	\centering
	\subfigure{
		\includegraphics[width=\linewidth]{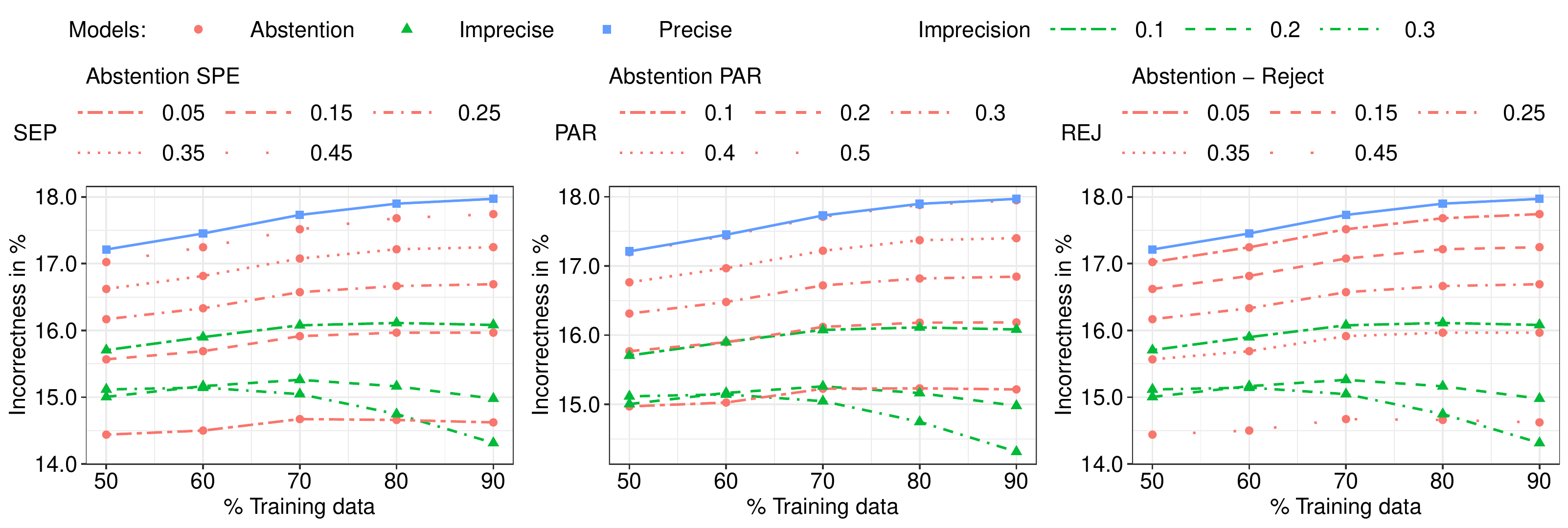}
	}\vspace{-3mm}\qquad%
	\subfigure{
		\includegraphics[width=\linewidth]{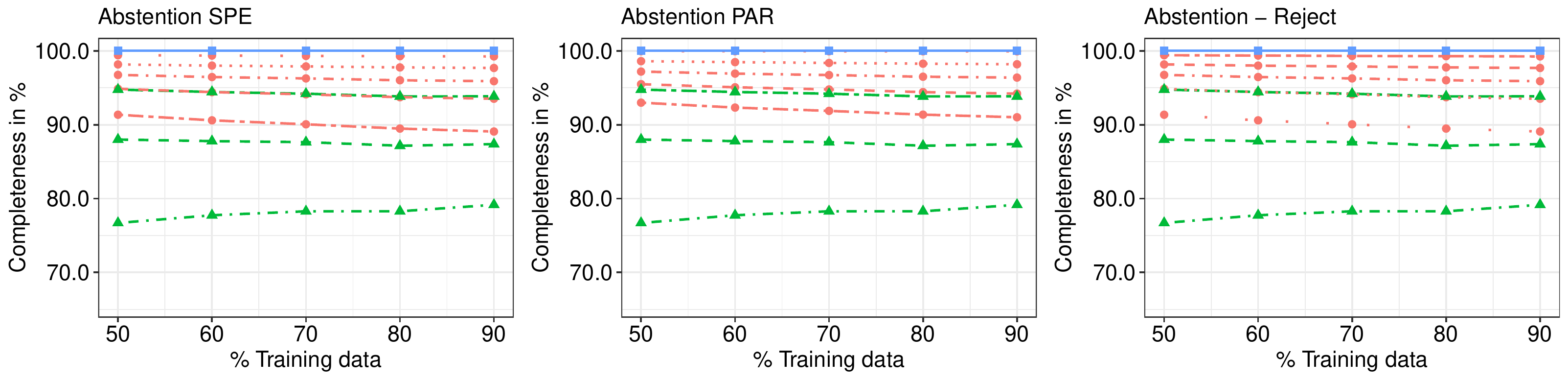}
	}\vspace{-4mm}
	\caption{{\bf CAL500 - Downsampling - CSPN IDM.} Incorrectness and Completeness (y-axis) evolution of all skeptical approaches with respect to different percentages of training data sets (x-axis).}
\end{figure} 
\begin{figure}[!th]
	\centering
	\subfigure{
		\includegraphics[width=\linewidth]{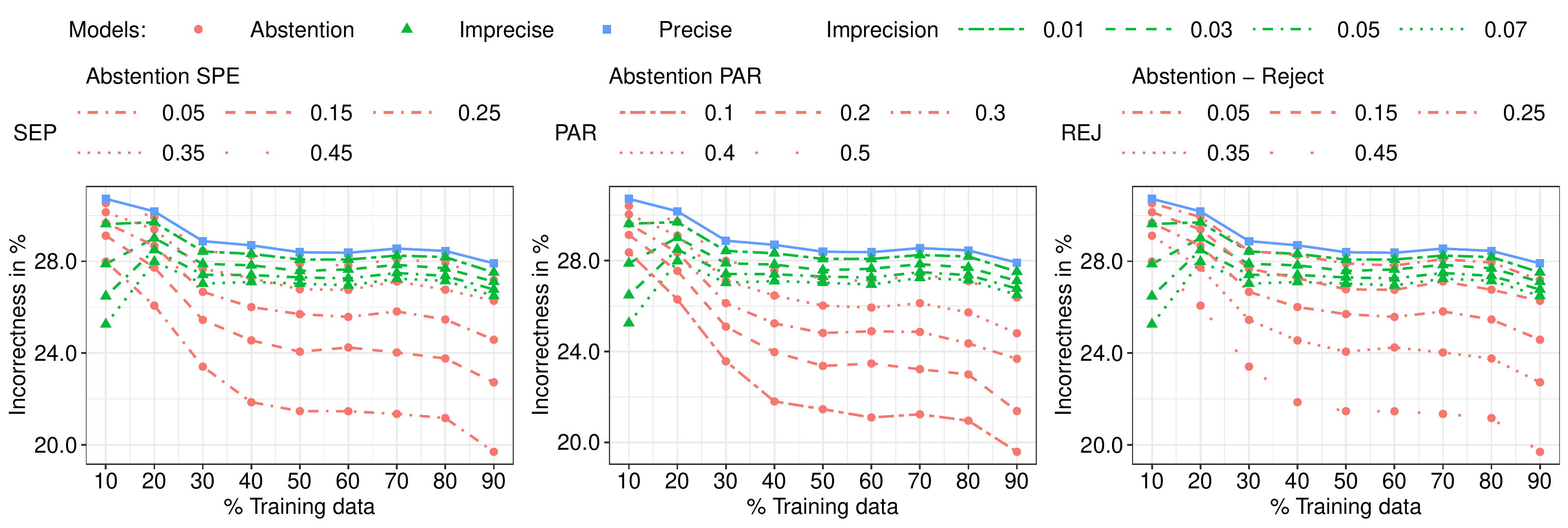}
	}\vspace{-3mm}\qquad%
	\subfigure{
		\includegraphics[width=\linewidth]{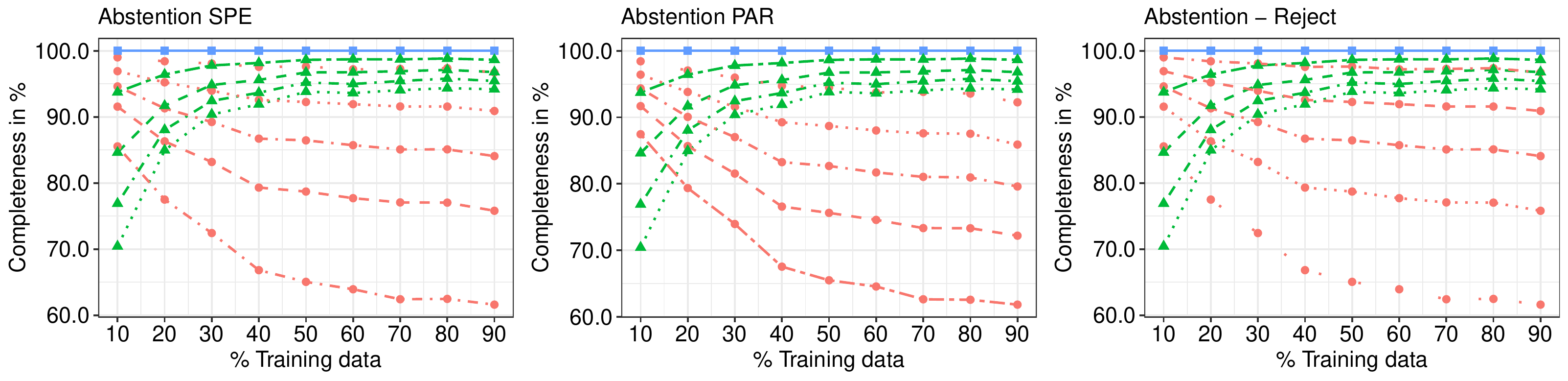}
	}\vspace{-4mm}
	\caption{{\bf Flags - Downsampling - CSPN IDM.} Incorrectness and Completeness (y-axis) evolution of all skeptical approaches with respect to different percentages of training data sets (x-axis).}
\end{figure} 
\begin{figure}[!th]
	\centering
	\subfigure{
		\includegraphics[width=\linewidth]{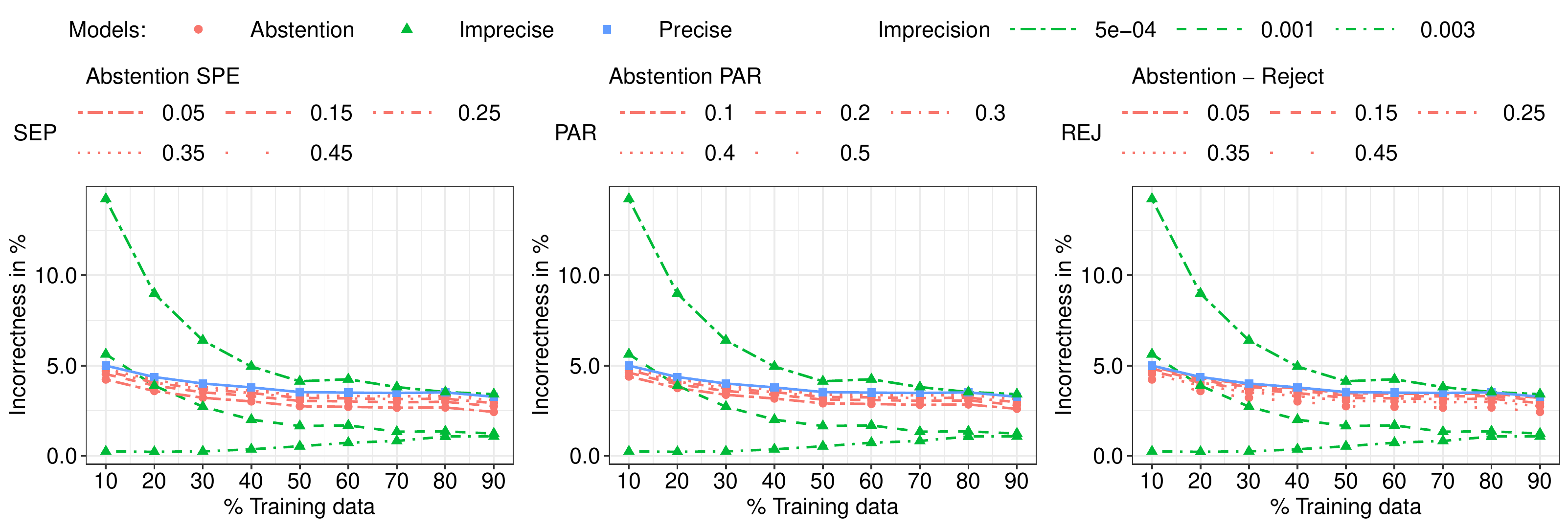}
	}\vspace{-3mm}\qquad%
	\subfigure{
		\includegraphics[width=\linewidth]{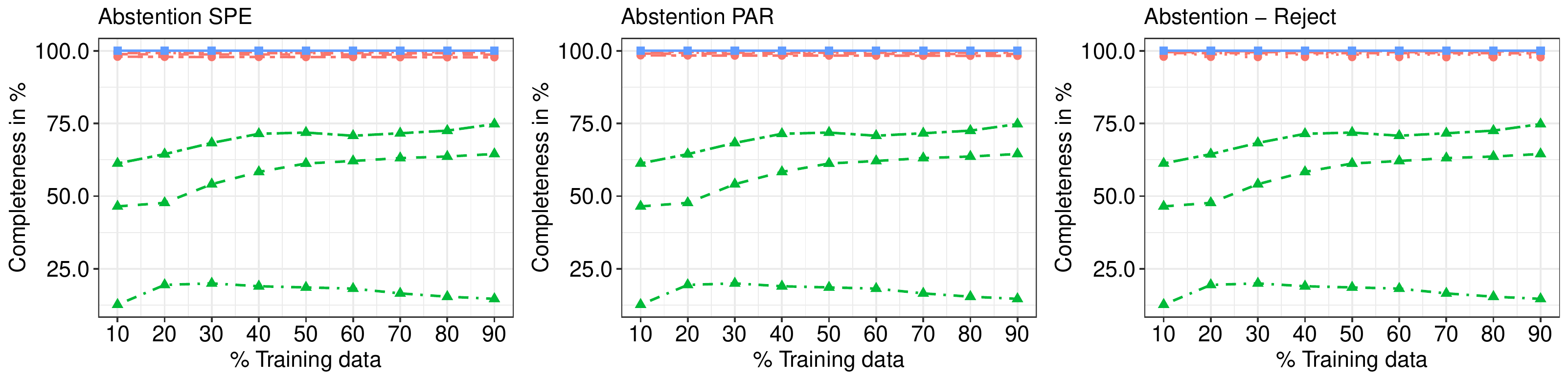}
	}\vspace{-4mm}
	\caption{{\bf Medical - Downsampling - CSPN $\epsilon$-cont.} Incorrectness and Completeness (y-axis) evolution of all skeptical approaches with respect to different percentages of training data sets (x-axis).}
\end{figure} 
\begin{figure}[!th]
	\centering
	\subfigure{
		\includegraphics[width=\linewidth]{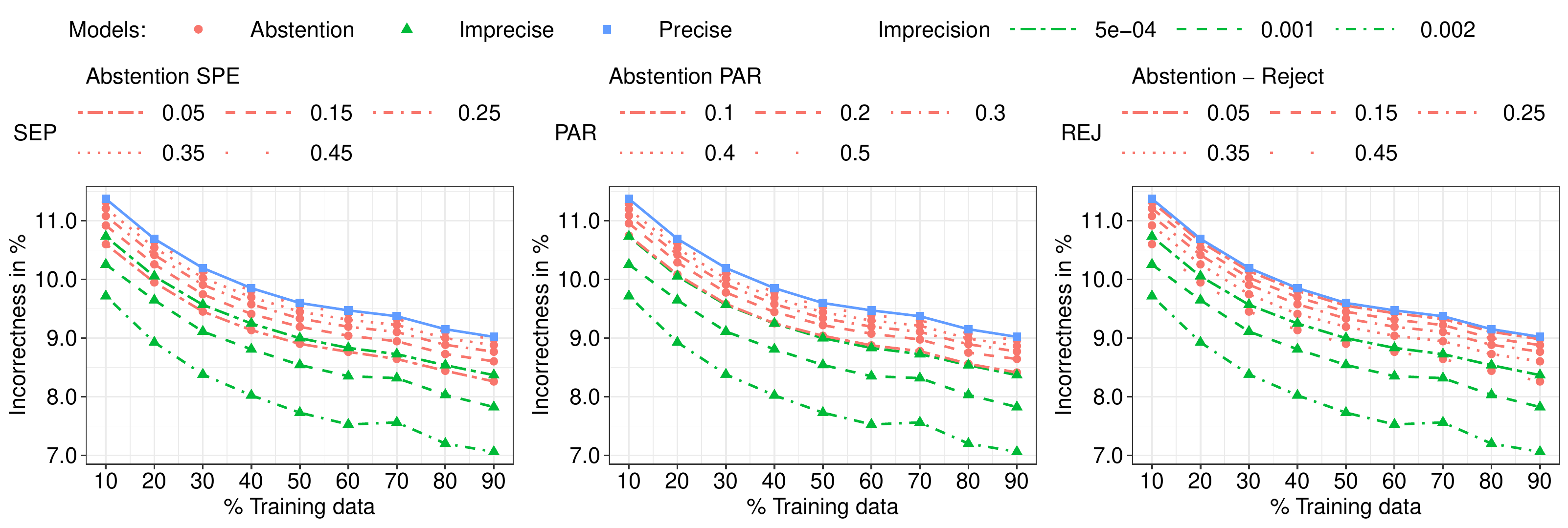}
	}\vspace{-3mm}\qquad%
	\subfigure{
		\includegraphics[width=\linewidth]{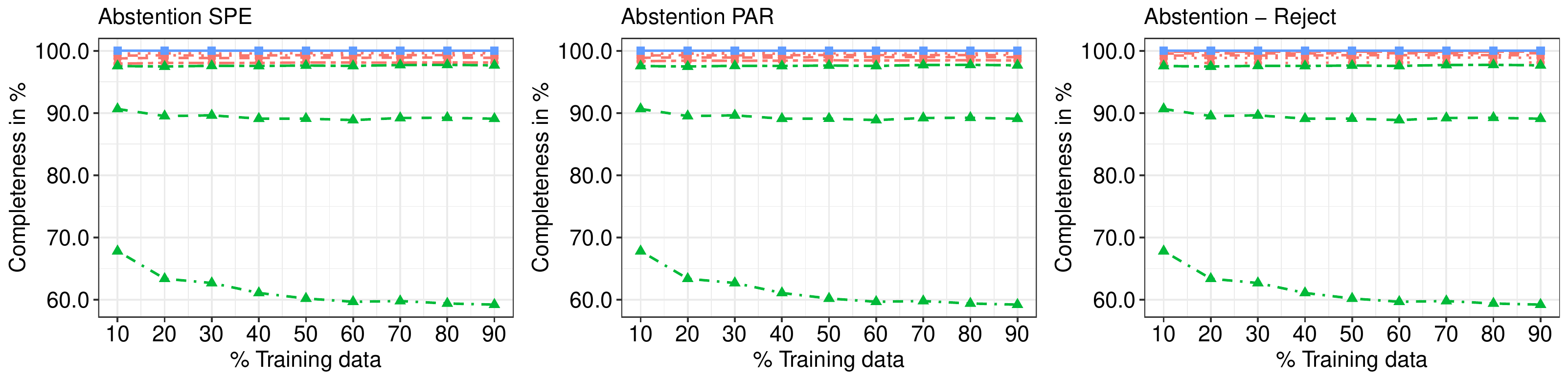}
	}\vspace{-4mm}
	\caption{{\bf Scene - Downsampling - CSPN $\epsilon$-cont.} Incorrectness and Completeness (y-axis) evolution of all skeptical approaches with respect to different percentages of training data sets (x-axis).}
\end{figure} 

\end{document}